\newcommand{\comments}[1]{}
\let\counterwithin\relax  
\definecolor{dark-gray}{gray}{0.3}
\definecolor{dkgray}{rgb}{.4,.4,.4}
\definecolor{dkblue}{rgb}{0,0,.5}
\definecolor{medblue}{rgb}{0,0,.75}
\definecolor{rust}{rgb}{0.5,0.1,0.1}
\newtheoremstyle{myThm} 
    {\topsep}                    
    {\topsep}                    
    {\itshape}                   
    {}                           
    {\sffamily\bfseries}                   
    {.}                          
    {.5em}                       
    {}  
\newtheoremstyle{myRem} 
    {\topsep}                    
    {\topsep}                    
    {}                   
    {}                           
    {\sffamily}                   
    {.}                          
    {.5em}                       
    {}  
\newtheoremstyle{myDef} 
    {\topsep}                    
    {\topsep}                    
    {}                   
    {}                           
    {\sffamily\bfseries}                   
    {.}                          
    {.5em}                       
    {}  
\theoremstyle{myThm}
\newtheorem{theorem}{Theorem}[section]
\newtheorem{lemma}[theorem]{Lemma}
\theoremstyle{myRem}
 \newenvironment{remark}
  {\pushQED{\qed}\remarkx}
  {\popQED\endremarkx}
\theoremstyle{myDef}
\let\originalleft\left
\let\originalright\right
\renewcommand{\left}{\mathopen{}\mathclose\bgroup\originalleft}
\renewcommand{\right}{\aftergroup\egroup\originalright}
\definecolor{mygreen}{rgb}{0.1,0.75,0.2}
\newcommand{\nc}{\normalcolor}
\newcommand{\red}{\color{black}}
\providecommand{\mathbbm}{\mathbb} 
\newcommand{\R}{\mathbbm{R}}
\newcommand{\N}{\mathbbm{N}}
\newcommand{\F}{\mathcal{F}}
\renewcommand{\L}{\mathcal{L}}
\newcommand{\E}{\mathbb{E}}
\renewcommand{\P}{\mathbb{P}}
\newcommand{\mcD}{\mathcal{D}}
\definecolor{mygreen}{rgb}{0.1,0.75,0.2}
\newcommand{\blue}{\color{black}}
\newcommand{\prob}{\mathbb{P}}
\DeclareMathOperator*{\argmax}{arg\,max}
\newcommand{\noisestd}{\sigma_\varepsilon}
\newcommand{\noisestdip}{\sigma_\eta}
\newcommand{\W}{\mathcal{W}}
\newcommand{\spectralsobolev}{\dot{H}}
\newcommand{\Hexact}{\mathcal{H}_k}
\newcommand{\UCB}{\operatorname{\scriptscriptstyle UCB}}
\newcommand{\TS}{\operatorname{\scriptscriptstyle TS}}
\newcommand{\truth}{f^\dagger}
\newcommand{\vertexset}{\mathcal{V}}
\newcommand{\edgeset}{\mathcal{E}}
\newcommand{\femnodes}{Z_h}
\newcommand{\Ninit}{N_{\mathrm{init}}}
\title{\huge Bayesian Optimization on Networks} 
\author[1,]{\large W. Li}
\author[2]{\large D. Sanz-Alonso}
\author[3]{\large R. Yang}
\affil[1]{\normalsize Committee on Computational and Applied Mathematics, University of Chicago, USA}
\affil[2]{\normalsize Department of Statistics, University of Chicago, USA}
\affil[3]{\normalsize Institute for Mathematical and Statistical Innovation and University of Chicago, USA}
\date{ \vspace{-1.25cm}}
\makeatletter\@addtoreset{section}{part}\makeatother%
\newcommand{\upperRomannumeral}[1]{\uppercase\expandafter{\romannumeral#1}}
\renewcommand{\hat}{\widehat}
\providecommand{\keywords}[1]{\textbf{{Keywords:}} #1}
\begin{document}
\maketitle 


\abstract{ This paper studies optimization on networks modeled as metric graphs. Motivated by applications where the objective function is expensive to evaluate or only available as a black box, we develop Bayesian optimization algorithms
that sequentially update a Gaussian process surrogate model of the objective to guide the acquisition of query points. To ensure that the surrogates are tailored to the network's geometry, we adopt Whittle-Matérn Gaussian process prior models defined via stochastic partial differential equations on metric graphs. In addition to establishing regret bounds for optimizing sufficiently smooth objective functions, we analyze the practical case in which the smoothness of the objective is unknown and the Whittle-Matérn prior is represented using finite elements. Numerical results demonstrate the effectiveness of our algorithms for optimizing benchmark objective functions on a synthetic metric graph and for Bayesian inversion via \emph{maximum a posteriori} estimation on a telecommunication network.}

\bigskip
\keywords{
   Bayesian optimization; Networks; Metric graphs; Whittle-Matérn processes}


\section{Introduction}
This paper studies optimization on networks in which nodes are linked by one-dimensional curves. Illustrative applications include finding the most congested site on a road or street network, determining the most likely location for outage in the power grid, and identifying the most active region in a biological neural network, among many others.  We investigate Bayesian optimization algorithms that are particularly effective for global optimization of objective functions that are expensive to evaluate or available only as a black box \cite{shahriari2015taking,frazier2018tutorial,wang2023recent}.  
In Bayesian optimization, a surrogate model of the objective is used to determine where to observe its value. Gaussian process (GP) surrogate models are often employed, but their performance can be sensitive to the choice of kernel. This paper develops and analyzes
Bayesian optimization strategies for objective functions defined on networks,
where standard Euclidean kernels are inadequate and it is essential to use kernels adapted to the network's geometry.

We model networks using compact metric graphs comprising a finite number of vertices and a finite number of edges, where each edge is a curve with finite length \cite{berkolaiko2013introduction}. \blue Throughout this paper, the term ``network'' refers to a metric graph, as formalized in Subsection~\ref{ssec:problem statement}. Metric graphs differ from \emph{discrete graphs} and \emph{embedded networks}. Discrete graphs are purely combinatorial: edges encode only adjacency, functions are defined only on finite vertex/edge sets, and there is no notion of an in-edge location. Embedded networks emphasize the \emph{extrinsic geometry} of a graph embedded in Euclidean space, whereas metric graphs are naturally equipped with a local \emph{intrinsic geometry} via shortest-path distance along the edges.
\nc
To ensure that the surrogate models are tailored to \blue this intrinsic geometry, \nc  we adopt Whittle-Matérn GP prior models specified via stochastic partial differential equations (SPDEs) on metric graphs \cite{bolin2024gaussian}. Whittle-Matérn models offer two important advantages. First, they provide a convenient framework for probabilistic modeling of functions on metric graphs in terms of interpretable parameters controlling the global smoothness, the correlation lengthscale, and the marginal variance  \cite{stein2012interpolation}. Second, they can be represented using finite elements to obtain a sparse approximation of the inverse covariance for efficient sequential update of the posterior surrogate models \cite{lindgren2011explicit}.

We leverage Whittle-Matérn GP priors within two popular Bayesian optimization strategies: \emph{improved GP upper confidence bound} (IGP-UCB)  and \emph{GP Thompson sampling} (GP-TS); see \cite{chowdhury2017kernelized} and also \cite{srinivas2010gaussian,agrawal2012analysis}. We establish convergence rates (simple regret bounds) for both algorithms under natural Sobolev smoothness assumptions on the objective. In addition, we analyze the practical case in which
the smoothness of the objective is unknown and finite element representations of the Whittle-Matérn kernel are employed, which results in epistemic and computational kernel misspecification that affects the regret bounds.  Numerical results illustrate the effectiveness of the proposed algorithms for optimizing benchmark objective functions on a synthetic metric graph and for computing the \emph{maximum a posteriori} estimator in a source-identification Bayesian inverse problem on a telecommunication network.

\subsection{Related Work}
Metric graphs are natural models for networks in which nodes are linked by curves. When equipped with a differential operator, metric graphs are known as \emph{quantum graphs}.
\blue The term ``quantum graph'' originates from mathematical physics \cite{berkolaiko2013introduction,kuchment2008quantum}, where Schr\"odinger-type differential operators act as effective Hamiltonians for wave or quantum-particle propagation on thin network-like structures, such as wire or waveguide networks. \nc
Quantum graphs have been extensively studied in physics \cite{kuchment2008quantum,kuchment2002graph}, and more recently in statistics \cite{bolin2025log}, numerical analysis \cite{bolin2024regularity}, and Bayesian inversion \cite{bolin2025bayesian}. This paper investigates Bayesian optimization on compact metric graphs using kernels defined via fractional elliptic operators. 

Bayesian optimization algorithms are  widely used in many applications, including  hyperparameter tuning for machine learning tasks \cite{snoek2012practical,klein2017fast}, material design \cite{frazier2015bayesian,lei2021bayesian}, drug discovery \cite{colliandre2023bayesian,bellamy2022batched}, parameter estimation for dynamical systems \cite{kim2024optimization,JoshEnhancing}, and experimental particle physics \cite{ilten2017event,cisbani2020ai}. Since Bayesian optimization algorithms replace the objective with a surrogate model and sequentially update this surrogate model as new observations become available, they provide a natural framework for control and sensor placement in digital twins. 
\blue A digital twin workflow is often organized into offline model construction and calibration, online synchronization with streaming data, and online decision making that uses the calibrated and synchronized simulator. Our contribution targets this online decision-making component. We develop a principled strategy for selecting query locations on a network when each query corresponds to a costly and possibly noisy evaluation of the digital twin model and only a limited evaluation budget is available. This is complementary to calibration and synchronization modules, which use data to update the digital twin model over time. \nc 
Recent works that explore the use of Bayesian optimization for digital twins include \cite{chakrabarty2021attentive,nobar2024guided,lin2025digital}. Digital twin systems that are naturally formulated in our metric graph setting include the simulation and control of signal propagation on biological neural networks, and the modeling and optimization of dynamic flow across electrical transmission networks. Our work provides a principled optimization approach for these and related problems.

\red Since compact metric graphs are locally one-dimensional, our formulation is intrinsic to the graph and does not depend on how the network is embedded in any ambient Euclidean space. Scalability in this setting is therefore driven  by the size of the graph and the discretization resolution. At each Bayesian optimization iteration, we perform a single forward simulation corresponding to the queried location and then update the GP surrogate posterior. With finite-element Whittle--Mat\'ern models on metric graphs, the dominant computational cost reduces to solving sparse linear systems. This cost grows with the mesh size and can be handled efficiently using standard sparse direct solvers or iterative methods. \nc

\blue Compared with gradient-based calibration or adjoint-based optimal design, Bayesian optimization is particularly suited to settings where objective evaluations are expensive, noisy, and effectively black-box, so that reliable derivatives or adjoints are unavailable or too costly to compute. In such cases, Bayesian optimization uses each evaluation efficiently by maintaining a probabilistic surrogate and selecting new queries through uncertainty-aware acquisition rules. Bayesian optimization also differs from reduced-order-model optimization, which typically relies on a surrogate trained offline and then optimized. In contrast, Bayesian optimization updates its surrogate online and balances exploration and exploitation under a limited evaluation budget. These advantages are especially relevant in digital twins when decisions correspond to locations on a network, such as sensing or actuation sites, and each evaluation requires running a graph-based simulation model. On the other hand, when accurate gradients or adjoints are readily available and objective evaluations are relatively cheap, gradient-based or adjoint-based methods can be more computationally efficient. Similarly, when a reliable reduced-order model can be constructed offline and used for fast repeated queries, reduced-order-model optimization may be preferable. 

More broadly, our work extends Bayesian optimization beyond its standard Euclidean setting to network-structured domains. 
Classical Bayesian optimization is most commonly developed on Euclidean spaces with stationary kernels induced by Euclidean distance. By considering geometry-aware GP surrogates on compact metric graphs, we encode continuity along edges and coupling at junctions, which yields more faithful models for network-location design with computational tractability and theoretical support. \nc 
Bayesian optimization in non-Euclidean settings has been studied for instance in \cite{baptista2018bayesian}, which considers optimization over discrete combinatorial structures, and in \cite{kim2024optimization}, which considers optimization of an objective function on a manifold that can only be accessed through point cloud data. The authors in \cite{kim2024optimization} model the point cloud as a combinatorial graph, and define surrogate models for functions on the \emph{vertices} of this graph using  graphical Matérn GPs \cite{sanz2022spde}. 
To our knowledge, this is the first paper to investigate Bayesian optimization on networks modeled by compact metric graphs. In contrast to \cite{kim2024optimization}, we leverage recently developed Whittle-Matérn GPs defined along the \emph{vertices and edges} of a metric graph \cite{bolin2024gaussian}. 

To understand the effect of introducing finite element representations of Whittle-Matérn kernels within IGP-UCB and GP-TS, we analyze Bayesian optimization under kernel misspecification. For GP-UCB, regret bounds under kernel misspecification were established in \cite{bogunovic2021misspecified}. Here, we generalize the theory to also cover GP-TS, and quantify the size of the misspecification by building on recent work on numerical approximation of fractional elliptic differential equations on metric graphs \cite{bolin2024regularity}. Related works that investigate  GP regression under epistemic and computational kernel misspecification  include \cite{sanz2022finite} and \cite{sanz2025gaussian}.


\subsection{Outline and Main Contributions}
\begin{itemize}
    \item Section \ref{sec:BayesOptMetricGraph} introduces the problem statement and the necessary background on metric graphs, Bayesian optimization, and Whittle-Matérn processes on metric graphs. Theorem \ref{thm:exact kernel setting} establishes regret bounds for IGP-UCB and GP-TS in the idealized case in which the kernel is chosen to match the smoothness of the objective and the exact Whittle-Matérn kernel is used, without accounting for discretization error. 
    \item Section \ref{sec:FEM} considers the practical implementation of IGP-UCB and GP-TS using finite element representations of Whittle-Matérn processes. Theorem \ref{thm:regret fem} establishes regret bounds under epistemic and computational kernel misspecification, where the smoothness of the objective is unknown and finite element representations are employed.  
    \item Section \ref{sec:Numerics} illustrates the performance of IGP-UCB and GP-TS for benchmark functions on a synthetic metric graph and for Bayesian inversion on a telecommunication network. The results clearly demonstrate the advantage of using  Whittle-Matérn kernels intrinsically defined on the metric graph over standard kernels defined using Euclidean distance.
    \item Section \ref{sec:conclusions} closes with conclusions. 
    \item  Appendix \ref{sec:misspecified BO} presents a new general theory for misspecified TS that may be of independent interest and Appendix \ref{sec:tech proof} contains proofs of all technical lemmas. Appendix \ref{appendix:online} provides supplementary materials for the implementation of the algorithms. 
\end{itemize}

\subsection{Notation}
For real numbers $a, b$, we denote $a \wedge b = \operatorname{min}(a, b)$ and $a \vee b = \operatorname{max}(a, b)$. The symbol $\lesssim$ will denote less than or equal to up to a universal constant and similarly for $\gtrsim$. For real sequences
$\{a_n\},\{b_n\}$, we write $a_n \asymp b_n$ if $a_n \lesssim  b_n$ and $b_n \lesssim  a_n$ for all $n$.

\section{Bayesian Optimization on Metric Graphs}\label{sec:BayesOptMetricGraph}

\subsection{Problem Statement}
\label{ssec:problem statement}
Let $\Gamma$ be a graph with vertices $\vertexset = \{v_i\}$ and edges $\edgeset = \{e_j\}$. We are concerned with graphs in which edges represent physical one-dimensional curves connecting vertices. 
To model this setting, we assign to each edge $e\in \edgeset$ a positive length $L_e>0$, and then we orient each edge arbitrarily and identify it with the interval $[0,L_e]$ via a coordinate $z_e$. A graph $\Gamma$ supplemented with this structure is called a \emph{metric graph}, where the metric is naturally given by the shortest path distance \cite{berkolaiko2013introduction}, denoted as $d$ hereafter.
A generic point $x$ on a metric graph $\Gamma$ can be represented as $x = (e,z_e)$ for some $e \in \edgeset$ and $z_e \in [0, L_e].$  We will focus on \emph{compact metric graphs} comprising finitely many vertices and edges, with every edge of finite length. For illustration, Figure~\ref{fig:metric graph example}(a) depicts a compact metric graph that represents a telecommunication network in New York, capturing traffic behavior representative of operational networks. This graph was taken from the open data-set in \cite{SNDlib10,OrlowskiPioroTomaszewskiWessaely2010}.

Our goal is to find the global maximizer of a black-box objective function $\truth: \Gamma \to \R$ defined on a compact metric graph $\Gamma.$ We assume that $\truth$ can only be noisily observed by measurements of the form 
\begin{align}\label{eq:obsevation}
    y = \truth(x) + \varepsilon ,
\end{align}
where $x\in \Gamma$ is a \emph{query point} and $\varepsilon$ is a centered $R$-sub-Gaussian noise, i.e., for all $\xi \in \R$ it holds that
$\E e^{\xi\varepsilon } \leq \exp\left(\frac{\xi^2R^2}{2}\right).$

\begin{minipage}{0.95\linewidth}
    \begin{figure}[H]
\centering
\begin{subfigure}{0.49\textwidth}
  \centering
  \includegraphics[width=\linewidth]{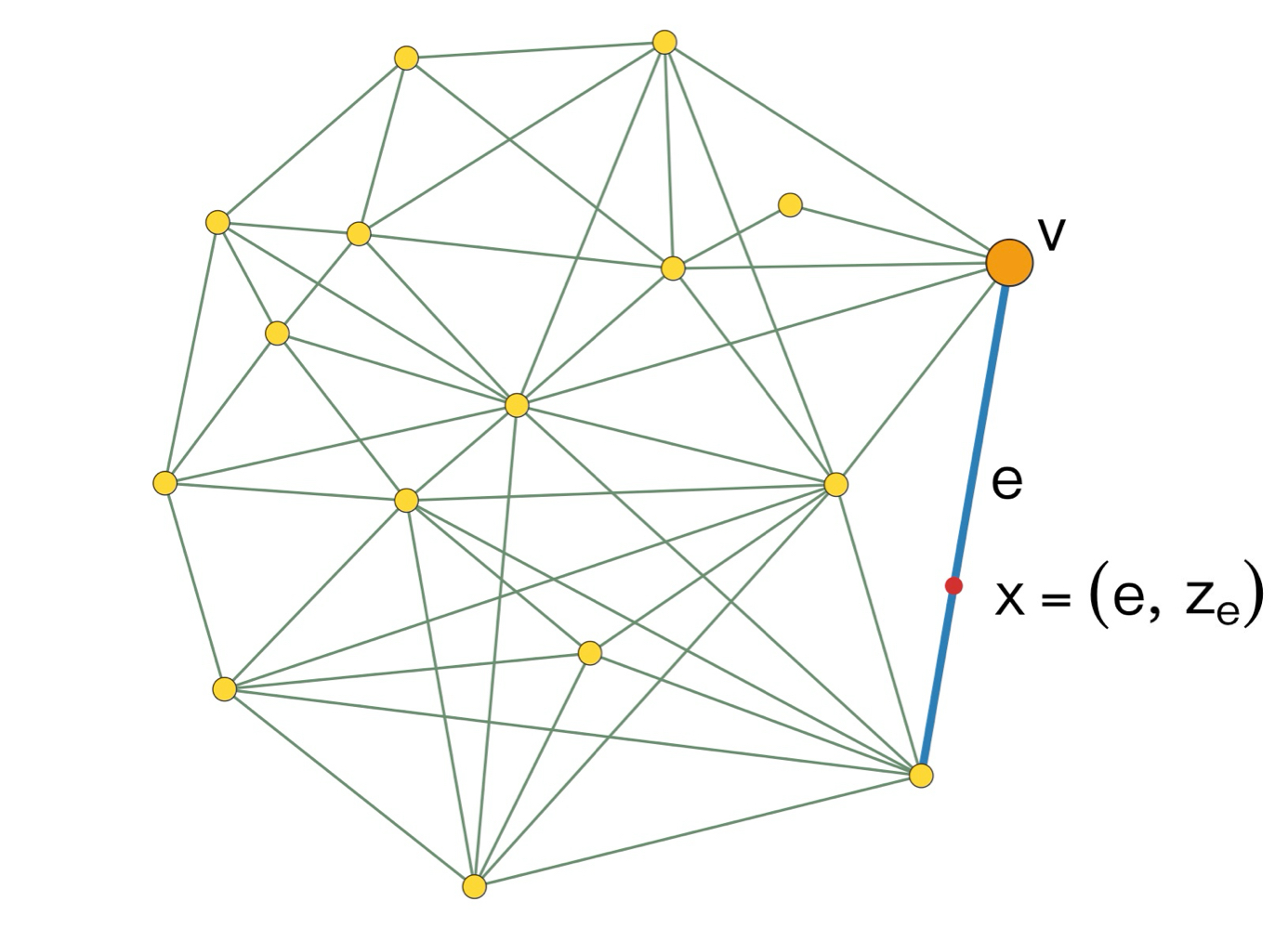}
  \caption{Telecommunication Network in New York}
\end{subfigure}\hfill
\begin{subfigure}{0.50\textwidth}
  \centering
  \includegraphics[width=\linewidth]{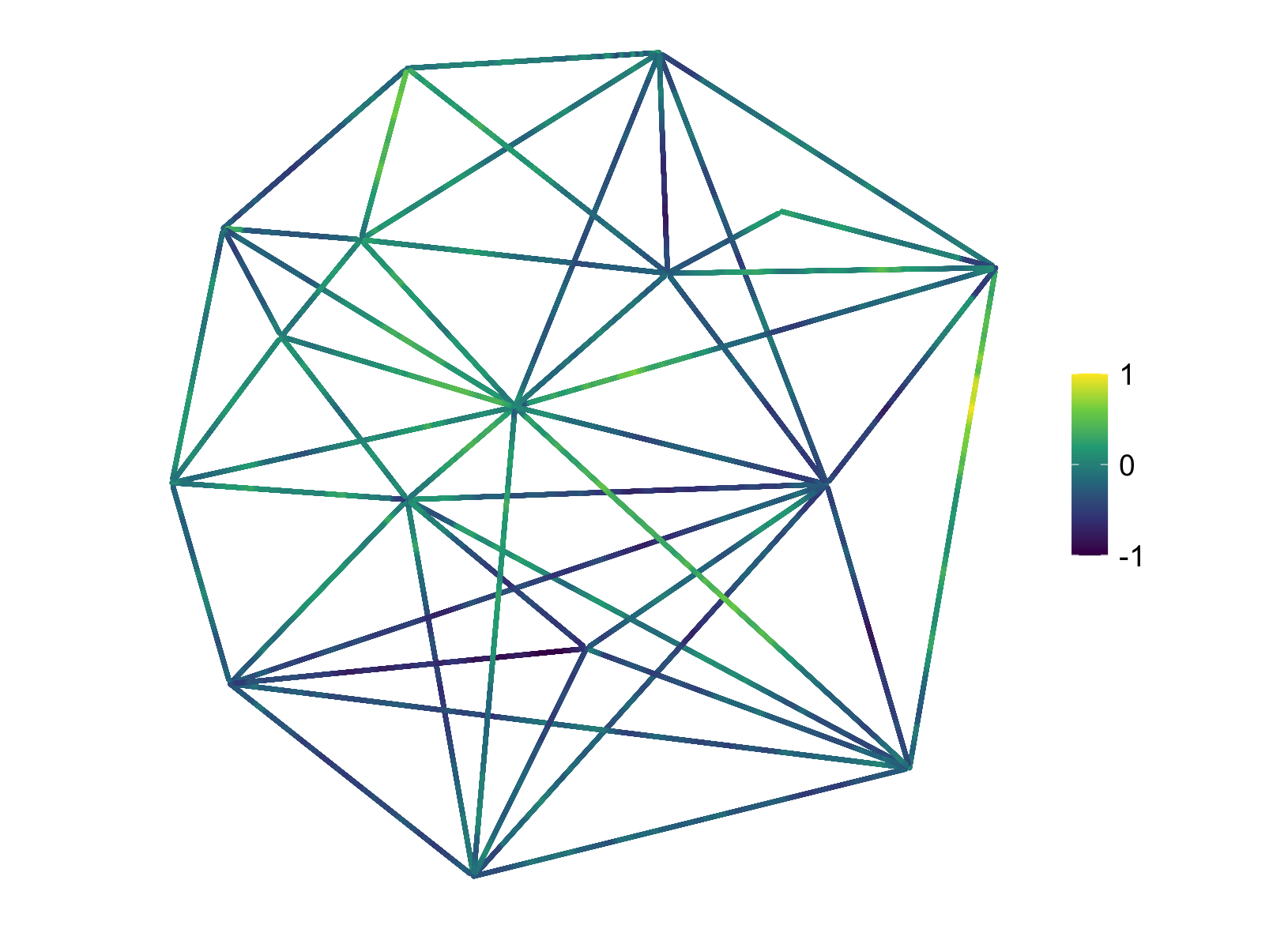}
  \caption{Sample from Whittle-Matérn GP}
\end{subfigure}

\caption{Telecommunication network modeled as a compact metric graph (left) and a sample from the Whittle-Matérn GP defined in Subsection \ref{sec:exact kernel} (right). }
\label{fig:metric graph example}
\end{figure}
\end{minipage}


\subsection{Gaussian Processes and Bayesian Optimization}
In this work, we investigate two Bayesian optimization algorithms:
IGP-UCB and GP-TS. Both methods share a common structure: 
\begin{itemize}
    \item {\bf Initial design:} 
    \begin{enumerate}
        \item Choose initial query points $\{x_i^{(0)} \}_{i=1}^{N_{\mathrm{init}}}.$ 
        \item Observe $y_i^{(0)} = \truth(x_i^{(0)}) + \varepsilon_i^{(0)}$  as in \eqref{eq:obsevation},  $ \, 1 \le i \le N_{\mathrm{init}}.$ 
        \item Set $\mcD_0 := \{(x_i^{(0)}, y_i^{(0)})\}_{i=1}^{N_{\mathrm{init}}}.$
    \end{enumerate}
    \item {\bf Sequential design:} For $t=1, \ldots, T$ do: 
    \begin{enumerate}
        \item Choose $x_t$ by maximizing an acquisition function $\mathrm{acq}_t(x)$ defined using $\mcD_{t-1}.$ 
        \item Observe $y_t = \truth(x_t) + \varepsilon_t,$  as in \eqref{eq:obsevation}. 
        \item Set $\mcD_t: = \mcD_{t-1} \cup \{(x_t,y_t) \}.$
    \end{enumerate}
\end{itemize}

All noise variables $\{\varepsilon_i^{(0)} \}_{i=1}^{N_\mathrm{init}}$ and $\{\varepsilon_t\}_{t=1}^T$ in the observations are assumed throughout to be independent copies of the noise variable $\varepsilon$ in \eqref{eq:obsevation}. 
IGP-UCB and GP-TS use different acquisition functions, both of which involve a GP surrogate model for the objective function. Let $k: \Gamma \times \Gamma \to \R$ be a symmetric, positive definite kernel, and define the following functions
\begin{align}
    \mu_{t-1}(x)& := k_{t-1}(x)^\top (K_{t-1}+\lambda I)^{-1}Y_{t-1}, \label{eq:postmean}\\ 
    k_{t-1}(x, x')& := k(x,x')- k_{t-1}(x)^\top(K_{t-1}+\lambda I)^{-1}k_{t-1}(x'),\label{eq:postcov} \\
    \sigma^2_{t-1}(x)& := k_{t-1}(x, x),\label{eq:poststd} 
\end{align}
where $\lambda>0$ is a regularization parameter to be chosen and 
\begin{align*}
    k_{t-1}(x) &:= \Bigl[k(x,x_1^{(0)}),\ldots,k(x,x_{N_\textrm{init}}^{(0)}), k(x,x_1), \ldots, k(x,x_{t-1}) \Bigr]^\top \, , \\
   K_{t-1} &:= \begin{bmatrix}
       k(x_1^{(0)},x_1^{(0)}) & \cdots & k(x_1^{(0)},x_{N_\textrm{init}}^{(0)}) & k(x_1^{(0)},x_1) & \cdots & k(x_1^{(0)}, x_{t-1}) \\
       \vdots & \ddots & \vdots & \vdots & \ddots & \vdots \\ 
       k(x_{N_\textrm{init}}^{(0)},x_1^{(0)}) & \cdots &k(x_{N_\textrm{init}}^{(0)},x_{N_\textrm{init}}^{(0)}) &k(x_{N_\textrm{init}}^{(0)},x_1) & \cdots &k(x_{N_\textrm{init}}^{(0)},x_{t-1}) \\
       k(x_1,x_1^{(0)}) & \cdots & k(x_1,x_{N_\textrm{init}}^{(0)}) &k(x_1,x_1) & \cdots &k(x_1,x_{t-1})\\
       \vdots & \ddots & \vdots & \vdots & \ddots & \vdots\\
       k(x_{t-1},x_1^{(0)}) & \cdots & k(x_{t-1},x_{N_\textrm{init}}^{(0)}) &k(x_{t-1},x_1) & \cdots &k(x_{t-1},x_{t-1})
   \end{bmatrix}\, ,  \\
   Y_{t-1} &:= \Bigl[ y_1^{(0)}, \cdots, y_{N_{\textrm{init}}}^{(0)}, y_1, \cdots, y_{t-1}   \Bigr]^\top \, .
\end{align*}
When the noise $\varepsilon$ in \eqref{eq:obsevation} is Gaussian with variance $\lambda,$ $\varepsilon \sim \mathcal{N}(0,\lambda)$, the functions $\mu_{t-1},$ $k_{t-1},$ and $\sigma_{t-1}$ defined in \eqref{eq:postmean}, \eqref{eq:postcov}, and \eqref{eq:poststd} have a natural Bayesian interpretation: Placing a GP prior on the objective, 
$\truth \sim \mathcal{GP}(0,k),$ they represent the posterior mean function, posterior covariance function, and posterior standard deviation function given data $\mathcal{D}_{t-1}$. 
 If $\Ninit=0$, we follow the convention that $\mu_0=0$ and $k_0(\cdot,\cdot)=k(\cdot,\cdot)$.  

The $t$-th query point $x_t$ of IGP-UCB and GP-TS is chosen by maximizing the acquisition functions
  \[
  \mathrm{acq}_t(x) =
  \begin{cases}
    \mu_{t-1}(x) + \beta_t\, \sigma_{t-1}(x), & \text{(IGP-UCB)}, \\[6pt]
    f_t(x), \quad f_t \sim \mathcal{GP}\!\bigl(\mu_{t-1},\, v_t^2\, k_{t-1}\bigr), & \text{(GP-TS)}.
  \end{cases}
  \] 
Both acquisitions utilize the posterior distribution of $\truth$ given $\mathcal{D}_{t-1}$ to balance \emph{exploitation} and \emph{exploration}. Exploitation is achieved by favoring query points where the posterior mean $\mu_{t-1}$ is large, whereas exploration is achieved by favoring points where the uncertainty in the surrogate, as captured by $\sigma_{t-1}$ and $k_{t-1},$ is large.  The parameters $\beta_t$ and $v_t$ in IGP-UCB and GP-TS serve to balance the exploitation/exploration trade-off. Following \cite{chowdhury2017kernelized}, in our analysis we specify these parameters via input hyperparameters $B,R,$ $\lambda,$ and $\delta,$ setting
\begin{equation}\label{eq:parametersbetav}
\begin{aligned}
     \beta_t \;&=\; B + R\sqrt{2\bigl(\gamma_{t-1}(k)+(\Ninit+t)(\lambda-1)/2+\log(1/\delta)\bigr)} \, ,\\
  v_t \;&=\; B + R\sqrt{2\bigl(\gamma_{t-1}(k)+(\Ninit+t)(\lambda-1)/2+\log(2/\delta)\bigr)} \, ,
\end{aligned}
\end{equation}
where $\lambda$ is the same  regularization parameter as in \eqref{eq:postmean}, $B$ represents a bound on the reproducing kernel Hilbert space (RKHS) norm of the objective $\truth$, $R$ is the sub-Gaussian constant of the observation noise, and $\delta \in (0,1)$ determines the probabilistic confidence level in our regret bounds. 
For $t \ge 1,$ the quantity $\gamma_t(k)$ in \eqref{eq:parametersbetav} is defined as 
\begin{align}\label{eq:maximum info gain}
    \gamma_t(k) = \underset{A\subset \Gamma: |A|=N_{\mathrm{init}}+t}{\operatorname{max}}\,\, \frac12 \log |I+\lambda^{-1}K_A|,
\end{align}
where $K_A=[k(x,x')]_{x,x'\in A}$. We refer to $\gamma_t(k)$ as the \emph{maximum information gain}, noting that the quantity $\frac12 \log |I+\lambda^{-1}K_A|$ is the mutual information $I(y_A;\truth_A)$ when $\truth\sim \mathcal{GP}(0,k)$ and $\varepsilon\sim \mathcal{N}(0,\lambda)$, which quantifies the reduction of uncertainty about $\truth$ after making the observations.  
Algorithm \ref{alg:GP-bandits} outlines the IGP-UCB and GP-TS procedures.

\begin{minipage}{0.95\linewidth}
\begin{algorithm}[H]
\caption{IGP-UCB and GP-TS}
\label{alg:GP-bandits}
\begin{algorithmic}[1]
\Require Metric graph $\Gamma$; kernel $k$; parameters $B,R,\lambda,\delta$; horizon $T$; initial design size $N_{\mathrm{init}}$.
\State Choose $X_{\mathrm{init}}=\{x_i^{(0)}\}_{i=1}^{N_{\mathrm{init}}}\subset\Gamma$.
\State Observe $y_i^{(0)} = \truth(x_i^{(0)}) + \varepsilon_i^{(0)}$, 
 with $\varepsilon_i^{(0)}$ being i.i.d.\ $R$-sub-Gaussian random variables for $i=1,\dots,N_{\mathrm{init}}$ (as in eq.~\eqref{eq:obsevation}). 

\State Initialize $\mcD_0 \gets \{(x_i^{(0)}, y_i^{(0)})\}_{i=1}^{N_{\mathrm{init}}}$.
\For{$t=1,2,\dots,T$}
  \State Form a decision set $\Gamma_t \subseteq \Gamma$ (either $\Gamma$ or an adaptive subset).
  \State Compute $\mu_{t-1}(\cdot)$, $\sigma_{t-1}(\cdot),$ $k_{t-1}(\cdot,\cdot)$ given $\mcD_{t-1}$ using \cref{eq:postmean,eq:postcov,eq:poststd}.
  \State Define $\beta_t$ and $v_t$ using \eqref{eq:parametersbetav} and set
  \[
  \mathrm{acq}_t(x) =
  \begin{cases}
    \mu_{t-1}(x) + \beta_t\, \sigma_{t-1}(x), & \text{(IGP-UCB)}, \\[6pt]
    f_t(x), \quad f_t \sim \mathcal{GP}\!\bigl(\mu_{t-1},\, v_t^2\, k_{t-1}\bigr), & \text{(GP-TS)}.
  \end{cases}
  \]
  \State Select $x_t \in \arg\max_{x\in \Gamma_t} \mathrm{acq}_t(x)$.
  \State Observe $y_t = \truth(x_t)+\varepsilon_t$, 
   with $\{\varepsilon_t\}$ as defined in \eqref{eq:obsevation}. 
  \State Update $\mcD_t \gets \mcD_{t-1} \cup \{(x_t, y_t)\}$.
\EndFor
\end{algorithmic}
\end{algorithm}
\end{minipage}

\begin{remark}
\label{rmk:maximin design}
A practical choice for the initialization policy is the maximin (farthest--first) design, which yields a space-filling set of $N_{\mathrm{init}}$ points on $\Gamma$. Specifically, we start with a randomly sampled point on the metric graph and then use a deterministic maximin selection rule until reaching the desired number $N_{\rm init}$ of initial design points. Since computing the maximum information gain $\gamma_{t-1}$ is expensive, in our numerical experiments we approximate it by the mutual information on the algorithm’s realized history set $\mathcal{D}_{t-1}$, i.e. $\hat{\gamma}_{t-1}\coloneqq \frac12 \log |I+\lambda^{-1}K_{X_{t-1}}|$, where $X_{t-1}={\{x_i^{(0)} \}_{i=1}^{N_{\mathrm{init}}}\cup \{x_s\}_{s=1}^{t-1}}$. 
Furthermore, the original proof in \cite{chowdhury2017kernelized} that leads to the choices \eqref{eq:parametersbetav} in fact only requires $\hat{\gamma}_{t-1}$ in the expression, where $\gamma_{t-1}$ serves as a convenient upper bound that unifies the theory. 
\end{remark}

\begin{remark}
As is standard in the Bayesian optimization literature, our analysis will focus on the setting in which the acquisition functions are exactly optimized. Bayesian optimization  with inexactly optimized acquisition functions has been recently studied in \cite{kim2025bayesian}.
\end{remark}

\subsection{Choice of Kernel: Whittle-Matérn Gaussian Processes}\label{sec:exact kernel}

The prior that we shall employ in modeling the objective function is the Whittle–Matérn GP on the compact metric graph \(\Gamma\) introduced by \cite{bolin2024gaussian}. As in the Euclidean case, Mat\'ern type GPs can be defined via a fractional SPDE. The main difference on compact metric graphs is the need to impose vertex conditions to obtain a graph-native kernel consistent with the graph geometry (see \cite{bolin2024regularity,bolin2025bayesian} for details).
Here we overview the main ideas of the construction while keeping technical details minimal.

To set up the SPDE on a compact metric graph, we begin with a brief introduction to differential operators on metric graphs $\Gamma$. 
Let $\widetilde{H}^2(\Gamma)=\bigoplus_{e\in \mathcal{E}} H^2(e)$, where $H^2(e)$ is the standard Sobolev space defined over $e$ by identifying it with the interval $[0,L_e]$.
We introduce the second-order elliptic operator $\mathcal{L}$ whose action on functions $u \in \widetilde{H}^2(\Gamma)$ is defined on each edge by
\begin{equation}\label{eq:edge-operator-const}
(\mathcal{L}u)_e(z)\;=\;-\frac{d^2}{dz^2}\;u_e(z)+\;\kappa^{\,2}\!\,u_e(z),
\qquad z\in(0,L_e),
\end{equation}
where \(\kappa>0\) is a fixed constant. To couple the edgewise operators into a global operator \(\mathcal{L}\) acting on the entire compact metric graph \(\Gamma\), we impose Kirchhoff vertex conditions:
\begin{equation}\label{eq:kirchhoff}
\text{$u$ is continuous on \(\Gamma\),}
\qquad
\forall v\in \vertexset:\ \sum_{e\in E_v} \,\partial_e u(v)\;=\;\theta\,u(v),
\end{equation}
where \(E_v\) is the set of edges incident to vertex \(v\), \(\partial_e u(v)\)  denotes the outward-directed derivative of \(u\) at vertex \(v\) along edge \(e\), and \(\theta \in \mathbb{R}\) is a given parameter. Throughout the rest of the paper we assume standard Kirchhoff conditions with \(\theta = 0\), which ensures that flux is conserved at every vertex. 

Following \cite{bolin2024regularity,bolin2025bayesian}, the operator \(\mathcal{L}\) with standard Kirchhoff vertex conditions is positive definite and has a discrete spectrum. Let \(\{(\lambda_i,\psi_i)\}_{i=1}^\infty\) denote its eigenpairs, with the eigenvalues in nondecreasing order. The fractional operator \(\mathcal{L}^\alpha\) is defined spectrally by
\[
\mathcal{L}^\alpha  u \;=\; \sum_{i=1}^\infty \lambda_i^\alpha \,\langle u,\psi_i\rangle_{L^2(\Gamma)} \,\psi_i .
\]
Here $L^2(\Gamma)
:= \bigoplus_{e\in E} L^2(e)$ and its inner product are defined edgewise, similarly to $\widetilde{H}^2(\Gamma)$ above. 
The Whittle-Matérn GP \(u\) on $\Gamma$ is then specified as the solution of 
\begin{equation}
\label{equa:prior gaussian}
\mathcal{L}^{\alpha} (\tau u) \;=\; \mathcal{W},
\end{equation}
where \(\alpha>0\) controls the regularity, \(\tau>0\) sets the scale, the parameter $\kappa$ in $\mathcal{L}$ determines the correlation lengthscale, and \(\mathcal{W}\) denotes Gaussian white noise on \(\Gamma\). For brevity, we keep the same symbol \(u\) in the SPDE setting, where \(u:\Gamma\times\Omega\to\mathbb{R}\) is a random field, whereas above \(u\) was a deterministic function used to define \(\mathcal{L}\) in \eqref{eq:edge-operator-const}. Concretely, if \(\{e_i\}_{i\ge 1}\) is any orthonormal basis of \(L^2(\Gamma)\), then
\(
\mathcal{W}=\sum_{i\ge 1}\xi_i e_i
\)
with \(\xi_i\stackrel{\mathrm{i.i.d.}}{\sim}\mathcal{N}(0,1)\) on \(\Omega\).
Figure~\ref{fig:metric graph example}(b) shows a realization of the GP defined in \eqref{equa:prior gaussian} with $\alpha=\kappa=\tau=1$ and normalized to take values in $[-1,1].$

Proposition 3.2 of \cite{bolin2024regularity} guarantees that, for \(\alpha>\tfrac{1}{4}\),  \eqref{equa:prior gaussian} has a unique solution \(u\in L^2(\Omega;L^2(\Gamma))\) admitting the series representation
\begin{align}\label{eq:expansion exact}
    u={\tau}^{-1}\sum_{i=1}^\infty \lambda_i^{-\alpha}\,\xi_i\,\psi_i,\qquad \xi_i \stackrel{\mathrm{i.i.d.}}{\sim} \mathcal{N}(0,1),
\end{align}
which induces the kernel 
\begin{align}\label{eq:kernel exact}
k(x,x') =  {\tau}^{-2}  \sum_{i=1}^\infty \lambda_i^{-2\alpha}\,\psi_i(x)\,\psi_i(x').
\end{align}
Let \(\mathcal{H}_k\) denote the RKHS associated with \(k\):
\begin{align}\label{eq:RKHS exact}
    \mathcal{H}_k = \left\{\,g=\sum_{i=1}^\infty a_i \psi_i \;:\; \|g\|^2_{\mathcal{H}_k} :=  \tau^2 \sum_{i=1}^\infty a_i^2 \lambda_i^{2\alpha} < \infty \,\right\}=:\dot{H}^{2\alpha}(\Gamma).
\end{align}
The space $\dot{H}^{2\alpha}(\Gamma)$ (and hence $\Hexact$)  is a spectrally defined Sobolev space over $\Gamma,$ closely related to classical Sobolev spaces defined using weak derivatives and interpolation \cite[Theorem 4.1]{bolin2024regularity}.

The following lemma, proved in Appendix \ref{sec:tech proof}, states some properties of the kernel \eqref{eq:kernel exact} and its eigenfunctions that will be used to derive regret bounds for IGP-UCB and GP-TS in Subsection \ref{sec:regret bounds} below. 
\begin{lemma}\label{lemma:uniform boundedness of fem eigenfunction}
Suppose the $\{\psi_i\}_{i=1}^{\infty}$ are $L^2(\Gamma)$ normalized, then $\operatorname{sup}_i \|\psi_i\|_{L^\infty(\Gamma)} \leq \Psi$ for a constant $\Psi$ independent of $i$. 
As a consequence, for $\alpha>\frac14$, $|k(x,x')|\leq \overline{k}$ for some constant $\overline{k}<\infty$. 
Moreover, for $\alpha>\frac12$ we have 
\begin{align*}
    |k(x'',x)-k(x'',x')|\leq S  \tau^{-2} \Psi^2 d(x,x'),\qquad \forall x,x',x''\in \Gamma,
\end{align*}
where $S= \sum_{i=1}^\infty \lambda_i^{-2\alpha+1/2}<\infty$ and we recall $d$ is the shortest path distance on $\Gamma$. 
\end{lemma}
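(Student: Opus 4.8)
\emph{Proof strategy.} The real content is the uniform bound $\sup_i\|\psi_i\|_{L^\infty(\Gamma)}\le\Psi$; the two kernel estimates then follow by termwise bounds on the series \eqref{eq:kernel exact}. Sobolev embedding alone is not enough here, as it only gives $\|\psi_i\|_{L^\infty(\Gamma)}\lesssim(1+\lambda_i)^{1/2}$; instead I would exploit the one-dimensional structure of $\mathcal{L}$ on each edge together with the Kirchhoff conditions. On an edge $e\simeq[0,L_e]$ the equation $-\psi_i''+\kappa^2\psi_i=\lambda_i\psi_i$ is $\psi_i''=-\mu_i^2\psi_i$ with $\mu_i^2:=\lambda_i-\kappa^2$, so the energy $E_{i,e}(z):=\mu_i^2|\psi_i(z)|^2+|\psi_i'(z)|^2$ has zero derivative in $z$ and is therefore constant along $e$. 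Integrating $E_{i,e}$ over $e$, integrating $\int_e|\psi_i'|^2$ by parts, and summing over edges, the total vertex contribution reduces to $-\theta\sum_{v\in\vertexset}\psi_i(v)^2$ (each boundary term being of the form $\psi_i(v)\sum_{e\in E_v}\partial_e\psi_i(v)$), hence vanishes under the standard Kirchhoff condition $\theta=0$; this gives $\sum_{e\in\edgeset}E_{i,e}L_e=2\mu_i^2\|\psi_i\|_{L^2(\Gamma)}^2=2\mu_i^2$. With $L_{\min}:=\min_{e\in\edgeset}L_e>0$ we obtain $E_{i,e}\le 2\mu_i^2/L_{\min}$ on every edge, and since $E_{i,e}$ is constant, $\|\psi_i\|_{L^\infty(e)}^2\le E_{i,e}/\mu_i^2\le 2/L_{\min}$ and $\|\psi_i'\|_{L^\infty(e)}^2\le E_{i,e}\le 2\lambda_i/L_{\min}$, uniformly over all $i$ with $\lambda_i>\kappa^2$. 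Because $\lambda_i\to\infty$, only finitely many $i$ have $\lambda_i\le\kappa^2$, and each such $\psi_i$ is smooth on every closed edge, hence bounded with bounded derivative; enlarging the constant to absorb this finite family (using $\lambda_i>0$, from positive definiteness of $\mathcal{L}$) yields a single $\Psi<\infty$ with $\|\psi_i\|_{L^\infty(\Gamma)}\le\Psi$ and $\|\psi_i'\|_{L^\infty(\Gamma)}\le\Psi\lambda_i^{1/2}$ for all $i$. (Uniform boundedness of Kirchhoff eigenfunctions is also classical, cf.\ \cite{berkolaiko2013introduction}; I spell it out because the accompanying $\lambda_i^{1/2}$ gradient scaling is what drives the Lipschitz bound.)

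For $|k(x,x')|\le\overline{k}$: insert $|\psi_i(x)\psi_i(x')|\le\Psi^2$ into \eqref{eq:kernel exact} to get $|k(x,x')|\le\tau^{-2}\Psi^2\sum_i\lambda_i^{-2\alpha}$, and the series converges for $\alpha>\tfrac14$ by Weyl's law for compact metric graphs, $\lambda_i\asymp i^2$ (see \cite{berkolaiko2013introduction,bolin2024regularity}); set $\overline{k}:=\tau^{-2}\Psi^2\sum_i\lambda_i^{-2\alpha}$.

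For the Lipschitz bound, rewrite $k(x'',x)-k(x'',x')=\tau^{-2}\sum_i\lambda_i^{-2\alpha}\psi_i(x'')\bigl(\psi_i(x)-\psi_i(x')\bigr)$. Choosing a path in $\Gamma$ joining $x$ to $x'$ of length arbitrarily close to $d(x,x')$, and using that $\psi_i$ is $C^1$ along edges and continuous at vertices, we get $|\psi_i(x)-\psi_i(x')|\le\|\psi_i'\|_{L^\infty(\Gamma)}\,d(x,x')\le\Psi\lambda_i^{1/2}\,d(x,x')$ (trivial when $x,x'$ lie in different components, where $d=\infty$). Combining with $|\psi_i(x'')|\le\Psi$,
\[
|k(x'',x)-k(x'',x')|\le\tau^{-2}\Psi^2\Bigl(\sum_i\lambda_i^{-2\alpha+1/2}\Bigr)d(x,x')=S\,\tau^{-2}\Psi^2\,d(x,x'),
\]
and $S=\sum_i\lambda_i^{-2\alpha+1/2}<\infty$ iff $\alpha>\tfrac12$, again by $\lambda_i\asymp i^2$. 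Absolute, uniform convergence of all series above justifies the termwise manipulations.

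The main obstacle is the first paragraph: pinning down the conserved edgewise energy, checking that the Kirchhoff condition ($\theta=0$) annihilates the vertex boundary terms, and treating the finitely many eigenvalues below $\kappa^2$ separately. Once $\lambda_i\asymp i^2$ is invoked, the remaining steps are routine bookkeeping.
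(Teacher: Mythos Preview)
Your argument is correct and takes a genuinely different route from the paper's. The paper works edge by edge using the explicit trigonometric representation $\psi_i|_e=\sqrt{A_e^2+B_e^2}\cos(\sqrt{\mu_i}z+w_e)$ and the local bound $\|\psi_i\|_{L^2(e)}\le 1$, then shows $\int_0^{L_e}\cos^2(\sqrt{\mu_i}z+w_e)\,dz\ge L_e/2-1/(2\sqrt{\mu_i})$ is eventually bounded away from zero; in particular, the Kirchhoff condition is never invoked. Your energy method instead exploits the global structure: the conserved edgewise quantity $E_{i,e}=\mu_i^2|\psi_i|^2+|\psi_i'|^2$, together with integration by parts and the Kirchhoff condition to cancel vertex terms, yields the clean identity $\sum_e E_{i,e}L_e=2\mu_i^2$ and hence the explicit constant $\Psi=\sqrt{2/L_{\min}}$ for large $i$. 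The paper's approach is marginally simpler in that it needs no vertex conditions, while yours gives an explicit constant, delivers the gradient bound $\|\psi_i'\|_{L^\infty}\le\Psi\lambda_i^{1/2}$ as a byproduct rather than by a second ``similar argument,'' and would extend to variable-coefficient operators where the closed-form trigonometric representation is unavailable. The downstream kernel bounds (using $\lambda_i\asymp i^2$) are handled the same way in both proofs.
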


\subsection{Regret Bounds}\label{sec:regret bounds}
Here we establish regret bounds for Algorithm \ref{alg:GP-bandits}.
Following the standard practice in kernelized Bayesian optimization (see \cite{chowdhury2017kernelized,vakili2021information}), we assume the unknown objective \(\truth\) lies in \(\mathcal H_k\) with bounded RKHS norm, i.e., \(\|\truth\|_{\mathcal H_k}\le B\).
We will analyze the simple regret 
\begin{equation}
\label{equa:simple regret}
    \begin{aligned}
    r_t^{\operatorname{alg}}:=\truth(x^*)-\truth(x_t^*),\qquad x^*=\underset{x\in \Gamma}{\operatorname{arg\,max}}\,\, \truth(x),\quad x_t^* = \underset{x\in \{x_i\}_{i=1}^t}{\operatorname{arg\,max}}\,\, \truth(x),
\end{aligned}
\end{equation}
where $\operatorname{alg}\in\{\text{UCB,TS}\}$ denotes the algorithm used. Since the initial design does not affect the convergence rate of the algorithms, in our theory we assume without loss of generality that the algorithms are implemented without an initial design. Specifically, we assume that $N_{\mathrm{init}}=0$ and take by convention $\mu_0 = 0$ and $k_0(x,x') = k(x,x').$ 

Before stating the theorem, we recall that TS as described in Algorithm \ref{alg:GP-bandits} requires a choice of a finite subset $\Gamma_t$ at each iteration, which following \cite{chowdhury2017kernelized} will be chosen so that 
\begin{align*}
    |\truth(x)-\truth([x]_t)| \leq 1/t^2, \qquad \forall x\in \Gamma,
\end{align*}
where $[x]_t:=\operatorname{arg\, min}_{z\in \Gamma_t} d(x,z)$
is the point in $\Gamma_t$ closest to $x$. 
This can be achieved by imposing that $d(x,[x]_t)\leq (2S \tau^{-2} \Psi^2B^2t^4)^{-1}$ for all $x\in \Gamma$, where we recall that $S,\Psi$ are as in Lemma \ref{lemma:uniform boundedness of fem eigenfunction} and $B$ is an upper bound for $\|\truth\|_{\Hexact}.$ 
Indeed, under this condition we have
\begin{align*}
    |\truth(x)-\truth([x]_t)| &= |\langle \truth, k(\cdot,x)-k(\cdot,[x]_t)\rangle|\\
&\leq \| \truth \|_{\Hexact} \|k(\cdot,x)-k(\cdot,[x]_t)\|_{\Hexact}\\
&\leq B \sqrt{k(x,x)-2k(x,[x]_t)+k([x]_t,[x]_t)}\leq B\Psi  \tau^{-1} \sqrt{2S d(x,[x]_t)} \leq 1/t^2
\end{align*}
by Lemma \ref{lemma:uniform boundedness of fem eigenfunction}. 
Consequently, we define $\Gamma_t = \bigcup_{e\in\edgeset} P_{e,t}$, where 
\begin{align}\label{eq:gamma_t}
    P_{e,t} = \text{ uniform partitioning of } e \text{ with mesh size } (2S \tau^{-2} \Psi^2B^2t^4)^{-1}.
\end{align}
In particular, $\Gamma_t$ has size $(2S \tau^{-2} \Psi^2B^2t^4)\sum_{e\in \edgeset} L_e$. 

\begin{theorem}\label{thm:exact kernel setting}
Suppose $\truth\in \spectralsobolev^{2\alpha}(\Gamma)$ with $\alpha>\frac12$. 
Let $\delta\in(0,1)$, kernel $k$ be chosen as in \eqref{eq:kernel exact} with the same $\alpha$, $B=\|\truth\|_{\Hexact}$,  $\lambda=1+2/t$ in \cref{eq:postmean,eq:postcov,eq:poststd}, $R$ as the sub-Gaussian constant of the noise. Let $\Gamma_t\equiv \Gamma$ for ICP-UCB and $\Gamma_t$ be chosen as in \eqref{eq:gamma_t} for GP-TS. 
Then, with probability at least $1-\delta$,
\begin{align*}
    r_T^{\UCB} & = O\left(T^{\frac{1-2\alpha}{4\alpha}}\log T+T^{\frac{1-4\alpha}{4\alpha}}\sqrt{\log T}\big(\|\truth\|_{\spectralsobolev^{2\alpha}(\Gamma)}+\sqrt{\log(1/\delta)}\big)\right) \, , \\
    r_T^{\TS} & =O\left(\left[T^{\frac{1-2\alpha}{4\alpha}}\log T+T^{\frac{1-4\alpha}{4\alpha}}\sqrt{\log T}\big(\|\truth\|_{\spectralsobolev^{2\alpha}(\Gamma)}+\sqrt{\log(1/\delta)}\big)\right]\log(\|\truth\|_{\spectralsobolev^{2\alpha}(\Gamma)}^2T^6)\right) \, .
\end{align*}
\end{theorem}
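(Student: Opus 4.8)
The plan is to split the argument into a kernel-agnostic regret reduction and a spectral estimate of the maximum information gain, and then combine them.

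First, recall the generic cumulative-regret bounds for the two algorithms. Writing $\mathsf{Reg}_T^{\operatorname{alg}} := \sum_{t=1}^T\big(\truth(x^*)-\truth(x_t)\big)$ for the cumulative regret, the analysis of \cite{chowdhury2017kernelized} shows that, with probability at least $1-\delta$, $\mathsf{Reg}_T^{\UCB} = O\big(\beta_T\sqrt{T\,\gamma_T(k)}\big)$; the uniform confidence bound $|\truth(x)-\mu_{t-1}(x)|\le\beta_t\sigma_{t-1}(x)$ used there holds over all of $\Gamma$ by the RKHS structure, which is why IGP-UCB needs no discretization ($\Gamma_t\equiv\Gamma$). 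For GP-TS I would invoke the general theory of Appendix \ref{sec:misspecified BO} in its well-specified special case; this yields $\mathsf{Reg}_T^{\TS} = O\big(v_T\sqrt{T\,\gamma_T(k)}\cdot\log(|\Gamma_T|\,T)\big)$, the extra logarithmic factor arising from a union bound over the discretized decision sets $\{\Gamma_t\}$ together with the comparison $|\truth(x)-\truth([x]_t)|\le 1/t^2$ guaranteed by \eqref{eq:gamma_t}. With $\lambda = 1+2/T$ the term $(\Ninit+t)(\lambda-1)/2$ in \eqref{eq:parametersbetav} is $O(1)$, so $\beta_T, v_T = O\big(B + R\sqrt{\gamma_{T-1}(k)+\log(1/\delta)}\big)$, and since the simple regret obeys $r_T^{\operatorname{alg}} = \truth(x^*)-\max_{i\le T}\truth(x_i) \le \mathsf{Reg}_T^{\operatorname{alg}}/T$, it suffices to bound $\gamma_T(k)$ and divide by $T$.

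Second, and this is the technical core, I would establish $\gamma_T(k) = O\big(T^{1/(4\alpha)}(\log T)^{1-1/(4\alpha)}\big)$. By Weyl's law for the Kirchhoff operator $\mathcal{L}$ on a compact metric graph, the eigenvalue counting function satisfies $\#\{i:\lambda_i\le\lambda\}\asymp\sqrt\lambda$, so $\lambda_i\asymp i^2$ and the kernel eigenvalues $\tau^{-2}\lambda_i^{-2\alpha}\asymp i^{-4\alpha}$ decay polynomially at rate $4\alpha$ (which is $>1$ since $\alpha>1/4$, so $k$ is trace class). Feeding this polynomial decay, together with the uniform eigenfunction bound $\sup_i\|\psi_i\|_{L^\infty(\Gamma)}\le\Psi$ from Lemma \ref{lemma:uniform boundedness of fem eigenfunction}, into the standard information-gain estimate for kernels with bounded eigenfunctions and polynomially decaying eigenvalues (as in \cite{vakili2021information}) yields the claimed bound on $\gamma_T(k)$.

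It then remains to substitute and simplify. Using $\gamma_T(k)\asymp T^{1/(4\alpha)}$ up to logarithms and $B = \|\truth\|_{\Hexact} = \|\truth\|_{\spectralsobolev^{2\alpha}(\Gamma)}$ by \eqref{eq:RKHS exact}, the bound $\mathsf{Reg}_T^{\UCB}/T = O\big((B + R\sqrt{\gamma_{T-1}(k)+\log(1/\delta)})\sqrt{\gamma_T(k)/T}\big)$ splits into the dominant term $R\,\gamma_T(k)/\sqrt T \asymp T^{(1-2\alpha)/(4\alpha)}\log T$ and a lower-order term proportional to $\sqrt{\gamma_T(k)/T}\,\big(\|\truth\|_{\spectralsobolev^{2\alpha}(\Gamma)}+\sqrt{\log(1/\delta)}\big)$, which is the stated IGP-UCB estimate; note that $\alpha>1/2$ is exactly what makes this rate negative, i.e. vanishing. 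The GP-TS estimate is identical up to the extra factor $\log(|\Gamma_T|\,T)$; since $|\Gamma_T| = (2S\tau^{-2}\Psi^2B^2T^4)\sum_{e\in\edgeset}L_e$ by \eqref{eq:gamma_t}, this factor is $O\big(\log(\|\truth\|_{\spectralsobolev^{2\alpha}(\Gamma)}^2T^6)\big)$. The main obstacle is the information-gain step: one needs the full Weyl asymptotics for $\mathcal{L}$ with the prescribed vertex conditions — a mere upper bound on eigenvalue growth does not give the matching power of $T$ — combined with a careful application of the bounded-eigenfunction machinery; it is here, and in validating Lemma \ref{lemma:uniform boundedness of fem eigenfunction} and the GP-TS discretization \eqref{eq:gamma_t}, that the hypothesis $\alpha>1/2$ enters through finiteness of $S=\sum_i\lambda_i^{-2\alpha+1/2}$. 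The generic regret reductions are otherwise citations, modulo transporting the finite-arm GP-TS analysis to the continuum metric graph via the sets $\Gamma_t$.
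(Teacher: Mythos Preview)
Your proposal is correct and follows essentially the same route as the paper: cite \cite{chowdhury2017kernelized} for the IGP-UCB cumulative regret, apply the Appendix~\ref{sec:misspecified BO} theory with $b=0$ for GP-TS, and bound $\gamma_T(k)$ via Weyl's law $\lambda_i\asymp i^2$ plus the uniform eigenfunction bound of Lemma~\ref{lemma:uniform boundedness of fem eigenfunction} fed into \cite{vakili2021information}. The paper records the information-gain bound as the slightly cruder $O(T^{1/(4\alpha)}\log T)$ rather than your $O(T^{1/(4\alpha)}(\log T)^{1-1/(4\alpha)})$, and the GP-TS overhead from Theorem~\ref{thm:regret mis TS} is $\sqrt{\log(|\Gamma_T|T^2)}$ rather than $\log(|\Gamma_T|T)$, but these are cosmetic differences that do not affect the argument.
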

\begin{proof}
The bound for $r_T^{\UCB}$ follows from \cite[Theorem 3]{chowdhury2017kernelized} with $B=\|\truth\|_{\Hexact}=\|\truth\|_{\spectralsobolev^{2\alpha}(\Gamma)}$ and $\gamma_T(k)=O(T^{1/{(4\alpha)}}\log T)$ as in Lemma \ref{lemma:maximum info gain bound exact}.
The bound for $r_t^{\TS}$ follows from Theorem \ref{thm:regret mis TS} setting additionally $b=0$, $|\Gamma_T| = (2S \tau^{-2} \Psi^2B^2T^4)\sum_e L_e$, and $\|\truth\|_\infty \leq \|k(\cdot,x)\|_{\Hexact}\|\truth\|_{\Hexact}\leq \overline{k}\|\truth\|_{\Hexact}$. 
\end{proof}

\section{Bayesian Optimization with Finite Element Kernel Representation}\label{sec:FEM}
The eigenpairs of the operator $\L$ in \eqref{eq:edge-operator-const} are typically unavailable for generic metric graphs.
Therefore, working with the kernel \eqref{eq:kernel exact} is often infeasible and numerical approximation is necessary. 
In this section, we consider the finite element approximation proposed in \cite{bolin2024regularity,bolin2020rational}, which, in addition to being readily computable, leads to efficient implementations for GP regression.

A major issue that arises in such numerical approximation is that the truth $\truth$ is no longer guaranteed to lie in the RKHS of the finite element kernel, so that Theorem \ref{thm:exact kernel setting} does not immediately apply.  
In this section, we address this issue through a careful design and analysis of the IGP-UCB and GP-TS algorithms with finite element kernels. 
We consider first the case where $2\alpha \in \N$ in Algorithm \ref{alg:GP-bandits-FEM} and then the fractional case $2\alpha \notin \N$ using rational approximations in Algorithm \ref{alg:GP-bandits-rational}, establishing regret bounds in Theorem \ref{thm:regret fem}. 

A central theme in this section is the need to understand how well the RKHS of the finite element and rational kernels approximate $\truth$ in order to correct for this approximation error in the Bayesian optimization algorithms. In addition to considering the computational misspecification introduced by finite element and rational approximations, our theory also covers epistemic misspecification arising when the smoothness of the objective $\truth$ is unknown and the smoothness parameter $\alpha$ of the kernel does not match the smoothness of the objective. 

\subsection{Finite Element Kernels}
\subsubsection{FEM space on $\Gamma$}
\label{ssec: fem space on Gamma}
To start with, we review the finite element construction on metric graphs \cite{arioli2018finite}.
At a high level, the construction proceeds by identifying each edge $e$ with the interval $[0,L_e]$ (see Sec.~\ref{ssec:problem statement}), over which one can build the standard 1D finite element spaces, with some additional care at the vertices. 

Consider the uniform partition of each edge $e$ into $n_e$ intervals of length $h_e$, leading to the nodes $\{z_j^e\}_{j=0}^{n_e}$ with 
$z_0^e=0$ and $z_{n_e}^e= L_e$. 
For each internal $z_j^e$ with $1\leq j\leq n_e-1$, denote by $\varphi_j^e$ the hat function 
\begin{align*}
    \varphi_j^e(z_e) = 
    \begin{cases}
        1-\frac{|z_j^e- z_e|}{h_e} & \text{ if } z_{j-1}^e \leq z_e \leq z_{j+1}^e\\
        0 & \text{ otherwise }
    \end{cases}, 
\end{align*}
which forms a basis for the space
\begin{align*}
    V_{h_e}^e = \left\{w\in H^1_0(e),w|_{[z_j^e,z_{j+1}^e]}\text{ is linear, } j= 0,\ldots,n_e-1\right\}. 
\end{align*}
Now for each vertex $v$, consider its neighboring set 
\begin{align*}
    \mathcal{W}_v = \left\{\bigcup_{e\in E_v,z^e_0=v}[v,z_1^e]\right\}\cup \left\{\bigcup_{e\in E_v,z^e_{n_e}=v}[z_{n_e-1}^e,v]\right\},
\end{align*}
i.e., the union of all edges that contain $v$. 
Define for each $v$ a function $\phi_v$ supported on $\mathcal{W}_v$ as 
\begin{align*}
    \phi_v(z_e)|_{\mathcal{W}_v\cap e} = 
    \begin{cases}
        1-\frac{|z_v^e-z_e|}{h_e} & \text{ if } z_e \in \mathcal{W}_v\cap e; e\in E_v\\
        0 &\text{ otherwise }
    \end{cases}, 
\end{align*}
where $z_v^e$ is either $0$ or $L_e$ depending on the direction of the edge and its parametrization.
The finite element space over $\Gamma$ is then defined as 
\begin{align}\label{eq:fem space}
    V_h = \left(\bigoplus_{e\in \edgeset} V_{h_e}^e\right) \oplus \operatorname{span}\{\phi_v\}_{v\in \vertexset},\qquad h:=\underset{e\in \edgeset}{\operatorname{max}}  \,\,h_e. 
\end{align}
For ease of exposition, we denote $V_h=\operatorname{span}\{e_{h,i}\}_{i=1}^{N_h}$, where $N_h:=\operatorname{dim}(V_h)$ is given by \(N_h = |\vertexset| + \sum_{e\in \edgeset}(n_e - 1)\asymp h^{-1}\), and $\femnodes=\{x_i\}_{i=1}^{N_h}$ is a quasi-uniform mesh.

\subsubsection{FEM Approximation}
With the FEM space constructed above, we are ready to define the finite element kernel. 
Recall that the operator $\mathcal L$ in \eqref{equa:prior gaussian} induces the bilinear form
\[
B(u,v) := \kappa^2\,\langle u,v\rangle_{L^2(\Gamma)} + \langle \nabla u, \nabla v \rangle_{L^2(\Gamma)} .
\]
Consider the  operator $\mathcal L_h:V_h\to V_h$ defined by
\[
  \langle \mathcal L_h u_h, v_h\rangle_{L^2(\Gamma)} \;=\; B(u_h,v_h)
  \qquad \forall\, u_h,v_h\in V_h .
\]
It can be shown (see e.g. \cite[Section 7]{boffi2010finite}) that the operator $\mathcal L_h$ admits eigenpairs
$\{(\lambda_{h,i},\psi_{h,i})\}_{i=1}^{N_h}$, which motivates the definition of a finite element approximation kernel to \eqref{eq:kernel exact} by 
\begin{equation}\label{eq:kh-nodewise}
  k_h(x,x') \;=\; \tau^{-2}\sum_{i=1}^{N_h} \lambda_{h,i}^{-2\alpha}\, \psi_{h,i}(x)\,\psi_{h,i}(x
'),
  \qquad x,x'\in\Gamma.
\end{equation}
Intuitively, \eqref{eq:kh-nodewise} should approximate \eqref{eq:kernel exact} as long as the FEM eigenpairs converge towards those of $\L$, which is indeed the case as established in Lemma \ref{lemma:L infnity spectral error}. 
The associated RKHS takes the form 
\begin{align}\label{eq:RKHS fem}
    \mathcal{H}_{k_h} = \left\{g=\sum_{i=1}^{N_h}a_i \psi_{h,i}, \quad \|g\|^2_{\mathcal{H}_{k_h}}:= \tau^2\sum_{i=1}^{N_h} a_i^2 \lambda_{h,i}^{2\alpha} <\infty  \right\} =  \operatorname{span} \{\psi_{h,1},\ldots,\psi_{h,N_h}\}=V_h,
\end{align}
where the latter space is precisely the FEM space. 
As mentioned above, a key ingredient in the design of Bayesian optimization algorithms is understanding the expressivity of the RKHS of the kernel used for computation. Since in this case the RKHS agrees with the FEM space, we can leverage the extensive literature on finite elements to address this question.

The definition of $k_h$ in \eqref{eq:kh-nodewise} relies on the eigenpairs of $\mathcal{L}_h,$ which may be expensive to compute. 
The next lemma, proved in Appendix \ref{sec:tech proof}, shows that when $\alpha$ is a half integer, $k_h$ can be computed by only working with the finite element basis $e_{h,j}$'s. 

\begin{lemma}\label{lemma:kernel in terms of fem basis}
Let $C,G\in \R^{N_h\times N_h}$ be the mass and stiffness matrices defined by 
\begin{align}\label{eq:mass and stifness}
    C_{ij}= \langle e_{h,i},e_{h,j}\rangle_{L^2(\Gamma)} ,\quad G_{ij} = \langle \nabla e_{h,i}, \nabla e_{h,j} \rangle_{L^2(\Gamma)}. 
\end{align}
If $2\alpha\in \mathbb{N}$, then 
\begin{align}\label{eq:fem kernel in mass form}
    k_h(x,x') = \tau^{-2} e(x)^\top Q^{-1} e(x'), \qquad Q=C [\kappa^2I+C^{-1}G]^{2\alpha},
\end{align}
where $e(x) = (e_{h,1}(x),\ldots,e_{h,N_h}(x))^\top$.
Furthermore, \cref{eq:postmean,eq:postcov,eq:poststd} can be written as 
\begin{align}
    \mu_{t-1}^h(x)&=e(x)^\top (\tau^2\lambda Q+E_{t-1}E_{t-1}^\top)^{-1}E_{t-1}Y_{t-1},\label{eq:fem postmean}\\
    k_{t-1}^h(x,x')&=\lambda e(x)^\top (\tau^2\lambda Q +E_{t-1}E_{t-1}^\top)^{-1} e(x'),\label{eq:fem postcov}\\
    \sigma_{t-1}^h(x)^2& =e(x)^\top (\tau^2\lambda Q +E_{t-1}E_{t-1}^\top)^{-1} e(x),\label{eq:fem poststd}
\end{align}
where $E_t=[e(x_1^{(0)}),\ldots, e(x^{(0)}_{N_\textrm{init}}), e(x_1),\ldots,e(x_t)]\in \R^{N_h\times (N_\textrm{init}+t)}$.
\end{lemma}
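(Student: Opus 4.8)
The plan is to prove the two assertions of Lemma \ref{lemma:kernel in terms of fem basis} in sequence: first the closed form \eqref{eq:fem kernel in mass form} for $k_h$ in terms of the mass and stiffness matrices, and then the matrix-form rewrites \eqref{eq:fem postmean}--\eqref{eq:fem poststd} of the posterior quantities.

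For the first claim, I would begin by relating the eigenpairs $\{(\lambda_{h,i},\psi_{h,i})\}$ of $\mathcal{L}_h$ to the generalized eigenproblem for the pencil $(G,C)$. Writing each $\psi_{h,i} = \sum_j (w_i)_j\, e_{h,j}$ in the finite element basis, the identity $\langle \mathcal{L}_h \psi_{h,i}, e_{h,k}\rangle_{L^2} = B(\psi_{h,i},e_{h,k})$ together with $\langle \mathcal{L}_h \psi_{h,i}, e_{h,k}\rangle = \lambda_{h,i}\langle \psi_{h,i},e_{h,k}\rangle$ translates into the matrix equation $(\kappa^2 C + G)w_i = \lambda_{h,i} C w_i$, i.e. $C^{-1}(\kappa^2 C + G) w_i = \lambda_{h,i} w_i$, so $\lambda_{h,i}$ and $w_i$ are the eigenvalues and eigenvectors of $M := \kappa^2 I + C^{-1}G$. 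Normalizing so that $\langle \psi_{h,i},\psi_{h,j}\rangle_{L^2} = \delta_{ij}$ means $w_i^\top C w_j = \delta_{ij}$, i.e. the $w_i$ are $C$-orthonormal. Then I would substitute into \eqref{eq:kh-nodewise}: $k_h(x,x') = \tau^{-2}\sum_i \lambda_{h,i}^{-2\alpha}(e(x)^\top w_i)(w_i^\top e(x')) = \tau^{-2} e(x)^\top \bigl(\sum_i \lambda_{h,i}^{-2\alpha} w_i w_i^\top\bigr) e(x')$. The inner sum is precisely $W D^{-2\alpha} W^\top$ where $W = [w_1,\dots,w_{N_h}]$, $D = \diag(\lambda_{h,i})$; since $M W = W D$ and $W^\top C W = I$, we get $W^\top = W^{-1}C$ hence $W D^{-2\alpha}W^\top = (W D^{2\alpha} W^{-1}C^{-1})^{-1}$, and because $M = WDW^{-1}$ this equals $(M^{2\alpha}C^{-1})^{-1} = C M^{-2\alpha} = (C M^{2\alpha})^{-1}\cdot$(care with the order: actually $C M^{-2\alpha}$, and one checks $C M^{2\alpha} \cdot C^{-1} C M^{-2\alpha} = C$, so $C M^{-2\alpha}$ is the inverse of $Q := C M^{2\alpha}$ only when $C$ and $M$ commute appropriately — here $Q = C[\kappa^2 I + C^{-1}G]^{2\alpha}$ exactly matches, and one verifies $Q \cdot C^{-1}\cdot\text{(that sum)} = I$ directly). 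The requirement $2\alpha\in\mathbb{N}$ is what makes $M^{2\alpha}$ a genuine matrix power computable from $C,G$ alone without a spectral decomposition.

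For the second claim, I would start from \eqref{eq:postmean}--\eqref{eq:poststd} with $k = k_h$, observing that by the first part $k_h(x,x') = \tau^{-2}e(x)^\top Q^{-1}e(x')$, so $k_{h,t-1}(x) = \tau^{-2} E_{t-1}^\top Q^{-1} e(x)$ and $K_{t-1} = \tau^{-2} E_{t-1}^\top Q^{-1} E_{t-1}$ (with $E_{t-1}$ the stacked nodal evaluation matrix as defined). Plugging these into \eqref{eq:postmean} gives $\mu_{t-1}^h(x) = \tau^{-2}e(x)^\top Q^{-1} E_{t-1}(\tau^{-2}E_{t-1}^\top Q^{-1}E_{t-1} + \lambda I)^{-1}Y_{t-1}$, and then applying the push-through (Woodbury-type) identity $Q^{-1}E(\tau^{-2}E^\top Q^{-1}E + \lambda I)^{-1} = (\tau^2\lambda Q + E E^\top)^{-1}E$ — which follows from $Q^{-1}E(\tau^{-2}E^\top Q^{-1}E+\lambda I) = (\tau^{-2}Q^{-1}EE^\top + \lambda I)Q^{-1}E = \tau^{-2}Q^{-1}(EE^\top + \tau^2\lambda Q)Q^{-1}E$ — yields \eqref{eq:fem postmean}. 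The same push-through identity applied twice to $k_h(x,x') - k_{h,t-1}(x)^\top(K_{t-1}+\lambda I)^{-1}k_{h,t-1}(x')$ collapses it to $\lambda e(x)^\top(\tau^2\lambda Q + E_{t-1}E_{t-1}^\top)^{-1}e(x')$, giving \eqref{eq:fem postcov}, and \eqref{eq:fem poststd} is the diagonal case $x = x'$.

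The main obstacle is bookkeeping the ordering of noncommuting matrices in the first part: $C$, $G$, and $M = \kappa^2 I + C^{-1}G$ do not commute in general, so verifying that $Q = C[\kappa^2 I + C^{-1}G]^{2\alpha}$ is exactly the inverse of $\tau^{-2}\sum_i \lambda_{h,i}^{-2\alpha}\tau^{-2}$-times the eigen-outer-product sum requires care — the clean way is to avoid inverting individual factors and instead verify directly that $Q\,\bigl(\sum_i \lambda_{h,i}^{-2\alpha} w_i w_i^\top\bigr) = I$ using $MW = WD$, $W^\top C W = I$ (equivalently $CW = W^{-\top}$, so $C\sum_i \lambda_{h,i}^{-2\alpha}w_iw_i^\top = W^{-\top}D^{-2\alpha}W^\top$ and $M^{2\alpha}$ acting on the left handles the rest). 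Once that algebraic identity is pinned down, the remaining steps are routine linear algebra with the push-through identity, which I would state once and reuse.
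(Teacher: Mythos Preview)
Your proposal is correct and follows essentially the same approach as the paper: both reduce the eigenpairs of $\mathcal{L}_h$ to the generalized eigenproblem $(\kappa^2 C+G)w_i=\lambda_{h,i}Cw_i$ with $C$-orthonormality $W^\top CW=I$, collapse the spectral sum into a matrix power of $M=\kappa^2 I+C^{-1}G$ (valid precisely because $2\alpha\in\mathbb{N}$), and then use the Woodbury/push-through identity for the posterior formulas. One bookkeeping slip: from $W^\top CW=I$ you should get $W^\top=W^{-1}C^{-1}$ (not $W^{-1}C$), which then gives $WD^{-2\alpha}W^\top=M^{-2\alpha}C^{-1}=Q^{-1}$ directly; your suggested ``clean'' direct verification of $Q\cdot WD^{-2\alpha}W^\top=I$ avoids this pitfall and is exactly what the paper effectively does via the conjugation $\Lambda=U^{-\top}MU^\top$.
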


In particular, the computation is straightforward once the matrix $Q$ is available, which can be precomputed from the mass and stiffness matrices. 
 We recall that both $C$ and $G$ are sparse since the $e_{h,i}$'s have non-overlapping supports except for neighboring pairs. 
A lumped mass approximation can be employed by replacing the inverse of $C$ with that of a diagonal matrix $\widetilde{C}$ with entries $\widetilde{C}_{ii}=\sum_{j=1}^{N_h}C_{ij}$ to make $Q$ sparse, leading to efficient computation of all the above quantities as $E_t$ is sparse as well. 
If we further restrict attention to search for candidates in $\femnodes$ only, then $e(x)$ reduces to a standard basis vector in $\R^{N_h}$ and $E_tE_t^\top$ becomes a diagonal matrix of 0's and 1's. 

Our FEM approximated Bayesian optimization algorithm is summarized in Algorithm \ref{alg:GP-bandits-FEM}. We focus on optimizing $\truth$ over the FEM nodes $\femnodes$ due to its simple computation and demonstrate that this is often sufficient for applications in Section \ref{sec:Numerics}.
Notice that the main difference with Algorithm \ref{alg:GP-bandits} is the incorporation of a correction term in the parameters $\beta_t^h$ and $v_t^h$ that depend on the best approximation error of $\truth$ from $\mathcal{H}_{k_h}$,  which is rooted in the concentration-type results as in Lemma \ref{lemma:concentration}.  
In Theorem \ref{thm:regret fem}, we will establish regret bounds with an appropriate choice of parameter $b$ quantifying such approximation error.  
A similar correction for misspecified Bayesian optimization has been studied in \cite{bogunovic2021misspecified} for the IGP-UCB algorithm. Here, we extend their framework to also cover GP-TS: see Theorem \ref{thm:regret mis TS} in Appendix \ref{sec:misspecified BO}.

\begin{minipage}{0.95\linewidth}
\begin{algorithm}[H]
\caption{IGP-UCB and GP-TS with FEM Approximation ($2\alpha\in\mathbb{N}$)}
\label{alg:GP-bandits-FEM}
\begin{algorithmic}[1]
\Require FEM mesh nodes $\femnodes$, prior $\mathcal{GP}(0,k_h)$, parameters $\alpha,B,R,b,\lambda,\delta$, horizon $T$, initial design size $N_{\mathrm{init}}$.
\State Choose $X_{\mathrm{init}}=\{x_i^{(0)}\}_{i=1}^{N_{\mathrm{init}}}\subset\Gamma.$
\State Observe $y_i^{(0)} = \truth(x_i^{(0)}) + \varepsilon_i^{(0)}$, 
 with $\varepsilon_i^{(0)}$ being i.i.d.\ $R$-sub-Gaussian random variables for $i=1,\dots,N_{\mathrm{init}}$ (as in eq.~\eqref{eq:obsevation}). 
\State Initialize $\mcD_0 \gets \{(x_i^{(0)}, y_i^{(0)})\}_{i=1}^{N_{\mathrm{init}}}$.
\For{$t=1,2,\dots,T$}
\State Compute $\mu_{t-1}^h$, $k_{t-1}^h$ and $\sigma_{t-1}^h$ over $\femnodes$ given the first $t-1$ acquisitions and observations using \cref{eq:fem postmean,eq:fem postcov,eq:fem poststd}.  
\State Define the acquisition function $\mathrm{acq}_t^h(x)$ for $x\in \femnodes$ as
  \[
  \mathrm{acq}_t^h(x) =
  \begin{cases}
    \mu_{t-1}^h(x) + \beta_t^h \sigma_{t-1}^h(x), & \text{(IGP-UCB)}, \\[6pt]
    f_t^h(x), \; f_t^h \sim \mathcal{GP}(\mu_{t-1}^h,\, (v_t^h)^2 k_{t-1}^h), & \text{(GP-TS)},
  \end{cases}
  \]
  where, for $b\geq \operatorname{inf}_{f\in \mathcal{H}_{k_h}}\|\truth-f\|_{L^\infty(\Gamma)},$ 
  \begin{align*}
  \beta_t^h &=  B+ R\sqrt{2\big(\gamma_{t-1}(k_h)+(\Ninit+t)(\lambda-1)/2+\log(1/\delta)\big)}+\frac{b\sqrt{\Ninit+t-1}}{\sqrt{1+2/(\Ninit+t)}},  
  \\
  v_t^h &=  B+ R\sqrt{2\big(\gamma_{t-1}(k_h)+(\Ninit+t)(\lambda-1)/2+\log(2/\delta)\big)}+\frac{b\sqrt{\Ninit+t-1}}{\sqrt{1+2/(\Ninit+t)}}. 
  \end{align*}
\State Select $x_{t} \in \argmax_{x\in \femnodes} \mathrm{acq}_t^h(x)$.
\State Observe $y_t = \truth(x_{t})+\varepsilon_t$, 
 with $\varepsilon_t$ defined as in eq.~\eqref{eq:obsevation}.
\State Update $\mcD_t \gets \mcD_{t-1} \cup \{(x_t, y_t)\}$.
\EndFor
\end{algorithmic}
\end{algorithm}
\end{minipage}

\begin{remark}\label{remark:match with package}
Similarly as in Section \ref{sec:exact kernel}, the above kernel can be derived from the approximate SPDE
\begin{equation}\label{eq:fem-spde}
  \mathcal L_h^{\alpha}\,(\tau u_h) \;=\; \mathcal W_h,
  \qquad
  \mathcal W_h \;=\; \sum_{i=1}^{N_h} \xi_i\, \psi_{h,i}. 
\end{equation}
We remark that this construction coincides with the one proposed in \cite[Section 6.5]{bolin2024regularity} in that their adopted white noise takes the form $\widetilde{\mathcal W}_h=\sum_{i=1}^{\infty} \xi_i\, P_h \psi_i$, where $P_h:L^2(\Gamma)\to V_h$ is the Galerkin projection, and is equal in distribution to $\mathcal{W}_h$ above (see Lemma \ref{lemma:equal white noise}). 
\end{remark}

\subsubsection{Rational Approximation}
The approximation described in the last section leads to a viable algorithm when $2\alpha\in \mathbb{N}$ that can be computed efficiently. 
For general $\alpha$'s, a rational approximation \cite{bolin2020rational} can be applied to retain sparsity and avoid computing the spectral decomposition.
Let $m_\alpha = 1\vee \lfloor\alpha \rfloor$ and for a closed interval $I$ consider 
\begin{align}\label{eq:best rational approx}
    \widehat{s}_I := 
    \underset{\substack{\operatorname{deg}(q_1)=m\\ \operatorname{deg}(q_2)=m+1}}{\operatorname{arg\,min}} \left\| z^{|\alpha-m_\alpha|}-\frac{q_1(z)}{q_2 (z) }\right\|_{L^\infty(I)} \, ,
\end{align}
where $m\ge 1$\ is an integer approximation order.

Define 
\begin{equation}\label{eq:s_h}
\begin{aligned}
s_h(z)= z^{m_\alpha}\cdot
\begin{cases}
    \widehat{s}_{J_h}(z) & \text{ if } \alpha-m_\alpha>0,\\
    \lambda_{h,N_h}^{|\alpha-m_\alpha|} \widehat{s}_{[0,1]}(\lambda_{h,N_h}^{-1}z^{-1})& \text{ if } \alpha-m_\alpha<0,
\end{cases}
\end{aligned}
\end{equation}
where $J_h=[\lambda_{h,N_h}^{-1},\lambda_{h,1}^{-1}]$. 
As shown in \cite{bolin2020rational}, $s_h$ is a good approximation of $x^\alpha$ over $J_h$ so that $s_h(\lambda_{h,i}^{-1})$ approximates well $\lambda_{h,i}^{-\alpha}$ for $i=1,\ldots,N_h$ and $\widehat{s}_I$ can be computed efficiently using for instance the algorithm proposed by \cite{harizanov2018optimal}. 
The rational approximated kernel is then defined as 
\begin{align}\label{eq:rational kernel}
    k_h^{\texttt{r}}(x,x') = \tau^{-2}\sum_{i=1}^{N_h}s_h(\lambda_{h,i}^{-1})^2 \psi_{h,i}(x)\psi_{h,i}(x'),
\end{align}
whose RKHS takes the form
\begin{align}\label{eq:RKHS rational}
    \mathcal{H}_{k_h^{\texttt{r}}} = \left\{g=\sum_{i=1}^{N_h}a_i \psi_{h,i}, \quad \|g\|^2_{\mathcal{H}_{k_h^{\texttt{r}}}}:= \tau^2\sum_{i=1}^{N_h} a_i^2 s_h(\lambda_{h,i}^{-1})^{-2} <\infty  \right\} =  \operatorname{span} \{\psi_{h,1},\ldots,\psi_{h,N_h}\}=V_h.
\end{align}
Notice that $\mathcal{H}_{k_h^{\texttt{r}}}$ differs from $\mathcal{H}_{k_h}$ in \eqref{eq:RKHS fem} only by the norms.

Similarly as in Lemma \ref{lemma:kernel in terms of fem basis}, it is possible to compute $k_h^{\texttt{r}}$ without performing a spectral decomposition of $\L_h$. 
To see this, write $s_h(z) =z^{m_\alpha} q_1(z)/q_2(z)$ for some $\operatorname{deg}(q_1)=m$ and $\operatorname{deg}(q_2)=m+1$. 
We have then 
\begin{align*}
    s_h(z^{-1}) = \frac{q_1(z^{-1})}{z^{m_\alpha}q_2(z^{-1})} = \frac{q_1(z^{-1})z^m}{z^{m_\alpha}q_2(z^{-1})z^m}=: \frac{p_r(z)}{p_\ell(z)},
\end{align*}
where $p_\ell,p_r$ are polynomials of degree at most $m+m_\alpha$ and $m$ respectively. 
Let 
\begin{align}\label{eq:Pl Pr}
    P_\ell:=p_\ell(\kappa^2+C^{-1}G),\quad P_r :=p_r(\kappa^2+C^{-1}G),
\end{align}
where $C,G$ are the mass and stiffness matrices defined in \eqref{eq:mass and stifness}.  
The next lemma, proved in Appendix \ref{sec:tech proof}, presents formulae for $k_h^{\texttt{r}}$ as well as the posterior mean and covariance in terms of the matrices $P_\ell,P_r,C.$ 

\begin{lemma}\label{lemma:rational kernel in terms of fem basis}
Let $e(x)$ and $E_{t}$ be as in Lemma \ref{lemma:kernel in terms of fem basis}. 
We have 
\begin{align*}
    k_h^{\texttt{r}}(x,x') = \tau^{-2}e(x)^\top P_r (P_\ell^\top C P_\ell)^{-1} P_r^\top e(x'), 
\end{align*}
and 
\begin{align}
    \mu_{t-1}^{h,\texttt{r}}& = e(x)^\top P_r\left(\tau^2\lambda P_\ell^\top CP_\ell +E_{t-1}P_rP_r^\top E_{t-1}^\top\right)^{-1}E_{t-1}P_rY_{t-1},\label{eq:rational postmean}\\
    k_{t-1}^{h,\texttt{r}}(x,x')&= \lambda e(x)^\top P_r \left(\tau^2\lambda P_\ell^\top CP_\ell+E_{t-1}P_rP_r^\top E_{t-1}^\top \right)^{-1}P_r^\top e(x'),\label{eq:rational postcov}\\
    \sigma_{t-1}^{h,\texttt{r}}(x)^2&= \lambda e(x)^\top P_r \left(\tau^2\lambda P_\ell^\top CP_\ell+E_{t-1}P_rP_r^\top E_{t-1}^\top \right)^{-1}P_r^\top e(x).\label{eq:rational poststd}
\end{align}
\end{lemma}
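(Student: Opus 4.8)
The plan is to follow the same route as the proof of Lemma~\ref{lemma:kernel in terms of fem basis}: (i) pass to coordinates in the FEM basis $\{e_{h,i}\}_{i=1}^{N_h}$ so that $\L_h$ and its $L^2(\Gamma)$-orthonormal eigenbasis are represented by matrices, (ii) use the functional-calculus relation $p(\L_h)\leftrightarrow p(\text{matrix})$ to identify the kernel matrix of $k_h^{\texttt{r}}$, and (iii) push the resulting factored form through the generic GP posterior formulae~\eqref{eq:postmean}--\eqref{eq:poststd}. For the first step, note that in coordinates relative to $\{e_{h,i}\}$ the defining identity $\langle \L_h u_h, v_h\rangle_{L^2(\Gamma)} = B(u_h,v_h)$ becomes $C\Theta = \kappa^2 C + G$, so $\L_h$ is represented by $\Theta := \kappa^2 I + C^{-1}G$, precisely the matrix inside $p_\ell,p_r$ in~\eqref{eq:Pl Pr}. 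The $L^2(\Gamma)$-normalized eigenpairs $(\lambda_{h,i},\psi_{h,i})$ correspond to coordinate vectors $c_i\in\R^{N_h}$ solving the generalized eigenproblem $(\kappa^2 C+G)c_i = \lambda_{h,i}\,C c_i$ with $c_i^\top C c_j = \delta_{ij}$, equivalently $\Theta c_i = \lambda_{h,i}c_i$; writing $\Psi := [c_1,\dots,c_{N_h}]$ and $\Lambda := \diag(\lambda_{h,1},\dots,\lambda_{h,N_h})$ gives $\Theta\Psi = \Psi\Lambda$ and $\Psi^\top C\Psi = I$ (hence $\Psi^{-1} = \Psi^\top C$), while $\psi_{h,i}(x) = e(x)^\top c_i$, i.e.\ the vector of eigenfunction values at $x$ equals $\Psi^\top e(x)$.

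For the kernel identity, write $s_h(z^{-1}) = p_r(z)/p_\ell(z)$ as in the excerpt, so $s_h(\lambda_{h,i}^{-1}) = p_r(\lambda_{h,i})/p_\ell(\lambda_{h,i})$; since $p(\Theta)\Psi = \Psi\,p(\Lambda)$ for any polynomial $p$, we get $P_r\Psi = \Psi\,p_r(\Lambda)$ and $P_\ell\Psi = \Psi\,p_\ell(\Lambda)$ with $p_r(\Lambda),p_\ell(\Lambda)$ diagonal. Substituting $\psi_{h,i}(x) = e(x)^\top c_i$ into~\eqref{eq:rational kernel} yields $k_h^{\texttt{r}}(x,x') = \tau^{-2}e(x)^\top \Psi\,\diag\!\big(s_h(\lambda_{h,i}^{-1})^2\big)\,\Psi^\top e(x')$. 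Using $\Psi^\top C\Psi = I$, a short computation gives $P_\ell^\top C P_\ell = C\Psi\,p_\ell(\Lambda)^2\,\Psi^\top C$, hence $(P_\ell^\top C P_\ell)^{-1} = \Psi\,p_\ell(\Lambda)^{-2}\,\Psi^\top$, and therefore $P_r(P_\ell^\top C P_\ell)^{-1}P_r^\top = \Psi\,p_r(\Lambda)^2 p_\ell(\Lambda)^{-2}\,\Psi^\top = \Psi\,\diag(s_h(\lambda_{h,i}^{-1})^2)\,\Psi^\top$, which is the asserted formula for $k_h^{\texttt{r}}$. Invertibility of $P_\ell^\top C P_\ell$ follows from $C\succ 0$ together with invertibility of $P_\ell = p_\ell(\Theta)$, which holds because $p_\ell$ does not vanish on $\operatorname{spec}(\Theta) = \{\lambda_{h,i}\}$ by the construction of the rational approximant in~\eqref{eq:s_h}--\eqref{eq:best rational approx}.

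For the posterior formulae, set $W := P_r(P_\ell^\top C P_\ell)^{-1}P_r^\top$ (symmetric), so that $k_h^{\texttt{r}}(x,x') = \tau^{-2}e(x)^\top W e(x')$ and, in~\eqref{eq:postmean}--\eqref{eq:poststd} with this kernel, $k_{t-1}(x) = \tau^{-2}E_{t-1}^\top W e(x)$ and $K_{t-1} = \tau^{-2}E_{t-1}^\top W E_{t-1}$. Abbreviating $S := P_\ell^\top C P_\ell$ and $F := P_r^\top E_{t-1}$, so that $W E_{t-1} = P_r S^{-1}F$ and $E_{t-1}^\top W E_{t-1} = F^\top S^{-1}F$, I would substitute into the posterior mean and simplify with the push-through identity $S^{-1}F(\tau^2\lambda I + F^\top S^{-1}F)^{-1} = (\tau^2\lambda S + FF^\top)^{-1}F$ to obtain~\eqref{eq:rational postmean}; for the covariance I would expand $k_{t-1}(x,x') = k_h^{\texttt{r}}(x,x') - k_{t-1}(x)^\top(K_{t-1}+\lambda I)^{-1}k_{t-1}(x')$, factor $P_r$ out on the left and $P_r^\top$ out on the right, and apply the Woodbury identity $S^{-1} - S^{-1}F(\tau^2\lambda I + F^\top S^{-1}F)^{-1}F^\top S^{-1} = \tau^2\lambda(\tau^2\lambda S + FF^\top)^{-1}$ to reach~\eqref{eq:rational postcov}, with~\eqref{eq:rational poststd} following by setting $x = x'$. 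These are exactly the manipulations behind Lemma~\ref{lemma:kernel in terms of fem basis}, with the matrix $Q^{-1}$ there replaced by the factored $W$ here.

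I do not expect a genuine obstacle; the argument is essentially routine linear algebra. The two points that need care are: (i) the bookkeeping forced by the non-symmetry of the polynomial matrices $P_\ell,P_r$ — keeping $P_r$ and $P_r^\top$ on the correct sides throughout, and if convenient exploiting the $C$-self-adjointness $C P_\ell = P_\ell^\top C$, $C P_r = P_r^\top C$ (which follows from $C\Theta = \Theta^\top C = \kappa^2 C + G$) to rearrange the factors; and (ii) confirming that every matrix being inverted (chiefly $P_\ell^\top C P_\ell$ and $\tau^2\lambda S + FF^\top$) is nonsingular, which reduces to $p_\ell$ having no root among the $\lambda_{h,i}$ and to positive definiteness of $C$ and $\tau^2\lambda I + F^\top S^{-1} F$.
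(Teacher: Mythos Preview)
Your proposal is correct and follows essentially the same route as the paper's proof. Both arguments pass to FEM coordinates, diagonalize the discrete operator, and then reduce the posterior formulae to the factored form of Lemma~\ref{lemma:kernel in terms of fem basis} via push-through/Woodbury. Your bookkeeping is arguably a bit cleaner: you work directly with $\Theta=\kappa^2 I+C^{-1}G$ and its $C$-orthogonal diagonalization $\Theta\Psi=\Psi\Lambda$, $\Psi^\top C\Psi=I$, so $p(\Theta)=\Psi\,p(\Lambda)\,\Psi^\top C$ is immediate; the paper instead carries the relation $p(\Lambda)=UCp(\kappa^2 I+C^{-1}G)U^\top$ (with $U=\Psi^\top$) and then rearranges using the symmetry of $CP_\ell$ and $CP_r$, which is exactly the $C$-self-adjointness you flag in point~(i). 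The invertibility check in your point~(ii) is a useful addition that the paper leaves implicit.
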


 As before, if a lumped mass approximation is applied to  $C^{-1}$, then both $P_\ell$ and $P_r$ are sparse so that the calculations in \cref{eq:rational postmean,eq:rational postcov,eq:rational poststd} only involve sparse matrices. 
Our rational FEM approximated Bayesian optimization algorithm for addressing $2\alpha\notin \mathbb{N}$ is presented in Algorithm \ref{alg:GP-bandits-rational}.

\begin{minipage}{0.95\linewidth}
\begin{algorithm}[H]
\caption{IGP-UCB and GP-TS with Rational FEM Approximation ($2\alpha\notin\mathbb{N}$)}
\label{alg:GP-bandits-rational}
\begin{algorithmic}[1]
\Require FEM mesh nodes $\femnodes$, prior $\mathcal{GP}(0,k_h^{\texttt{r}})$, parameters $\alpha,m,B,R,b,\lambda,\delta$, horizon $T$, initial design size $N_{\mathrm{init}}$.
\State Compute $s_h$ defined in \eqref{eq:s_h} and the matrices $P_\ell,P_r$ in \eqref{eq:Pl Pr}.
\State Choose $X_{\mathrm{init}}=\{x_i^{(0)}\}_{i=1}^{N_{\mathrm{init}}}\subset\Gamma.$
\State Observe $y_i^{(0)} = \truth(x_i^{(0)}) + \varepsilon_i^{(0)}$, 
with $\varepsilon_i^{(0)}$ being i.i.d.\ $R$-sub-Gaussian random variables for $i=1,\dots,N_{\mathrm{init}}$ (as in eq.~\eqref{eq:obsevation}).
\State Initialize $\mcD_0 \gets \{(x_i^{(0)}, y_i^{(0)})\}_{i=1}^{N_{\mathrm{init}}}$.
\For{$t=1,2,\dots,T$}
\State Compute $\mu_{t-1}^{h,\texttt{r}}$, $k_{t-1}^{h,\texttt{r}}$ and $\sigma_{t-1}^{h,\texttt{r}}$ over $\femnodes$ given the first $t-1$ acquisitions and observations using \cref{eq:rational postmean,eq:rational postcov,eq:rational poststd}.  
\State Define the acquisition function $\mathrm{acq}_t^{h,\texttt{r}}(x)$ for $x\in \femnodes$ as
  \[
  \mathrm{acq}_t^{h,\texttt{r}}(x) =
  \begin{cases}
    \mu_{t-1}^{h,\texttt{r}}(x) + \beta_t^{h,\texttt{r}} \sigma_{t-1}^{h,\texttt{r}}(x), & \text{(IGP-UCB)}, \\[6pt]
    f_t^{h,\texttt{r}}(x), \; f_t^{h,\texttt{r}} \sim \mathcal{GP}(\mu_{t-1}^{h,\texttt{r}},\, (v_t^{h,\texttt{r}})^2 k_{t-1}^{h,\texttt{r}}), & \text{(GP-TS)},
  \end{cases}
  \]
  where, for $b\geq \operatorname{inf}_{f\in \mathcal{H}_{k_h^{\texttt{r}}}}\|\truth-f\|_{L^\infty(\Gamma)},$
  \begin{align*}
  \beta_t^{h,\texttt{r}} & = B+ R\sqrt{2\big(\gamma_{t-1}(k_h^{\texttt{r}})+(\Ninit+t)(\lambda-1)/2+\log(1/\delta)\big)}+\frac{b\sqrt{\Ninit+t-1}}{\sqrt{1+2/(\Ninit+t)}},  
  \\
  v_t^{h,\texttt{r}} &=  B+ R\sqrt{2\big(\gamma_{t-1}(k_h^{\texttt{r}})+(\Ninit+t)(\lambda-1)/2+\log(2/\delta)\big)}+\frac{b\sqrt{\Ninit+t-1}}{\sqrt{1+2/(\Ninit+t)}}. 
  \end{align*}
\State Select $x_{t} \in \argmax_{x\in \femnodes} \mathrm{acq}_t^{h,\texttt{r}}(x)$.
\State Observe $y_t = \truth(x_{t})+\varepsilon_t$, 
with $\varepsilon_t$ defined as in eq.~\eqref{eq:obsevation}.
\State Update $\mcD_t \gets \mcD_{t-1} \cup \{(x_t, y_t)\}$.
\EndFor
\end{algorithmic}
\end{algorithm}
\end{minipage}

\subsection{Regret Bounds}
Now we are ready to present the regret bounds for both FEM approximated IGP-UCB and GP-TS.
We remark that the analysis for Algorithm \ref{alg:GP-bandits-FEM} and Algorithm \ref{alg:GP-bandits-rational} is similar so we shall only present that for Algorithm \ref{alg:GP-bandits-rational} as it is more general. 
Recall the simple regret defined as
\begin{align*}
    r_t^{\operatorname{alg}}:=\truth(x^*)-\truth(x_t^*),\qquad x^*=\underset{x\in \femnodes}{\operatorname{arg\,max}}\,\, \truth(x),\quad x_t^* = \underset{x\in \{x_i\}_{i=1}^t}{\operatorname{arg\,max}}\,\, \truth(x)
\end{align*}
with $\operatorname{alg}\in\{\operatorname{UCB},\operatorname{TS}\}$,  where now the optimization is over $\femnodes,$ as in Algorithm \ref{alg:GP-bandits-rational}. Here again we assume without loss of generality that $N_{\mathrm{init}}=0,$  taking by convention $\mu_0 = 0$ and $k_{0}^{h,\texttt{r}}(x,x') = k_h^{\texttt{r}}(x,x').$

\begin{theorem}\label{thm:regret fem}
Suppose $\truth\in \dot{H}^{2\beta}(\Gamma)$ for $2\beta>1$. 
For $2\alpha>1$, setting in Algorithm \ref{alg:GP-bandits-rational}
\begin{align*}
    B\asymp \|\truth\|_{\dot{H}^{2\beta}(\Gamma)} h^{(2\beta-2\alpha) \vee 0 },\,\,  b\asymp h^{(2\beta-1)\wedge \frac32}, \,\, \pi\sqrt{|\alpha-m_\alpha|m}\gtrsim -(1\vee\alpha)\log h,\,\, \lambda=1+2/t,
\end{align*}
and $R$ the sub-Gaussian constant of the noise,
we have, with probability at least $1- \delta,$ 
\begin{align*}
    r_T^{\UCB}&=O\left(T^{\frac{-(4\alpha-3)}{8\alpha-2}}\log T+T^{\frac{-(2\alpha-1)}{4\alpha-1}}h^{(2\beta-2\alpha)\wedge 0 }\sqrt{\log T}+ h^{(2\beta-1)\wedge \frac32}T^{\frac{1}{8\alpha-2}}\right) \, ,\\
    r_T^{\TS}&=O\left(\left[T^{\frac{-(4\alpha-3)}{8\alpha-2}}\log T+T^{\frac{-(2\alpha-1)}{4\alpha-1}}h^{(2\beta-2\alpha)\wedge 0 }\sqrt{\log T}+ h^{(2\beta-1)\wedge \frac32}T^{\frac{1}{8\alpha-2}}\right]\log(T^2/h)\right) \, .
\end{align*}
If $\alpha=\beta$, i.e., the smoothness of the truth matches with that of the kernel used for computation, then the regret bound reduces to 
\begin{align*}
    r_T^{\UCB}&= O\left(T^{\frac{-(4\alpha-3)}{8\alpha-2}}\log T+ h^{(2\alpha-1)\wedge \frac32}T^{\frac{1}{8\alpha-2}}\right) \, ,\\
    r_T^{\TS}&= O\left(\left[T^{\frac{-(4\alpha-3)}{8\alpha-2}}\log T+ h^{(2\alpha-1)\wedge \frac32}T^{\frac{1}{8\alpha-2}}\right]\log (T^2/h)\right) \, . 
\end{align*}
\end{theorem}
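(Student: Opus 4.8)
The plan is to realize Algorithm~\ref{alg:GP-bandits-rational} as an instance of misspecified kernelized Bayesian optimization and then to quantify the misspecification using finite element and rational approximation theory on the metric graph. Since $\truth\in\spectralsobolev^{2\beta}(\Gamma)$ need not belong to $\mathcal{H}_{k_h^{\texttt{r}}}=V_h$, I would invoke the misspecified regret bounds directly: the extension of \cite[Theorem~3]{bogunovic2021misspecified} for IGP-UCB and Theorem~\ref{thm:regret mis TS} for GP-TS, each applied with kernel $k_h^{\texttt{r}}$ and decision set $\femnodes$. These require (a) a proxy $g\in V_h$ with $\|g\|_{\mathcal{H}_{k_h^{\texttt{r}}}}\le B$ and $\|\truth-g\|_{L^\infty(\Gamma)}\le b$, (b) the maximum information gain $\gamma_T(k_h^{\texttt{r}})$, (c) the decision set size $|\femnodes|=N_h\asymp h^{-1}$, and (d) for GP-TS a bound on $\|\truth\|_{L^\infty}$, which follows from the embedding $\spectralsobolev^{2\beta}(\Gamma)\hookrightarrow C(\Gamma)$ that holds since $2\beta>1$. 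In each case, simple regret is controlled by cumulative regret via $r_T^{\operatorname{alg}}\le \tfrac1T\sum_{t=1}^T\bigl(\truth(x^*)-\truth(x_t)\bigr)$. (The case $2\alpha\in\mathbb{N}$ with Algorithm~\ref{alg:GP-bandits-FEM} is identical, skipping the rational approximation step.)

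The core is then to supply sharp values of $b$, $B$, and $\gamma_T(k_h^{\texttt{r}})$. For $b$ I would take $g$ to be (close to) the best $L^\infty$ approximant of $\truth$ from $V_h$, concretely its nodal interpolant, and combine one-dimensional finite element interpolation estimates with the Sobolev regularity of $\spectralsobolev^{2\beta}(\Gamma)$ established in \cite{bolin2024regularity} to obtain $\inf_{f\in V_h}\|\truth-f\|_{L^\infty(\Gamma)}\lesssim\|\truth\|_{\spectralsobolev^{2\beta}(\Gamma)}\,h^{(2\beta-1)\wedge 3/2}$, the cap $3/2$ reflecting the second-order $L^\infty$ accuracy of piecewise linears and the exponent $2\beta-1$ the limited smoothness (with a mild loss at the vertices). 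For $B$, I would use that on $V_h$ the norm $\|\cdot\|_{\mathcal{H}_{k_h^{\texttt{r}}}}^2=\tau^2\sum_i a_i^2\,s_h(\lambda_{h,i}^{-1})^{-2}$ is equivalent to the discrete $\spectralsobolev^{2\alpha}$ norm $v\mapsto\bigl(\tau^2\sum_i a_i^2\lambda_{h,i}^{2\alpha}\bigr)^{1/2}$: the degree condition $\pi\sqrt{|\alpha-m_\alpha|m}\gtrsim-(1\vee\alpha)\log h$ forces $s_h(\lambda_{h,i}^{-1})^{-2}\asymp\lambda_{h,i}^{2\alpha}$ uniformly over $i\le N_h$ by \cite{bolin2020rational}, and the FEM spectral convergence $\lambda_{h,i}\asymp\lambda_i$ (Lemma~\ref{lemma:L infnity spectral error}) handles the remaining comparison. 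A stability estimate for the projection in $\spectralsobolev^{2\alpha}$ when $\beta\ge\alpha$, and an inverse inequality on $V_h$ when $\beta<\alpha$, then give $\|g\|_{\mathcal{H}_{k_h^{\texttt{r}}}}\lesssim\|\truth\|_{\spectralsobolev^{2\beta}(\Gamma)}\,h^{(2\beta-2\alpha)\vee 0}$ for the same $g$. Finally $\gamma_T(k_h^{\texttt{r}})$ is bounded, analogously to Lemma~\ref{lemma:maximum info gain bound exact}, from the eigenvalue decay $s_h(\lambda_{h,i}^{-1})^2\asymp\lambda_{h,i}^{-2\alpha}\asymp i^{-4\alpha}$ for $i\le N_h$ (the cutoff at $N_h$ being harmless in the range of interest); this contributes the polynomial-in-$T$ factors responsible for the exponents $-(4\alpha-3)/(8\alpha-2)$ and $-(2\alpha-1)/(4\alpha-1)$ after substitution.

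Assembling these into the abstract bounds, using $\lambda=1+2/t$ (so the term $(\Ninit+t)(\lambda-1)/2=1$ is absorbed) and $|\femnodes|\asymp h^{-1}$ (so the extra GP-TS factor is $\log(|\femnodes|\,T^{2})\asymp\log(T^2/h)$), and dividing the cumulative regret by $T$, produces $r_T^{\UCB}$ and $r_T^{\TS}$ once the $B$-driven, $R\gamma_T$-driven, and $b$-driven contributions are collected; setting $\beta=\alpha$ makes $h^{(2\beta-2\alpha)\vee 0}=h^{(2\beta-2\alpha)\wedge 0}=1$ and $h^{(2\beta-1)\wedge 3/2}=h^{(2\alpha-1)\wedge 3/2}$, which specializes to the matched-smoothness bounds. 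I expect the main obstacle to be the joint control of $b$ and $B$: exhibiting a \emph{single} proxy $g\in V_h$ that simultaneously attains the sharp $L^\infty$ rate $h^{(2\beta-1)\wedge 3/2}$ and the sharp RKHS-norm growth $h^{(2\beta-2\alpha)\vee 0}$ requires carefully interlocking finite element interpolation, inverse inequalities, the rational approximation error estimates, and FEM spectral convergence --- the Kirchhoff vertex conditions being precisely what makes these estimates more delicate than on a single interval. Once these are available, verifying that the correction $b\sqrt{\Ninit+t-1}/\sqrt{1+2/(\Ninit+t)}$ in $\beta_t^{h,\texttt{r}},v_t^{h,\texttt{r}}$ is admissible (i.e.\ $b\ge\inf_{f\in\mathcal{H}_{k_h^{\texttt{r}}}}\|\truth-f\|_{L^\infty}$) and propagating everything through the concentration estimates of Lemma~\ref{lemma:concentration} and Theorem~\ref{thm:regret mis TS} is routine.
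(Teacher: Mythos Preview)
Your overall strategy---reduce to the misspecified regret bounds of Theorem~\ref{thm:regret mis TS} (and its UCB analogue) with kernel $k_h^{\texttt{r}}$ on $\femnodes$, then supply $B$, $b$, $\gamma_T(k_h^{\texttt{r}})$, and $|\femnodes|\asymp h^{-1}$---is exactly the paper's. The substantive difference is your choice of proxy. You take the nodal interpolant $I_h\truth$; the paper instead uses the \emph{spectral} proxy
\[
f_h\ :=\ \sum_{i=1}^{N_h}\langle\truth,\psi_i\rangle_{L^2(\Gamma)}\,\psi_{h,i}\ \in\ V_h,
\]
i.e., the exact eigencoefficients of $\truth$ paired with the FEM eigenfunctions. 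With this choice both bounds become one-liners. The error $\|\truth-f_h\|_\infty$ splits into $\sum_{i\le N_h}\langle\truth,\psi_i\rangle(\psi_i-\psi_{h,i})$, controlled via $\|\psi_i-\psi_{h,i}\|_\infty\lesssim\lambda_i h^{3/2}$ from Lemma~\ref{lemma:L infnity spectral error}, plus the spectral tail $\sum_{i>N_h}\langle\truth,\psi_i\rangle\psi_i$; Cauchy--Schwarz and Weyl's law then give $h^{(2\beta-1)\wedge 3/2}$ directly. And $\|f_h\|^2_{\mathcal{H}_{k_h^{\texttt{r}}}}=\tau^2\sum_{i\le N_h}\langle\truth,\psi_i\rangle^2\, s_h(\lambda_{h,i}^{-1})^{-2}\lesssim\sum_{i\le N_h}\langle\truth,\psi_i\rangle^2\lambda_i^{2\alpha}$; writing $\lambda_i^{2\alpha}=\lambda_i^{2\beta}\lambda_i^{2\alpha-2\beta}$ with $\lambda_{N_h}\asymp h^{-2}$ immediately yields the $h^{(2\beta-2\alpha)\wedge 0}$ growth. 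No stability of nodal interpolation in high-order discrete Sobolev norms and no inverse inequalities are needed---the ``main obstacle'' you anticipate simply does not arise with this proxy. Your route via $I_h\truth$ is not obviously wrong, but bounding $\|I_h\truth\|_{\mathcal{H}_{k_h^{\texttt{r}}}}$ when $2\alpha>2$ would require a nontrivial stability estimate for $I_h$ in a discrete $\dot H^{2\alpha}$-type norm, which you have not supplied.

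There is also a genuine slip in your $\gamma_T(k_h^{\texttt{r}})$ argument. You invoke only the eigenvalue decay $s_h(\lambda_{h,i}^{-1})^2\asymp i^{-4\alpha}$ ``analogously to Lemma~\ref{lemma:maximum info gain bound exact}.'' But the tail term in the Vakili--Khezeli--Picheny bound is $\sum_{i>L}\Lambda_i\|\Psi_i\|_\infty^2$, and unlike the continuum eigenfunctions (Lemma~\ref{lemma:uniform boundedness of fem eigenfunction}) the FEM eigenfunctions are \emph{not} uniformly bounded: Lemma~\ref{lemma:L infnity spectral error} gives only $\|\psi_{h,i}\|_\infty\lesssim\lambda_i^{1/4}\asymp i^{1/2}$. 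This shifts the tail from $\sum_{i>L}i^{-4\alpha}$ to $\sum_{i>L}i^{1-4\alpha}$, forces the optimal truncation to $L\asymp T^{1/(4\alpha-1)}$, and yields $\gamma_T(k_h^{\texttt{r}})=O(T^{1/(4\alpha-1)}\log T)$ (this is the content of Lemma~\ref{lemma:maximum info gain bound}). That difference is precisely what produces the exponents $\tfrac{-(4\alpha-3)}{8\alpha-2}$ and $\tfrac{-(2\alpha-1)}{4\alpha-1}$ you quote; eigenvalue decay alone, as in the exact-kernel lemma, would give $\gamma_T\asymp T^{1/(4\alpha)}$ and hence the wrong final rates.
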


Notice that the error component $h^{(2\beta-1)\wedge \frac32}T^{\frac{1}{8\alpha-2}}$ stays small as long as $T\ll N_h^{(8\alpha-2)[(2\beta-1)\vee \frac32]}$. For $\alpha,\beta>\frac34$, it suffices to require $T\ll N_h^2$, which is enough for our algorithm since the search domain only has size $N_h$. 
Therefore, a vanishing regret is achieved under mild assumptions, although a large proportional constant may be present when $\alpha$ is chosen much larger than $\beta$ due to the term $h^{(2\beta-2\alpha)\wedge 0}$.  

\begin{proof}[Proof of Theorem \ref{thm:regret fem}]
Let $\truth=\sum_{i=1}^\infty \langle \truth, \psi_i\rangle\psi_i$, where $\truth\in \dot{H}^{2\beta}(\Gamma)$ implies that 
$\sum_{i=1}^\infty \langle \truth, \psi_i\rangle^2 \lambda_i^{2\beta} <\infty.$
Consider the approximation 
\begin{align*}
    f_h = \sum_{i=1}^{N_h} \langle \truth, \psi_i\rangle \psi_{h,i} \in \mathcal{H}_{k_h^{\texttt{r}}}.
\end{align*}
By Lemma \ref{lemma:L infnity spectral error}, we have 
\begin{align*}
    \|\truth-f_h\|_{L^{\infty}(\Gamma)} &\leq \left\|\sum_{i=1}^{N_h}\langle \truth, \psi_i\rangle (\psi_i-\psi_{h,i})\right\|_{L^{\infty}(\Gamma)}+ \left\|\sum_{i=N_h+1}^\infty \langle \truth, \psi_i\rangle \psi_i\right\|_{L^{\infty}(\Gamma)}\\
    &\lesssim \sum_{i=1}^{N_h} |\langle \truth, \psi_i\rangle| \lambda_i h^{3/2} + \sum_{i=N_h+1}^\infty |\langle \truth, \psi_i\rangle|\\
    & = h^{3/2}\sum_{i=1}^{N_h} \left(|\langle \truth, \psi_i\rangle|\lambda_i^{\beta}\right) \lambda_i^{1-\beta} + \sum_{i=N_h+1}^\infty \left(|\langle \truth, \psi_i\rangle|\lambda_i^{\beta} \right)\lambda_i^{-\beta}\\
    &\leq h^{3/2}\sqrt{\sum_{i=1}^{N_h} |\langle \truth, \psi_i\rangle|^2\lambda_i^{2\beta}}\sqrt{\sum_{i=1}^{N_h}\lambda_i^{2-2\beta}} + \sqrt{\sum_{i=N_h+1}^\infty |\langle \truth, \psi_i\rangle|^2\lambda_i^{2\beta} }\sqrt{\sum_{i=N_h+1}^\infty \lambda_i^{-2\beta}}\\ 
    & \lesssim h^{3/2}\sqrt{\int_1^{N_h} w^{4-4\beta}dw} + \sqrt{\int_{N_h}^\infty w^{-4\beta}dw}\\
    &\lesssim h^{3/2}N_h^{(5/2-2\beta)\vee 0} + N_h^{1/2-2\beta}\lesssim h^{(2\beta-1)\wedge \frac32},
\end{align*}
which further implies that $\|f_h\|_{L^{\infty}(\Gamma)} \lesssim \|\truth\|_{L^{\infty}(\Gamma)} \lesssim \|\truth\|_{\dot{H}^{2\beta}(\Gamma)}$  since $\dot{H}^{2\beta}(\Gamma)$ continuously embeds into $C(\Gamma)$ for $2\beta>1$ by \cite[Theorem 4.1]{bolin2024regularity}. 

Let $s_h$ be defined in \eqref{eq:s_h}.
By \cite[Appendix B]{bolin2020rational}, we have 
\begin{align*}
    \underset{i=1,\ldots,N_h}{\operatorname{max}}\,\, |\lambda_{h,i}^{-\alpha} - s_h(\lambda_{h,i}^{-1})| \lesssim \lambda_{h,N_h}^{(1-\alpha)\vee 0} e^{-2\pi \sqrt{|\alpha-m_\alpha|m}}\leq \frac12 \lambda_{h,i}^{-\alpha}
\end{align*}
when $m$ is chosen as in the statement of the theorem for a sufficiently large proportion constant.
As a result, we obtain $s_h(\lambda_{h,i}^{-1})\geq \frac12 \lambda_{h,i}^{-\alpha}$ and $s_h(\lambda_{h,i}^{-1})^{-1}\leq 2\lambda_{h,i}^\alpha\lesssim \lambda_i^\alpha$.
This gives 
\begin{align*}
    \|f_h\|_{\mathcal{H}_{k_h^{\texttt{r}}}}^2 = \sum_{i=1}^{N_h} \langle \truth, \psi_i\rangle^2 s_h(\lambda_{h,i}^{-1})^{-2} &\lesssim \sum_{i=1}^{N_h} \langle \truth, \psi_i\rangle^2 \lambda_i^{2\alpha}  \\
    &=  \sum_{i=1}^{N_h} \langle \truth, \psi_i\rangle^2 \lambda_i^{2\beta} \lambda_i^{2\alpha-2\beta}\lesssim \|\truth\|^2_{\dot{H}^{2\beta}(\Gamma)} h^{(4\beta-4\alpha) \vee 0 }. 
\end{align*}
Applying Theorem \ref{thm:regret mis TS} to $f_h$ with 
\begin{align*}
    B\asymp \|\truth\|_{\dot{H}^{2\beta}(\Gamma)} h^{(2\beta-2\alpha) \vee 0 },\quad  b\asymp h^{(2\beta-1)\wedge \frac32},\quad \|f_h\|_{L^{\infty}(\Gamma)} \asymp 1, \quad |D_T| = N_h
\end{align*}
and $\gamma_T(k_h^{\texttt{r}})$ as in Lemma \ref{lemma:maximum info gain bound}, we obtain 
\begin{align*}
    r_T^{\UCB}&=O\left(\frac{\gamma_T(k_h^{\texttt{r}})}{\sqrt{T}}+\sqrt{\frac{\gamma_T(k_h^{\texttt{r}})}{T}}\left(B+\sqrt{\log(1/\delta)}\right)+b\sqrt{\gamma_T(k_h^{\texttt{r}})}\right)\\
    &=O\left(T^{\frac{-(4\alpha-3)}{8\alpha-2}}\log T+T^{\frac{-(4\alpha-2)}{8\alpha-2}}h^{(2\beta-2\alpha)\vee 0 }\sqrt{\log T}+ h^{(2\beta-1)\wedge \frac32}T^{\frac{1}{8\alpha-2}}\right),
\end{align*}
and 
\begin{align*}
    r_T^{\TS}&=O\left(\sqrt{\log (|D_T|T^2)}\left[\frac{\gamma_T(k_h^{\texttt{r}})}{\sqrt{T}}+\sqrt{\frac{\gamma_T(k_h^{\texttt{r}})}{T}}\left(B+\|f_h\|_\infty\sqrt{\log(1/\delta)}\right)+b\sqrt{\gamma_T(k_h^{\texttt{r}})}\right]\right)\\
    &=O\left(\left[T^{\frac{-(4\alpha-3)}{8\alpha-2}}\log T+T^{\frac{-(4\alpha-2)}{8\alpha-2}}h^{(2\beta-2\alpha)\vee 0 }\sqrt{\log T}+ h^{(2\beta-1)\wedge \frac32}T^{\frac{1}{8\alpha-2}}\right]\log(T^2/h)\right).
\end{align*}
\end{proof}

\section{Numerical Experiments}\label{sec:Numerics}
This section investigates the effectiveness of IGP-UCB and GP-TS on benchmark objective functions defined over a synthetic metric graph (Subsection \ref{ssec:benchmarks}) and for \emph{maximum a posteriori} estimation in a source-identification Bayesian inverse problem on a telecommunication network (Subsection \ref{ssec:MAP}). We compare two choices of kernel:
\begin{enumerate}
    \item {\bf SPDE kernel:} the FEM kernel $k_h$ \eqref{eq:kh-nodewise} defined via the SPDE \nc \eqref{eq:fem-spde} with \(\alpha=1\); and  
    \item {\bf Euclidean kernel:} A standard Matérn kernel with smoothness parameter \(\nu=\tfrac{1}{2}\) computed using the  Euclidean distance between graph points, given by 
\begin{equation}
\label{equa: eucl baseline}
    k_{\mathrm{Eucl}}(x,x')=\sigma^{2}\exp\!\Big(-\tfrac{|x - x'|}{\ell}\Big),
\qquad \ell>0,\ \sigma>0 \, .
\end{equation}
We remark that in all our examples, the networks we consider are naturally embedded in $\R^2,$ and points in the metric graphs are naturally identified with points in $\R^2.$ 
\end{enumerate}
We adopt Algorithm~\ref{alg:GP-bandits-FEM} supplemented with an additional layer for maximum likelihood estimation of kernel parameters, as summarized in Algorithm \ref{alg: benchmarks}. Throughout our experiments, we fix the amplitude parameters at $\sigma = \tau = 1,$ and estimate only the parameters $\kappa$ and $\ell$ controlling the correlation lengthscale for the SPDE and Euclidean kernels, respectively. This choice was supported by sensitivity tests with random perturbations of \(\tau\) and \(\sigma\), which showed negligible impact on the results.
We initialize the parameters for the maximum likelihood procedure at \(\ell_0=0.25 \cdot \mathrm{diam}(\Gamma)\)  and \(\kappa_0=1/\ell_0\) using the diameter of \(\Gamma\) in \emph{shortest-path distance}, which yields comparable early-stage exploration for SPDE and Euclidean kernels.
To construct the metric graphs, the FEM mesh, and the discretized SPDE kernel \(k_h\) with $\alpha=1,$ we use the \textsf{MetricGraph}  R package~\citep{MetricGraph} whose implementation for half integer $\alpha$'s is based on \cite{bolin2024regularity}. This aligns with our theoretical analysis above as mentioned in Remark \ref{remark:match with package}.  

We evaluate performance using three complementary metrics: average simple regret,  reach rate, and iterations to $\mathsf{Tol}$.
\emph{Average simple regret} represents the simple regret averaged across $N_{\mathrm{rep}}$ repetitions of the experiment with different initial designs.  Each of these designs is obtained via the maximin selection rule  discussed in Remark~\ref{rmk:maximin design}.
The \emph{reach rate} represents the fraction of runs that achieve simple regret smaller than a given tolerance $\mathsf{Tol}$ within the horizon, and the \emph{iterations to $\mathsf{Tol}$}, represents the number of iterations (excluding initialization) required for the $j$-th repetition of the experiment, if successful, to first cross the tolerance threshold $\mathsf{Tol}$. 
Let 
\begin{align*}
    x^\ast := \underset{x\in \Gamma}{\operatorname{arg\,max}}\, \truth(x),\qquad  x_{t,\ast}^{(j)} := \underset{x\in \{x_i^{(j)}\}_{i=1}^t}
    {\operatorname{arg\,max}}\, \truth(x) 
\end{align*}
denote respectively the global maximizer of the true objective and the best point found after \(t\) acquisitions in the run starting from the \(j\)-th initialization. The three performance metrics can then be expressed as:
\begin{align*}
&\text{(Simple regret):} \quad \overline{r}^{\rm alg}_t \;:=\; \frac{\sum_{j=1}^{N_{\mathrm{rep}}} r_t^{{\rm alg},\,(j)}}{N_{\mathrm{rep}}}= \; \frac{\sum_{j=1}^{N_{\mathrm{rep}}}\Big(\truth(x^\ast)- \truth\!\bigl(x_{t,\ast}^{(j)}\bigr)\Big)}{N_{\mathrm{rep}}},
\quad (t=1,\dots,T),\\
&\text{(Reach rate):} \quad
\rho_{\mathsf{Tol}} \;:=\; \frac{1}{N_{\mathrm{rep}}}\sum_{j=1}^{N_{\mathrm{rep}} }
\mathbf{1}\!\left\{\min_{1\le t\le T} r_t^{{\rm alg},\,(j)} \le \mathsf{Tol} \right\},\\
&\text{(Iterations to $\mathsf{Tol}$):} \quad
N_{\mathsf{Tol}}^{(j)} \;=\;
\min\bigl\{\, t \in \{1,\dots,T\} \;:\; r_t^{{\rm alg},\,(j)} \le \mathsf{Tol} \bigr\},\qquad j \in \mathcal{J}_{\mathrm{succ}},
\end{align*}
where \(\mathcal J_{\mathrm{succ}} := \bigl\{\, j \in \{1,\dots,N_{\mathrm{rep}}\} : r_T^{\mathrm{alg},(j)} \le \mathsf{Tol} \, \bigr\}.\)
Throughout the numerical tests, we set the tolerance threshold to be $\mathsf{Tol}=10^{-6}$.

\subsection{Benchmark Functions}\label{ssec:benchmarks}
\subsubsection{Problem Setting}
In Euclidean domains, benchmark objective functions such as Ackley, Rastrigin, and Lévy are commonly used to assess the performance of optimization algorithms.
These classical benchmark objectives have well-understood landscapes with known global minimizers, oscillatory structure, and multiple local optima, which makes them suitable for evaluating an algorithm’s convergence speed, ability to escape local minima, and overall robustness. However, extending these benchmarks to our compact metric graph setting is not straightforward, as one must ensure that the resulting objectives remain globally continuous, particularly across vertices where multiple edges meet. 

To achieve this, we construct benchmark functions on metric graphs by composing two functions: an outer one-dimensional benchmark \(g:\mathbb{R}\to\mathbb{R}\) and an inner interpolation map \(a=\bigoplus_{e\in E} a_e:\Gamma\to\mathbb{R}\). Specifically, we assign each vertex \(v\) an anchor value \(a_v \in [a_{\min},a_{\max}]\), which can be chosen arbitrarily. For an edge \(e\) of length \(L_e\) connecting vertices \(v_i\) and \(v_j\), parameterized by \(z_e\in[0,L_e]\), we interpolate
\[
a_e(x) = \Bigl(1-\tfrac{z_e}{L_e}\Bigr)a_{v_i} + \tfrac{z_e}{L_e}a_{v_j}, \qquad x=(e,z_e)\in  \Gamma. 
\]
We then evaluate the benchmark at  \(x=(e,z_e)\in \Gamma\) via composition,
\[
 \truth  (x) = g\bigl(a_e(x)\bigr),
\]
where \(g\) is a classical one-dimensional  benchmark function mentioned above. Proceeding similarly for all \(e\in E\), we obtain a benchmark on \(\Gamma\), i.e.,
\( \truth  (x)=g\bigl(a(x)\bigr), \, \forall x\in \Gamma.
\) In this formulation, the vertex anchors define the inputs of the inner interpolation map, 
while the outer benchmark $g$ determines the objective landscape. 
Because the interpolation is continuous across edges, 
the resulting function  $\truth$  is globally continuous on $\Gamma$, 
ensuring that values approaching a vertex from different incident edges always coincide.
\begin{figure}[htbp]
    \centering
    \includegraphics[width=0.875\linewidth]{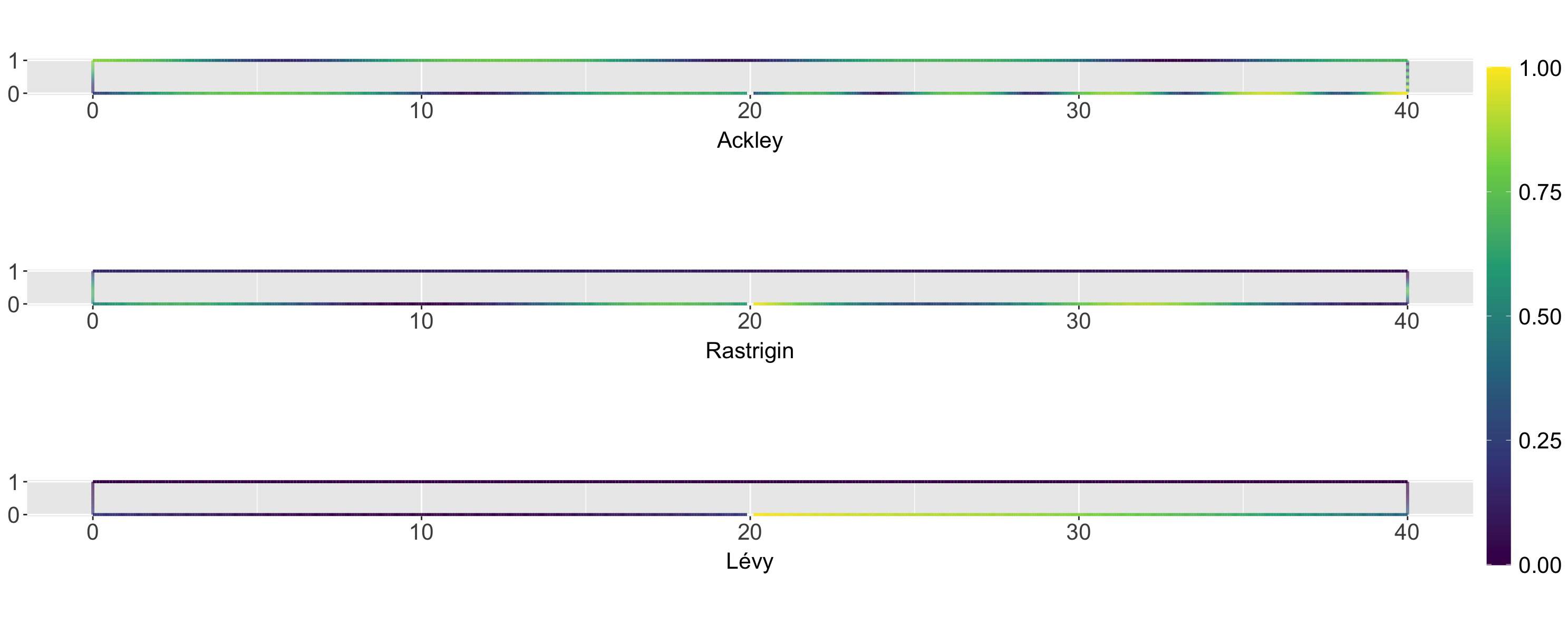}
  \caption{Ackley, Rastrigin, and Lévy benchmark functions on a compact metric graph with the shape of an open rectangle. Notice the small opening in the middle of the bottom side.}
  \label{fig:true-benchmarks-openrectangle}
\end{figure}

\subsubsection{Numerical Results}
\label{ssec:numerical results}
We consider benchmark functions on a compact metric graph shaped as an open rectangle, as shown in Figure~\ref{fig:true-benchmarks-openrectangle}. Despite its simplicity, the open rectangle graph induces pronounced distance distortion: many points in the graph are close in Euclidean distance yet far apart in shortest-path distance, because of the small height of the rectangle and its tiny opening. As we shall see, this metric distortion makes Euclidean kernels inadequate for Bayesian optimization. We consider three benchmarks, constructed using the classical Ackley, Rastrigrin, and L\'evy objectives, which we recall are given by 
\[
\begin{aligned}
\text{Ackley (1D):}\quad 
&g_{\mathrm{Ack}}(x)
= -20\,\exp\!\big(-0.2\,|x|\big)\;-\;\exp\!\big(\cos(2\pi x)\big)\;+\;20+e,\\[0.35em]
\text{Rastrigin (1D):}\quad 
&g_{\mathrm{Ras}}(x)
= x^2 - 10\cos(2\pi x) + 10,\\[0.35em]
\text{L\'evy (1D):}\quad 
&
g_{\mathrm{Lev}}(x)
= \sin^2(\pi w) + (w-1)^2\bigl[\,1+\sin^2(2\pi w)\,\bigr],\quad  w := 1+\frac{x-1}{4}.
\end{aligned}
\]
On the open rectangle graph, the benchmarks can be viewed as stretched-and-shifted versions of their classical Euclidean counterparts. They pose distinct challenges: Ackley is highly multimodal, Rastrigin exhibits dense small-scale oscillations, and Lévy shows sharp variation across the opening. Euclidean kernels “short-cut’’ across this opening, spuriously correlating nodes that are far apart in shortest-path distance; as our experiments show, this leads to poor optimization results.

Following Theorem \ref{thm:regret fem}, the hyperparameters \(B\) and \(R\) denote an a priori bound on \(\|\truth\|_{\mathcal H_k}\) and the sub-Gaussian noise parameter, respectively. 
Because the kernel hyperparameters are learned online via MLE, making the value of \(B\) to be not identifiable, we fix \(B=1\) after normalizing the objective so that prior marginal variances are \(O(1)\).
 To correct for misspecification, we need to choose the parameter $b$, which requires the unknown smoothness parameter of $\truth$.  
\red In practice, one can absorb the additional exploration term
\(\frac{b\sqrt{\Ninit+t-1}}{\sqrt{1+2/(\Ninit+t)}}\)
 in  Algorithm \ref{alg: benchmarks} 
into the overall exploration strength by choosing a slightly more conservative effective value of $B$.
Under our normalization, this provides a practical and conservative choice.
It does not mean that the $b$-term disappears.
Rather, it folds the extra uncertainty induced by misspecification into the constant controlling exploration.
This is supported by the sensitivity study in Appendix~\ref{app:b-sensitivity}, where we vary $b$ and observe that the reach rate, iterations-to-$\mathsf{Tol}$, and simple-regret curves change only mildly across the tested range.
Accordingly, we conclude that $B=1$ is adequate for the normalized objective in our experimental regime, and we adopt this choice throughout.
However, when misspecification is more severe, explicitly tuning $b$ may be beneficial for improving robustness. \nc
 We take the observation noise \(\varepsilon_t\sim\mathcal{N}(0,\sigma_\varepsilon^2)\) to be Gaussian,  setting \(R=\sigma_\varepsilon\), and choose \(\delta=0.05\) to match a nominal \(95\%\) confidence level. 
In our experiments, we set \(\sigma_\varepsilon=0.05\), commensurate with the unit-scaled objective, so that the IGP-UCB and GP-TS variance floor does not dominate the acquisition.

We consider a horizon \(T=40\) for IGP-UCB and a slightly larger horizon \(T=60\) for GP-TS to ensure sufficient iterations for acquisition via posterior sample paths. The initial design uses \(N_{\mathrm{init}}=8\) maximin evaluations over mesh coordinates, which ensures adequate spatial coverage and well-conditioned hyperparameter updates. By contrast, \(N_{\mathrm{init}}=1\) can lead to poor conditioning. Eight points provide basic space-filling coverage and sufficient information to identify the correlation length at modest cost, negligible for graphs with \(N_h\approx 300\).

\red 
To provide practical guidance beyond this baseline setting, we further report a sensitivity study in Appendix~\ref{app:init-sensitivity}, which varies $N_{\mathrm{init}}$ across coarse, baseline, and fine discretizations of the same graph ($N_h\approx 150,300,500$).
The results show that $N_{\mathrm{init}}=8$ lies in a stable regime for the baseline discretization ($N_h\approx 300$).
More broadly, a reasonable choice of $N_{\mathrm{init}}$ should balance \emph{coverage}, so that the maximin initialization provides adequate spatial spread and avoids noticeably worse early-stage behavior, with \emph{hyperparameter-update stability}, to prevent ill-conditioned MLE updates and unstable hyperparameter estimates under online learning.
Based on the numerical evidence in Appendix~\ref{app:init-sensitivity}, we obtain the following simple rule for adjusting $N_{\mathrm{init}}$ with the discretization size.
For coarse meshes ($N_h\approx 150$), $N_{\mathrm{init}}\approx 4$ is typically sufficient;
for baseline resolutions ($N_h\approx 200$--$400$), $N_{\mathrm{init}}=8$ is a robust default;
and for finer discretizations ($N_h\gtrsim 500$), a modest increase to $N_{\mathrm{init}}\approx 8$--$10$ can improve reliability.
\nc

Averaged over $N_{\mathrm{rep}}=60$ shared initializations, the SPDE kernel consistently outperforms the Euclidean baseline (Figures~\ref{fig:benchmarks_IGP-UCB} and \ref{fig:benchmarks_TS}), achieving
lower simple regret, higher reach rates, and fewer iterations to meet the tolerance. 
Empirically, IGP--UCB tends to outperform GP--TS, plausibly because the true kernel and its hyperparameters are learned online. With a single posterior draw per round, GP--TS adapts more slowly and may get stuck early. 
The relatively poor Euclidean performance on Ackley and L\'evy, compared with Rastrigin, is consistent with kernel misspecification on the metric graph being the dominant cause of underperformance.

\begin{figure}[H]
\centering

\begin{minipage}{\textwidth}\centering
  \subcaption*{Setting 1: Ackley}
\end{minipage}

\begin{subfigure}{0.32\textwidth}
  \centering
  \includegraphics[width=\linewidth]{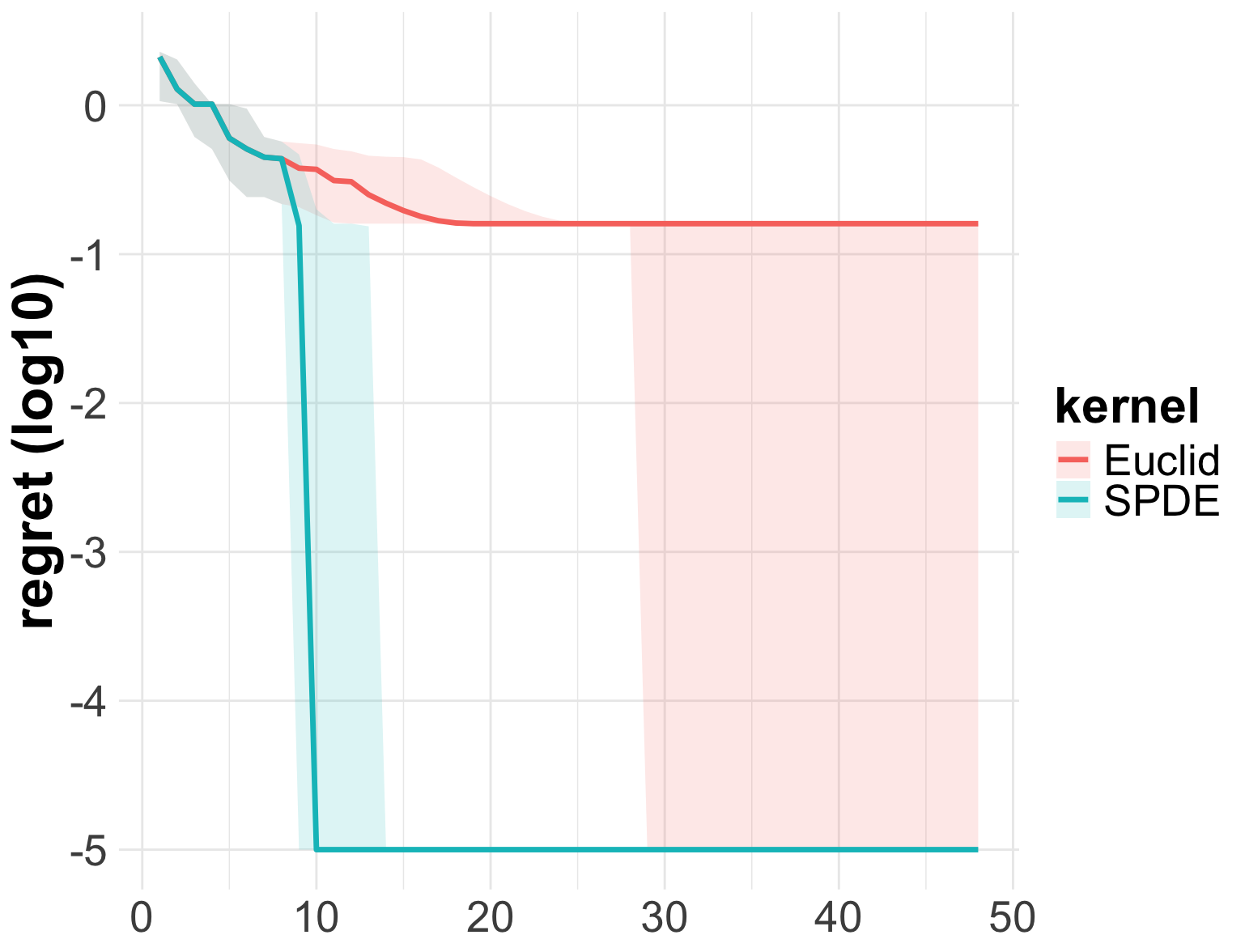}
\end{subfigure}\hfill
\begin{subfigure}{0.32\textwidth}
  \centering
  \includegraphics[width=\linewidth]{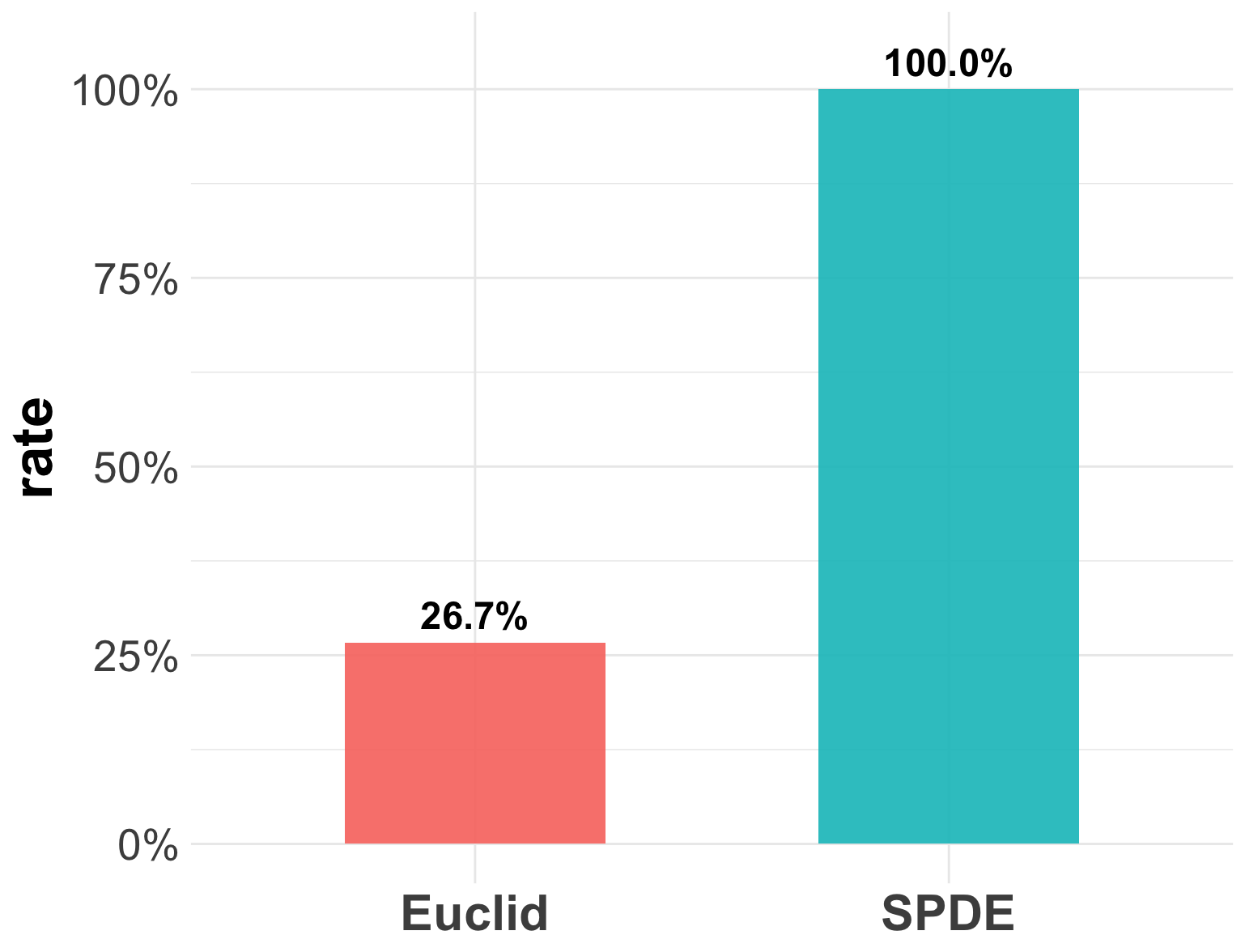}
\end{subfigure}
\begin{subfigure}{0.32\textwidth}
  \centering
  \includegraphics[width=\linewidth]{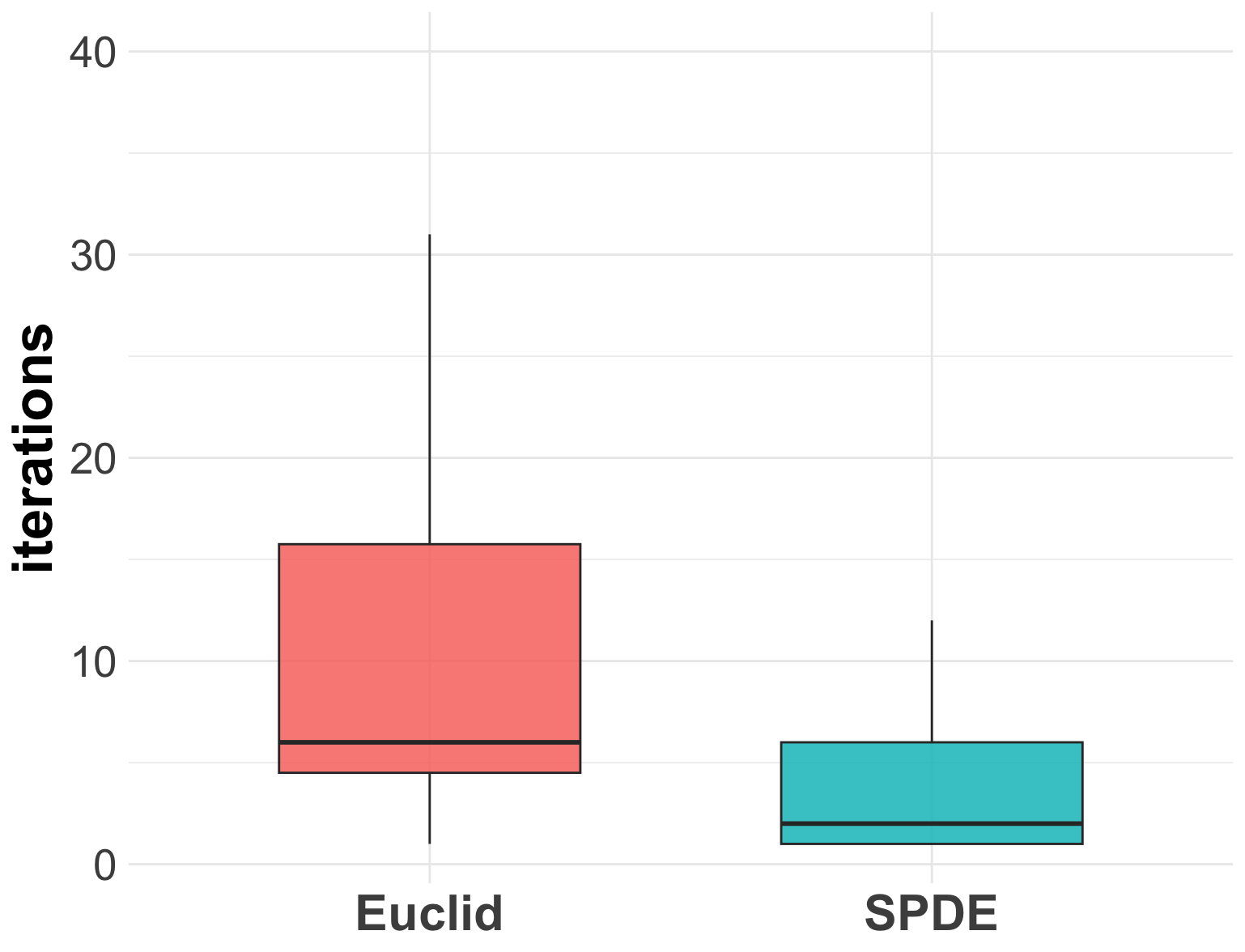}
\end{subfigure}\hfill

\begin{minipage}{\textwidth}\centering
  \subcaption*{Setting 2: Rastrigin}
\end{minipage}

\begin{subfigure}{0.32\textwidth}
  \centering
  \includegraphics[width=\linewidth]{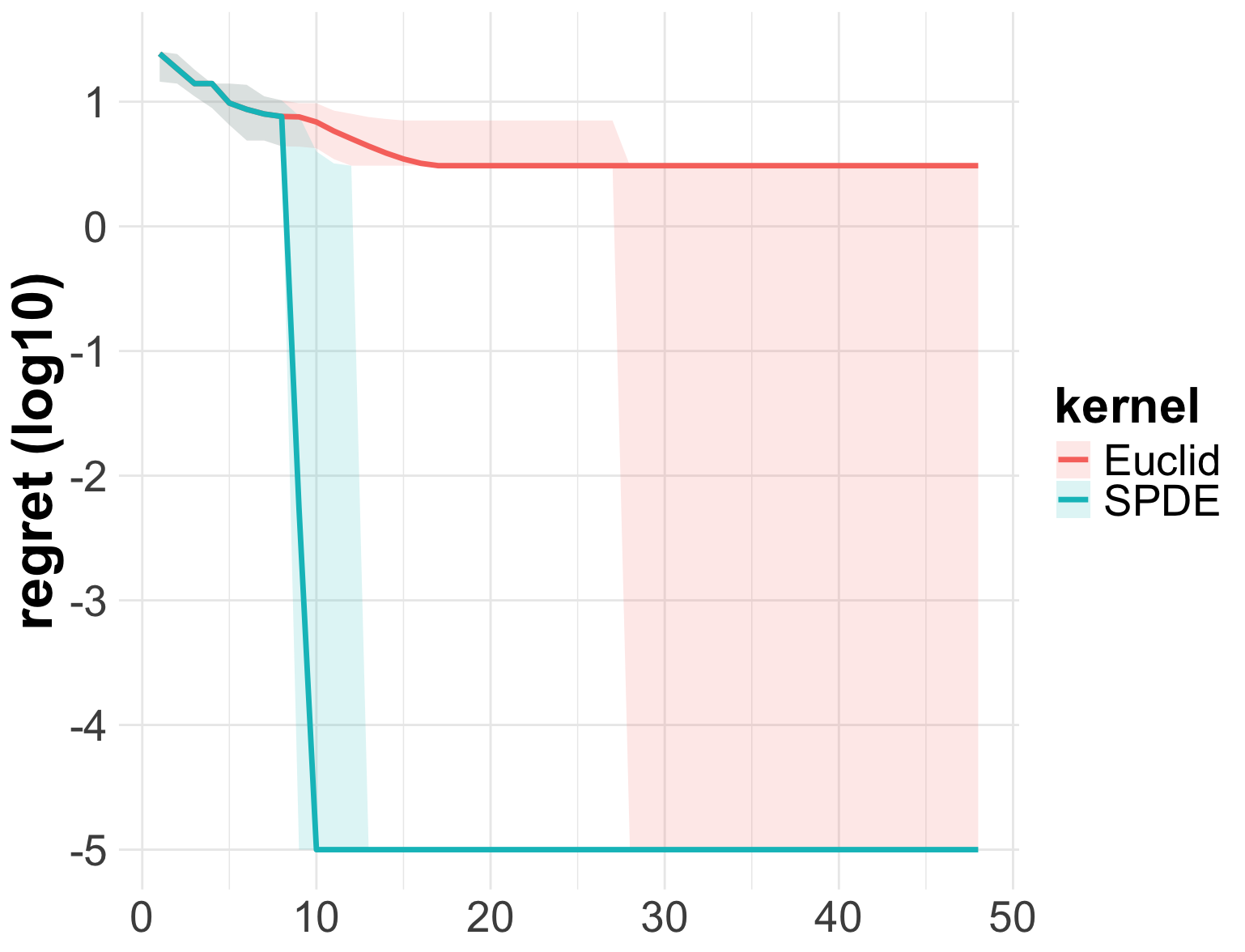}
\end{subfigure}\hfill
\begin{subfigure}{0.32\textwidth}
  \centering
  \includegraphics[width=\linewidth]{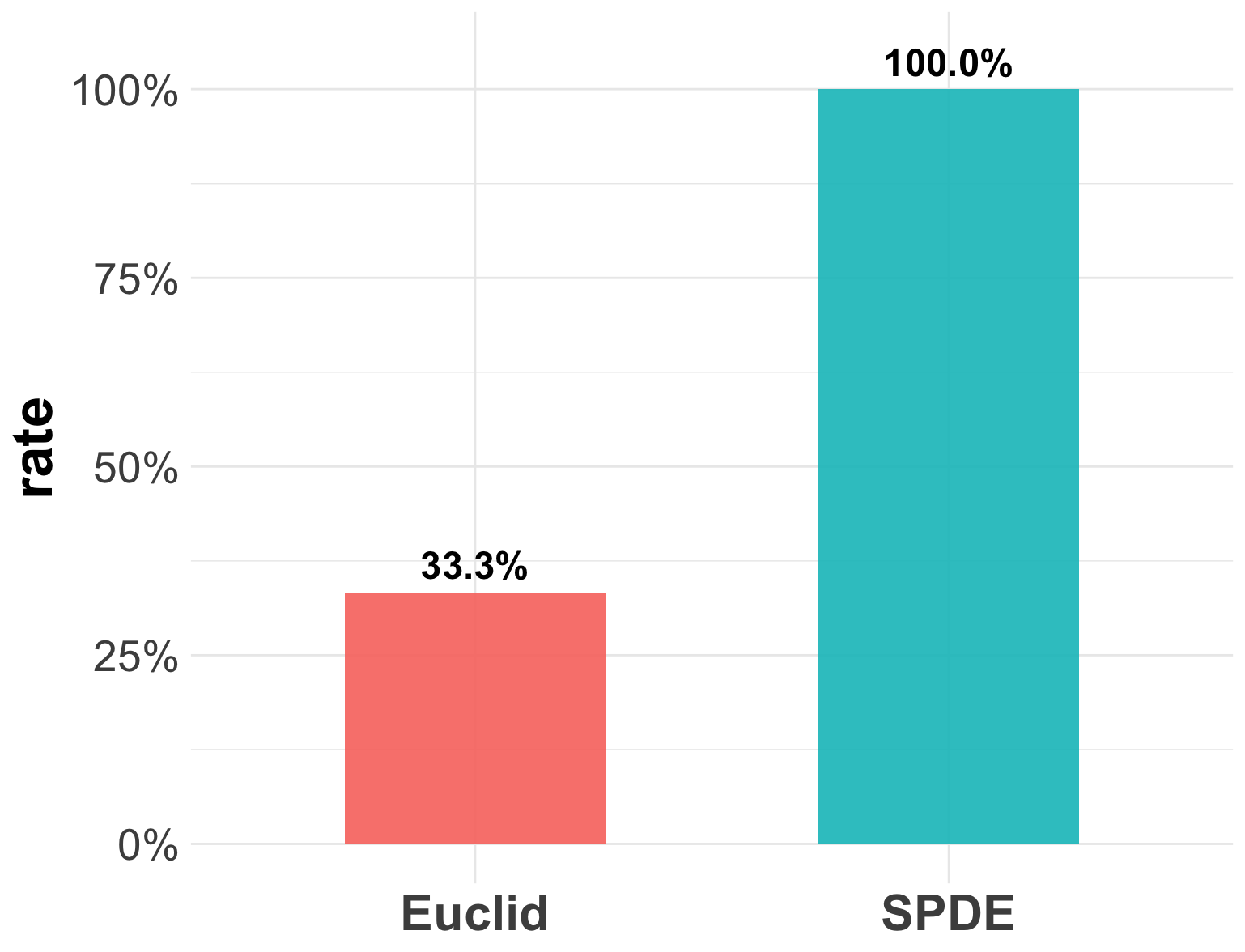}
\end{subfigure}
\begin{subfigure}{0.32\textwidth}
  \centering
  \includegraphics[width=\linewidth]{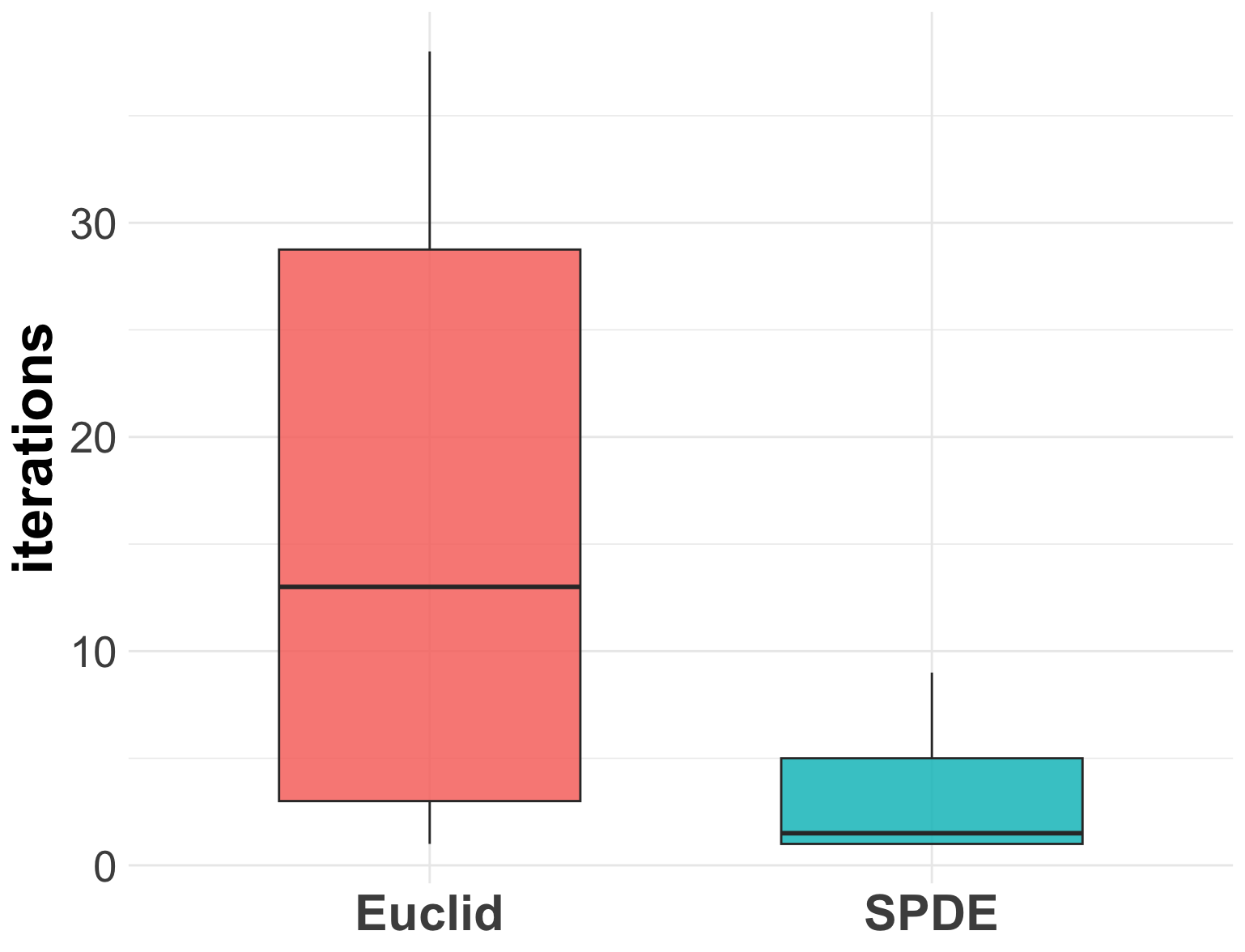}
\end{subfigure}\hfill

\vspace{0.6em}

\begin{minipage}{\textwidth}\centering
  \subcaption*{Setting 3: L\'evy}
\end{minipage}

\begin{subfigure}{0.32\textwidth}
  \centering
  \includegraphics[width=\linewidth]{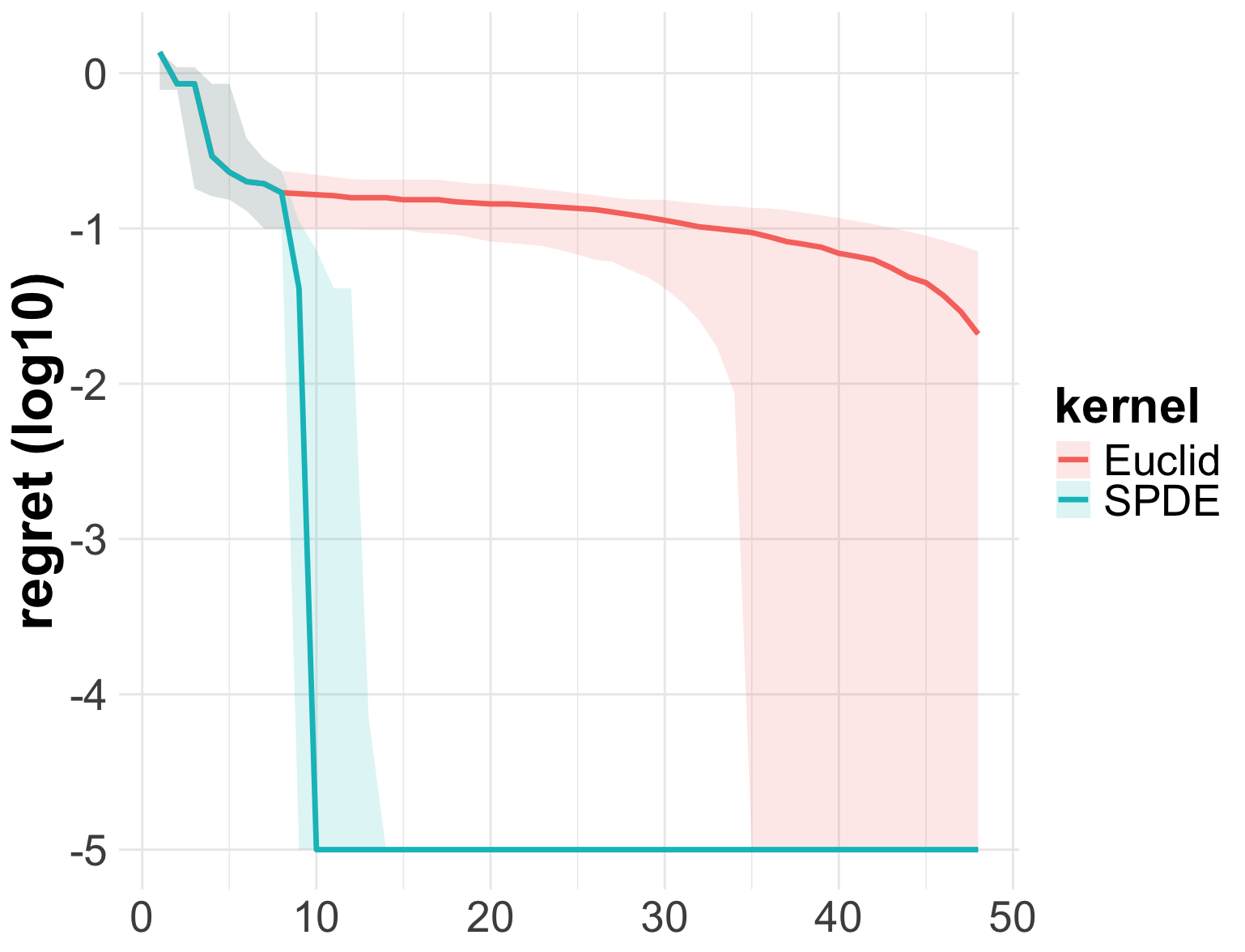}
  \caption{Simple regret}
\end{subfigure}\hfill
\begin{subfigure}{0.32\textwidth}
  \centering
  \includegraphics[width=\linewidth]{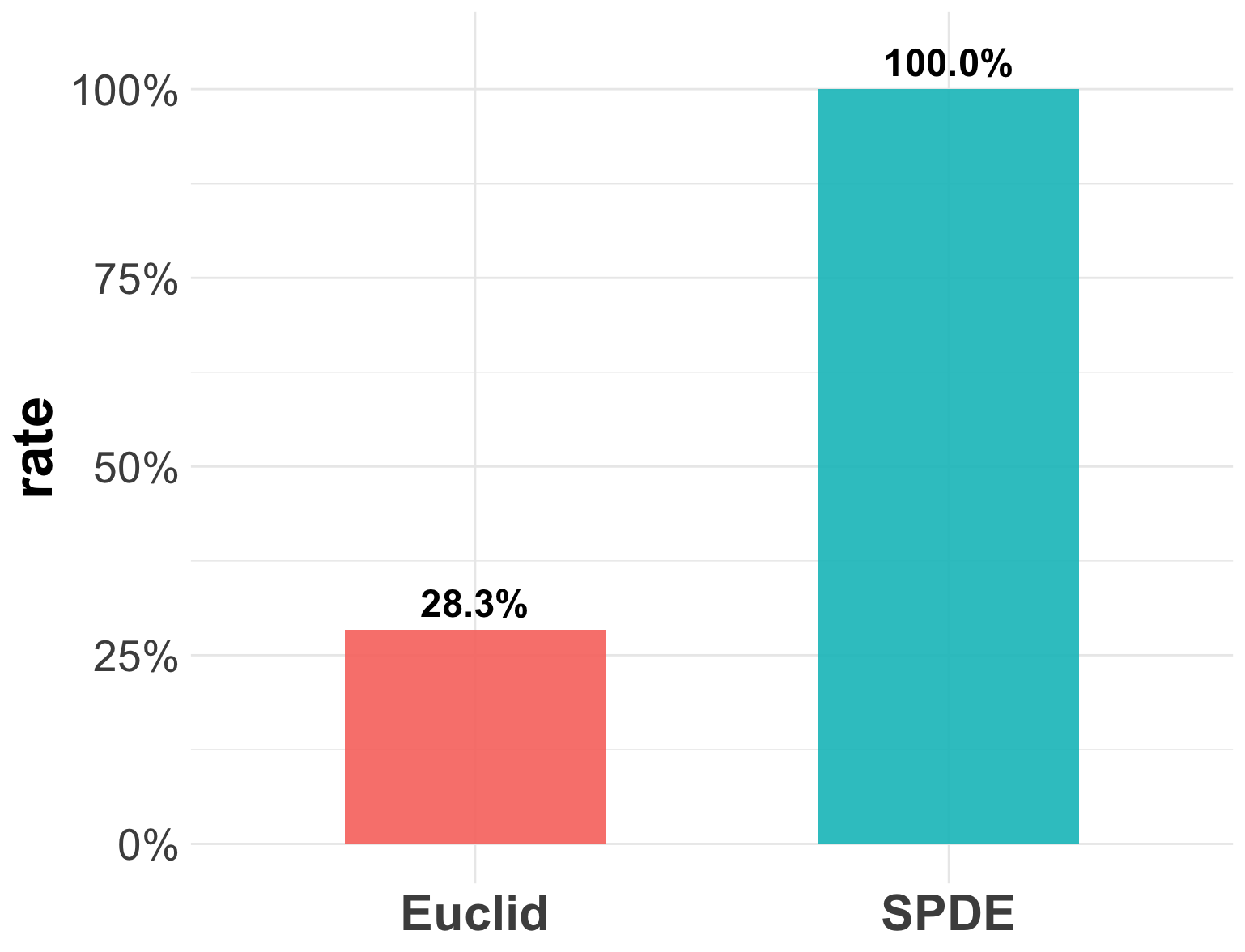}
  \caption{Reach rate}
\end{subfigure}
\begin{subfigure}{0.32\textwidth}
  \centering
  \includegraphics[width=\linewidth]{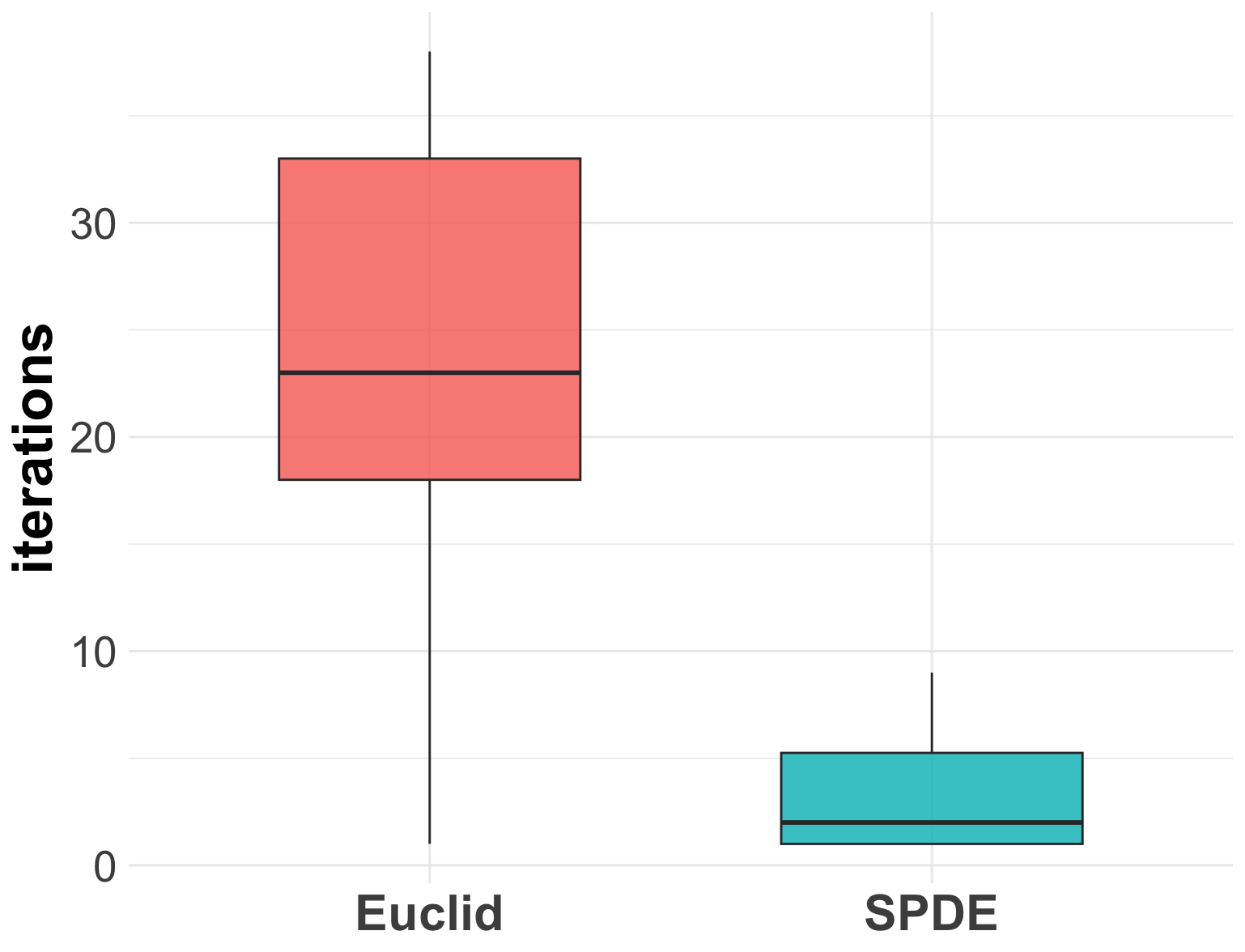}
  \caption{Iterations to $\mathsf{Tol}$}
\end{subfigure}\hfill

\caption{BO for benchmark functions with IGP-UCB. 
Columns show: (a) Simple regret across different initializations, with median in solid line and the shaded region representing the central $50\%$ band; (b) Reach rate; and (c) Iterations to $\mathsf{Tol}.$ The
rows correspond to the Ackley, Rastrigin, and L\'evy benchmark function on the open rectangle metric graph.}
\label{fig:benchmarks_IGP-UCB}
\end{figure}

\begin{figure}[H]
\centering

\begin{minipage}{\textwidth}\centering
  \subcaption*{Setting 1: Ackley}
\end{minipage}

\begin{subfigure}{0.32\textwidth}
  \centering
  \includegraphics[width=\linewidth]{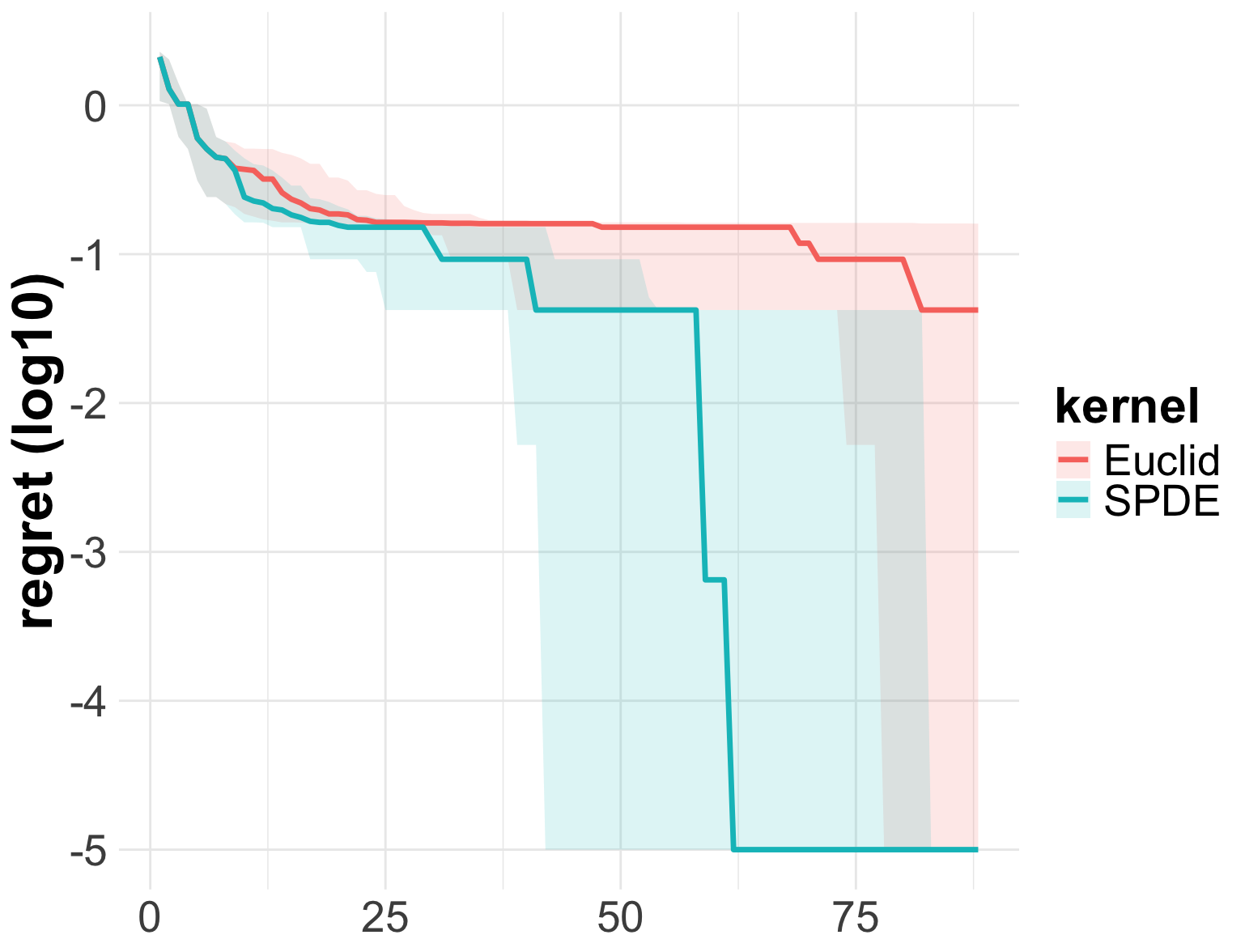}
\end{subfigure}\hfill
\begin{subfigure}{0.32\textwidth}
  \centering
  \includegraphics[width=\linewidth]{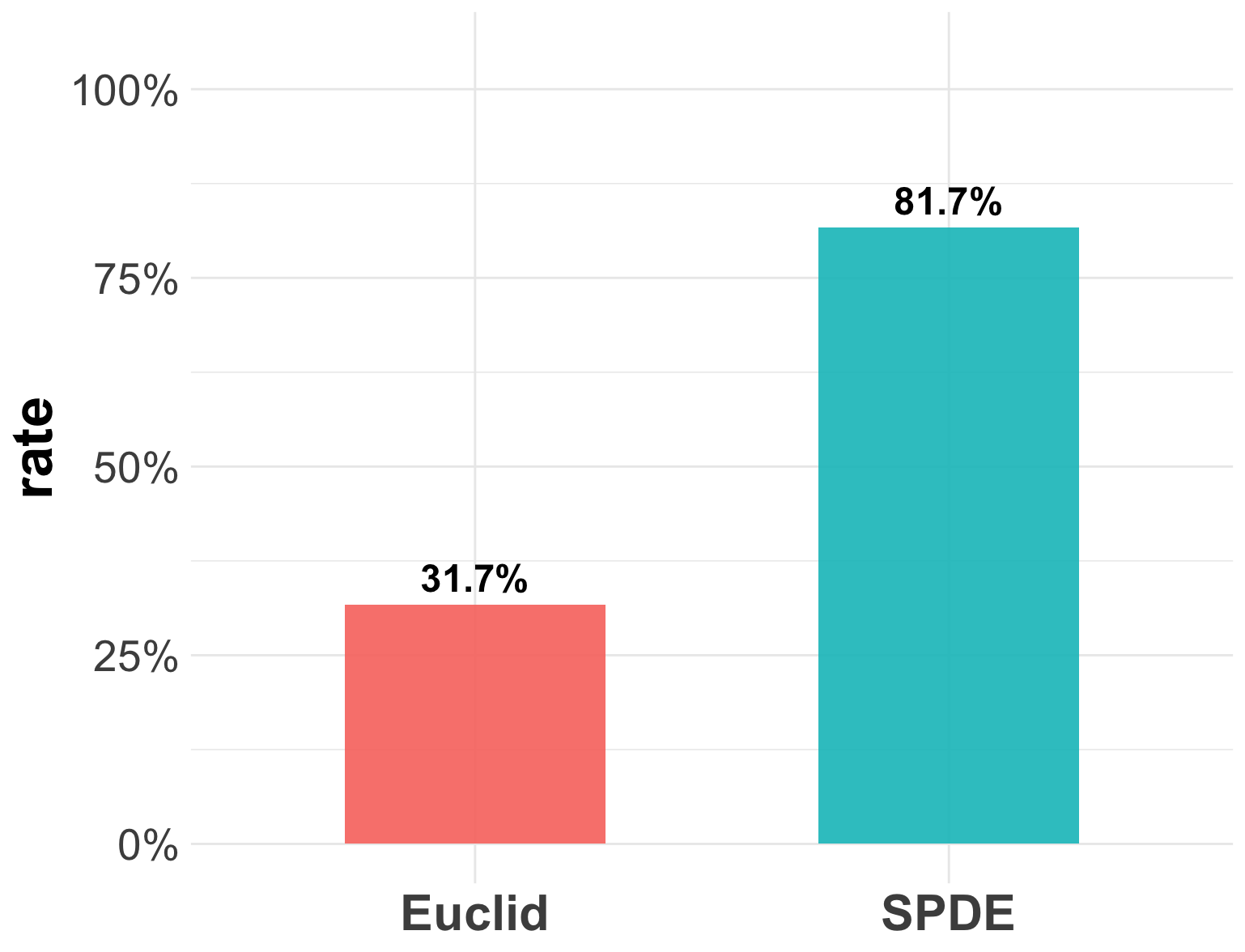}
\end{subfigure}
\begin{subfigure}{0.32\textwidth}
  \centering
  \includegraphics[width=\linewidth]{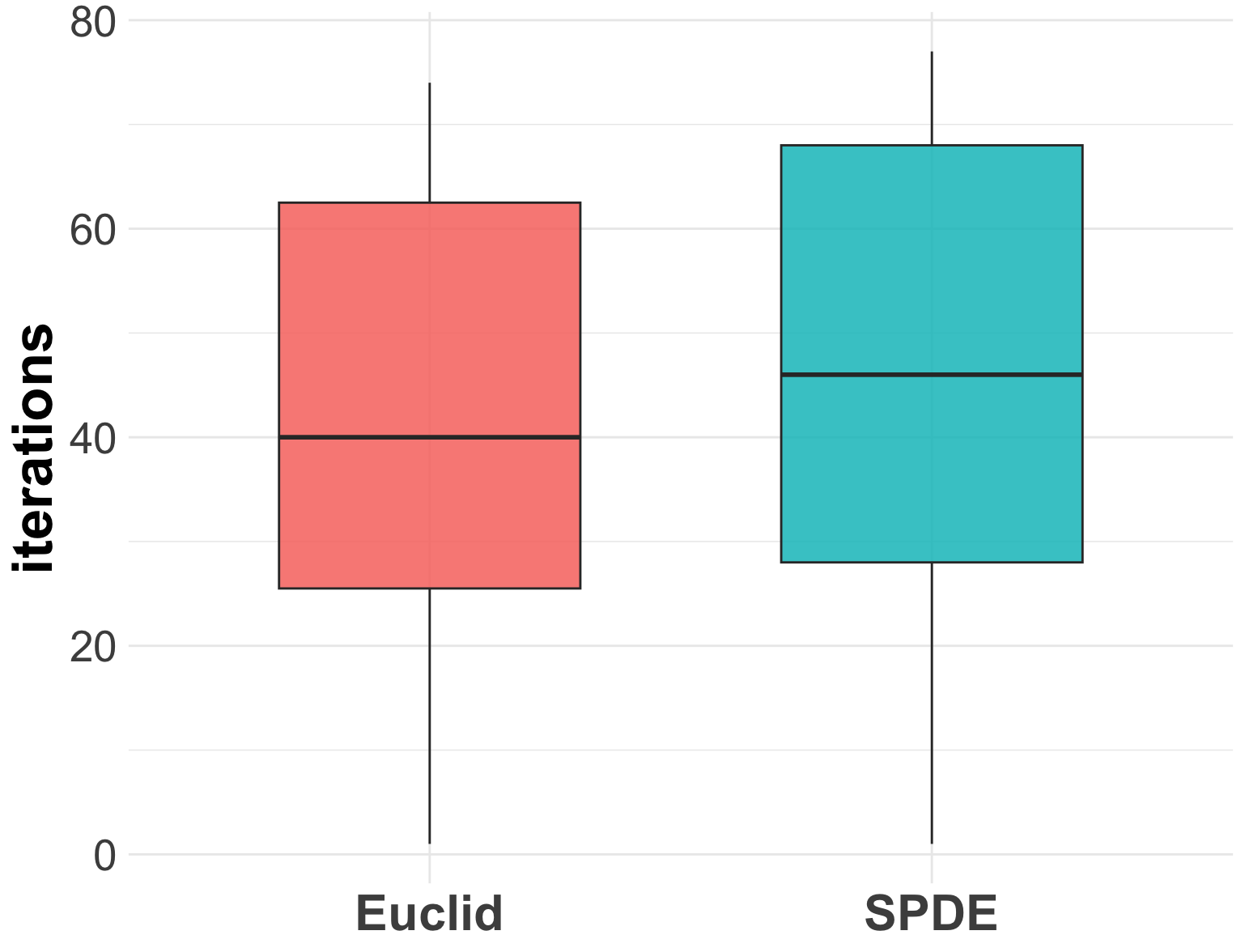}
\end{subfigure}\hfill

\begin{minipage}{\textwidth}\centering
  \subcaption*{Setting 2: Rastrigin}
\end{minipage}

\begin{subfigure}{0.32\textwidth}
  \centering
  \includegraphics[width=\linewidth]{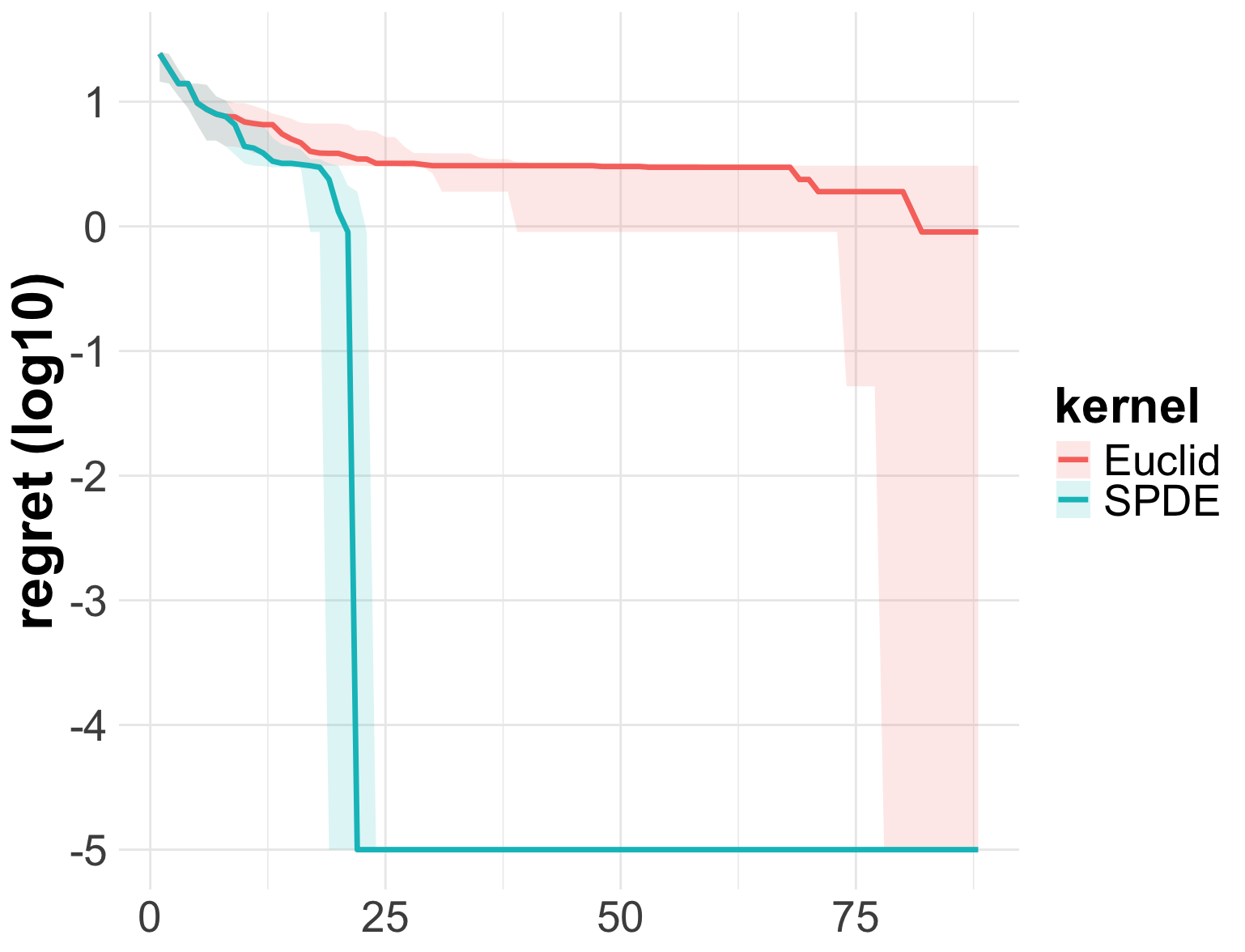}
\end{subfigure}\hfill
\begin{subfigure}{0.32\textwidth}
  \centering
  \includegraphics[width=\linewidth]{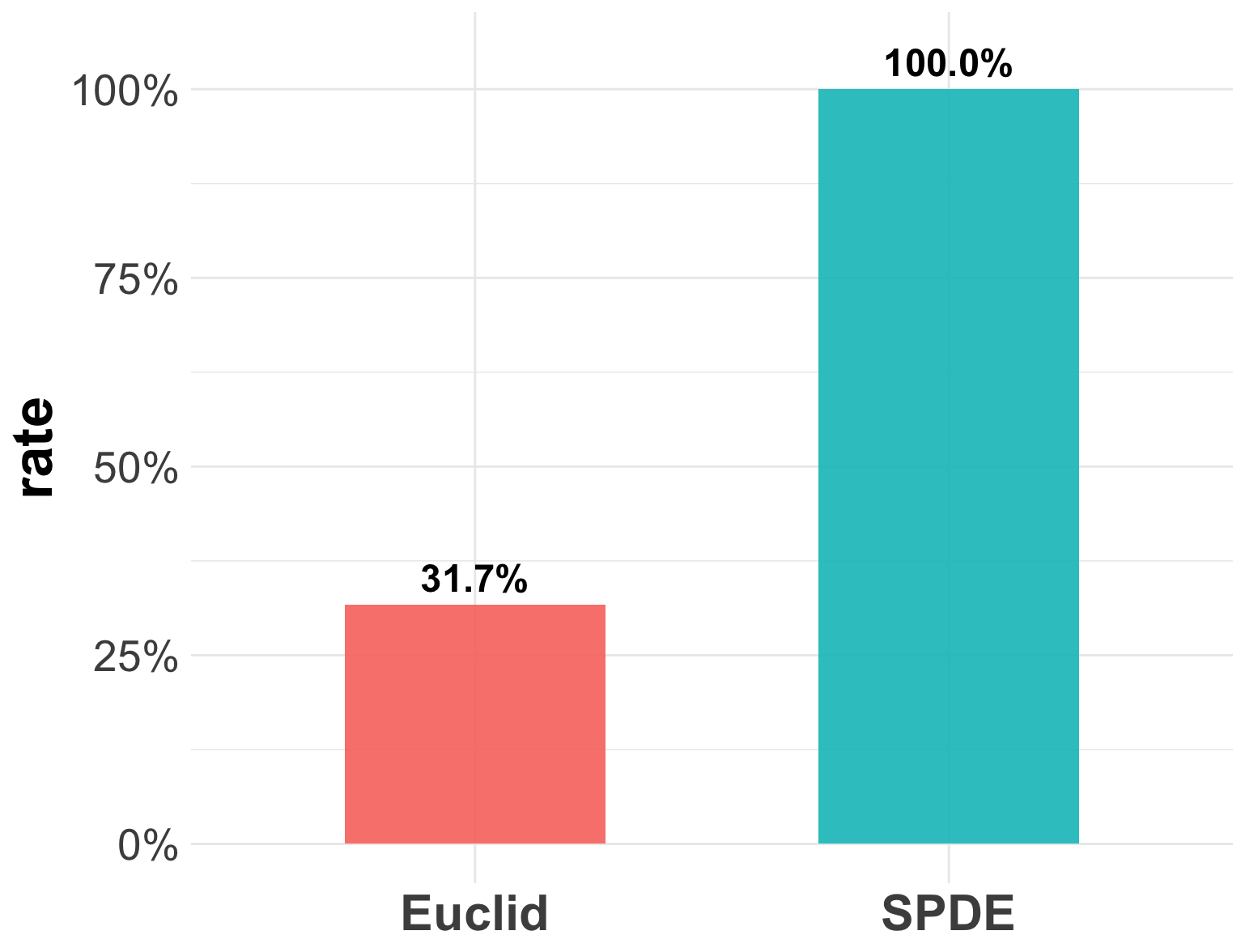}
\end{subfigure}
\begin{subfigure}{0.32\textwidth}
  \centering
  \includegraphics[width=\linewidth]{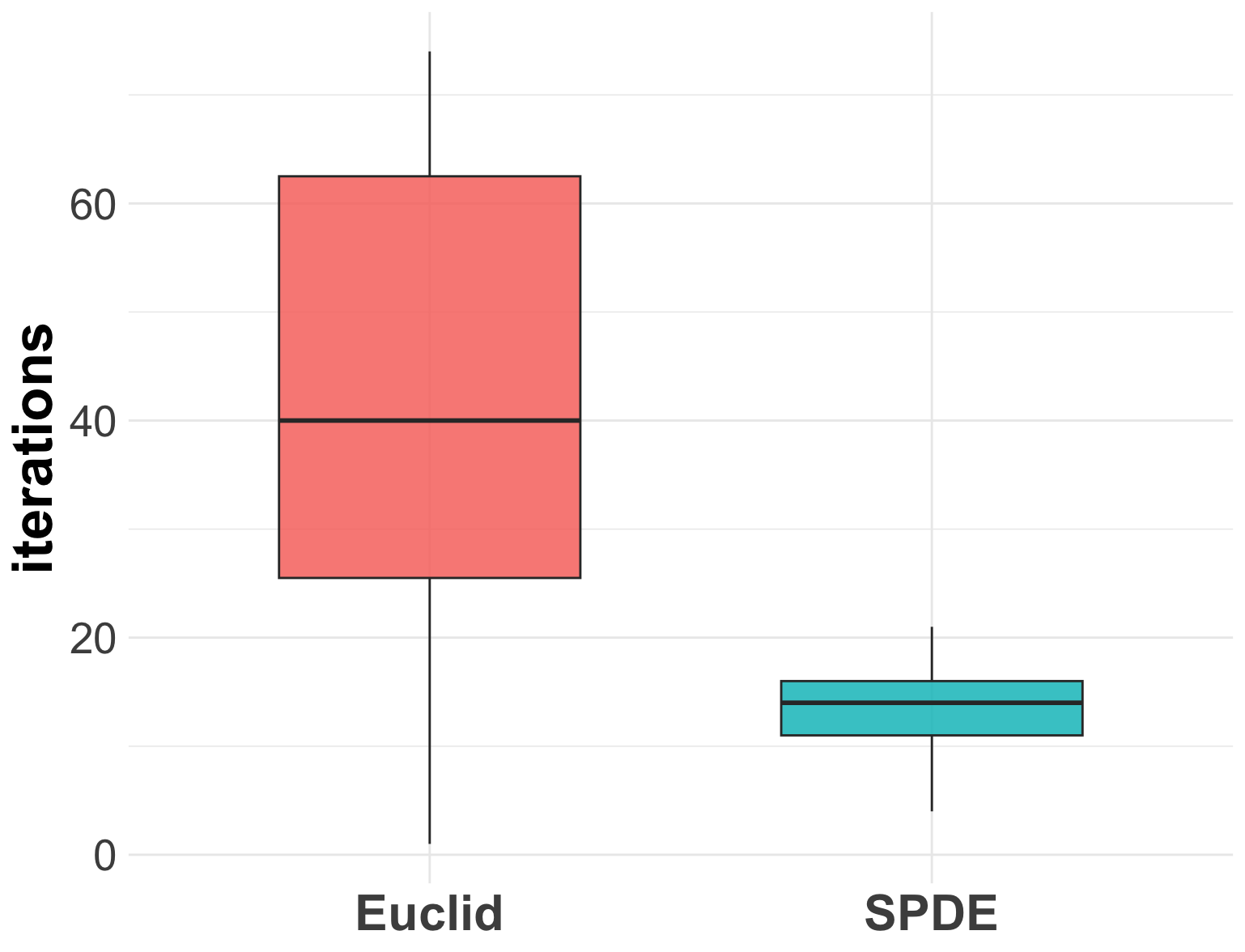}
\end{subfigure}\hfill

\vspace{0.6em}

\begin{minipage}{\textwidth}\centering
  \subcaption*{Setting 3: L\'evy}
\end{minipage}

\begin{subfigure}{0.32\textwidth}
  \centering
  \includegraphics[width=\linewidth]{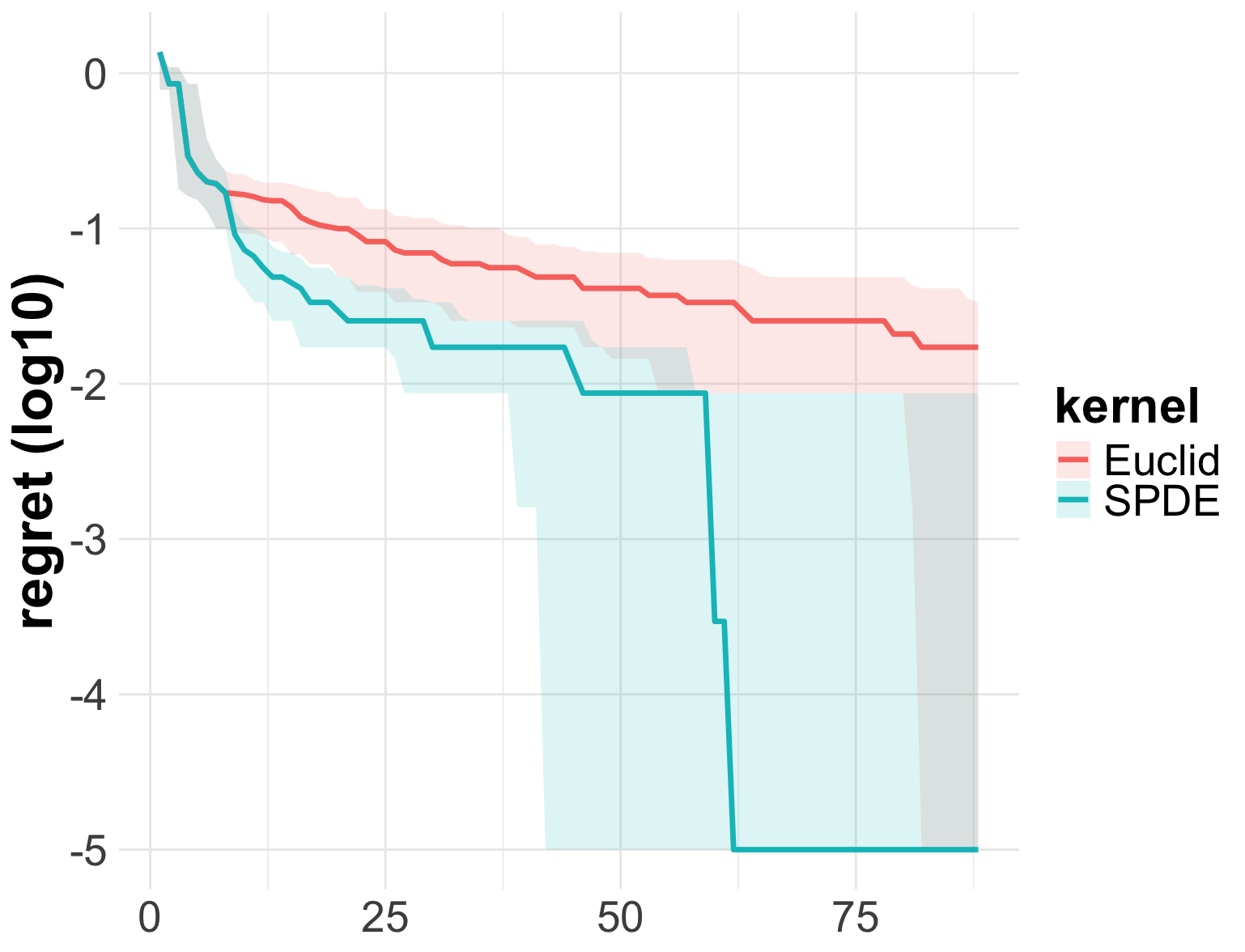}
  \caption{Simple regret}
\end{subfigure}\hfill
\begin{subfigure}{0.32\textwidth}
  \centering
  \includegraphics[width=\linewidth]{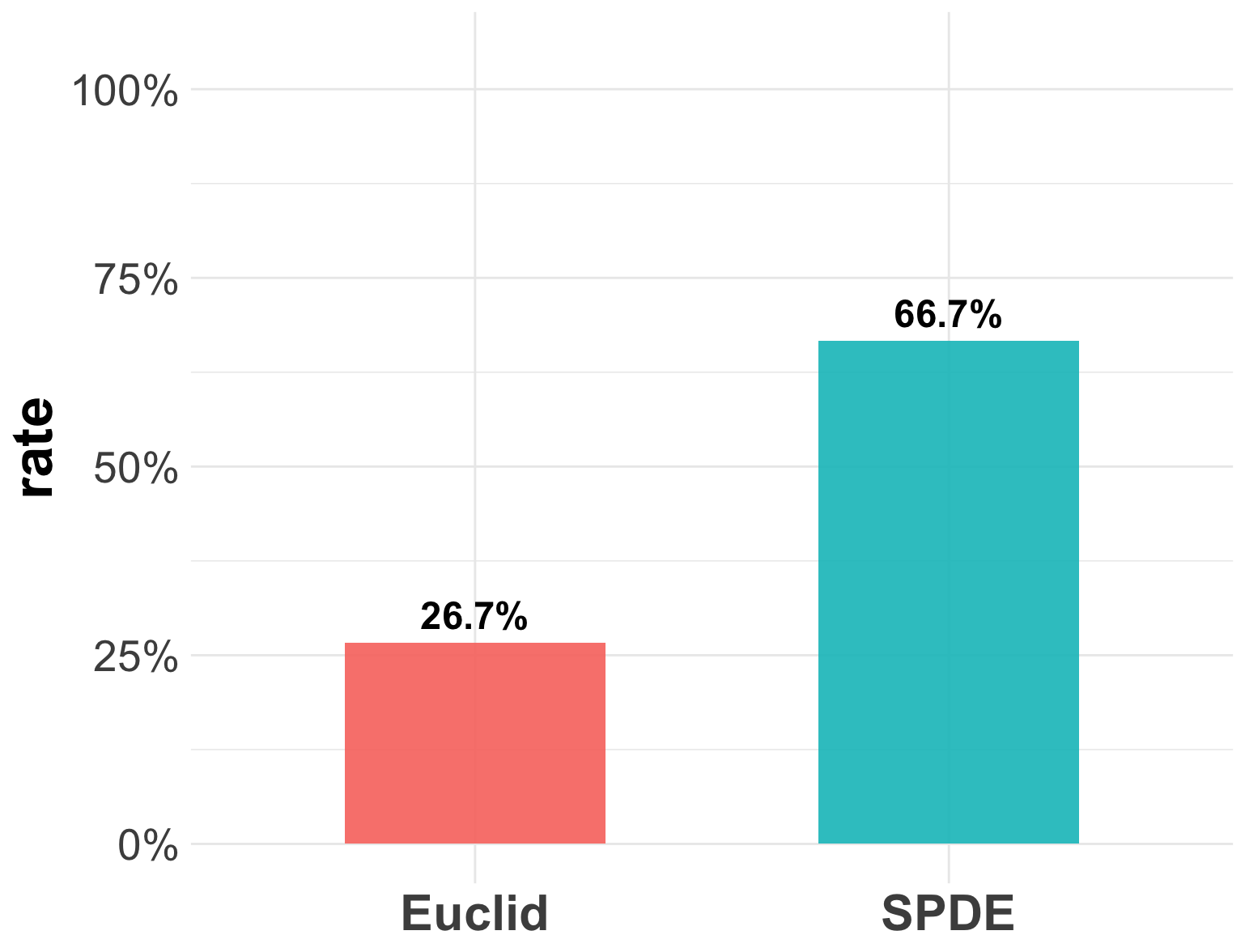}
  \caption{Reach rate}
\end{subfigure}
\begin{subfigure}{0.32\textwidth}
  \centering
  \includegraphics[width=\linewidth]{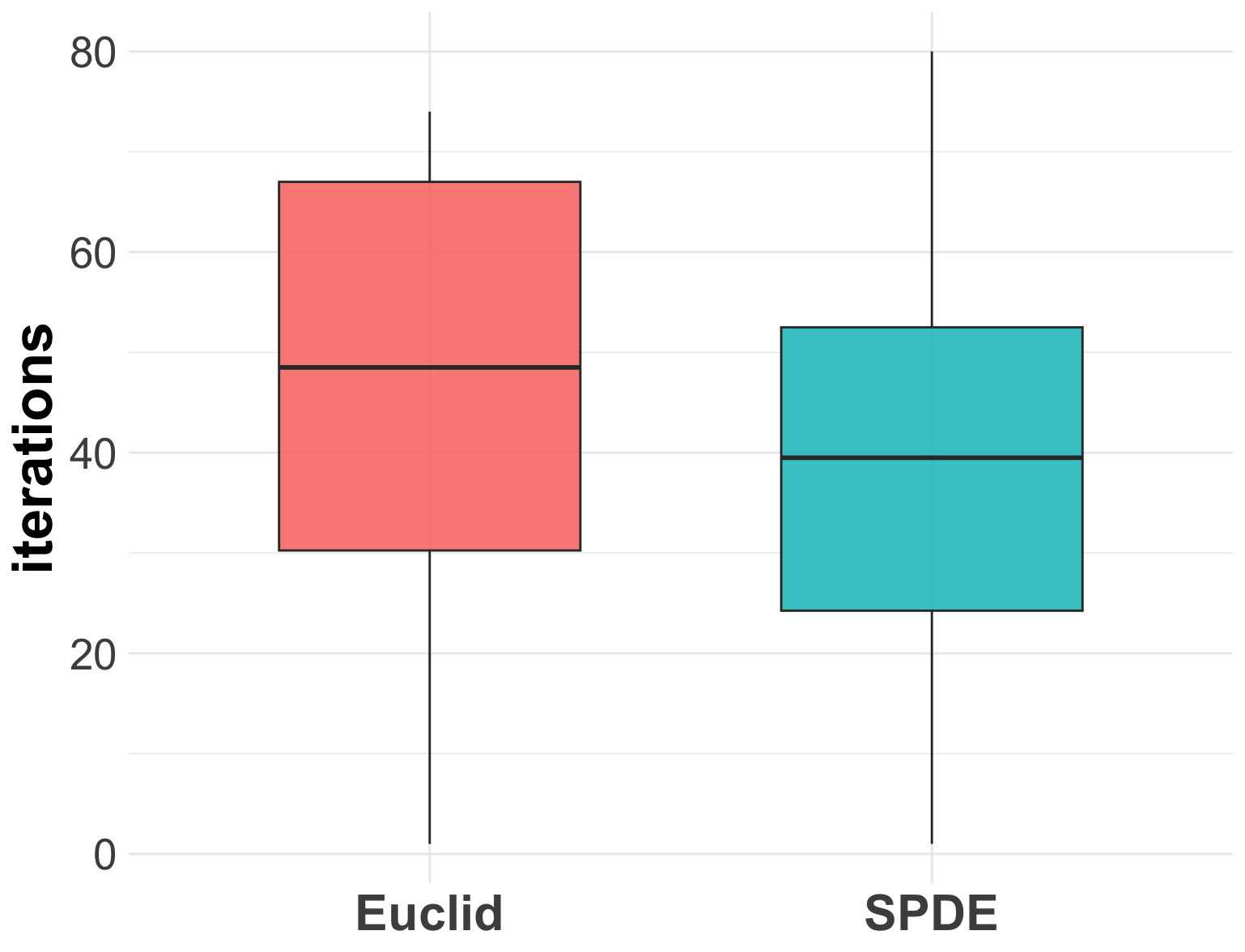}
  \caption{Iterations to $\mathsf{Tol}$}
\end{subfigure}\hfill

\caption{BO for benchmark functions with GP-TS. 
Layout matches Figure~\ref{fig:benchmarks_IGP-UCB}.}
\label{fig:benchmarks_TS}
\end{figure}

\subsection{MAP Estimation for Bayesian Inversion}\label{ssec:MAP}
\subsubsection{Problem Setting}
In this section, we apply Bayesian optimization for \emph{maximum a posteriori} estimation within a Bayesian formulation of a point–source identification inverse problem. Let $\Gamma$ be a compact metric graph and consider the elliptic equation
\begin{equation}
    \label{eq:solution p}
      \bigl( \chi^2 \nc - \Delta_{\Gamma} \bigr) p \;=\; g,
\end{equation}
where $\chi>0,$ $\Delta_\Gamma$ denotes the Kirchhoff Laplacian, and $g$ is a source function localized around an unknown point $x^\dagger \in \Gamma.$ We are interested in the inverse problem of determining the source location $x^\dagger$ from partial and noisy measurements of the PDE solution.
For convenience, we suppose that after discretization on a FEM mesh with node set
$\femnodes = \{x_i \}_{i=1}^{N_h},$ the right-hand side $g$ in \eqref{eq:solution p} is a tent basis function $e_{h,j}$ as defined in Section~\ref{ssec: fem space on Gamma},  centered at $x^\dagger = x_j \in \femnodes$ for some unknown $x_j \in \femnodes$ that we seek to determine from data. 

Let $L_h =  \chi^2 \, C + G$ denote the FEM discretization of the operator $\chi^2 \nc - \Delta_{\Gamma}.$ We define the forward map
\begin{align*}
   \mathcal{F}: \femnodes &\to \mathbb{R}^{N_h} \\
  x_i &\mapsto p_{h,i} \;:=\; L_h^{-1} g_{h,i} \, , 
\end{align*}
where, for $1 \le i \le N_h,$ $g_{h,i}\in\mathbb{R}^{N_h}$
denotes the FEM coefficient vectors of the tent function $e_{h,i}$ centered at $x_i \in \femnodes$, and $p_{h,i}$ denotes the coefficients of the corresponding approximate PDE solution. We also define the observation map 
\begin{align*}
    O : \R^{N_h} &\to \mathbb{R}^{N_{\mathrm{obs}}} \\
    v = (v_i)_{i=1}^{N_h} &\mapsto (v_i)_{i \in \mathcal{I}_\mathrm{obs}} \, , 
\end{align*}
where $\mathcal{I}_\mathrm{obs} \subseteq \{1, \ldots, N_h \}$ with $|\mathcal{I}_\mathrm{obs}| = N_{\mathrm{obs}} \le N_h$ denotes a subset of mesh locations at which measurements are taken. Finally, we define the forward model $\mathcal{G} := O \circ \mathcal{F}$ by composing the forward and observation maps. Assume that source location $x^\dagger \in \femnodes$ and observed data $\mathscr{D} \in \mathbb{R}^{N_{\mathrm{obs}}}$ are related by
\begin{equation}\label{eq:IP}
     \mathscr{D} \;=\; \mathcal{G}(x^\dagger) + \eta, 
  \qquad \eta \sim \mathcal{N}\!\big(0, \noisestdip^{2} I_{N_{\mathrm{obs}}}\big).
\end{equation}
Placing a prior $\pi(x)$ on the source location $x^\dagger$, we derive the posterior distribution:
\begin{align*}
    p(x \mid \mathscr{D}) &\propto \, p(\mathscr{D} \mid x) \pi(x) \\
    & \propto \exp\!\Big(-\tfrac{1}{2\noisestdip^{2}} \,\big\| \mathscr{D} - \mathcal{G}(x) \big\|_2^2\Big) \pi(x),
\end{align*}
where the expression for the Gaussian likelihood $p(\mathscr{D} \mid x) = \mathcal{N}(\mathcal{G}(x),\noisestdip^{2} I_{N_{\mathrm{obs}}}) $  follows from \eqref{eq:IP}. We adopt as prior a discretized version of the uniform distribution on $\Gamma$, defined via the mass matrix $C$: $\pi(x_i)\propto (C\mathbf{1})_i$ and $\sum_{i=1}^{N_h}\pi(x_i)=1$. Unlike choosing $\pi(x)=1/N_h$, this choice is mesh-independent and converges to the continuous uniform distribution on \(\Gamma\) as $h\to 0$. The \emph{maximum a posteriori} estimate $x^*$ for the unknown location $x^\dagger$ is given by the posterior mode \cite{sanzstuarttaeb}, or, equivalently, by maximizing the log-posterior given, up to an additive constant, by  
\begin{align}\label{eq:LP}
  \mathrm{LP}(x) \;=\; 
  -\tfrac{1}{2\noisestdip^{2}} \,\big\| \mathscr{D} - \mathcal{G}(x) \big\|_2^2 
  \;+\; \log \pi(x).
\end{align}
In the following subsection we explore the performance of IGP-UCB and GP-TS with FEM approximation for optimizing  $\mathrm{LP}(x)$ over $\femnodes.$   

\begin{figure}[H]
\begin{subfigure}{0.5\textwidth}
 \hfill \includegraphics[width=0.85\linewidth]{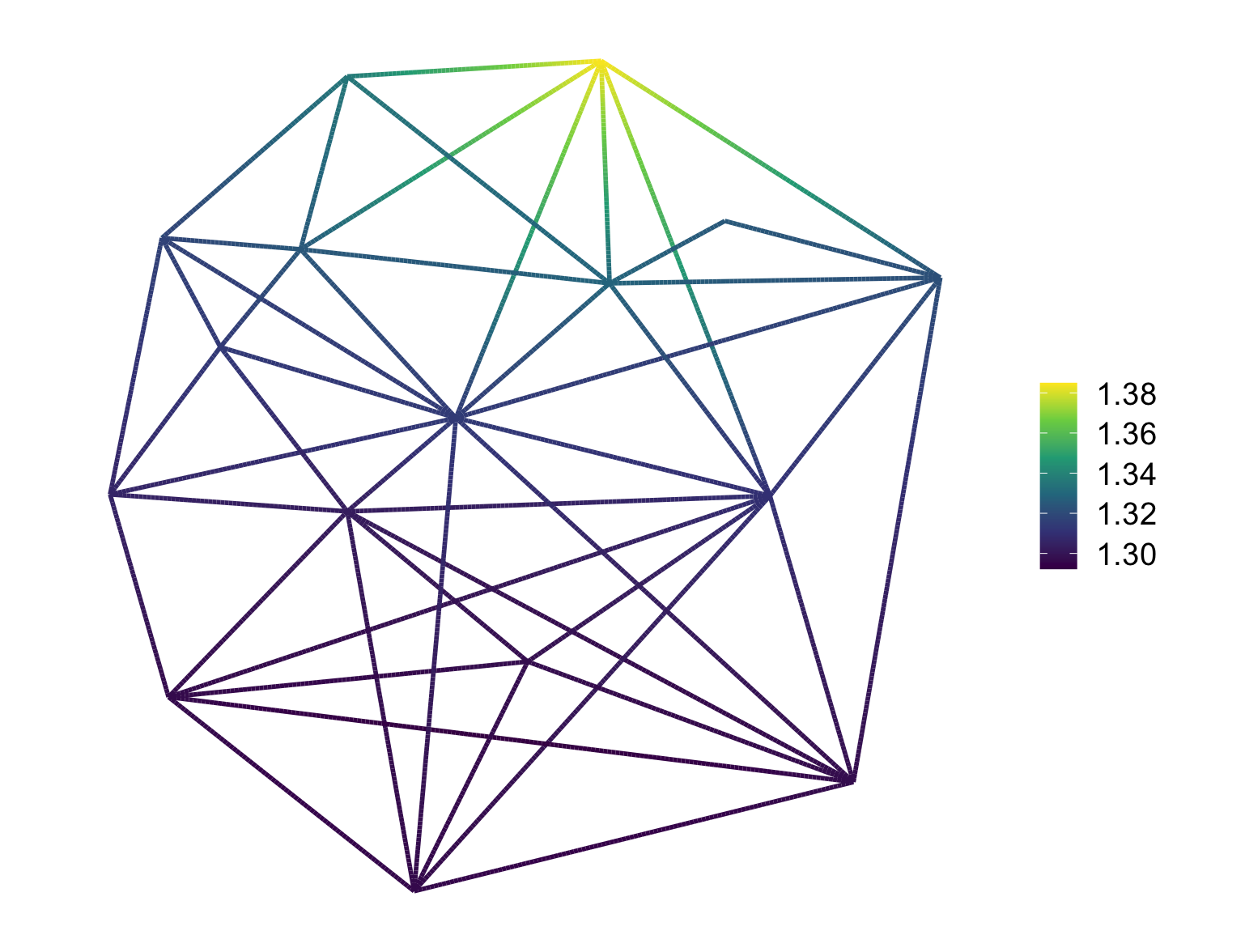}
  \hfill \caption{Solution $p_h$ computed with FEM}
\end{subfigure}
\begin{subfigure}{0.5\textwidth}
  \hfill \includegraphics[width=0.85\linewidth]{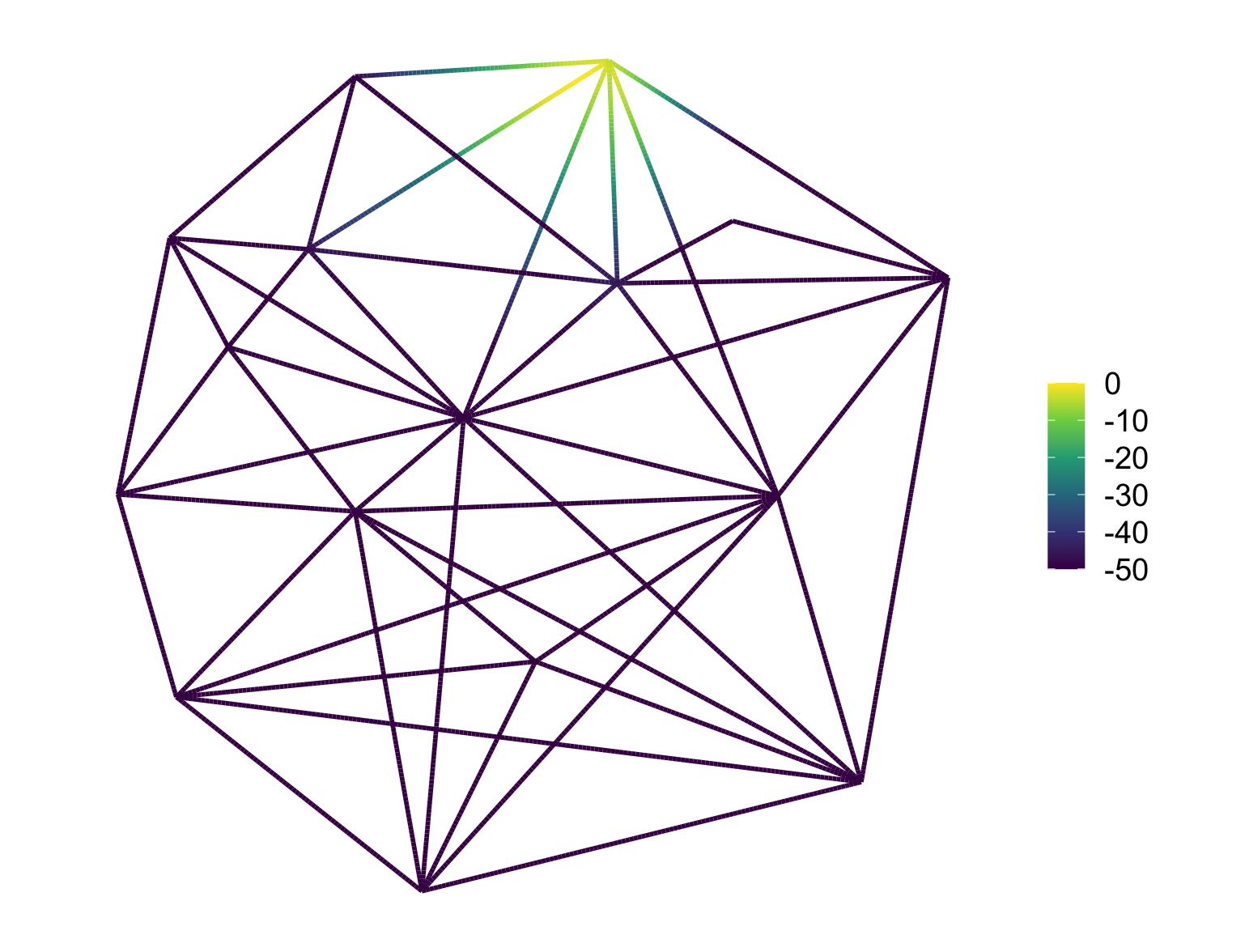}
   \hfill \caption{Log-posterior $\mathrm{LP}(x)$ for source location}
\end{subfigure}

\caption{ FEM solution \(p_h\) (left) and  log-posterior \(\mathrm{LP}(x)=\log p(x\mid \mathscr{D})\) (right).}
\label{fig:ipbo-solution_posterior_tele}
\end{figure}

\subsubsection{Numerical Results}
We work on the telecommunication network in New York from \cite{SNDlib10,OrlowskiPioroTomaszewskiWessaely2010} shown in
Figure~\ref{fig:metric graph example}(a). We take $\chi_0=0.2$ in \eqref{eq:solution p}. The FEM solution \(p_h\) is computed using the R package \textsf{MetricGraph}~\cite{MetricGraph},
setting the mesh size to \(h=0.25\), keeping \(N_h\) on the order of \(10^2\). We choose the set $\mathcal{I}_{\textrm{obs}}$ of observation locations to have size $N_h/2$ and to include the points at which the true solution $p_h$ attains the largest value. Guided by the ranges of $p_h$ and the log-posterior $\mathrm{LP}(x)$ in Figure~\ref{fig:ipbo-solution_posterior_tele}, we take \(B=1\) to be commensurate with the scale of the log-posterior, and set $\noisestdip=0.1$ to match the scale of $p_h$, and take $R=0$, since here $\mathrm{LP}$ is available in closed form from calculation rather than measurement. 
To guarantee numerical stability in this setting, we add a small nugget term (e.g., $10^{-2}$) when updating the GP parameters online. For simplicity, we still denote it by $\noisestd$ (see Algorithm~\ref{alg: benchmarks}).
We then run Algorithm~\ref{alg: benchmarks} with horizon \(T=40\) for both IGP\textendash UCB and GP\textendash TS, after \(N_{\rm init}=8\) initial design points. To assess average performance and reduce sensitivity to initialization, we repeat the loop over \(N_{\mathrm{rep}}=50\) randomly chosen initial designs.

The results in  Figure~\ref{fig:ipbo-ts-tele} show that Bayesian optimization with Euclidean kernel performs poorly: it rarely recovers the source within 40 iterations and exhibits a markedly lower reach rate. By contrast, algorithms with SPDE kernels succeed reliably, with the reach rate approaching \(1\) within 40 steps and the average iterations to \(\mathsf{Tol}\) falling in the \(20\)–\(30\) range. A plausible explanation is that the log-posterior is effectively unimodal and sharply concentrated in a small neighborhood of the true source \(x^\dagger\); a kernel that respects graph geometry (via shortest-path distance) localizes efficiently. In contrast, the Euclidean kernel shortcuts across gaps and parallel edges. Points that are close in Euclidean space but far in shortest-path distance are incorrectly treated as highly correlated, creating a covariance–objective mismatch and degrading optimization performance.

\begin{figure}[H]
\centering

\begin{minipage}{\textwidth}\centering
  \subcaption*{\textbf{IGP\textendash UCB}}
\end{minipage}

\begin{subfigure}{0.32\textwidth}
  \centering
  \includegraphics[width=\linewidth]{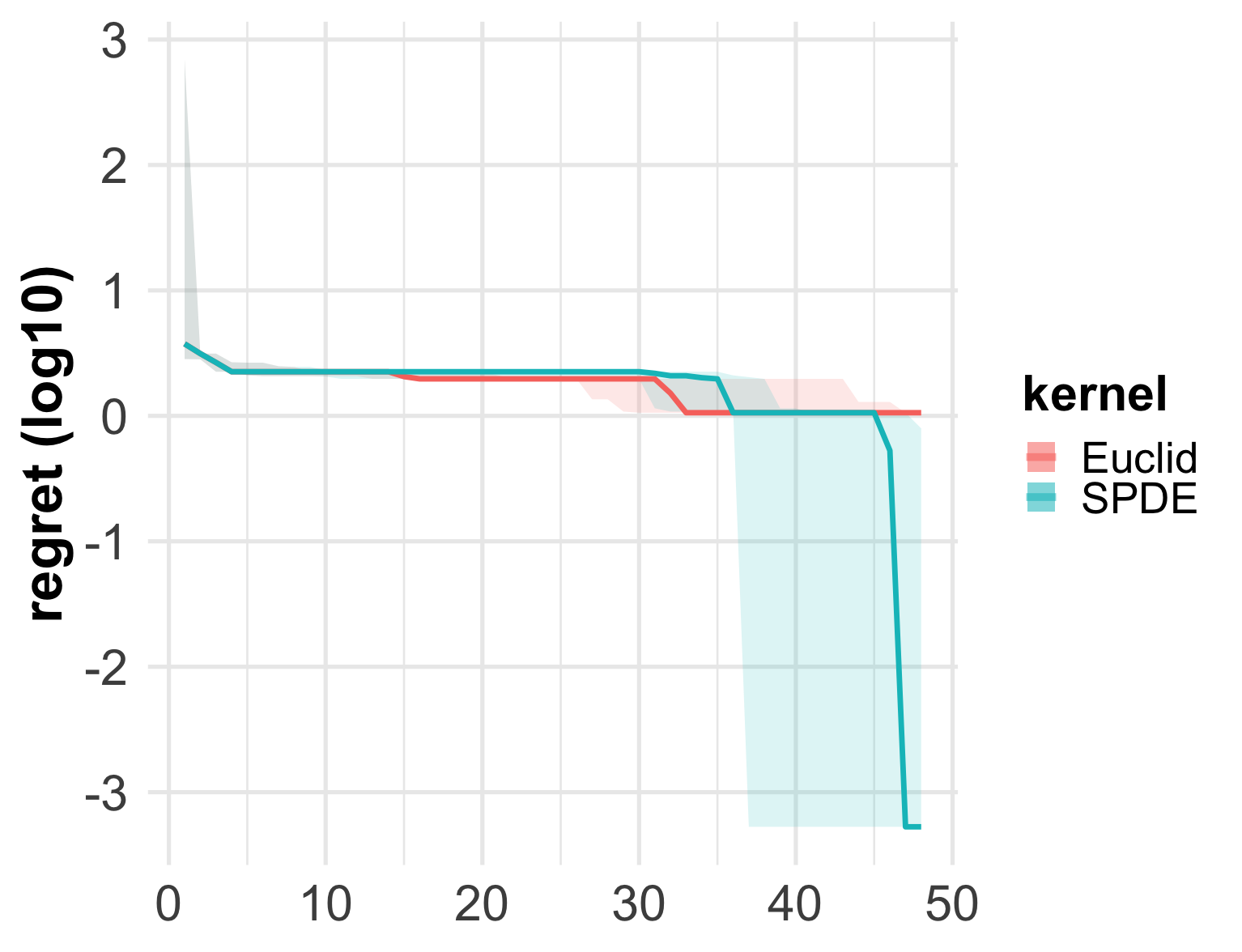}
\end{subfigure}\hfill
\begin{subfigure}{0.32\textwidth}
  \centering
  \includegraphics[width=\linewidth]{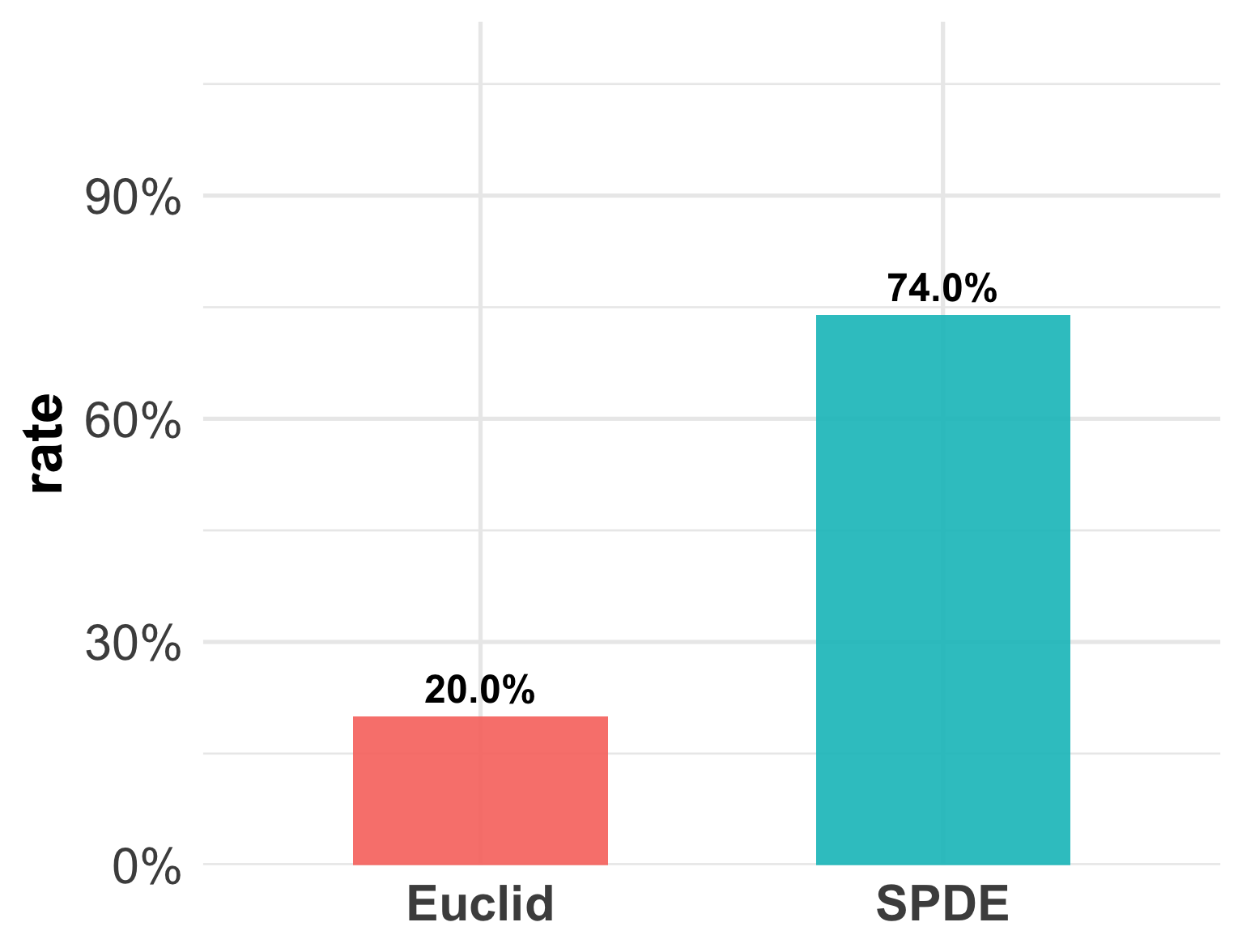}
\end{subfigure}
\begin{subfigure}{0.32\textwidth}
  \centering
  \includegraphics[width=\linewidth]{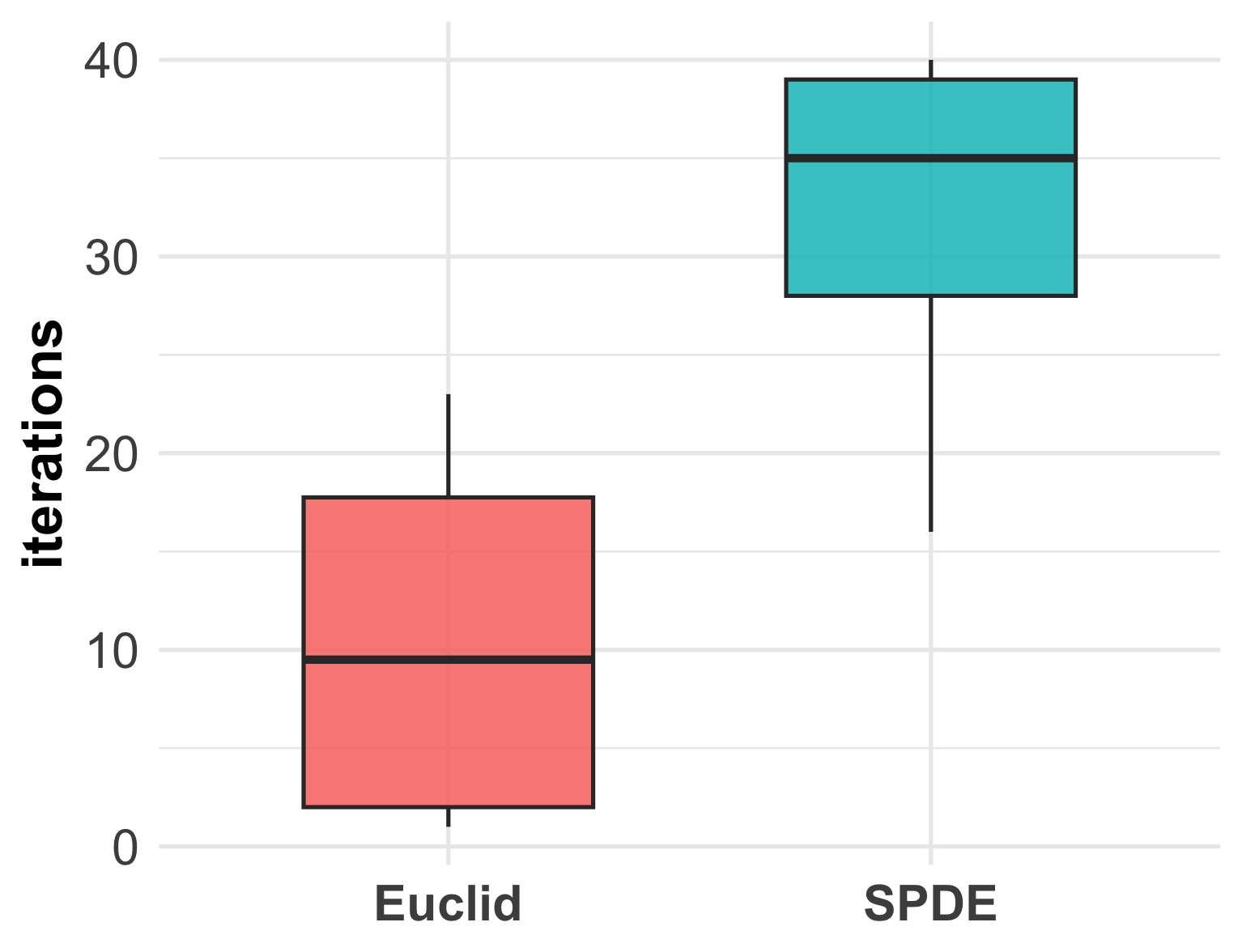}
\end{subfigure}\hfill

\vspace{0.6em}

\begin{minipage}{\textwidth}\centering
  \subcaption*{\textbf{GP\textendash TS}}
\end{minipage}

\begin{subfigure}{0.32\textwidth}
  \centering
  \includegraphics[width=\linewidth]{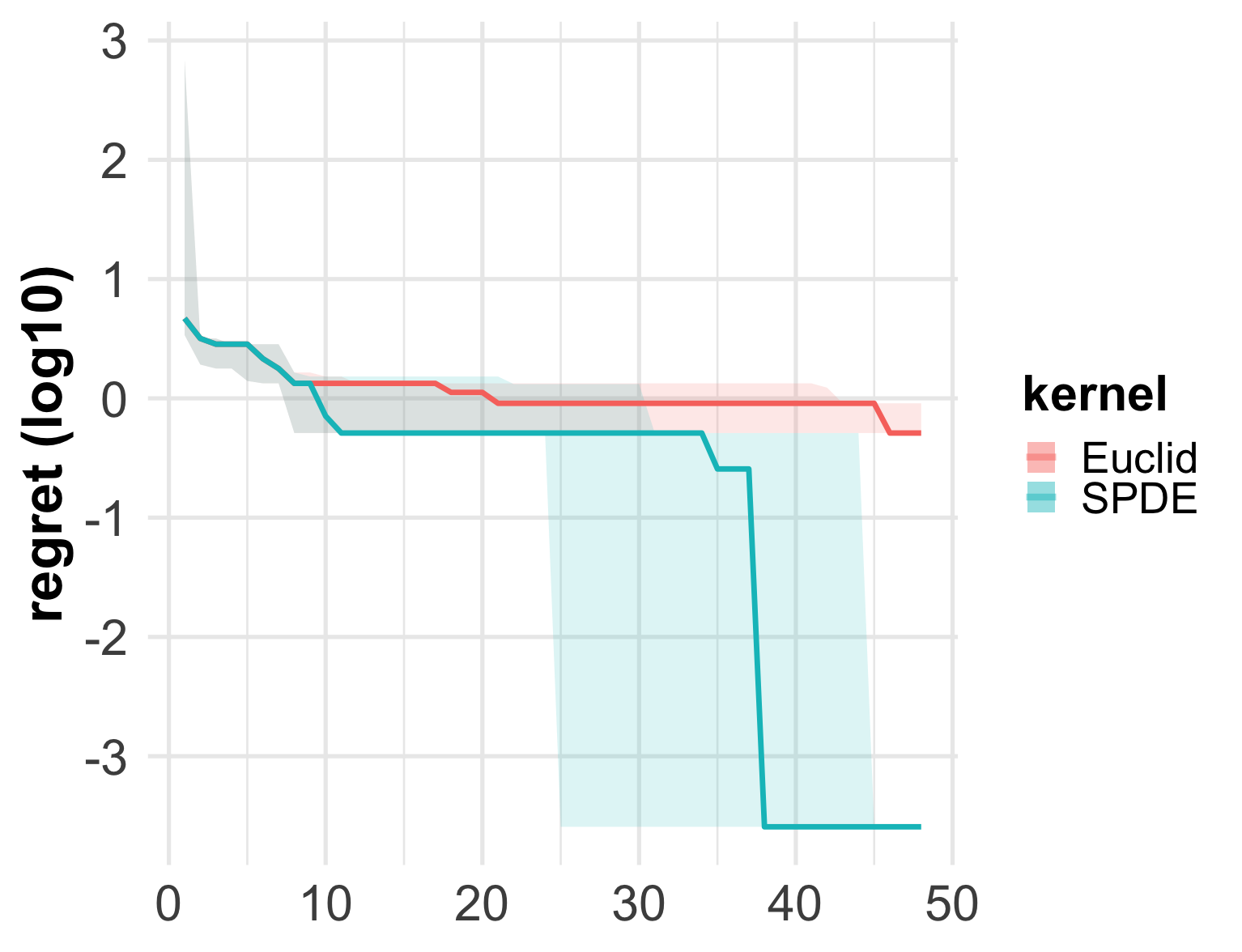}
  \caption{Simple regret}
\end{subfigure}\hfill
\begin{subfigure}{0.32\textwidth}
  \centering
  \includegraphics[width=\linewidth]{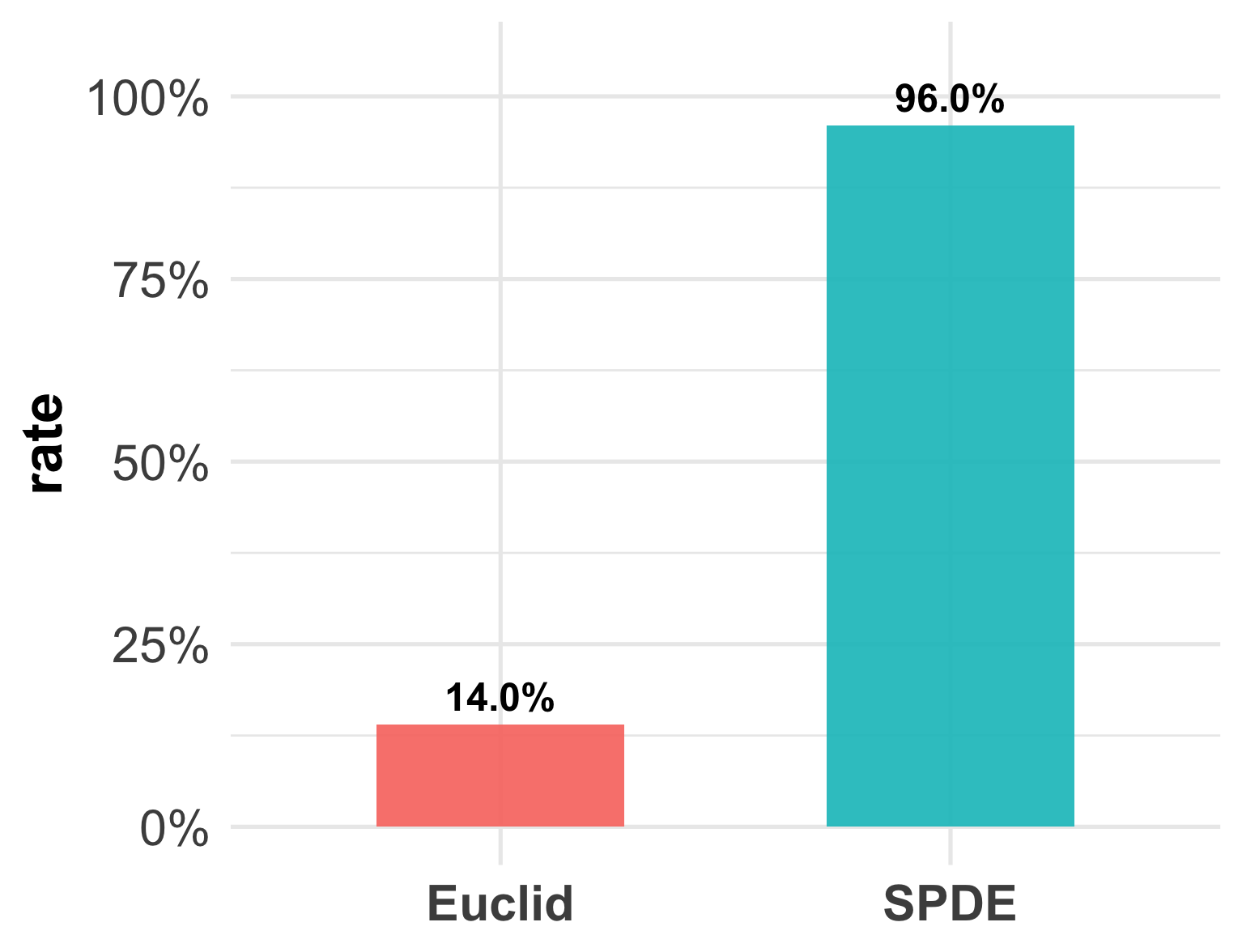}
  \caption{Reach rate}
\end{subfigure}
\begin{subfigure}{0.32\textwidth}
  \centering
  \includegraphics[width=\linewidth]{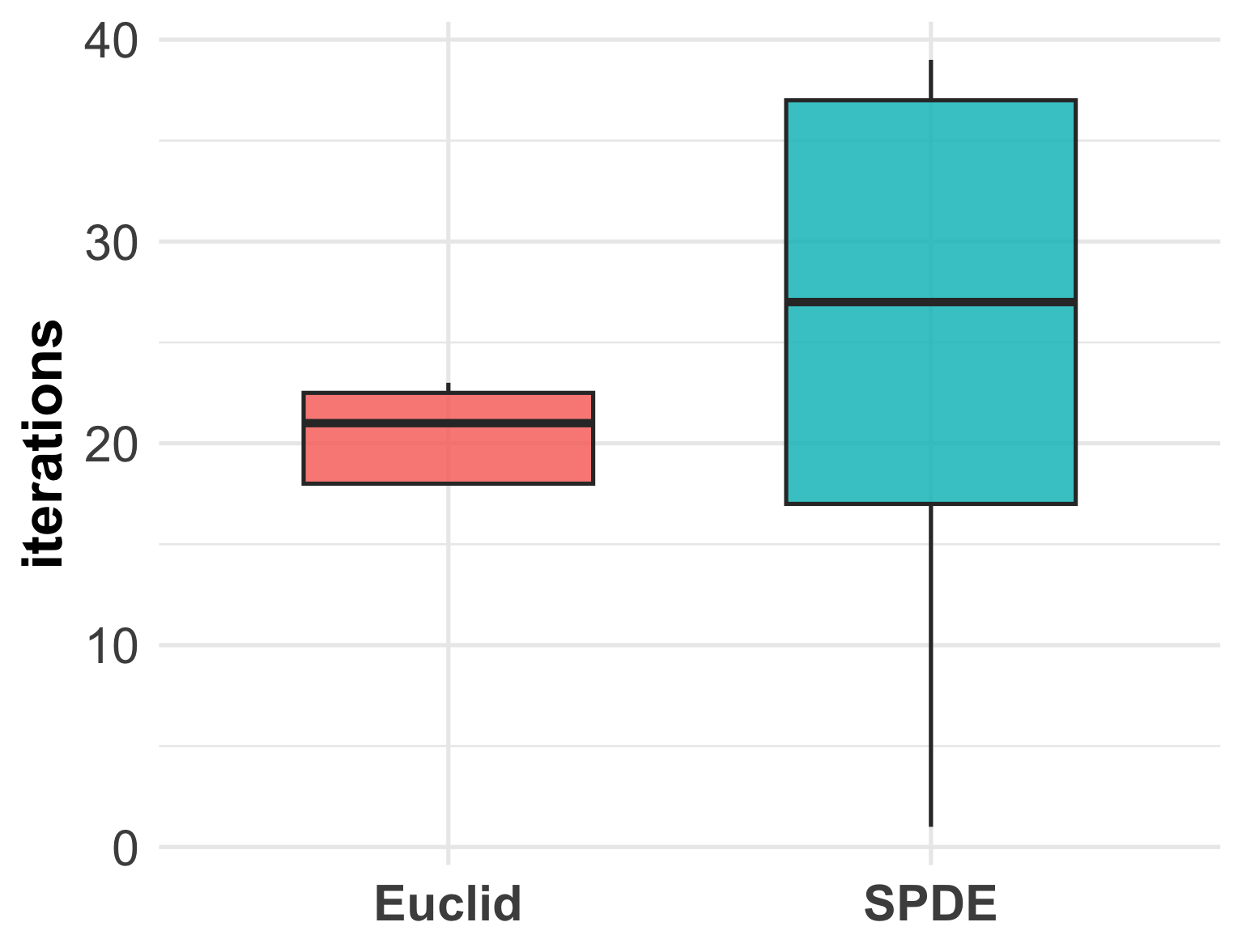}
  \caption{Iterations to $\mathsf{Tol}$}
\end{subfigure}\hfill

\caption{Inverse problem on telecommunication network. 
Layout matches Figure~\ref{fig:benchmarks_IGP-UCB}.}
\label{fig:ipbo-ts-tele}
\end{figure}

\section{Conclusions}\label{sec:conclusions}
This paper has investigated Bayesian optimization on networks modeled as metric graphs. Adopting Whittle-Matérn Gaussian priors, we have established in Theorem \ref{thm:exact kernel setting} regret bounds in an idealized setting in which the kernel is chosen to match the smoothness of the objective and the exact Whittle-Matérn kernel is used, without accounting for discretization error. We have also analyzed in Theorem \ref{thm:regret fem} the practical setting in which the smoothness of the objective is unknown and finite element representations of the Whittle-Matérn kernel are employed. In so doing, we have developed new theory for Bayesian optimization under kernel misspecification. Through numerical experiments, we demonstrated the advantage of Whittle--Mat\'ern kernels naturally adapted to the network geometry over standard kernels based on Euclidean distance, \red and showed that the proposed methods remain effective on multi-modal benchmark objectives.\nc

\red
An important direction for future work is to further improve performance in multi-modal settings, e.g. network inverse problems in which the posterior can exhibit multiple competing modes due to symmetries, limited sensing, or observational noise.
In such regimes, it can be valuable to move beyond returning a single maximizer and instead report several high-posterior candidates that capture different plausible explanations.
One natural approach is to draw multiple posterior sample paths and run GP--TS repeatedly, yielding a collection of distinct maximizers corresponding to different posterior realizations of the objective.
In parallel, IGP--UCB can be augmented with diversity or penalization mechanisms to discourage near-duplicate proposals and promote exploration across multiple high-posterior regions.
These ideas also motivate batch variants that query $K$ points per round, which can reduce the number of sequential rounds, and, in multi-modal settings, increase the chance of identifying multiple competing high-posterior regions within a fixed evaluation budget.
\nc

\section*{Acknowledgments}
Part of this research was performed while the authors were visiting the Institute for Mathematical and Statistical Innovation (IMSI), which is supported by the National Science Foundation (Grant No. DMS-2425650). DSA was partly funded by NSF CAREER DMS-2237628. 
The authors would like to thank Hwanwoo Kim for insightful discussions.

\bibliographystyle{siam}
\bibliography{references}

\appendix

\section{Misspecified IGP-UCB and GP-TS}\label{sec:misspecified BO}
In this appendix, we study misspecified UCB and TS where the function $\truth$ to be optimized does not belong to the RKHS of the kernel $k$ used for computation. Theory for misspecified GP-UCB has been studied in \cite{bogunovic2021misspecified} and here we extend the analysis to GP-TS. Since our theory applies beyond the metric graph setting, we consider a general abstract setup. 

Let $(D,d,\mu)$ be a metric measure space and the objective function $\truth\in L^\infty(\mu)$. Suppose $k:D\times D \rightarrow \R$ is the kernel used for computation with RKHS $\Hexact$, satisfying $|k(x,x')|\leq \overline{k}$ for some constant $\overline{k}<\infty$. 
 Assume that the noise sequence $\{\varepsilon_t\}$ in \eqref{eq:obsevation} is conditionally $R$-sub-Gaussian for some fixed constant $R>0$  with respect to the history up to time $t-1$. Let $D_t\subset D$ be a finite subset chosen at each iteration for TS acquisition function optimization satisfying 
\begin{align}\label{eq:appendix reduce to finite set}
    |\truth(x)-\truth([x]_t)| \leq 1/t^2, \qquad \forall x\in D,
\end{align}
where $[x]_t:=\operatorname{argmin}_{z\in D_t} d(x,z)$
is the point in $D_t$ closest to $x$. 
The next result bounds the simple regret (cf. \eqref{equa:simple regret}) of IGP-UCB and GP-TS when $\truth$ potentially does not belong to $\Hexact$. Specifically, we analyze Algorithm \ref{alg:GP-bandits} with $\Gamma$ and $\Gamma_t$ replaced with $D$ and $D_t$, as well as a modified choice of $\beta_t$ and $v_t$.   
 Again, we shall assume $\Ninit=0$ since the initial design does not affect the convergence rate.

\begin{theorem}\label{thm:regret mis TS}
Let $\truth\in L^\infty(\mu)$. Set in Algorithm \ref{alg:GP-bandits} 
\begin{align*}
     \beta_t &= B + R\sqrt{2\bigl(\gamma_{t-1}(k)+t(\lambda-1)/2+\ln(1/\delta)\bigr)}+b\frac{\sqrt{t-1}}{\sqrt{1+2/t}} \, ,\\
  v_t &= B + R\sqrt{2\bigl(\gamma_{t-1}(k)+t(\lambda-1)/2+\ln(2/\delta)\bigr)} +b\frac{\sqrt{t-1}}{\sqrt{1+2/t}} \, ,
\end{align*}
where $R$ is the sub-Gaussianity constant of the noise, $\delta\in(0,1)$, $\lambda=1+2/t$, and
\begin{align*}
    B:=\|f\|_{\mathcal{H}_k},\quad b=\|\truth-f\|_{L^\infty(\mu)}
\end{align*}
for an arbitrary $f\in \Hexact$.    
With probability at least $1-\delta$,
\begin{align*}
    r_T^{\UCB}&=O\left(\frac{\gamma_T(k)}{\sqrt{T}}+\sqrt{\frac{\gamma_T(k)}{T}}\left(B+\sqrt{\log(1/\delta)}\right)+b\sqrt{\gamma_T(k)}\right) \, ,\\
    r_T^{\TS}&=O\left(\sqrt{\log (|D_T|T^2)}\left[\frac{\gamma_T(k)}{\sqrt{T}}+\sqrt{\frac{\gamma_T(k)}{T}}\left(B+\|f\|_{L^\infty(\mu)}\sqrt{\log(1/\delta)}\right)+b\sqrt{\gamma_T(k)}\right]\right) \,.
\end{align*}
\end{theorem}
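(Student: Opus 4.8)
The plan is to first derive a single misspecified confidence inequality that upgrades the Chowdhury--Gopalan bound \cite[Thm.~2]{chowdhury2017kernelized}, use it essentially verbatim for IGP-UCB as in \cite{bogunovic2021misspecified}, and then build a Thompson-sampling regret argument on top of it.

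\emph{Step 1 (confidence bound).} Fix $f\in\Hexact$ with $B=\|f\|_{\Hexact}$ and $b=\|\truth-f\|_{L^\infty(\mu)}$, and split the observation vector as $Y_{t-1}=F_{t-1}+G_{t-1}+\mathcal E_{t-1}$ with $F_{t-1}=(f(x_i))_i$, $G_{t-1}=(\truth(x_i)-f(x_i))_i$ (so $\|G_{t-1}\|_2\le b\sqrt{t-1}$), and $\mathcal E_{t-1}=(\varepsilon_i)_i$. Correspondingly $\mu_{t-1}(x)-f(x)$ splits into a bias term (from $F_{t-1}$), a self-normalized noise term (from $\mathcal E_{t-1}$), and a misspecification term $k_{t-1}(x)^\top(K_{t-1}+\lambda I)^{-1}G_{t-1}$. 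The first two are controlled, exactly as in \cite[Thm.~2]{chowdhury2017kernelized} with regularizer $\lambda=1+2/t$ (so $t(\lambda-1)/2=1$), by $\big(B+R\sqrt{2(\gamma_{t-1}(k)+t(\lambda-1)/2+\ln(1/\delta))}\big)\sigma_{t-1}(x)$ on an event of probability $\ge 1-\delta$. For the third, Cauchy--Schwarz gives $|k_{t-1}(x)^\top(K_{t-1}+\lambda I)^{-1}G_{t-1}|\le b\sqrt{t-1}\,\|(K_{t-1}+\lambda I)^{-1}k_{t-1}(x)\|_2$, and a short feature-map (push-through) computation gives the key estimate $\|(K_{t-1}+\lambda I)^{-1}k_{t-1}(x)\|_2\le\sigma_{t-1}(x)/\sqrt\lambda$, which with $\lambda=1+2/t$ is precisely the extra summand in $\beta_t$ and $v_t$. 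Hence, with probability $\ge 1-\delta$ for IGP-UCB (with $\delta/2$ in place of $\delta$ for GP-TS, reserving the other $\delta/2$), simultaneously for all $t\ge1$ and all $x$, $|\mu_{t-1}(x)-f(x)|\le\beta_t\sigma_{t-1}(x)$, and therefore $|\mu_{t-1}(x)-\truth(x)|\le\beta_t\sigma_{t-1}(x)+b$ (and likewise with $v_t$).

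\emph{Step 2 (IGP-UCB).} On this event, since $x_t$ maximizes $\mu_{t-1}+\beta_t\sigma_{t-1}$ over $D$, we get $\truth(x^*)\le\mu_{t-1}(x^*)+\beta_t\sigma_{t-1}(x^*)+b\le\mu_{t-1}(x_t)+\beta_t\sigma_{t-1}(x_t)+b\le\truth(x_t)+2\beta_t\sigma_{t-1}(x_t)+2b$, so $r_t\le 2\beta_t\sigma_{t-1}(x_t)+2b$. Summing, applying Cauchy--Schwarz with the standard inequality $\sum_{t\le T}\sigma_{t-1}^2(x_t)\lesssim\gamma_T(k)$ (valid since $\lambda\ge1$) and monotonicity of $\beta_t$, and using that the simple regret is at most the average of the $r_t$, dividing by $T$ and inserting $\beta_T\asymp B+\sqrt{\gamma_T(k)+\log(1/\delta)}+b\sqrt T$ yields the stated $r_T^{\UCB}$ (with $b\sqrt T\cdot\sqrt{\gamma_T(k)/T}=b\sqrt{\gamma_T(k)}$); this reproduces \cite{bogunovic2021misspecified}.

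\emph{Step 3 (GP-TS), the part requiring new work.} Set $c_t:=\sqrt{2\ln(|D_t|t^2)}$. A Gaussian maximal inequality over the finite set $D_t$ shows the sample event $\mathcal S_t:=\{|f_t(x)-\mu_{t-1}(x)|\le v_tc_t\sigma_{t-1}(x)\ \forall x\in D_t\}$ has conditional probability $\ge1-t^{-2}$, whence (combining with Step 1) $|f_t(x)-\truth(x)|\le(v_tc_t+\beta_t)\sigma_{t-1}(x)+b$ on $\mathcal S_t$ for $x\in D_t$. Call $x\in D_t$ \emph{saturated} if $\Delta_t(x):=\truth(x^*)-\truth(x)>(v_tc_t+\beta_t)\sigma_{t-1}(x)+2b+t^{-2}$; the point $[x^*]_t$ nearest $x^*$ has $\Delta_t([x^*]_t)\le t^{-2}$, hence is unsaturated, so the history-measurable minimum-variance unsaturated point $\widetilde x_t$ exists. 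Using $v_t\ge\beta_t$, $\mu_{t-1}(x)\ge\truth(x)-\beta_t\sigma_{t-1}(x)-b$, and anti-concentration $\prob(Z\ge1)=:p>0$ for $Z\sim\mathcal N(0,1)$, one gets $\prob\big(f_t([x^*]_t)\ge\truth(x^*)-b-t^{-2}\mid\text{history}\big)\ge p$, and on this event together with $\mathcal S_t$ the maximizer $x_t$ cannot be saturated; thus $\prob(x_t\text{ unsaturated}\mid\text{history})\ge p-t^{-2}$ on the confidence event. On $\mathcal S_t$, comparing $f_t(x_t)\ge f_t(\widetilde x_t)$ via the two-sided bounds at $x_t$ and $\widetilde x_t$ gives $r_t=\Delta_t(x_t)\le(v_tc_t+\beta_t)\big(\sigma_{t-1}(x_t)+2\sigma_{t-1}(\widetilde x_t)\big)+4b+t^{-2}$, and since $x_t$ unsaturated forces $\sigma_{t-1}(x_t)\ge\sigma_{t-1}(\widetilde x_t)$ we obtain $\sigma_{t-1}(\widetilde x_t)\le(p-t^{-2})^{-1}\E[\sigma_{t-1}(x_t)\mid\text{history}]$. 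Substituting, summing over $t$, bounding $\sum(v_tc_t+\beta_t)\sigma_{t-1}(x_t)$ by Cauchy--Schwarz with $\sum_{t\le T}\sigma_{t-1}^2(x_t)\lesssim\gamma_T(k)$, and controlling $\sum(v_tc_t+\beta_t)\big(\E[\sigma_{t-1}(x_t)\mid\text{history}]-\sigma_{t-1}(x_t)\big)$ by Azuma--Hoeffding (increments $\lesssim(v_tc_t+\beta_t)\sqrt{\overline k}$, which is where the $\sqrt{\log(1/\delta)}$ factor enters, and $\|f\|_{L^\infty(\mu)}\le B\sqrt{\overline k}$ accounts for the $\|f\|_{L^\infty(\mu)}$ in the statement), one reaches $\sum_{t\le T}r_t\lesssim\sqrt{\log(|D_T|T^2)}\,\big[v_T\sqrt{T\gamma_T(k)}+bT\sqrt{\gamma_T(k)}\big]+\text{lower order}$; dividing by $T$ and inserting $v_T\asymp B+\sqrt{\gamma_T(k)+\log(1/\delta)}+b\sqrt T$ gives the claimed $r_T^{\TS}$.

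\emph{Main obstacle.} Steps 1--2 are a re-derivation of \cite{bogunovic2021misspecified}; the substantive work is Step 3, specifically (i) choosing the saturation margin so that the $b$-inflation of $\beta_t,v_t$, the discretization slack $t^{-2}$, and the factor $c_t$ are all absorbed while the anti-concentration probability stays a fixed constant $p$ (this is exactly what forces $v_t\ge\beta_t$ and the identical correction $b\sqrt{t-1}/\sqrt{1+2/t}$ in both parameters), and (ii) the martingale step, where the realized $\sigma_{t-1}(x_t)$ must be tied to the history-measurable $\sigma_{t-1}(\widetilde x_t)$ and the bounded-increment constant must be traced to $\|f\|_{L^\infty(\mu)}$ (equivalently $\overline k$) rather than $\|\truth\|_{\Hexact}$, which explains why the TS bound carries the $L^\infty$ norm of the surrogate. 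Everything else is bookkeeping with the information-gain sums.
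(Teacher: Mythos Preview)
Your proposal is correct and follows the same overall architecture as the paper: the misspecified confidence inequality (your Step~1) is exactly the paper's Lemma~A.2, derived via \cite[Thm.~2]{chowdhury2017kernelized} plus the push-through bound of \cite[Lemma~2]{bogunovic2021misspecified}; the UCB part is a direct citation in both; and the TS part runs the Chowdhury--Gopalan saturation / anti-concentration / Azuma machinery with the inflated $v_t$.

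The one organizational difference worth noting: the paper does \emph{not} define saturation in terms of $\truth$. Instead it decomposes the instantaneous regret upfront as
\[
\truth(x^*)-\truth(x_t)\;\le\; t^{-2}\;+\;2b\;+\;\bigl(f([x^*]_t)-f(x_t)\bigr),
\]
and then runs the entire TS analysis on the surrogate term $f([x^*]_t)-f(x_t)$, with saturation defined as $f([x^*]_t)-f(x)>c_t\sigma_{t-1}(x)$ where $c_t=v_t(1+w_t)$. This lets the paper invoke \cite[Lemma~13]{chowdhury2017kernelized} essentially verbatim, replacing only their RKHS bound $B$ by $\|f\|_{L^\infty(\mu)}$ in the martingale-increment constant. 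Your route---carrying $\truth$ through and padding the saturation threshold by $2b+t^{-2}$---gives the same bound but requires re-deriving the lemma rather than citing it. One small imprecision in your sketch: $\|f\|_{L^\infty(\mu)}$ enters the final bound not via $\sqrt{\overline{k}}$ but because the raw martingale increment contains $|f([x^*]_t)-f(x_t)|\le 2\|f\|_{L^\infty(\mu)}$; the inequality $\|f\|_{L^\infty}\le B\sqrt{\overline{k}}$ goes the wrong direction for explaining its appearance.
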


\begin{proof}
Since $\truth(x_t^*)\geq \truth(x_t)$, we have 
\begin{align*}
    r_T\leq \frac{1}{T}\sum_{t=1}^T \truth(x^*)-\truth(x_t)=:\frac{R_T}{T},
\end{align*}
so it suffices to bound the cumulative regret $R_T$.
The regret bound for UCB follows from \cite[Theorem 1]{bogunovic2021misspecified}. 
To show the bound for TS, notice that by \eqref{eq:appendix reduce to finite set}
\begin{align*}
    \frac{R_T}{T}&=\frac{1}{T}\sum_{t=1}^T \big[\truth(x^*)-\truth([x^*]_t)\big]+\big[\truth([x^*]_t)-f([x^*]_t)\big] +\big[f([x^*]_t)-f(x_t)\big] \\
    &\leq\frac{\pi^2}{6T}+ 2b+ \frac{1}{T}\sum_{t=1}^T f([x^*]_t)-f(x_t), 
\end{align*}
so it suffices to control the last term. 
By \cite[Lemma 4]{chowdhury2017kernelized} that $\sum_{t=1}^T \sigma_{t-1}(x_t) = O(\sqrt{T\gamma_T(k)})$ and Lemma \ref{lemma:cumulative regret TS} below, we have with probability at least $1-\delta$,
\begin{align*}
   & \sum_{t=1}^T f([x^*]_t)-f(x_t)\\
    &\leq\frac{11c_T}{p} \sum_{t=1}^T \sigma_{t-1}(x_t)+\frac{2(\|f\|_{L^\infty(\mu)}+1)\pi^2}{6}+\frac{(4\|f\|_{L^\infty(\mu)}+11)c_T}{p}\sqrt{2T\log (2/\delta)}\\
    & = O\left(c_T\left[\sqrt{T\gamma_T(k)}+\|f\|_{L^\infty(\mu)} \sqrt{T\log (1/\delta)}\right]\right)\\
    & = O\left(\sqrt{\log (|D_T|T^2)}\left[\sqrt{T}\gamma_T(k)+\sqrt{T\gamma_T(k)}\left(B+\|f\|_{L^\infty(\mu)}\sqrt{\log(1/\delta)}+b\sqrt{T}\right)\right]\right),
\end{align*}
where $c_T$ is defined in \eqref{eq:ct} below. 
The result follows. 
\end{proof}

Next we will show three lemmas that will be used to establish Lemma \ref{lemma:cumulative regret TS}. 
First, we derive a high probability concentration result for the posterior mean \eqref{eq:postmean} in terms of the standard deviation \eqref{eq:poststd}. 
We remark that there are two sources of randomness that we will explicate: (i) randomness of the observation noise (denoted below with superscript $\varepsilon$) and (ii) randomness of the Thompson sampling step (denoted below with superscript TS). 
We denote by $\F_t$ the filtration generated by the history up to time $t$.

\begin{lemma}\label{lemma:concentration}
Consider the two events 
\begin{equation}\label{eq:events}
\begin{aligned}
    E^{\varepsilon}(t)&:=\big\{|f(x)-\mu_{t-1}(x)|\leq v_t \sigma_{t-1}(x),\forall x\in D \bigr\},\\
    E^{TS}(t)&:= \bigl\{|f_{t}(x)-\mu_{t-1}(x)|\leq v_tw_t \sigma_{t-1}(x),\forall x\in D_t \bigr\},
\end{aligned}
\end{equation}
where $\lambda=1+2/t$ in \eqref{eq:postmean} and \eqref{eq:poststd}, and for $B=\|f\|_{\mathcal{H}_k}$ and $R$ the sub-Gaussianity constant of the noise
\begin{align*}
    v_t & = B+R\sqrt{2[\gamma_{t-1}(k)+1+ \log (2/\delta) ]} + b\frac{\sqrt{t-1}}{\sqrt{1+2/t}},\qquad 
    w_t = \sqrt{2\log |D_t|t^2} .
\end{align*} 
Then 
\begin{align}
    \P^{\varepsilon}\big\{E^{\varepsilon}(t)\big\}&\geq  1-\delta/2,\label{eq:concentration 1}\\
    \P^{TS}\big\{E^{TS}(t)\,|\,\F_{t-1}\big\} &\geq 1-t^{-2}. \label{eq:concentration 2}
\end{align}
\end{lemma}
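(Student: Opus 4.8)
The plan is to prove the two concentration bounds separately, treating the observation-noise randomness and the Thompson-sampling randomness independently. For \eqref{eq:concentration 1}, the idea is to decompose $f(x)-\mu_{t-1}(x)$ into a ``well-specified'' part and a ``misspecification'' part. Write $Y_{t-1} = \mathbf f_{t-1} + \boldsymbol\varepsilon_{t-1}$ where $\mathbf f_{t-1}=(f(x_1),\dots,f(x_{t-1}))^\top$ and $\boldsymbol\varepsilon_{t-1}$ collects the noise. Then
\[
f(x)-\mu_{t-1}(x) = \underbrace{\Big(f(x)-k_{t-1}(x)^\top(K_{t-1}+\lambda I)^{-1}\mathbf f_{t-1}\Big)}_{\text{bias/approximation term}} \;-\; k_{t-1}(x)^\top(K_{t-1}+\lambda I)^{-1}\boldsymbol\varepsilon_{t-1}.
\]
The noise term is controlled by the self-normalized concentration inequality of \cite[Theorem 1]{chowdhury2017kernelized} (or the version underlying \cite{bogunovic2021misspecified}), which bounds it by $\big(R\sqrt{2(\gamma_{t-1}(k)+1+\log(2/\delta))}\big)\sigma_{t-1}(x)$ uniformly in $x$ with probability at least $1-\delta/2$; here the constant $\lambda=1+2/t$ is used so that $\lambda \ge 1$ and the $(\ln(1/\delta)+ \ldots)$ bookkeeping in \cite{chowdhury2017kernelized} applies. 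The bias term splits further: the ``well-specified'' contribution $\|f\|_{\mathcal H_k}\,\sigma_{t-1}(x) = B\,\sigma_{t-1}(x)$ is the standard RKHS interpolation bound, while the remaining piece $\big|(f(x)-P_tf(x))-k_{t-1}(x)^\top(K_{t-1}+\lambda I)^{-1}(\mathbf f_{t-1}-P_t\mathbf f_{t-1})\big|$ — measuring how $\truth$'s deviation from $f$ propagates — is bounded by $b\,\sqrt{t-1}/\sqrt{1+2/t}\cdot \sigma_{t-1}(x)$ exactly as in \cite[Lemma 2 / proof of Theorem 1]{bogunovic2021misspecified}: one uses $\|\mathbf f_{t-1}-\text{(true values)}\|_\infty \le b$ together with the operator-norm bound $\|k_{t-1}(x)^\top(K_{t-1}+\lambda I)^{-1}\|_2 \le \sigma_{t-1}(x)/\sqrt\lambda$ and a Cauchy--Schwarz step to extract the $\sqrt{t-1}$ factor. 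Summing the three pieces gives the coefficient $v_t$ as stated, which proves \eqref{eq:concentration 1}.

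For \eqref{eq:concentration 2}, condition on $\mathcal F_{t-1}$; then $f_t$ is a Gaussian process with mean $\mu_{t-1}$ and covariance $v_t^2 k_{t-1}$, so for each fixed $x\in D_t$, $f_t(x)-\mu_{t-1}(x) \sim \mathcal N(0, v_t^2\sigma_{t-1}^2(x))$. The standard Gaussian tail bound $\P(|Z|>s)\le e^{-s^2/2}$ with $s = w_t = \sqrt{2\log(|D_t|t^2)}$ gives $\P^{TS}\big\{|f_t(x)-\mu_{t-1}(x)| > v_t w_t \sigma_{t-1}(x)\big\} \le 1/(|D_t| t^2)$; a union bound over the $|D_t|$ points of $D_t$ yields \eqref{eq:concentration 2}. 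This half is routine.

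The main obstacle is the bias-term bookkeeping in \eqref{eq:concentration 1}: one must carefully track how the $L^\infty$ misspecification $b=\|\truth-f\|_{L^\infty(\mu)}$ enters both directly (through the pointwise gap $|f(x)-\truth(x)|\le b$, although note the event is stated for $f$ not $\truth$, so this particular contribution is absent here and the $b$ term comes purely from the data vector perturbation) and through the perturbed observation vector, and to verify the resulting coefficient matches $b\sqrt{t-1}/\sqrt{1+2/t}$ rather than something with a different power of $t$ or $\lambda$. This requires reproducing the relevant estimate from \cite{bogunovic2021misspecified} with the specific choice $\lambda = 1+2/t$ and confirming the $\sqrt{t-1}$ (as opposed to $\sqrt{t}$) index is correct, since $K_{t-1}$ is a $(t-1)\times(t-1)$ matrix at iteration $t$. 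I would also double-check that the $\gamma_{t-1}(k)+1$ inside the square root (rather than $\gamma_{t-1}(k)+(t(\lambda-1))/2 = \gamma_{t-1}(k)+1$) is consistent: indeed with $\lambda = 1+2/t$ one has $t(\lambda-1)/2 = 1$, which is presumably why the ``$+1$'' appears in $v_t$ here while the general Theorem \ref{thm:regret mis TS} writes it as $t(\lambda-1)/2$.
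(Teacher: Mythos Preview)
Your approach is essentially the same as the paper's: split into a well-specified concentration piece and a misspecification (data-perturbation) piece, then invoke \cite[Theorem 2]{chowdhury2017kernelized} and \cite[Lemma 2]{bogunovic2021misspecified} respectively, with the $E^{TS}(t)$ bound following from a Gaussian tail plus union bound (the paper cites \cite[Lemma 5]{chowdhury2017kernelized}).

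One correction to your write-up: the displayed decomposition $Y_{t-1}=\mathbf f_{t-1}+\boldsymbol\varepsilon_{t-1}$ with $\mathbf f_{t-1}=(f(x_1),\dots,f(x_{t-1}))^\top$ is not right, since the observations are of $\truth$, not of $f$; so $Y_{t-1}=\truth_{t-1}+\boldsymbol\varepsilon_{t-1}$. You clearly realize this later (``the $b$ term comes purely from the data vector perturbation''), but the equation as written does not give you three terms with a visible $b$ contribution. The paper handles this cleanly by introducing surrogate observations $\widetilde y_t = f(x_t)+\varepsilon_t$ and the corresponding surrogate posterior mean $\widetilde\mu_{t-1}$, then writing
\[
|f(x)-\mu_{t-1}(x)|\le |f(x)-\widetilde\mu_{t-1}(x)|+|\widetilde\mu_{t-1}(x)-\mu_{t-1}(x)|,
\]
with \cite[Theorem 2]{chowdhury2017kernelized} applied to the \emph{surrogate} problem (which is well-specified for $f$) controlling the first term by $(B+R\sqrt{2[\gamma_{t-1}(k)+1+\log(2/\delta)]})\sigma_{t-1}(x)$, and \cite[Lemma 2]{bogunovic2021misspecified} controlling the second term by $b\sqrt{t-1}/\sqrt{1+2/t}\cdot\sigma_{t-1}(x)$ since $|\widetilde y_s-y_s|\le b$. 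Your finer three-way split (RKHS bias, noise, data perturbation) is equivalent once the $\truth$ vs.\ $f$ issue is fixed, but the surrogate-data formulation is tidier and avoids the bookkeeping you flag as the ``main obstacle.'' Your observation that $t(\lambda-1)/2=1$ under $\lambda=1+2/t$ is exactly why the ``$+1$'' appears in $v_t$.
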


\begin{proof}
We recall that the observations \eqref{eq:obsevation} are generated from the truth $\truth$ rather than its approximation $f$. 
This motivates defining the following surrogate data
\begin{align}
    \widetilde{y}_t = f(x_t) +\varepsilon_t
\end{align}
and the associated posterior mean 
\begin{align}\label{eq:surrogate pm}
    \widetilde{\mu}_{t}(x) = k_{t}(x)^\top (K_{t}+\lambda I_t)^{-1} \widetilde{Y}_t,\qquad \widetilde{Y}_t = [\widetilde{y}_1,\ldots,\widetilde{y}_t]^\top. 
\end{align}
The posterior standard deviation \eqref{eq:poststd} is independent of the data so it remains the same. 
By \cite[Theorem 2]{chowdhury2017kernelized} applied to $f$, we have with probability  $1-\delta/2$, 
\begin{align*}
    |f(x)-\widetilde{\mu}_{t-1}(x)|\leq \Big[B+R\sqrt{2[\gamma_{t-1}(k)+1+ \log (2/\delta) ]}\Big]\sigma_{t-1}(x), \qquad \forall x\in D, \quad \forall t \geq 1.
\end{align*}
Since $|\widetilde{y}_t-y_t|\leq \|\truth-f\|_{L^\infty(\mu)}\leq b$, 
\cite[Lemma 2]{bogunovic2021misspecified} implies that 
\begin{align*}
    |\widetilde{\mu}_{t-1}(x)-\mu_{t-1}(x)| \leq \frac{b\sqrt{t-1}}{\sqrt{1+2/t}}\sigma_{t-1}(x),\qquad \forall x\in D,\quad \forall t\geq 2.
\end{align*}
Therefore,
\begin{align*}
    \P^{\varepsilon}\Big\{|f(x)-\mu_{t-1}(x)|\leq v_t \sigma_{t-1}(x) \,\,\forall x\in D, \forall t \geq 1\Big\}\geq  1-\delta/2,
\end{align*}
where $v_t=B+R\sqrt{2[\gamma_{t-1}(k)+1+ \log (2/\delta)]}+b\sqrt{t-1}/\sqrt{1+2/t}$, establishing the first claim. 

For $f_t\sim \mathcal{N}(\mu_{t-1},v_t^2 \sigma_{t-1}^2)$, \cite[Lemma 5]{chowdhury2017kernelized} gives 
\begin{align*}
    \P^{TS}\Big\{|f_t(x)-\mu_{t-1}(x)|\leq  v_t\sqrt{2\log |D_t|t^2}\sigma_{t-1}(x)\,\, \forall x\in D_t \,\big |\, \F_{t-1}\Big\} \geq 1-t^{-2},
\end{align*}
establishing the second claim. 
\end{proof}

With such new concentration results, we shall proceed as in \cite{chowdhury2017kernelized} to bound the regret of the surrogate function $f$: $f([x^*]_t)-f(x_t)$. 
To make our presentation self-contained, we lay out the main ideas from \cite{chowdhury2017kernelized} below, 
which is to relate $f([x^*]_t)-f(x_t)$ with the posterior standard deviation $\sigma_{t-1}(x_t)$ whose sum can be controlled via maximum information gain.  
We classify the candidate $x_t\in D_t$ as 
\begin{align}\label{eq:def saturated points}
    \begin{cases}
    \text{ unsaturated } & \text{if } f([x^*]_t) - f(x_t) \leq c_t \sigma_{t-1}(x_t)\\
    \text{ saturated } & \text{if } f([x^*]_t) - f(x_t) >  c_t \sigma_{t-1}(x_t)
    \end{cases},
\end{align}
where 
\begin{align}\label{eq:ct}
    c_t := v_t (1+ w_t) = \left[B+R\sqrt{2[\gamma_{t-1}(k)+1+ \log (2/\delta)]}+b\frac{\sqrt{t-1}}{\sqrt{1+2/t}}\right]\left(1+\sqrt{2\log |D_t|t^2}\right).
\end{align}
The key of the analysis centers around the fact that one can show $x_t$ has a positive probability of being unsaturated at each iteration.

\begin{lemma}\label{lemma:unsaturated probability}
In the event where $E^\varepsilon(t)$ is true under $\F_{t-1}$,
\begin{align*}
    \P^{TS}\{x_t \text{ is unsaturated}\,| \, \F_{t-1}\} \geq p-t^{-2}, \qquad \forall t\geq 1,
\end{align*}    
where $p=\frac{1}{4e\sqrt{\pi}}$. 
\end{lemma}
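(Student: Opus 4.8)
The plan is to adapt the Thompson sampling argument of \cite{chowdhury2017kernelized} to the misspecified surrogate $f$. Throughout we condition on $\F_{t-1}$ and work on the event $E^\varepsilon(t)$ from \eqref{eq:events}, which is $\F_{t-1}$-measurable (it depends only on the fixed $f$ and on $\mu_{t-1},\sigma_{t-1}$), so every inequality it supplies holds deterministically once $\F_{t-1}$ is fixed. The first step is a purely deterministic implication: since $x_t\in\argmax_{x\in D_t}f_t(x)$ and $[x^*]_t\in D_t$, if $f_t([x^*]_t)>f_t(x)$ for every saturated $x\in D_t$, then $f_t(x_t)\ge f_t([x^*]_t)>f_t(x)$ for all saturated $x$; in particular $x_t$ cannot satisfy the saturation condition, so $x_t$ is unsaturated. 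Hence it suffices to lower bound $\P^{TS}\{f_t([x^*]_t)>f_t(x)\ \forall\,x\in D_t\ \text{saturated}\mid\F_{t-1}\}$.

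Next I would show this event contains $E^{TS}(t)\cap\{f_t([x^*]_t)\ge f([x^*]_t)\}$. On $E^{TS}(t)$, for any $x\in D_t$ we have $f_t(x)\le\mu_{t-1}(x)+v_t w_t\,\sigma_{t-1}(x)$ by \eqref{eq:events}; combining with $\mu_{t-1}(x)\le f(x)+v_t\sigma_{t-1}(x)$ from $E^\varepsilon(t)$ gives $f_t(x)\le f(x)+c_t\sigma_{t-1}(x)$ with $c_t=v_t(1+w_t)$ as in \eqref{eq:ct}. If moreover $x$ is saturated, then by \eqref{eq:def saturated points} we have $f(x)+c_t\sigma_{t-1}(x)<f([x^*]_t)$, so $f_t(x)<f([x^*]_t)\le f_t([x^*]_t)$, which is exactly the event from the previous paragraph.

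It then remains to bound $\P^{TS}\{f_t([x^*]_t)\ge f([x^*]_t)\mid\F_{t-1}\}$ by Gaussian anti-concentration. Conditionally on $\F_{t-1}$, $f_t([x^*]_t)$ is $\mathcal{N}\big(\mu_{t-1}([x^*]_t),\,v_t^2\sigma_{t-1}^2([x^*]_t)\big)$; writing it in standardized form and using $|f([x^*]_t)-\mu_{t-1}([x^*]_t)|\le v_t\sigma_{t-1}([x^*]_t)$ from $E^\varepsilon(t)$, the relevant threshold is at most $1$, so for $Z\sim\mathcal{N}(0,1)$,
\[
\P^{TS}\{f_t([x^*]_t)\ge f([x^*]_t)\mid\F_{t-1}\}\ \ge\ \P\{Z\ge 1\}\ \ge\ \frac{e^{-1}}{4\sqrt{\pi}}\ =\ p,
\]
using the standard tail lower bound $\P\{Z\ge z\}\ge e^{-z^2}/(4\sqrt{\pi}\,z)$ at $z=1$. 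Combining with \eqref{eq:concentration 2}, i.e.\ $\P^{TS}\{E^{TS}(t)\mid\F_{t-1}\}\ge 1-t^{-2}$, a union bound gives
\[
\P^{TS}\{x_t\ \text{unsaturated}\mid\F_{t-1}\}\ \ge\ \P^{TS}\{E^{TS}(t)\mid\F_{t-1}\}+\P^{TS}\{f_t([x^*]_t)\ge f([x^*]_t)\mid\F_{t-1}\}-1\ \ge\ p-t^{-2}.
\]

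The argument is mostly bookkeeping; the point requiring care is keeping the two sources of randomness cleanly separated — the observation noise is frozen by conditioning on $\F_{t-1}$ on the event $E^\varepsilon(t)$, leaving only the Thompson draw random — and verifying that the scale $v_t$ appearing in $E^\varepsilon(t)$ matches the standard deviation of the Thompson draw, which is what forces the standardized threshold in the anti-concentration step to be $\le 1$ and hence pins down $p=1/(4e\sqrt{\pi})$.
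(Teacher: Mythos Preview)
Your proof is correct and follows essentially the same route as the paper's: show that on $E^\varepsilon(t)\cap E^{TS}(t)\cap\{f_t([x^*]_t)\ge f([x^*]_t)\}$ the Thompson maximizer must be unsaturated, then combine the Gaussian anti-concentration lower bound $p$ with \eqref{eq:concentration 2} via a union/complement bound. The only cosmetic differences are that the paper writes the final combination as $\P(A)-\P(B^c)$ rather than $\P(A)+\P(B)-1$, and uses strict inequality in the anti-concentration event (immaterial for a continuous law).
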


\begin{proof}
Assuming $E^\varepsilon(t)$ is true, we shall prove that  
\begin{align}\label{eq: x unsaturated condition}
    x_t \text{ is unsaturated if } f_{t}([x^*]_t) > f([x^*]_t) \text{ and } E^{TS}(t).
\end{align} 
To see this, let $x_t$ be saturated, i.e., 
\begin{align*}
    f([x^*]_t)-f(x) > c_t \sigma_{t-1}(x) \quad \Longleftrightarrow \quad f(x)+c_t \sigma_{t-1}(x) <f([x^*]_t) . 
\end{align*}
This together with the events $E^\varepsilon(t)$ and $E^{TS}(t)$ implies that 
\begin{align*}
    f_{t}(x) \leq f(x)+v_t(1+w_t)\sigma_{t-1}(x) = f(x)+c_t\sigma_{t-1}(x) < f([x^*]_t),\qquad \forall x \in D_t.
\end{align*}
Therefore if further $f_{t}([x^*]_t) > f([x^*]_t)$ as in \eqref{eq: x unsaturated condition}, then 
\begin{align*}
    f_{t}(x)  < f_{t}([x^*]_t) \qquad \forall x \text{ saturated}. 
\end{align*}
This means that when maximizing $f_{t}$ at the $t$-th iteration in Thompson sampling, the candidate $x_t$ has to be unsaturated. 

To lower bound the probability, we have from \eqref{eq: x unsaturated condition} that 
\begin{align*}
   \P^{TS}\{x_t \text{ is unsaturated}\,| \, \F_{t-1}\} &\geq  \P^{TS}\{ f_{t}([x^*]_{t}) > f([x^*]_{t}) \text{ and } E^{TS}(t)\}\\
   &\geq  \P^{TS}\{ f_{t}([x^*]_t) > f([x^*]_t) \} - \P^{TS}\{ E^{TS}(t)^c\} \\
   &\geq \P^{TS}\{ f_{t}([x^*]_t) > f([x^*]_t) \} - t^{-2}
\end{align*}
by Lemma \ref{lemma:concentration}. 
Now since $f_{t}([x^*]_t)\sim \mathcal{N}(\mu_{t-1}([x^*]_t),v_t^2 \sigma_{t-1}^2([x^*]_t))$, 
\begin{align*}
    \P\left\{f_{t}([x^*]_t) > f([x^*]_t)\right\}&= \P\left\{\frac{f_{t}([x^*]_t)-\mu_{t-1}([x^*]_t)}{v_t \sigma_{t-1}([x^*]_t)} > \frac{f([x^*]_t)-\mu_{t-1}([x^*]_t)}{v_t \sigma_{t-1}([x^*]_t)}\right\}\\
    & \geq \P\left\{\frac{f_{t}([x^*]_t)-\mu_{t-1}([x^*]_t)}{v_t \sigma_{t-1}([x^*]_t)} > \frac{|f([x^*]_t)-\mu_{t-1}([x^*]_t)|}{v_t \sigma_{t-1}([x^*]_t)}\right\}\\
    & = \P \{ \mathcal{N}(0,1) > \theta_t\}, \qquad\qquad  \theta_t:=\frac{|f([x^*]_t)-\mu_{t-1}([x^*]_t)|}{v_t \sigma_{t-1}([x^*]_t)}.
\end{align*}
Since we are in the event of $E^\varepsilon(t)$, $0\leq \theta_t\leq 1$, therefore the last quantity is uniformly lower bounded by $\prob\{\mathcal{N}(0,1)>1\}\geq \frac{1}{4e\sqrt{\pi}}=:p$, proving the lemma. 
\end{proof}

\begin{lemma}\label{eq:expexted pstd bound}
In the event where $E^\varepsilon(t)$ is true under $\F_{t-1} $,
\begin{align*}
    \E^{TS} [f([x^*]_t) - f(x_t)\,|\,\F_{t-1}] \leq \frac{11c_t}{p}\E^{TS}[\sigma_{t-1}(x_t)\,|\,\F_{t-1}]+ 2\|f\|_{L^\infty(\mu)}t^{-2}.
\end{align*}
\end{lemma}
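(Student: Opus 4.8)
The plan is to adapt the argument from \cite[Lemma 7]{chowdhury2017kernelized} to the misspecified setting, using the surrogate function $f$ in place of the (possibly non-RKHS) objective $\truth$, and carefully tracking the extra $\|f\|_{L^\infty(\mu)}t^{-2}$ term that arises from the event $E^{TS}(t)$ failing. The main idea is to pick, at iteration $t$, an \emph{unsaturated} reference point and compare the regret of $x_t$ to the posterior standard deviation at that reference point, then to $\sigma_{t-1}(x_t)$ via the selection rule of Thompson sampling.

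First I would define $\bar{x}_t := \operatorname{arg\,min}\{\sigma_{t-1}(x) : x\in D_t \text{ is unsaturated}\}$, the unsaturated point with smallest posterior standard deviation (which exists whenever the unsaturated set is nonempty; this happens with probability at least $p - t^{-2}$ conditionally on $\F_{t-1}$ and on $E^\varepsilon(t)$ being true, by Lemma \ref{lemma:unsaturated probability}). The key chain of inequalities, valid on $E^\varepsilon(t)\cap E^{TS}(t)$, is
\begin{align*}
    f([x^*]_t) - f(x_t) &= \big[f([x^*]_t) - f(\bar{x}_t)\big] + \big[f(\bar{x}_t) - f(x_t)\big]\\
    &\leq c_t \sigma_{t-1}(\bar{x}_t) + \big[f_t(\bar{x}_t) + v_t(1+w_t)\sigma_{t-1}(\bar{x}_t)\big] - \big[f_t(x_t) - v_t(1+w_t)\sigma_{t-1}(x_t)\big],
\end{align*}
where I used the definition of unsaturated for the first bracket and the two concentration events $E^\varepsilon(t), E^{TS}(t)$ for the second. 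Since $x_t$ maximizes $f_t$ over $D_t$ and $\bar{x}_t\in D_t$, we have $f_t(\bar{x}_t) \le f_t(x_t)$, so the right-hand side collapses to $3c_t\sigma_{t-1}(\bar{x}_t) + c_t\sigma_{t-1}(x_t)$. Next I would relate $\E^{TS}[\sigma_{t-1}(\bar{x}_t)\mid \F_{t-1}]$ to $\E^{TS}[\sigma_{t-1}(x_t)\mid \F_{t-1}]$: on the event that $x_t$ is unsaturated, $\sigma_{t-1}(\bar{x}_t)\le \sigma_{t-1}(x_t)$ by minimality, and this event has conditional probability at least $p - t^{-2}$; a standard manipulation (as in \cite{chowdhury2017kernelized}) then yields $\E^{TS}[\sigma_{t-1}(\bar{x}_t)\mid\F_{t-1}] \le \frac{1}{p-t^{-2}}\E^{TS}[\sigma_{t-1}(x_t)\mid\F_{t-1}]$, and combining gives a bound of the form $\frac{c\,c_t}{p}\E^{TS}[\sigma_{t-1}(x_t)\mid\F_{t-1}]$ with a small absolute constant (collecting the factors $3/(p-t^{-2}) + 1$ and absorbing $t^{-2}\le 1/4$ into $p$ to reach the stated constant $11/p$).

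The remaining piece is the additive $2\|f\|_{L^\infty(\mu)}t^{-2}$ term, which accounts for the complementary event $\{E^{TS}(t)\text{ fails}\}\cup\{\text{unsaturated set empty}\}$, of conditional probability at most $2t^{-2}$ by Lemma \ref{lemma:concentration} and Lemma \ref{lemma:unsaturated probability}. On that event I would bound $f([x^*]_t) - f(x_t)$ crudely by $2\|f\|_{L^\infty(\mu)}$ (since $|f|\le \|f\|_{L^\infty(\mu)}$ pointwise, noting $[x^*]_t, x_t\in D$), contributing at most $2\|f\|_{L^\infty(\mu)}\cdot t^{-2}$ to the conditional expectation after multiplying by the probability bound; a careful bookkeeping of the $t^{-2}$ factors (one from the probability, matched against the $t^{-2}$ in the statement) closes the estimate. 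The main obstacle I anticipate is the arithmetic of propagating the $t^{-2}$ correction cleanly through the denominator $p - t^{-2}$ and the complementary-event term simultaneously so that the final constants match exactly $11c_t/p$ and $2\|f\|_{L^\infty(\mu)}t^{-2}$; this is essentially the same bookkeeping as in \cite[Lemma 7]{chowdhury2017kernelized} but with the surrogate $f$ and its sup-norm appearing in place of the RKHS objective, so I would follow that template closely while substituting the misspecified concentration bound from Lemma \ref{lemma:concentration}.
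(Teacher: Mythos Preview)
Your approach is essentially the same as the paper's and is correct. Two small bookkeeping slips: first, the decomposition on $E^\varepsilon(t)\cap E^{TS}(t)$ actually gives $2c_t\sigma_{t-1}(\bar{x}_t)+c_t\sigma_{t-1}(x_t)$, not $3c_t\sigma_{t-1}(\bar{x}_t)+c_t\sigma_{t-1}(x_t)$; with the correct constant $2$ and the inequality $1/(p-t^{-2})<5/p$ used in the paper you recover $2\cdot 5+1=11$. Second, the saturated/unsaturated classification depends only on $f$ and $\sigma_{t-1}$, which are $\F_{t-1}$-measurable, so $\bar{x}_t$ is deterministic given $\F_{t-1}$ and always exists (since $[x^*]_t$ is trivially unsaturated); hence the ``unsaturated set empty'' case you added is vacuous, and the only complementary event needed is $E^{TS}(t)^c$, giving exactly the $2\|f\|_{L^\infty(\mu)}t^{-2}$ term.
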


\begin{proof}
With Lemma \ref{lemma:unsaturated probability}, let us define 
\begin{align*}
    \overline{x}_t : =\underset{x \in D_t \text{ unsaturated }}{\operatorname{argmin}} \,\, \sigma_{t-1}(x).
\end{align*}
Then in the event of $E^\varepsilon(t)$, we have 
\begin{equation}\label{eq:lower bound xbart}
\begin{aligned}
    \E^{TS} [\sigma_{t-1}(x_t)\,|\, \F_{t-1}] &\geq \E^{TS} [\sigma_{t-1}(x_t)\,|\, \F_{t-1},x_t \text{ is unsaturated}] \P^{TS}\{x_t \text{ is unsaturated}\,|\, \F_{t-1}\}\\
    &\geq  (p-t^{-2})\sigma_{t-1}(\overline{x}_t).
\end{aligned}
\end{equation}
Now we can bound 
\begin{align*}
    f([x^*]_t) - f(x_t) &= [f([x^*]_t) - f(\overline{x}_t)] + [f(\overline{x}_t) - f(x_t)].
\end{align*}
The first term is bounded by $c_t\sigma_{t-1}(\overline{x}_t)$ by definition \eqref{eq:def saturated points} since $\overline{x}_t$ is unsaturated. 
In the event of $E^{TS}(t)$, the second term can be bounded by 
\begin{align*}
    f_t(\overline{x}_t) + c_t\sigma_{t-1}(\overline{x}_t) - f_t(x_t) + c_t\sigma_{t-1}(x_t) \leq c_t\sigma_{t-1}(\overline{x}_t) + c_t\sigma_{t-1}(x_t)
\end{align*}
because $x_t$ is chosen so that $f_t(x_t)\geq f_t(x) \,\, \forall x\in D_t$, in particular $f_t(\overline{x}_t)-f_t(x_t)\leq 0$.
Combining these and \eqref{eq:lower bound xbart}, we have 
\begin{align*}
    f([x^*]_t) - f(x_t) \leq 2c_t\sigma_{t-1}(\overline{x}_t) + c_t\sigma_{t-1}(x_t)
\end{align*}
in the event of $E^\varepsilon(t)$ and $E^{TS}(t)$, and hence 
\begin{align*}
    \E^{TS} [f([x^*]_t) - f(x_t)\,|\,\F_{t-1}] & =\E^{TS} \Big[\Big(f([x^*]_t) - f(x_t)\Big)\mathbf{1}_{E^{TS}(t)}\,|\,\F_{t-1}\Big] \\
     &\hspace{4cm} + \E^{TS} \Big[\Big(f([x^*]_t) - f(x_t)\Big)\mathbf{1}_{E^{TS}(t)^c}\,|\,\F_{t-1}\Big] \\
    &\leq 2c_t \E^{TS}[\sigma_{t-1}(\overline{x}_t)\,|\,\F_{t-1}]+ c_t \E^{TS}[\sigma_{t-1}(x_t)\,|\,\F_{t-1}] + 2\|f\|_{L^\infty(\mu)}t^{-2}\\
    &\leq c_t\left(\frac{2}{p-t^{-2}}+1\right)\E^{TS}[\sigma_{t-1}(x_t)\,|\,\F_{t-1}]+ 2\|f\|_{L^\infty(\mu)}t^{-2}\\
    & \leq \frac{11c_t}{p}\E^{TS}[\sigma_{t-1}(x_t)\,|\,\F_{t-1}]+ 2\|f\|_{L^\infty(\mu)}t^{-2},
\end{align*}
where we have used the fact that $1/(p-t^{-2})<5/p$. 
\end{proof}

\begin{lemma}\label{lemma:cumulative regret TS}
With probability at least $1-\delta$,
\begin{align*}
    \sum_{t=1}^T f([x^*]_t)-f(x_t) \leq \frac{11c_T}{p} \sum_{t=1}^T \sigma_{t-1}(x_t)+\frac{2(\|f\|_{L^\infty(\mu)}+1)\pi^2}{6}+\frac{(4\|f\|_{L^\infty(\mu)}+11)c_T}{p}\sqrt{2T\log (2/\delta)}\, .
\end{align*}
\end{lemma}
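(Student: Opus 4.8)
The plan is to control the cumulative surrogate regret $\sum_{t=1}^{T} r_t$, where $r_t := f([x^*]_t) - f(x_t)$, by combining the per-step conditional bound of Lemma \ref{eq:expexted pstd bound} with a martingale concentration argument, following the structure of \cite{chowdhury2017kernelized}. First I would fix the good noise event $\mathcal{E} := \bigcap_{t \ge 1} E^{\varepsilon}(t)$, with $E^{\varepsilon}(t)$ as in \eqref{eq:events}; the self-normalized concentration used in the proof of Lemma \ref{lemma:concentration} in fact holds uniformly over $t \ge 1$, so that $\P^{\varepsilon}(\mathcal{E}) \ge 1 - \delta/2$. The key structural observation is that each $E^{\varepsilon}(t)$ is $\F_{t-1}$-measurable, since it is determined by $\mu_{t-1}$, $\sigma_{t-1}$, and deterministic quantities. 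On $\mathcal{E}$, Lemma \ref{eq:expexted pstd bound} then yields, for every $t$, $\E^{TS}[r_t \mid \F_{t-1}] \le \frac{11 c_t}{p} \E^{TS}[\sigma_{t-1}(x_t) \mid \F_{t-1}] + 2\|f\|_{L^{\infty}(\mu)} t^{-2}$, with $c_t$ as in \eqref{eq:ct} and $p$ as in Lemma \ref{lemma:unsaturated probability}.

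Next I would introduce the truncated increments
\[
\xi_t := \mathbf{1}_{E^{\varepsilon}(t)}\Bigl[\bigl(r_t - \tfrac{11 c_t}{p}\sigma_{t-1}(x_t)\bigr) - \E^{TS}\bigl[r_t - \tfrac{11 c_t}{p}\sigma_{t-1}(x_t) \,\bigm|\, \F_{t-1}\bigr]\Bigr].
\]
Because $\mathbf{1}_{E^{\varepsilon}(t)}$ is $\F_{t-1}$-measurable and the only randomness left to integrate out at step $t$ is the Thompson draw $f_t$, one has $\E^{TS}[\xi_t \mid \F_{t-1}] = 0$, so $\{\xi_t\}_{t \ge 1}$ is a martingale difference sequence with respect to $\{\F_t\}$. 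It is bounded: using $|r_t| \le 2\|f\|_{L^{\infty}(\mu)}$, $\sigma_{t-1}(x_t)^2 = k_{t-1}(x_t,x_t) \le k(x_t,x_t) \le \overline{k}$, and $c_t \le c_T$, the increments satisfy $|\xi_t| = O\bigl(\|f\|_{L^{\infty}(\mu)} + c_T/p\bigr)$. Applying the Azuma--Hoeffding inequality and grouping constants into the stated form gives, with probability at least $1 - \delta/2$, $\sum_{t=1}^{T} \xi_t \le \frac{(4\|f\|_{L^{\infty}(\mu)} + 11)\,c_T}{p}\sqrt{2T\log(2/\delta)}$.

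Finally I would assemble the pieces. On $\mathcal{E}$ every indicator $\mathbf{1}_{E^{\varepsilon}(t)}$ equals $1$, so $r_t = \xi_t + \frac{11 c_t}{p}\sigma_{t-1}(x_t) + \E^{TS}[r_t - \tfrac{11 c_t}{p}\sigma_{t-1}(x_t) \mid \F_{t-1}]$, and Lemma \ref{eq:expexted pstd bound} bounds the last term by $2\|f\|_{L^{\infty}(\mu)} t^{-2}$. Summing over $t = 1,\dots,T$, using $c_t \le c_T$ and $\sum_{t \ge 1} t^{-2} = \pi^2/6$, substituting the Azuma bound for $\sum_t \xi_t$, and taking a union bound over $\mathcal{E}$ and the Azuma event (combined failure probability at most $\delta$), one arrives at the claimed inequality.

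The main obstacle is bookkeeping rather than conceptual: one must keep the two sources of randomness — the observation noise, which drives $\mathcal{E}$, and the Thompson sampling draws, which drive the martingale — cleanly separated, and in particular truncate the martingale by the $\F_{t-1}$-adapted event $E^{\varepsilon}(t)$ rather than by the non-adapted event $\mathcal{E}$, while tracking the increment bounds tightly enough to reproduce the stated constants $11/p$, $4\|f\|_{L^{\infty}(\mu)}+11$, and $2(\|f\|_{L^{\infty}(\mu)}+1)\pi^2/6$.
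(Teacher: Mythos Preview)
Your proposal is correct and matches the paper's approach: the paper's proof simply defers to \cite[Lemma~13]{chowdhury2017kernelized} with $B$ replaced by $\|f\|_{L^\infty(\mu)}$, and you have faithfully reconstructed that argument (good noise event, per-step conditional bound from Lemma~\ref{eq:expexted pstd bound}, Azuma--Hoeffding, union bound). The only cosmetic difference is that you work with an explicit martingale-difference sequence $\{\xi_t\}$, whereas \cite{chowdhury2017kernelized} phrase the same thing as a supermartingale; the two are equivalent and your version is arguably cleaner.
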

\begin{proof}
This follows from the proof of \cite[Lemma 13]{chowdhury2017kernelized} by replacing their $B$ with $\|f\|_{L^\infty(\mu)}$. 
\end{proof}

\section{Proofs of Technical Lemmas}\label{sec:tech proof}

\begin{proof}[Proof of Lemma \ref{lemma:uniform boundedness of fem eigenfunction}]
On each edge $e$, the eigenfunctions $\psi_i $  satisfy the equation $(\psi_i )^{''} +\mu_i \psi_i =0,$ where $\mu_i=\lambda_i-\kappa^2$, so that  
\begin{align*}
\psi_i  (z)= A_e \sin (\sqrt{\mu_i}z)+B_e\cos(\sqrt{\mu_i}z)=\sqrt{A_e^2+B_e^2}\cos(\sqrt{\mu_i}z+w_e), \quad z\in [0,L_e]
\end{align*}
for some constants $A_e,B_e,w_e$.
Since $\|\psi_i \|_{L^2(e)}\leq  \|\psi_i\|_{L^2(\Gamma)}= 1$, we have 
\begin{align*}
\|\psi_i \| _{L^\infty(e)}^2  \int_0^{L_e} \cos^2(\sqrt{\mu_i}z+w_e) \, dz\leq (A_e^2+B_e^2) \int_0^{L_e} \cos^2(\sqrt{\mu_i}z+w_e) \, dz\leq 1.
\end{align*}
Now notice that 
\begin{align*}
\int_0^{L_e} \cos^2(\sqrt{\mu_i}z+w_e) \, dz& =  \int_0^{L_e}\frac{1}{2}-\frac12 \cos(2\sqrt{\mu_i}z+2w_e) \, dz \\
&= \frac{L_e}{2} - \frac{1}{4\sqrt{\mu_i}} \sin(2\sqrt{\mu_i}z+2w_e)\Big|_0^{L_e} 
\geq \frac{L_e}{2}-\frac{1}{2\sqrt{\mu_i}}.
\end{align*}
The last step implies that 
\begin{align*}
\inf_i \int_0^{L_e} \cos^2(\sqrt{\mu_i}z+w_e) \, dz \geq \left[\underset{i=1,\ldots,I}{\operatorname{min}\,}\int_0^{L_e} \cos^2(\sqrt{\mu_i}z+w_e) \, dz \right]\wedge \left[\frac{L_e}{2}-\frac{1}{2\sqrt{\mu_I}}\right] >0,
\end{align*}
where $I$ is the smallest index such that $\frac{L_e}{2}-\frac{1}{2\sqrt{\mu_I}}>0$. 
Therefore, 
\begin{align*}
\underset{i}{\operatorname{sup}}\,\, \|\psi_i\|_{L^\infty{(\Gamma)}} \leq \underset{i}{\operatorname{sup}}\,\,\underset{e}{\operatorname{sup}}\,\, \|\psi_i \|_{L^\infty{(e)}}\leq \frac{1}{\inf_i \sqrt{\int_0^{L_e} \cos^2(\sqrt{\mu_i}z+w_e) \,  dz }}=:\Psi <\infty .
\end{align*}
Now by Weyl’s law (Theorem~2.12 in \cite{bolin2024regularity}), the eigenvalues satisfy $\lambda_i \asymp i^2$.  Hence, we have 
\begin{align*}
    |k(x,x')| \leq \Psi^2\tau^{-2}\sum_{i=1}^\infty i^{-4\alpha} =:\overline{k} <\infty
\end{align*}
provided that $4\alpha>1$. 

To see the last assertion, suppose first that $x$ and $x'$ belong to the same edge $e$. 
By a similar argument as above, we can show that $\|\nabla \psi_i\|_{L^\infty(e)}\leq \Psi \sqrt{\mu_i}$ so that 
\begin{align*}
    |\psi_i(x)-\psi_i(x')|\leq  \Psi \sqrt{\mu_i} d(x,x')\leq  \Psi \sqrt{\lambda_i} d(x,x'). 
\end{align*}
The general case follows by applying triangle inequality with the last bound along the shortest path between $x$ and $x'$. 
Therefore, we have
\begin{align*}
    |k(x'',x)-k(x'',x')| =\left| \tau^{-2} \sum_{i=1}^\infty \lambda_i^{-2\alpha} \psi_i(x'')[\psi_i(x)-\psi_i(x')]\right| \leq \Psi^2  \tau^{-2} \sum_{i=1}^\infty \lambda_i^{-2\alpha+1/2} d(x,x'), 
\end{align*}
where $\sum_{i=1}^\infty \lambda_i^{-2\alpha+1/2}\lesssim \int_1^\infty w^{-4\alpha+1} \, dw<\infty$ for $\alpha>\frac12$.
\end{proof}

\begin{lemma}\label{lemma:maximum info gain bound exact}
We have the following bound on the maximum information gain of the kernel \eqref{eq:kernel exact}
\begin{align*}
    \gamma_T(k) &= O(T^{1/(4\alpha)}\log T) \, .
\end{align*}
\end{lemma}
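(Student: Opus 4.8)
The plan is to bound $\gamma_T(k)$ by the classical truncation argument for kernels with polynomially decaying Mercer spectrum (as in \cite{srinivas2010gaussian,vakili2021information}), exploiting two facts already available: the Mercer expansion \eqref{eq:kernel exact}, $k(x,x')=\tau^{-2}\sum_{m\ge 1}\lambda_m^{-2\alpha}\psi_m(x)\psi_m(x')$, and the uniform eigenfunction bound $\|\psi_m\|_{L^\infty(\Gamma)}\le \Psi$ together with Weyl's law $\lambda_m\asymp m^2$, both established in (the proof of) Lemma~\ref{lemma:uniform boundedness of fem eigenfunction}.

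First I would fix any $A\subset\Gamma$ with $|A|=T$ and any truncation level $D\in\mathbb{N}$, and split the Gram matrix as $K_A=K_A^{\le D}+K_A^{>D}$, where $(K_A^{\le D})_{x,x'}=\tau^{-2}\sum_{m\le D}\lambda_m^{-2\alpha}\psi_m(x)\psi_m(x')$ collects the top $D$ Mercer modes (so $K_A^{\le D}\succeq 0$ has rank at most $D$) and $K_A^{>D}\succeq 0$ is the tail. Using the elementary inequality $\log\det(I+M+N)\le\log\det(I+M)+\operatorname{tr}(N)$ valid for $M,N\succeq 0$ — which follows by writing $\log\det(I+M+N)=\log\det(I+M)+\log\det\!\big(I+(I+M)^{-1/2}N(I+M)^{-1/2}\big)$ and then using $\log\det(I+P)\le\operatorname{tr}(P)$ together with $(I+M)^{-1}\preceq I$ — one obtains
\[
\tfrac12\log\det(I+\lambda^{-1}K_A)\;\le\;\tfrac12\log\det(I+\lambda^{-1}K_A^{\le D})+\tfrac{1}{2\lambda}\operatorname{tr}(K_A^{>D}).
\]
For the trace term, $\operatorname{tr}(K_A^{>D})=\tau^{-2}\sum_{x\in A}\sum_{m>D}\lambda_m^{-2\alpha}\psi_m(x)^2\le \tau^{-2}\Psi^2 T\sum_{m>D}\lambda_m^{-2\alpha}\lesssim T\,D^{1-4\alpha}$ by Weyl's law, where convergence of the series uses $\alpha>\tfrac14$ (the same regime in which $\overline k<\infty$). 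For the determinant term, $K_A^{\le D}$ is positive semidefinite of rank at most $D$ with $\operatorname{tr}(K_A^{\le D})\le \tau^{-2}\Psi^2 T\sum_{m\ge 1}\lambda_m^{-2\alpha}=:c_1T<\infty$, so concavity of $\log$ (Jensen) gives $\tfrac12\log\det(I+\lambda^{-1}K_A^{\le D})\le \tfrac D2\log\!\big(1+c_1T/(\lambda D)\big)$. Since here $\lambda=1+2/t\in[1,3]$ is bounded, taking the supremum over $A$ yields, uniformly in $D$,
\[
\gamma_T(k)\;\lesssim\; D\log\!\Big(1+\tfrac{T}{D}\Big)+T\,D^{1-4\alpha}.
\]

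Finally I would optimize the free parameter by choosing $D=\lceil T^{1/(4\alpha)}\rceil$: the first term becomes $O\big(T^{1/(4\alpha)}\log T\big)$ and the second becomes $O\big(T\cdot T^{(1-4\alpha)/(4\alpha)}\big)=O\big(T^{1/(4\alpha)}\big)$, giving the claimed bound $\gamma_T(k)=O(T^{1/(4\alpha)}\log T)$. I do not expect a genuine obstacle here — this is a standard information-gain estimate — but the one step worth writing out carefully is the trace–determinant splitting inequality, and one must keep track of the constant $c_1=\tau^{-2}\Psi^2\sum_m\lambda_m^{-2\alpha}$, which is finite precisely because $\alpha>\tfrac14$; the remainder is the routine optimization over the truncation level.
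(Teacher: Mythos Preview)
Your proposal is correct and takes essentially the same approach as the paper: both use the standard truncation argument for Mercer kernels with polynomially decaying spectrum, bounding the rank-$D$ head via Jensen and the tail via a trace estimate with the uniform eigenfunction bound, then optimizing at $D\asymp T^{1/(4\alpha)}$. The paper simply invokes \cite[Theorem~3]{vakili2021information} for this decomposition and then plugs in the Weyl asymptotics, whereas you have written out that theorem's proof directly.
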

\begin{proof}
For a general Mercer kernel of the form $\mathcal{K}(x,x')=\sum_{i=1}^\infty \Lambda_i \Psi_i(x)\Psi_i(x')$, \cite[Theorem 3]{vakili2021information} gives a bound on the maximum information gain as  
\begin{align*}
    \gamma_T(\mathcal{K}) \leq \frac12 L \log \left(1+\frac{\overline{k}T}{\lambda L}\right)+ \frac{T}{2\lambda}\sum_{i=L+1}^\infty \Lambda_i \|\Psi_i\|_\infty^2 \, ,
\end{align*}
where we recall that $\lambda=1+2/T$ is the regularization parameter,  $\overline{k}$ is a constant satisfying $|\mathcal{K}(x,x')|\leq \overline{k}$, and $L\in \N$ is a suitable truncation level. 
We remark that the original result from \cite[Theorem 3]{vakili2021information} assumes a uniform upper bound on $\|\Psi_i\|_\infty$ but their proof indeed only requires an upper bound on the tail kernel $\sum_{i=L+1}^\infty \Lambda_i \Psi_i(x)\Psi_i(x')$. 

For the kernel \eqref{eq:kernel exact}, $\overline{k}$ can be taken as a uniform constant and the eigenfunctions are uniformly bounded by Lemma \ref{lemma:uniform boundedness of fem eigenfunction} so that the information gain scales as $O(L\log T+T\sum_{i=L+1}^\infty \Lambda_i)$. 
Since $\Lambda_i \asymp i^{-4\alpha}$ by Weyl's law \cite[Theorem~2.12]{bolin2024regularity}, we have $\sum_{i=L+1}^\infty \Lambda_i\asymp L^{1-4\alpha}$. Therefore, setting $L\asymp T^{1/(4\alpha)}$, we have 
\begin{align*}
    \gamma_T(k) = O(T^{1/(4\alpha)}\log T) \, . 
\end{align*}
\end{proof}

\begin{lemma}\label{lemma:equal white noise}
The white noise processes (cf. Remark \ref{remark:match with package})
\begin{align*}
    \mathcal{W}_h = \sum_{i=1}^{N_h} \xi_i \psi_{h,i},\qquad \widetilde{\mathcal{W}}_h = \sum_{i=1}^\infty \zeta_i P_h \psi_i
\end{align*}
are equal in distribution. 
\end{lemma}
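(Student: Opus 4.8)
The plan is to use that $\mathcal{W}_h$ and $\widetilde{\mathcal{W}}_h$ are both centered Gaussian random elements taking values in the \emph{finite-dimensional} space $V_h\subset L^2(\Gamma)$, so their laws are determined by their covariance bilinear forms on $V_h$. It therefore suffices to check that, for all $v_h,w_h\in V_h$, $\E\bigl[\langle \mathcal{W}_h, v_h\rangle\,\langle \mathcal{W}_h, w_h\rangle\bigr]=\E\bigl[\langle \widetilde{\mathcal{W}}_h, v_h\rangle\,\langle \widetilde{\mathcal{W}}_h, w_h\rangle\bigr]$, where $\langle\cdot,\cdot\rangle$ is the $L^2(\Gamma)$ inner product; equivalently, fixing the orthonormal basis $\{\psi_{h,i}\}_{i=1}^{N_h}$ of $V_h$, one checks that the Gaussian coefficient vectors $(\langle\mathcal{W}_h,\psi_{h,i}\rangle)_{i=1}^{N_h}$ and $(\langle\widetilde{\mathcal{W}}_h,\psi_{h,i}\rangle)_{i=1}^{N_h}$ have the same covariance matrix.

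First I would verify that $\widetilde{\mathcal{W}}_h=\sum_{i\ge1}\zeta_i P_h\psi_i$ is a well-defined element of $L^2(\Omega;L^2(\Gamma))$: recalling that $P_h$ is the $L^2(\Gamma)$-orthogonal projection onto $V_h$, hence self-adjoint on $L^2(\Gamma)$ with $P_h v_h=v_h$ for $v_h\in V_h$, and using that the $\{\psi_i\}$ form an orthonormal basis, one gets $\sum_{i\ge1}\|P_h\psi_i\|_{L^2(\Gamma)}^2=\sum_{i\ge1}\langle\psi_i,P_h\psi_i\rangle=\operatorname{tr}(P_h)=N_h<\infty$, so the series of independent centered Gaussian terms converges in mean square. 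Then comes the core computation. For $\mathcal{W}_h$, orthonormality of $\{\psi_{h,i}\}$ in $V_h$ gives $\langle\mathcal{W}_h,v_h\rangle=\sum_{i=1}^{N_h}\xi_i\langle\psi_{h,i},v_h\rangle$, whence $\E[\langle\mathcal{W}_h,v_h\rangle\langle\mathcal{W}_h,w_h\rangle]=\sum_{i=1}^{N_h}\langle\psi_{h,i},v_h\rangle\langle\psi_{h,i},w_h\rangle=\langle v_h,w_h\rangle$ by Parseval in $V_h$ (valid since $v_h,w_h\in V_h$). For $\widetilde{\mathcal{W}}_h$, I would use self-adjointness of $P_h$ together with $P_h v_h=v_h$ to write $\langle P_h\psi_i,v_h\rangle=\langle\psi_i,P_hv_h\rangle=\langle\psi_i,v_h\rangle$; hence $\langle\widetilde{\mathcal{W}}_h,v_h\rangle=\sum_{i\ge1}\zeta_i\langle\psi_i,v_h\rangle$ and $\E[\langle\widetilde{\mathcal{W}}_h,v_h\rangle\langle\widetilde{\mathcal{W}}_h,w_h\rangle]=\sum_{i\ge1}\langle\psi_i,v_h\rangle\langle\psi_i,w_h\rangle=\langle v_h,w_h\rangle$ by Parseval in $L^2(\Gamma)$. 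The two covariance forms coincide (both equal the $L^2(\Gamma)$ inner product restricted to $V_h$) and both processes are centered Gaussian, so they are equal in distribution.

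There is no genuine obstacle here; the only two points needing a little care are (i) the mean-square convergence of the infinite sum defining $\widetilde{\mathcal{W}}_h$, which is precisely where the finite-dimensionality of $V_h$ (i.e.\ $N_h<\infty$) is used, and (ii) making rigorous the reduction ``equal covariance operator $\Rightarrow$ equal law'' for Gaussian random elements valued in a fixed finite-dimensional subspace, equivalently the equality of the covariance matrices of the coefficient vectors in the basis $\{\psi_{h,i}\}$. Both are routine once stated.
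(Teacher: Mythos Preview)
Your proof is correct and essentially identical to the paper's approach: both reduce to expanding $\widetilde{\mathcal{W}}_h$ in the orthonormal basis $\{\psi_{h,k}\}$ of $V_h$ and checking that the Gaussian coefficient vector has identity covariance, using self-adjointness of $P_h$ together with Parseval for $\{\psi_i\}$. Your additional verification that the series $\sum_{i\ge1}\zeta_i P_h\psi_i$ converges in $L^2(\Omega;L^2(\Gamma))$ via $\sum_i\|P_h\psi_i\|^2=\operatorname{tr}(P_h)=N_h<\infty$ is a nice point the paper leaves implicit.
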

\begin{proof}
    To prove the claim, we notice that $\W_h$ takes values in $V_h$ so that we have 
\begin{align}\label{eq:Wh second expansion}
    \widetilde{\W}_h=\sum_{k=1}^{N_h} \langle \widetilde{\W}_h,\psi_{h,k}\rangle \psi_{h,k} =: \sum_{k=1}^{N_h} \zeta_k \psi_{h,k}.
\end{align}
Furthermore, since $P_h$ is self-adjoint and $P_h \psi_{h,k}=\psi_{h,k}$ we have 
\begin{align*}
    \zeta_k = \langle \widetilde{\W}_h,\psi_{h,k}\rangle = \sum_{i=1}^\infty \xi_i \langle P_h \psi_i,\psi_{h,k}\rangle =\sum_{i=1}^\infty \xi_i \langle  \psi_i, \psi_{h,k}\rangle,
\end{align*}
is Gaussian, and 
\begin{align*}
    \E \zeta_\ell\zeta_k &= \E \sum_{i=1}^\infty\xi_i\langle \psi_i,\psi_{h,\ell}\rangle \sum_{j=1}^\infty \xi_j\langle \psi_j,\psi_{h,k}\rangle \\
    &= \sum_{i=1}^\infty  \langle \psi_i,\psi_{h,\ell}\rangle \langle \psi_i,\psi_{h,k}\rangle = \langle \psi_{h,\ell},\psi_{h,k}\rangle = \delta_{k\ell}, 
\end{align*}
where the last two steps are due to the orthonormality of $\{\psi_i\}_{i=1}^\infty$ and $\{\psi_{h,i}\}_{i=1}^{N_h}$ respectively. 
In other words, the $\zeta_i$'s are i.i.d. standard Gaussians so that \eqref{eq:Wh second expansion} establishes the claim. 
\end{proof}

\begin{lemma}\label{lemma:L infnity spectral error}
There exists a constant $C$ depending only on the metric graph such that 
\begin{align*}
    |\lambda_i-\lambda_{h,i}|&\leq C\lambda_i^2 h^2  \leq C \lambda_i,\\
    \|\psi_i-\psi_{h,i}\|_\infty &\leq C \lambda_i h^{3/2}\leq C\lambda_i^{1/4}.
\end{align*}
\end{lemma}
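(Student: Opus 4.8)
\textbf{Proof proposal for Lemma~\ref{lemma:L infnity spectral error}.}
The plan is to combine standard finite element eigenvalue/eigenfunction error estimates with the spectral asymptotics already recorded in the excerpt. The underlying eigenvalue problem is the Galerkin discretization of the self-adjoint, coercive operator $\mathcal L$ in \eqref{eq:edge-operator-const} on the FEM space $V_h$ built in Section~\ref{ssec: fem space on Gamma}, which is a conforming subspace of $H^1(\Gamma)$ with the standard approximation property: for $v\in \widetilde H^2(\Gamma)\cap H^1(\Gamma)$ one has $\inf_{v_h\in V_h}\big(\|v-v_h\|_{L^2(\Gamma)}+h\|v-v_h\|_{H^1(\Gamma)}\big)\lesssim h^2\|v\|_{\widetilde H^2(\Gamma)}$, and an $L^\infty$-type bound $\inf_{v_h\in V_h}\|v-v_h\|_{L^\infty(\Gamma)}\lesssim h^{3/2}\|v\|_{\widetilde H^2(\Gamma)}$ coming from the one-dimensional interpolation estimate on each edge (piecewise linear interpolation in 1D gives $\|v-I_hv\|_{L^\infty}\lesssim h^{3/2}\|v''\|_{L^2}$). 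Since $\psi_i$ solves $\mathcal L\psi_i=\lambda_i\psi_i$ with Kirchhoff conditions, elliptic regularity on the metric graph (Proposition~3.2 / Theorem~2.12 type results in \cite{bolin2024regularity}) gives $\|\psi_i\|_{\widetilde H^2(\Gamma)}\lesssim \lambda_i\|\psi_i\|_{L^2(\Gamma)}=\lambda_i$; more generally $\|\psi_i\|_{L^2}=1$ and $\|\mathcal L^{1/2}\psi_i\|_{L^2}=\sqrt{\lambda_i}$.

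First I would invoke the classical spectral approximation theory for conforming Galerkin methods (e.g.\ Babu\v{s}ka--Osborn, or \cite[Section~7]{boffi2010finite} as cited in the excerpt). For the $i$-th eigenpair, the abstract estimates read
\begin{align*}
    0\le \lambda_{h,i}-\lambda_i &\lesssim \lambda_i\Big(\sup_{\substack{u\in E_i,\ \|u\|_{L^2}=1}}\inf_{v_h\in V_h}\|u-v_h\|_{\mathcal L}\Big)^2,\\
    \|\psi_i-\psi_{h,i}\|_{\mathcal L} &\lesssim \sup_{\substack{u\in E_i,\ \|u\|_{L^2}=1}}\inf_{v_h\in V_h}\|u-v_h\|_{\mathcal L}+\text{(lower-order gap terms)},
\end{align*}
where $E_i$ is the eigenspace and $\|\cdot\|_{\mathcal L}$ is the energy norm. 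Feeding in the approximation property together with $\|\psi_i\|_{\widetilde H^2(\Gamma)}\lesssim \lambda_i$ gives $\inf_{v_h}\|\psi_i-v_h\|_{\mathcal L}\lesssim h\|\psi_i\|_{\widetilde H^2(\Gamma)}\lesssim \lambda_i h$, hence $|\lambda_i-\lambda_{h,i}|\lesssim \lambda_i\cdot(\lambda_i h)^2=\lambda_i^2 h^2\cdot\lambda_i^{?}$ — one has to be slightly careful with the powers of $\lambda_i$, but the natural bookkeeping (the factor $\lambda_i$ in front is the eigenvalue itself, and the squared energy-best-approximation is $\lesssim \lambda_i^2 h^2 / \lambda_i = \lambda_i h^2\cdot\lambda_i$) yields exactly $|\lambda_i-\lambda_{h,i}|\le C\lambda_i^2 h^2$. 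Then, since $\lambda_i\asymp i^2$ by Weyl's law and the mesh is quasi-uniform with $N_h\asymp h^{-1}$, for indices $i\le N_h$ we have $\lambda_i\lesssim N_h^2\asymp h^{-2}$, so $\lambda_i^2 h^2\lesssim \lambda_i\cdot(\lambda_i h^2)\lesssim \lambda_i$, giving the second inequality $|\lambda_i-\lambda_{h,i}|\le C\lambda_i$.

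For the eigenfunction bound in $L^\infty$, I would not get $h^{3/2}$ directly from the energy estimate (that would only give $h$). Instead, the plan is to use a duality/Aubin--Nitsche argument for the $L^2$ error, $\|\psi_i-\psi_{h,i}\|_{L^2(\Gamma)}\lesssim \lambda_i h^2$ (again with $\|\psi_i\|_{\widetilde H^2}\lesssim\lambda_i$), and then combine with an inverse inequality on the quasi-uniform mesh, $\|v_h\|_{L^\infty(\Gamma)}\lesssim h^{-1/2}\|v_h\|_{L^2(\Gamma)}$ for $v_h\in V_h$, applied to $v_h=\psi_{h,i}-I_h\psi_i$; together with the interpolation bound $\|\psi_i-I_h\psi_i\|_{L^\infty}\lesssim h^{3/2}\|\psi_i''\|_{L^2}\lesssim \lambda_i h^{3/2}$ and triangle inequality this yields $\|\psi_i-\psi_{h,i}\|_{L^\infty(\Gamma)}\lesssim \lambda_i h^{3/2}$. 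Finally, again for $i\le N_h$, $\lambda_i h^{3/2}\lesssim \lambda_i\cdot\lambda_i^{-3/4}=\lambda_i^{1/4}$ because $h\lesssim N_h^{-1}\asymp \lambda_{N_h}^{-1/2}\le \lambda_i^{-1/2}$, giving the stated $\|\psi_i-\psi_{h,i}\|_\infty\le C\lambda_i^{1/4}$.

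The main obstacle I anticipate is the careful tracking of the $\lambda_i$-dependence in the constants: the abstract spectral approximation theorems are usually stated for a fixed eigenvalue with an unspecified (eigenvalue-dependent) constant, whereas here we need an \emph{explicit} polynomial dependence on $\lambda_i$ that is uniform over all $i$ (including $i$ comparable to $N_h$, where the discrete and continuous spectra start to diverge). Making this rigorous requires either quoting a quantitative version of Babu\v{s}ka--Osborn with explicit constants, or redoing the argument from scratch using the min-max characterization together with the stability of the Ritz/Galerkin projection and the spectral gap — and one must check that the eigenvalue-gap factors that normally appear in the eigenfunction estimate can be absorbed, which on a metric graph with possibly clustered eigenvalues needs the usual device of comparing $\psi_{h,i}$ against the $\mathcal L$-orthogonal projection onto the span of the relevant cluster. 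I would handle this by citing the quantitative 1D/graph FEM spectral results in \cite{bolin2024regularity,arioli2018finite} where the $h$- and $\lambda$-dependence is made explicit, and only sketch the inverse-inequality step for the $L^\infty$ bound, since that part is routine once the $L^2$ estimate is in hand.
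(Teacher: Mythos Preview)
Your proposal is correct and follows essentially the same route as the paper: cite the standard FEM spectral approximation theory (Boffi) for the eigenvalue bound and the $L^2$ eigenfunction bound $\|\psi_i-\psi_{h,i}\|_{L^2}\lesssim \lambda_i h^2$, then obtain the $L^\infty$ bound via a triangle inequality through a discrete intermediate (the paper uses the Galerkin projection $P_h\psi_i$, you use the interpolant $I_h\psi_i$) combined with the inverse estimate $\|\cdot\|_\infty\lesssim h^{-1/2}\|\cdot\|_{L^2}$ on $V_h$, and finally deduce the $h$-independent second inequalities from Weyl's law via $h\asymp N_h^{-1}\lesssim \lambda_i^{-1/2}$. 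Your concern about explicit $\lambda_i$-dependence in the Babu\v{s}ka--Osborn constants is legitimate, but the paper likewise handles it by citation rather than by a detailed gap-tracking argument.
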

\begin{proof}
The standard results (see e.g. \cite[Section 8]{boffi2010finite} which indeed only requires to work with bilinear forms and extends to the metric graph setting) for FEM spectral approximation with linear finite elements give 
\begin{align*}
    |\lambda_i-\lambda_{h,i}|&\leq C\lambda_i^2 h^2, \\
    \|\psi_i-\psi_{h,i}\|_2 &\leq C \lambda_i h^2,
\end{align*}
and the Galerkin projection satisfies
\begin{align*}
    \|\psi_i-P_h\psi_i\|_2 &\leq C h^2\|\psi_i\|_{H^2}\leq C\lambda_ih^2,\\
    \|\psi_i-P_h\psi_i\|_\infty &\leq C h^2|\log h|\|\psi_i\|_{W^2_\infty}\leq C\lambda_ih^2|\log h|,
\end{align*}
where the last step follows from the uniform boundedness of $L^2$ normalized $\psi_i$'s. 
To establish the $L^\infty$ bound, we write
\begin{align*}
    \|\psi_i-\psi_{h,i}\|_\infty \leq \|\psi_i-P_h\psi_{i}\|_\infty + \|P_h\psi_{i}-\psi_{h,i}\|_\infty.
\end{align*}
By inverse estimates 
\begin{align*}
    \|P_h\psi_{i}-\psi_{h,i}\|_\infty \leq h^{-1/2} \|P_h\psi_{i}-\psi_{h,i}\|_2 \leq h^{-1/2} (\|P_h\psi_{h,i}-\psi_i\|_2+\|\psi_i-\psi_{h,i}\|_2)\leq C\lambda_i h^{3/2},
\end{align*}
which proves the first desired upper bounds. 
To see the second ones, notice that Weyl's law implies $\lambda_i\asymp i^2$ so that 
\begin{align*}
    h \asymp N_h^{-1} \asymp \lambda_{h,N_h}^{-1/2} \leq \lambda_{h,i}^{-1/2}.  
\end{align*}
\end{proof}

\begin{proof}[Proof of Lemma \ref{lemma:kernel in terms of fem basis}]
Expanding $\psi_{h,i}$ in terms of the FEM basis, we have 
\begin{align*}
    \psi_{h,i}= \sum_{j=1}^{N_h} \langle \psi_{h,i},e_{h,j}\rangle_{L^2(\Gamma)} e_{h,j} =: \sum_{j=1}^{N_h} U_{ij} e_{h,j}.
\end{align*}
Since $\psi_{h,i}$'s (variational) eigenfunctions of $\L_h$ associated with eigenvalues $\lambda_{h,i}$, we have 
\begin{align*}
    \kappa^2\langle \psi_{h,i},e_{h,k} \rangle_{L^2(\Gamma)}+\langle \nabla \psi_{h,i},\nabla e_{h,k}\rangle_{L^2(\Gamma)}  = \lambda_{h,i}\langle \psi_{h,i},e_{h,k}\rangle_{L^2(\Gamma)}
\end{align*}
for all $i,k=1,\ldots,N_h$. 
In other words, 
\begin{align*}
    \sum_{j=1}^{N_h} U_{ij}\langle \nabla e_{h,j},\nabla e_{h,k}\rangle_{L^2(\Gamma)}  = (\lambda_{h,i}-\kappa^2)\sum_{j=1}^{N_h}U_{ij}\langle e_{h,j},e_{h,k}\rangle_{L^2(\Gamma)},\qquad \forall i,k=1,\ldots,N_h,  
\end{align*}
which in terms of \eqref{eq:mass and stifness} reads  
\begin{align}\label{eq:fem matrix eq 1}
    \sum_{j=1}^{N_h} U_{ij} G_{jk} = (\lambda_{h,i}-\kappa^2) \sum_{j=1}^{N_h} U_{ij} C_{jk}\qquad \forall i,k=1,\ldots,N_h  \quad \Longleftrightarrow \quad UG = (\Lambda-\kappa^2 I) U C,
\end{align}
where $\Lambda = \operatorname{diag}(\lambda_{h,1},\ldots,\lambda_{h,N_h})$. 
Moreover, we have 
\begin{align*}
    \delta_{ij} = \langle \psi_{h,i},\psi_{h,j}\rangle_{L^2(\Gamma)} &= \left\langle \sum_{k=1}^{N_h} U_{ik} e_{h,k}, \sum_{k=1}^{N_h} U_{jk} e_{h,k}\right\rangle_{L^2(\Gamma)} \\
    &= \sum_{k=1}^{N_h}\sum_{\ell=1}^{N_h} U_{ik}U_{j\ell} \langle e_{h,k},e_{h,\ell}\rangle_{L^2(\Gamma)} =  (UCU^\top)_{ij},
\end{align*}
i.e., 
\begin{align}\label{eq:C=inv(UUT)}
    UCU^\top=I \quad  \Longleftrightarrow\quad  C=U^{-1}U^{-\top}.  
\end{align}
This together with \eqref{eq:fem matrix eq 1} implies that 
\begin{align}\label{eq:fem matrix eq 2}
    \Lambda = \kappa^2 I+ UGU^\top = U(\kappa^2C+G)U^\top.
\end{align}
Denoting $\psi(x) = (\psi_{h,1}(x),\ldots,\psi_{h,N_h}(x))^\top$, we have $\psi(x)=U e(x)$. 
Together with \eqref{eq:fem matrix eq 2}, 
\begin{align*}
k_h(x,x') = \tau^{-2} \psi(x)^\top \Lambda^{-2\alpha} \psi(x') = \tau^{-2} e(x)^\top U^\top [U(\kappa^2C+G)U^\top]^{-2\alpha}U e(x').    
\end{align*}
Now if $2\alpha \in \mathbb{N}$, the above expression simplifies to 
\begin{align*}
    k_h(x,x') = \tau^{-2} e(x)^\top Q^{-1} e(x'), \qquad Q=C [\kappa^2I+C^{-1}G]^{2\alpha},
\end{align*}
proving the desired result.

To show the second part, notice that 
$K^h_{t-1} = \tau^{-2}E_{t-1}Q^{-1}E_{t-1}^\top$ and we have 
\begin{align*}
    (\tau^{-2}E_{t-1}^\top Q^{-1}E_{t-1} + \lambda I)^{-1} = \lambda^{-1}I -\lambda^{-1} E_{t-1}^\top(\tau^2\lambda  Q +E_{t-1}E_{t-1}^\top)^{-1}E_{t-1}.
\end{align*}
We also have $k_{t-1}(x)^\top=\tau^{-2}e(x)^\top Q^{-1}E_{t-1}$ and so
\begin{align*}
    \mu_{t-1}^h  &= \tau^{-2}e(x)^\top Q^{-1} E_{t-1} \left[ \lambda^{-1}I -\lambda^{-1} E_{t-1}^\top (\tau^2\lambda Q +E_{t-1}E_{t-1}^\top)^{-1}E_{t-1}\right]Y_{t-1}\\
    & = \tau^{-2}e(x)^\top Q^{-1}\lambda^{-1}\left[I -E_{t-1}E_{t-1}^\top(\tau^2\lambda Q +E_{t-1}E_{t-1}^\top)^{-1}\right]E_{t-1}Y_{t-1}\\
    &= \tau^{-2}e(x)^\top Q^{-1}\lambda^{-1}\tau^2\lambda Q (\tau^2\lambda Q+E_{t-1}E_{t-1}^\top)^{-1}E_{t-1}Y_{t-1}\\
    & = e(x)^\top (\tau^2\lambda Q+E_{t-1}E_{t-1}^\top)^{-1}E_{t-1}Y_{t-1}.
\end{align*}
Lastly, 
\begin{align*}
    &k_{t-1}^h(x,x') \\
    &=\tau^{-2} e(x)^\top Q^{-1} e(x')  \\
    & \hspace{0.75cm} - \tau^{-2}e(x)^\top Q^{-1}E_{t-1}\left[ \lambda^{-1}I -\lambda^{-1} E_{t-1}^\top(\tau^2\lambda Q +E_{t-1}E_{t-1}^\top)^{-1}E_{t-1}\right] E_{t-1}^\top Q^{-1}e(x')\tau^{-2}\\
    & = \tau^{-2}e(x)^\top Q^{-1} e(x')  \\
    & \hspace{0.75cm} - \tau^{-2}e(x)^\top Q^{-1}\lambda^{-1}\left[E_{t-1}E_{t-1}^\top -E_{t-1}E_{t-1}^\top (\tau^2\lambda Q +E_{t-1}E_{t-1}^\top )^{-1}E_{t-1}E_{t-1}^\top \right]Q^{-1}e(x')\tau^{-2}\\
    & = \tau^{-2}e(x)^\top Q^{-1} e(x') - \tau^{-2}e(x)^\top Q^{-1} E_{t-1}E_{t-1}^\top (\tau^2\lambda Q +E_{t-1}E_{t-1}^\top )^{-1} e(x')\\
    & = \tau^{-2}e(x)^\top Q^{-1} \left[I- E_{t-1}E_{t-1}^\top (\tau^2\lambda Q +E_{t-1}E_{t-1}^\top )^{-1}\right]e(x')\\
    & = \lambda e(x)^\top (\tau^2\lambda Q +E_{t-1}E_{t-1}^\top)^{-1} e(x').
\end{align*}
\end{proof}

\begin{proof}[Proof of Lemma \ref{lemma:rational kernel in terms of fem basis}]
First, note that we can write as in the proof of Lemma \ref{lemma:kernel in terms of fem basis} that 
\begin{align}\label{eq:fem matrix eq 3}
    k_h^{\texttt{r}}(x,x') = \tau^{-2} e(x)^\top U^\top  r(\Lambda^{-1})^2 Ue(x') = \tau^{-2} e(x)^\top U^\top p_r(\Lambda)p_\ell(\Lambda)^{-2}p_r(\Lambda) Ue(x'),
\end{align}
where $U_{ij}=\langle \psi_{h,i},e_{h,j}\rangle_{L^2(\Gamma)}$.
We remark that $p_\ell(\Lambda)$ commutes with $p_r(\Lambda)$ since both are diagonal and the order of the product in the middle is chosen for convenience. 
Recall from \eqref{eq:fem matrix eq 2} and \eqref{eq:C=inv(UUT)} that $\Lambda = U(\kappa^2 C+G)U^\top$ and $C=(U^\top U)^{-1}$. 
Therefore for any integer $k$, $\Lambda^k  = UC(\kappa^2I+C^{-1}G)^kU^\top $ so that for any polynomial $p$, \[p(\Lambda) = UCp(\kappa^2I+C^{-1}G)U^\top .\]
Hence using that $C=(U^\top U)^{-1}$
\begin{align*}
    &U^\top p_r(\Lambda)p_\ell(\Lambda)^{-2}p_r(\Lambda) U\\
    &= U^\top \left[UCp_r(\kappa^2I+C^{-1}G)U^\top\right]
    \left[U^{-\top}p_\ell(\kappa^2I+C^{-1}G)^{-1}C^{-1}U^{-1}\right]\left[U^{-\top}p_\ell(\kappa^2I+C^{-1}G)^{-1}C^{-1}U^{-1}\right]\cdot\\
    &\qquad \left[UCp_r(\kappa^2I+C^{-1}G)U^\top\right]U\\
    &= p_r(\kappa^2I+C^{-1}G)p_\ell(\kappa^2I+C^{-1}G)^{-1}p_\ell(\kappa^2I+C^{-1}G)^{-1}p_r(\kappa^2I+C^{-1}G)U^\top U\\
    &=P_r P_\ell^{-1}P_\ell^{-1} P_rC^{-1}. 
\end{align*}
We notice that both $CP_\ell$ and $CP_r$ are symmetric so that $P_\ell^{-1} = C^{-1}P_\ell^{-\top} C$ and $CP_rC^{-1}=P_r^\top$, implying that the we can further write the last expression as $P_rP_\ell^{-1}C^{-1}P_\ell^{-\top} CP_rC^{-1} = P_rP_\ell^{-1}C^{-1}P_\ell^{-\top}P_r^\top$
so that 
\begin{align*}
    k_h^{\texttt{r}}(x,x') = \tau^{-2}e(x)^\top P_r (P_\ell^\top C P_\ell)^{-1} P_r^\top e(x'). 
\end{align*}

To show the second part, notice that 
\begin{align*}
    K_{t-1}^{h,\texttt{r}} = \tau^{-2} (E_{t-1}P_r) (P_\ell^\top C P_\ell)^{-1} (P_r^\top E_{t-1}^\top)&=: \tau^{-2}\widetilde{E}_{t-1} \widetilde{Q}^{-1} \widetilde{E}_{t-1}^\top , \\
    k_{t-1}^{h,\texttt{r}}(x)^\top = \tau^{-2} (e(x)^\top P_r)  (P_\ell^\top C P_\ell)^{-1} (P_r^\top E_{t-1}) &=: \tau^{-2}\widetilde{e}(x)^\top \widetilde{Q}^{-1} \widetilde{E}_{t-1}.
\end{align*}
By a similar argument as in the proof of Lemma \ref{lemma:kernel in terms of fem basis}, we have
\begin{align*}
    \mu_{t-1}^{h,\texttt{r}}& = \widetilde{e}(x)^\top (\tau^2\lambda \widetilde{Q} +\widetilde{E}_{t-1}\widetilde{E}_{t-1}^\top)^{-1}\widetilde{E}_{t-1}Y_{t-1},\\
    k_{t-1}^{h,\texttt{r}}(x,x')&= \lambda \widetilde{e}(x)^\top (\tau^2\lambda \widetilde{Q}+\widetilde{E}_{t-1}\widetilde{E}_{t-1}^\top)^{-1}\widetilde{e}(x'),
\end{align*}
 which gives the desired result by plugging in the expressions for $\widetilde{E}_{t-1},\widetilde{Q},\widetilde{e}$. 
\end{proof}

\begin{lemma}\label{lemma:uniform bound on rational kernel}
Suppose $m$ is chosen so that $\pi\sqrt{|\alpha-m_\alpha|m}\gtrsim -(1\vee\alpha)\log h$. We have  
\begin{align*}
    |k_h^{\texttt{r}}(x,x')| \lesssim h^{(4\alpha-2)\wedge 0},\qquad \forall x,x'\in \Gamma.
\end{align*}
\end{lemma}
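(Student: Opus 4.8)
The plan is to bound the rational kernel $k_h^{\texttt{r}}(x,x')=\tau^{-2}\sum_{i=1}^{N_h}s_h(\lambda_{h,i}^{-1})^2\psi_{h,i}(x)\psi_{h,i}(x')$ by first controlling $s_h(\lambda_{h,i}^{-1})$ and the FEM eigenfunctions $\psi_{h,i}$, then summing the resulting series. By Cauchy--Schwarz (or simply $|\psi_{h,i}(x)\psi_{h,i}(x')|\le \tfrac12(\psi_{h,i}(x)^2+\psi_{h,i}(x')^2)$), it suffices to bound $\sum_{i=1}^{N_h}s_h(\lambda_{h,i}^{-1})^2\|\psi_{h,i}\|_{L^\infty(\Gamma)}^2$.

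First I would establish a uniform $L^\infty$ bound on the FEM eigenfunctions $\psi_{h,i}$, analogous to Lemma \ref{lemma:uniform boundedness of fem eigenfunction}: since $\psi_{h,i}\in V_h$ is piecewise linear and $L^2(\Gamma)$-normalized, an inverse estimate gives $\|\psi_{h,i}\|_{L^\infty(\Gamma)}\lesssim h^{-1/2}\|\psi_{h,i}\|_{L^2(\Gamma)}=h^{-1/2}$; alternatively, and sharper, one can use the comparison $\|\psi_i-\psi_{h,i}\|_\infty\lesssim \lambda_i h^{3/2}$ from Lemma \ref{lemma:L infnity spectral error} together with $\sup_i\|\psi_i\|_{L^\infty(\Gamma)}\le\Psi$ to get $\|\psi_{h,i}\|_{L^\infty(\Gamma)}\le \Psi+C\lambda_i h^{3/2}\lesssim 1$ whenever $\lambda_i h^{3/2}\lesssim 1$, i.e. $i\lesssim h^{-1/3}$, and fall back on $h^{-1/2}$ otherwise; in fact since $\lambda_{h,i}\lesssim \lambda_i\lesssim i^2$ and $\lambda_{h,i}\le\lambda_{h,N_h}\asymp h^{-2}$, one has $\|\psi_{h,i}\|_{L^\infty(\Gamma)}\lesssim 1+\lambda_i h^{3/2}\lesssim 1+ h^{-1/2}$, so a clean uniform bound $\|\psi_{h,i}\|_{L^\infty(\Gamma)}\lesssim h^{-1/2}$ suffices and is easiest. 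Second, the hypothesis $\pi\sqrt{|\alpha-m_\alpha|m}\gtrsim-(1\vee\alpha)\log h$ and the estimate from \cite[Appendix B]{bolin2020rational}, used already in the proof of Theorem \ref{thm:regret fem}, give $\max_i|\lambda_{h,i}^{-\alpha}-s_h(\lambda_{h,i}^{-1})|\le\tfrac12\lambda_{h,i}^{-\alpha}$, hence $s_h(\lambda_{h,i}^{-1})\le\tfrac32\lambda_{h,i}^{-\alpha}\lesssim\lambda_i^{-\alpha}\asymp i^{-2\alpha}$.

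Combining these, $\sum_{i=1}^{N_h}s_h(\lambda_{h,i}^{-1})^2\|\psi_{h,i}\|_{L^\infty(\Gamma)}^2\lesssim h^{-1}\sum_{i=1}^{N_h}i^{-4\alpha}$. When $4\alpha>1$ the series $\sum_i i^{-4\alpha}$ converges, giving the bound $\lesssim h^{-1}$. Since $h\asymp N_h^{-1}$ and $N_h\gtrsim 1$, this yields $|k_h^{\texttt{r}}(x,x')|\lesssim h^{-1}$, which under $2\alpha>1$ ($4\alpha>2$) matches $h^{(4\alpha-2)\wedge 0}=h^0=1$ only up to the factor $h^{-1}$ --- so this crude route loses a power of $h$. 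To recover the sharp exponent I would instead split the sum at the threshold index $i_0\asymp h^{-1/3}$: for $i\le i_0$ use $\|\psi_{h,i}\|_{L^\infty(\Gamma)}\lesssim 1$, contributing $\sum_{i\le i_0}i^{-4\alpha}\lesssim 1$; for $i>i_0$ use $\|\psi_{h,i}\|_{L^\infty(\Gamma)}^2\lesssim\lambda_i^2 h^3\lesssim i^4 h^3$, contributing $h^3\sum_{i>i_0}i^{4-4\alpha}$, which is $\lesssim h^3$ if $4\alpha>5$ and $\lesssim h^3 i_0^{5-4\alpha}\asymp h^{(4\alpha-2)/3}$ if $4\alpha<5$; the dominant term is then $\lesssim 1$ when $\alpha\ge\tfrac34$, giving $|k_h^{\texttt{r}}(x,x')|\lesssim 1 = h^{(4\alpha-2)\wedge 0}$.

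The main obstacle is getting the sharp $h$-exponent $(4\alpha-2)\wedge 0$ rather than the lossy $h^{-1}$: the naive uniform inverse estimate $\|\psi_{h,i}\|_{L^\infty}\lesssim h^{-1/2}$ is too crude, and one must exploit that only the high-frequency eigenfunctions ($\lambda_{h,i}$ close to $\lambda_{h,N_h}\asymp h^{-2}$) are genuinely large in $L^\infty$, while these are simultaneously heavily damped by the factor $s_h(\lambda_{h,i}^{-1})^2\asymp\lambda_i^{-2\alpha}$. Balancing these two effects via the split at $i_0\asymp h^{-1/3}$ (equivalently $\lambda_i\asymp h^{-2/3}$) is the crux; care is also needed for the boundary case $\alpha=\tfrac34$ where logarithmic factors may appear, and for $\tfrac12<\alpha<\tfrac34$ where the bound degrades to a positive power of $h$ (still consistent with $(4\alpha-2)\wedge 0=0$ being an upper bound, since a positive power of $h$ is $\le 1$ for $h\le 1$).
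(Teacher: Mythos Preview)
Your overall strategy is sound, but you are working much harder than necessary and your split argument contains arithmetic slips. The paper avoids the split entirely by using the \emph{second} inequality in Lemma~\ref{lemma:L infnity spectral error}, not just the first. Since $h\asymp N_h^{-1}\asymp \lambda_{h,N_h}^{-1/2}\le \lambda_i^{-1/2}$ for every $i\le N_h$, one has $\lambda_i h^{3/2}\lesssim \lambda_i^{1/4}$ and hence
\[
\|\psi_{h,i}\|_{L^\infty(\Gamma)}\;\le\;\|\psi_i\|_{L^\infty(\Gamma)}+\|\psi_i-\psi_{h,i}\|_{L^\infty(\Gamma)}\;\lesssim\;1+\lambda_i^{1/4}\;\lesssim\;\lambda_i^{1/4}
\]
uniformly in $i$. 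Combined with your (correct) bound $s_h(\lambda_{h,i}^{-1})\lesssim \lambda_i^{-\alpha}$, this gives directly
\[
|k_h^{\texttt{r}}(x,x')|\;\lesssim\;\sum_{i=1}^{N_h}\lambda_i^{-2\alpha}\lambda_i^{1/2}\;\asymp\;\sum_{i=1}^{N_h} i^{1-4\alpha}\;\lesssim\;
\begin{cases}
1, & 4\alpha>2,\\
N_h^{2-4\alpha}\asymp h^{4\alpha-2}, & 4\alpha<2,
\end{cases}
\]
i.e.\ $h^{(4\alpha-2)\wedge 0}$, with no threshold index and no case analysis in $\alpha$ relative to $5/4$ or $3/4$.

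As for your split at $i_0\asymp h^{-1/3}$: it can be made to work, but your high-frequency estimate is wrong. For $4\alpha<5$ the summand $i^{4-4\alpha}$ has exponent $>-1$, so $\sum_{i>i_0}^{N_h} i^{4-4\alpha}\asymp N_h^{5-4\alpha}$, not $i_0^{5-4\alpha}$; the correct contribution is therefore $h^3 N_h^{5-4\alpha}\asymp h^{4\alpha-2}$, not $h^{(4\alpha-2)/3}$ (and note that even $h^3 i_0^{5-4\alpha}$ equals $h^{(4+4\alpha)/3}$, not what you wrote). With this correction the split argument does recover the right bound, but the single-line route via $\|\psi_{h,i}\|_\infty\lesssim\lambda_i^{1/4}$ is what the paper uses and is what you should aim for.
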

\begin{proof}
Let \(s_h\) be defined in Eq.~\eqref{eq:s_h}.
We have 
\begin{align*}
    |s_h(\lambda_{h,i}^{-1})-\lambda_i^{-\alpha}| \leq |s_h(\lambda_{h,i}^{-1})-\lambda_{h,i}^{-\alpha}|+|\lambda_{h,i}^{-\alpha}-\lambda_i^{-\alpha}|.
\end{align*}
The first term can be controlled as in \cite[Appendix B]{bolin2020rational} by 
\begin{align*}
    \underset{i=1,\ldots,N_h}{\operatorname{max}}\,\, |\lambda_{h,i}^{-\alpha} - s_h(\lambda_{h,i}^{-1})| \lesssim \lambda_{h,N_h}^{(1-\alpha)\vee 0} e^{-2\pi \sqrt{|\alpha-m_\alpha|m}}.
\end{align*}    
For the second term, notice that for $x<y$ positive, we have $|x^{-\alpha}-y^{-\alpha}|\leq \alpha \xi^{-\alpha-1} |x-y|$ for some $x\leq \xi\leq y$. 
Using the fact that $\lambda_i\leq \lambda_{h,i}$, we have 
\begin{align*}
    |\lambda_i^{-\alpha}-\lambda_{h,i}^{-\alpha}|\leq \alpha \lambda_i^{-\alpha-1} |\lambda_i-\lambda_{h,i}| \leq \alpha \lambda_i^{-\alpha+1}h^2,
\end{align*}
where we used Lemma \ref{lemma:L infnity spectral error} in the last step. 
Furthermore, notice that 
\begin{align*}
    h^2 \lesssim N_h^{-2} \lesssim \lambda_{h,N_h}^{-1}\leq \lambda_{h,i}^{-1}\leq \lambda_i^{-1},\qquad \forall i=1,\ldots,N_h,
\end{align*}
so that 
\begin{align*}
    |\lambda_i^{-\alpha}-\lambda_{h,i}^{-\alpha}| \lesssim \lambda_i^{-\alpha}.
\end{align*}
Combining these bounds we have 
\begin{align}\label{eq:bound on r(lambda)}
    |s_h(\lambda_{h,i}^{-1})-\lambda_{i}^{-\alpha}|\lesssim \lambda_i^{-\alpha}+ \lambda_{h,N_h}^{(1-\alpha)\vee 0} e^{-2\pi \sqrt{|\alpha-m_\alpha|m}}\lesssim \lambda_i^{-\alpha}
\end{align}
when $m$ is chosen as in the statement of the lemma. 
Together with Lemma \ref{lemma:L infnity spectral error} that $\|\psi_{h,i}\|_{L^\infty(\Gamma)} \lesssim \lambda_i^{1/4}$, we have 
\begin{align*}
    |k_h^{\texttt{r}}(x,x')| &\lesssim \sum_{i=1}^{N_h} \lambda_i^{-2\alpha}\lambda_i^{1/2}
    \lesssim  h^{(4\alpha-2)\wedge 0}.
\end{align*}
\end{proof}

\begin{lemma}\label{lemma:maximum info gain bound}
For $\alpha>\frac12$ and $\pi\sqrt{|\alpha-m_\alpha|m}\gtrsim -(1\vee\alpha)\log h$, we have the following bound on the maximum information gain
\begin{align*}
    \gamma_T(k_h^{\texttt{r}}) &=O(T^{1/(4\alpha-1)}\log T).
\end{align*}
\end{lemma}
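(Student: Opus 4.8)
The plan is to follow the same Mercer-kernel strategy used in Lemma~\ref{lemma:maximum info gain bound exact}, now applied to the rational finite element kernel $k_h^{\texttt{r}}$ in \eqref{eq:rational kernel}. Recall from \cite[Theorem 3]{vakili2021information} that for a kernel $\mathcal K(x,x')=\sum_i \Lambda_i \Psi_i(x)\Psi_i(x')$ one has
\begin{align*}
\gamma_T(\mathcal K)\leq \tfrac12 L\log\!\left(1+\tfrac{\overline k T}{\lambda L}\right)+\tfrac{T}{2\lambda}\sum_{i=L+1}^\infty \Lambda_i\|\Psi_i\|_\infty^2,
\end{align*}
and that the proof only needs an upper bound on the tail kernel $\sum_{i>L}\Lambda_i\Psi_i(x)\Psi_i(x')$, not a uniform bound on $\|\Psi_i\|_\infty$. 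For $k_h^{\texttt{r}}$ the "eigenvalues" are $\Lambda_i=\tau^{-2}s_h(\lambda_{h,i}^{-1})^2$ for $i\le N_h$ (and $0$ thereafter), and the eigenfunctions are $\psi_{h,i}$. The first ingredient is the uniform boundedness $|k_h^{\texttt{r}}(x,x')|\lesssim h^{(4\alpha-2)\wedge 0}$ from Lemma~\ref{lemma:uniform bound on rational kernel}; since $\alpha>\frac12$ gives $4\alpha-2>0$, we have $\overline{k}=O(1)$, so the first term is $O(L\log T)$.

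For the tail term, I would use the two estimates already assembled in the proof of Lemma~\ref{lemma:uniform bound on rational kernel}: the bound $s_h(\lambda_{h,i}^{-1})\lesssim \lambda_i^{-\alpha}$ following from \eqref{eq:bound on r(lambda)} (valid under the stated condition on $m$), together with $\|\psi_{h,i}\|_{L^\infty(\Gamma)}\lesssim \lambda_i^{1/4}$ from Lemma~\ref{lemma:L infnity spectral error}. These give $\Lambda_i\|\psi_{h,i}\|_\infty^2\lesssim \lambda_i^{-2\alpha}\lambda_i^{1/2}=\lambda_i^{-2\alpha+1/2}$, and by Weyl's law $\lambda_i\asymp i^2$ (Theorem~2.12 in \cite{bolin2024regularity}) this is $\lesssim i^{-4\alpha+1}$. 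Summing, $\sum_{i=L+1}^{N_h}\Lambda_i\|\psi_{h,i}\|_\infty^2\lesssim \int_L^\infty w^{-4\alpha+1}\,dw\asymp L^{2-4\alpha}$ (finite and decaying since $\alpha>\frac12$). Hence
\begin{align*}
\gamma_T(k_h^{\texttt{r}})=O\!\left(L\log T + T\,L^{2-4\alpha}\right).
\end{align*}
Balancing the two terms, set $L\asymp T^{1/(4\alpha-1)}$, which yields $\gamma_T(k_h^{\texttt{r}})=O(T^{1/(4\alpha-1)}\log T)$, as claimed. One should check $L\le N_h$ is compatible with the regime of interest; since $N_h\asymp h^{-1}$ and one typically takes $T$ polynomial in $N_h$, this holds, and in any case the sum truncates at $N_h$ so the estimate remains an upper bound.

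The only mild obstacle is bookkeeping: making sure the rational-approximation error bound $s_h(\lambda_{h,i}^{-1})^{-1}\gtrsim \lambda_i^\alpha$ (equivalently $s_h(\lambda_{h,i}^{-1})\lesssim \lambda_i^{-\alpha}$) holds uniformly in $i\le N_h$ under the hypothesis $\pi\sqrt{|\alpha-m_\alpha|m}\gtrsim -(1\vee\alpha)\log h$, and that the constant $\overline k$ in the $\vakili$-type bound is genuinely $O(1)$ rather than growing with $h$ — both of which are exactly what Lemma~\ref{lemma:uniform bound on rational kernel} (together with Lemma~\ref{lemma:L infnity spectral error}) delivers. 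With those in hand the argument is a direct transcription of the exact-kernel case with $4\alpha$ replaced by $4\alpha-1$ in the tail exponent, the loss of one power coming from the extra $\lambda_i^{1/2}$ factor produced by the non-uniform bound $\|\psi_{h,i}\|_\infty\lesssim\lambda_i^{1/4}$.
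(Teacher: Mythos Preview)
Your proposal is correct and follows essentially the same approach as the paper: apply the Vakili--Bouneffouf--Shahrampour information-gain bound to the Mercer expansion of $k_h^{\texttt{r}}$, control $\overline{k}$ via Lemma~\ref{lemma:uniform bound on rational kernel}, bound the tail using $s_h(\lambda_{h,i}^{-1})\lesssim\lambda_i^{-\alpha}$ together with $\|\psi_{h,i}\|_\infty\lesssim\lambda_i^{1/4}$ from Lemma~\ref{lemma:L infnity spectral error}, and balance at $L\asymp T^{1/(4\alpha-1)}$. Your observation that $\overline{k}=O(1)$ directly for $\alpha>\tfrac12$ is slightly cleaner than the paper's formulation, and your remark about $L\le N_h$ is a reasonable sanity check that the paper leaves implicit.
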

\begin{proof}
For a general Mercer kernel of the form $\mathcal{K}(x,x')=\sum_{i=1}^\infty \Lambda_i \Psi_i(x)\Psi_i(x')$, \cite[Theorem 3]{vakili2021information} gives a bound on the maximum information gain as  
\begin{align*}
    \gamma_T(\mathcal{K}) \leq \frac12 L \log \left(1+\frac{\overline{k}T}{\lambda L}\right)+\frac12 \frac{T}{\lambda}\sum_{i=L+1}^\infty \Lambda_i \|\Psi_i\|_\infty^2,
\end{align*}
where we recall that $\lambda$ is the regularization parameter,  $\overline{k}$ is a constant satisfying $|\mathcal{K}(x,x')|\leq \overline{k}$, and $L\in \N$ is a suitable truncation level. 
We remark that the original result from \cite[Theorem 3]{vakili2021information} assumes a uniform upper bound on $\|\Psi_i\|_\infty$ but their proof indeed only requires an upper bound on the tail kernel $\sum_{i=L+1}^\infty \Lambda_i \Psi_i(x)\Psi_i(x')$. 

For the approximate kernel, we have $\overline{k} \lesssim h^{(4\alpha-2)\wedge 0 }$  as in Lemma \ref{lemma:uniform bound on rational kernel}, and $\|\psi_{h,i}\|_\infty \lesssim \lambda_i^{1/4}$ as in Lemma \ref{lemma:L infnity spectral error}, where then $\overline{k}/L  \lesssim h^{4\alpha-1}\lesssim 1$ since $\alpha>\frac14$. 
Moreover, for $\alpha>\frac12$ we have by \eqref{eq:bound on r(lambda)}
\begin{align*}
    \left|\sum_{i=L+1}^{N_h} s_h(\lambda_{h,i}^{-1})^2 \psi_{h,i}(x)\psi_{h,i}(x')\right| \lesssim \sum_{i=L+1}^{N_h} \lambda_i^{-2\alpha} \lambda_i^{1/2} \lesssim L^{2-4\alpha}
\end{align*}
so that we have $\gamma_T(k_h^{\texttt{r}})=O(L\log T+TL^{2-4\alpha})$.  
Setting $L\asymp T^{1/(4\alpha-1)}$, we have 
\begin{align*}
    \gamma_T(k_h^{\texttt{r}}) =O(T^{1/(4\alpha-1)}\log T). 
\end{align*}
\end{proof}


\section{Online Kernel Estimation}\label{appendix:online}
\begin{minipage}{1\linewidth}
\begin{algorithm}[H]
\caption{IGP-UCB and GP-TS with Online Kernel Estimation ($\alpha=1$)}
\label{alg: benchmarks}
\begin{algorithmic}[1]
\Require FEM mesh nodes $\femnodes$; Mat\'ern kernel via Euclidean distances (see Eq.~\ref{equa: eucl baseline}); SPDE/FEM kernel (see Eq.~\ref{eq:fem-spde}); parameters $B,R,b,\lambda,\delta$; noise level $\noisestd$; horizon $T$; initial size $N_{\rm init}$; prior hyperparameters $\theta$; fixed GP parameters $\sigma_0,\tau_0$; tolerance threshold $\mathsf{Tol}.$

\State Choose $X_{\mathrm{init}}=\{x_i^{(0)}\}_{i=1}^{N_{\mathrm{init}}}\subset \femnodes$.
\State Observe $y_i^{(0)} = \truth(x_i^{(0)}) + \varepsilon_i^{(0)}$, with $\varepsilon_i^{(0)} \stackrel{\text{i.i.d.}}{\sim}\mathcal N(0,\noisestd^2)$ for $i=1,\dots,N_{\mathrm{init}}$.
\State Initialize $\mcD_0 \gets \{(x_i^{(0)}, y_i^{(0)})\}_{i=1}^{N_{\mathrm{init}}},$ $\theta_0 = \ell_0$ (Euclidean), and $\theta_0 = \kappa_0$ (SPDE). 

\For{$t =1, \dots, T$}
  \State \textbf{Posterior update using current hyperparameters $\theta_{t-1}$:}
\Statex \hspace{\algorithmicindent}\textit{\textbf{Euclidean:}} Assemble the kernel Gram matrix
$\mathcal{K}^{\mathrm{Eucl}}_{t-1}=k_{\mathrm{Eucl}}(\theta_{t-1},\sigma_0)$ via Eq.~\eqref{equa: eucl baseline}. 

\Statex \hspace{\algorithmicindent}\textit{\textbf{SPDE:}} Form the precision matrix on $\femnodes$
\begin{equation}
\label{eq:posterior-precision}
  Q_{t-1}=C [\theta_{t-1}^2I+C^{-1}G]^{2},
\end{equation}
where $C$ and $G$ are defined in Lemma~\ref{lemma:kernel in terms of fem basis}, and compute the SPDE/FEM Gram matrix via
$\mathcal{K}^{\mathrm{SPDE}}_{t-1}=\tau_{0}^{-2}\,Q_{t-1}^{-1}$.

 \Statex \hspace{\algorithmicindent}We then compute the posterior mean $\mu_{t-1}^{h}$, covariance $k_{t-1}^{h}$, $K_{t-1}^h$ and standard deviation $\sigma_{t-1}^{h}$ on $\femnodes$ from the first $t-1$ acquisitions and observations via \cref{eq:fem postmean,eq:fem postcov,eq:fem poststd}.

  \State Define the acquisition function $\mathrm{acq}_{t}(x)$ for $x\in \femnodes$ as
\[
\mathrm{acq}_t(x)=
\begin{cases}
\mu_{t-1}^{h}(x) + \beta_t^{h}\, \sigma_{t-1}^{h}(x), & \text{(IGP--UCB)},\\[6pt]
f_t^{h}(x),\ \ f_t^{h} \sim \mathcal{GP}\!\bigl(\mu_{t-1}^{h},\,(v_t^{h})^2\, k_{t-1}^{h}\bigr), & \text{(GP--TS)},
\end{cases}
\]
with
\begin{align*}
  \beta_t^h &=  B+ R\sqrt{2\big(\gamma_{t-1}(k_h)+(\Ninit+t)(\lambda-1)/2+\log(1/\delta)\big)}+\frac{b\sqrt{\Ninit+t-1}}{\sqrt{1+2/(\Ninit+t)}},  
  \\
  v_t^h &=  B+ R\sqrt{2\big(\gamma_{t-1}(k_h)+(\Ninit+t)(\lambda-1)/2+\log(2/\delta)\big)}+\frac{b\sqrt{\Ninit+t-1}}{\sqrt{1+2/(\Ninit+t)}}. 
  \end{align*}

 \State Select $x_t \in \arg\max_{x\in \femnodes} \mathrm{acq}_t(x)$. Observe $y_t = \truth(x_t)+\varepsilon_t$, with $\varepsilon_t\sim \mathcal{N}(0,\noisestd^2)$.
 \State Update $\mcD_t \gets \mcD_{t-1} \cup \{(x_t, y_t)\}$.

  \State \textbf{Online MLE (hyperparameter update):} obtain $\theta_{t}$ by maximizing the marginal likelihood given history $\mathcal{D}_{t-1}$:
  \[
  \theta_{t}=\arg\min_{\theta}\Big\{\tfrac12\,y_{1:t}^\top \big(K_{t-1}^h(\theta)+\noisestd^{2}I_t\big)^{-1} y_{1:t}+\tfrac12\log\big|K^h_{t-1}(\theta)\big|\Big\}.
  \]
  \State \textbf{Stop:} if simple regret $\le \mathsf{Tol}$, record iterations to $\mathsf{Tol}$.
\EndFor
\end{algorithmic}
\end{algorithm}
\end{minipage}

\begin{remark}
In the numerical tests, we choose $\alpha=1$ and derive the posterior covariance from the precision matrix assembled from $C$ and $G$ via \ref{eq:posterior-precision}. We can also handle $\alpha=2$ and fractional $\alpha$ via the rational SPDE approximation as stated in Lemma~\ref{lemma:kernel in terms of fem basis} and \ref{lemma:rational kernel in terms of fem basis}. Thus, Algorithm \ref{alg: benchmarks} extends to the fractional-$\alpha$ setting by changing only the posterior derivation from the precision matrix.
\end{remark}

\begingroup
\color{black}
\section{Complementary Numerical Results}
\subsection{Sensitivity to the Misspecification-correction Parameter $b$}
\label{app:b-sensitivity}

This appendix reports an ablation study for Algorithm~\ref{alg: benchmarks}, focusing on the
\emph{misspecification-correction} term $b$ that appears in $\beta_t^{h,\texttt{r}}$ (IGP-UCB) and $v_t^{h,\texttt{r}}$ (GP-TS).
Recall that, in the analysis of Section~\ref{sec:FEM}, the misspecification parameter $b$ is required to satisfy
$$
b \;\ge\; \inf_{f\in \mathcal{H}_{k_h^{\star}}} \|\truth-f\|_{L^\infty(\Gamma)},
$$
where $k_h^{\star}$ denotes either $k_h$ in Algorithm~\ref{alg:GP-bandits-FEM} ($2\alpha\in\mathbb{N}$) or $k_h^{\texttt{r}}$ in Algorithm~\ref{alg:GP-bandits-rational} ($2\alpha\notin\mathbb{N}$).
This term accounts for potential model mismatch between the unknown objective $\truth$ and the RKHS induced by the prior kernel, and is typically treated as an unknown hyperparameter in numerical tests.

We consider the open-rectangle metric graph shown in Figure~\ref{fig:true-benchmarks-openrectangle}.
The search domain is discretized by a continuous piecewise linear FEM space, and we restrict acquisitions to the FEM mesh nodes
$\femnodes$ with cardinality $N_h := |\femnodes| \approx 300$.
We use the normalized benchmark functions on metric graphs (e.g.\ Ackley, Rastrigin, and L\'evy), as defined in Subsection~\ref{ssec:benchmarks},
so that the common choices of hyperparameters in Algorithm~\ref{alg: benchmarks} are meaningful across all benchmarks.

We run both algorithms IGP-UCB and GP-TS in Algorithm~\ref{alg: benchmarks} under the same experimental design
(e.g.\ $\alpha,m,B,R,\lambda,\delta$, the horizon $T$, and $\Ninit$) as in the baseline $b=0$ case in Subsection~\ref{ssec:benchmarks}.
We consider the grid of misspecification-correction parameters
$$
b \in \{0, 0.05, 0.1, 0.2, 0.5\},
$$
and for each benchmark and each $b$, we repeat the experiment over 60 independent random seeds with different initializations.
The sensitivity results with respect to $b$ are reported in Tables~\ref{tab:benchmarks_IGP-UCB_b_sensitivity}--\ref{tab:benchmarks_GP-TS_b_sensitivity} and visualized in Figures~\ref{fig:benchmarks_IGP-UCB_b_sensitivity}--\ref{fig:benchmarks_GP-TS_b_sensitivity}.

\begin{table}[htbp]
\centering
\begin{minipage}{0.9\linewidth}
\centering
\resizebox{\linewidth}{!}{%
\begin{tabular}{r r r r r}
\toprule
$b$ & reach rate (SPDE) & mean iters to $\mathsf{Tol}$ (SPDE) & reach rate (Euclid) & mean iters to $\mathsf{Tol}$ (Euclid) \\
\midrule
\multicolumn{5}{c}{\textbf{Ackley}}\\
\midrule
0    & 100.0\% & 4.27 & 41.7\% & 11.16 \\
0.05 & 100.0\% & 4.45 & 40.0\% & 9.83 \\
0.1  & 100.0\% & 4.52 & 38.3\% & 10.26 \\
0.2  & 100.0\% & 4.63 & 40.0\% & 10.00 \\
0.5  & 100.0\% & 4.75 & 38.3\% & 9.48 \\
\midrule
\multicolumn{5}{c}{\textbf{Rastrigin}}\\
\midrule
0    & 100.0\% & 2.55 & 30.0\% & 12.83 \\
0.05 & 100.0\% & 2.65 & 30.0\% & 12.89 \\
0.1  & 100.0\% & 2.75 & 30.0\% & 12.89 \\
0.2  & 100.0\% & 2.85 & 30.0\% & 12.89 \\
0.5  & 100.0\% & 3.28 & 30.0\% & 13.00 \\
\midrule
\multicolumn{5}{c}{\textbf{L\'evy}}\\
\midrule
0    & 100.0\% & 4.28 & 5.0\%  & 19.67 \\
0.05 & 100.0\% & 4.42 & 5.0\%  & 11.33 \\
0.1  & 100.0\% & 4.53 & 3.3\%  & 16.50 \\
0.2  & 100.0\% & 4.60 & 3.3\%  & 16.50 \\
0.5  & 100.0\% & 4.68 & 3.3\%  & 16.50 \\
\bottomrule
\end{tabular}%
}
\end{minipage}
\caption{Sensitivity to the misspecification-correction parameter $b$ for Algorithm~\ref{alg: benchmarks} (\textbf{IGP-UCB}).}
\label{tab:benchmarks_IGP-UCB_b_sensitivity}
\end{table}

\begin{table}[H]
\centering
\begin{minipage}{0.9\linewidth}
\centering
\resizebox{\linewidth}{!}{%
\begin{tabular}{r r r r r}
\toprule
$b$ & reach rate (SPDE) & mean iters to $\mathsf{Tol}$ (SPDE) & reach rate (Euclid) & mean iters to $\mathsf{Tol}$ (Euclid) \\
\midrule
\multicolumn{5}{c}{\textbf{Ackley}}\\
\midrule
0    & 83.3\% & 15.68 & 16.7\% & 16.90 \\
0.05 & 81.7\% & 14.98 & 20.0\% & 18.25 \\
0.1  & 80.0\% & 14.83 & 16.7\% & 19.30 \\
0.2  & 78.3\% & 15.19 & 16.7\% & 18.80 \\
0.5  & 73.3\% & 14.57 & 15.0\% & 19.11 \\
\midrule
\multicolumn{5}{c}{\textbf{Rastrigin}}\\
\midrule
0    & 71.7\% & 14.72 & 16.7\% & 18.90 \\
0.05 & 81.7\% & 14.33 & 15.0\% & 18.44 \\
0.1  & 78.3\% & 13.79 & 15.0\% & 18.44 \\
0.2  & 83.3\% & 12.76 & 15.0\% & 18.22 \\
0.5  & 95.0\% & 11.05 & 15.0\% & 16.11 \\
\midrule
\multicolumn{5}{c}{\textbf{L\'evy}}\\
\midrule
0    & 85.0\% & 13.63 & 3.3\%  & 15.50 \\
0.05 & 78.3\% & 13.28 & 3.3\%  & 18.50 \\
0.1  & 76.7\% & 13.02 & 3.3\%  & 18.50 \\
0.2  & 75.0\% & 13.47 & 3.3\%  & 18.50 \\
0.5  & 71.7\% & 13.05 & 1.7\%  & 1.00 \\
\bottomrule
\end{tabular}
}
\end{minipage}
\caption{Sensitivity to the misspecification-correction parameter $b$ for Algorithm~\ref{alg: benchmarks} (\textbf{GP-TS}).}
\label{tab:benchmarks_GP-TS_b_sensitivity}
\end{table}

\begin{figure}[H]
\centering

\begin{minipage}{\textwidth}\centering
  \subcaption*{Setting 1: Ackley}
\end{minipage}

\begin{subfigure}{0.32\textwidth}
  \centering
  \includegraphics[width=\linewidth]{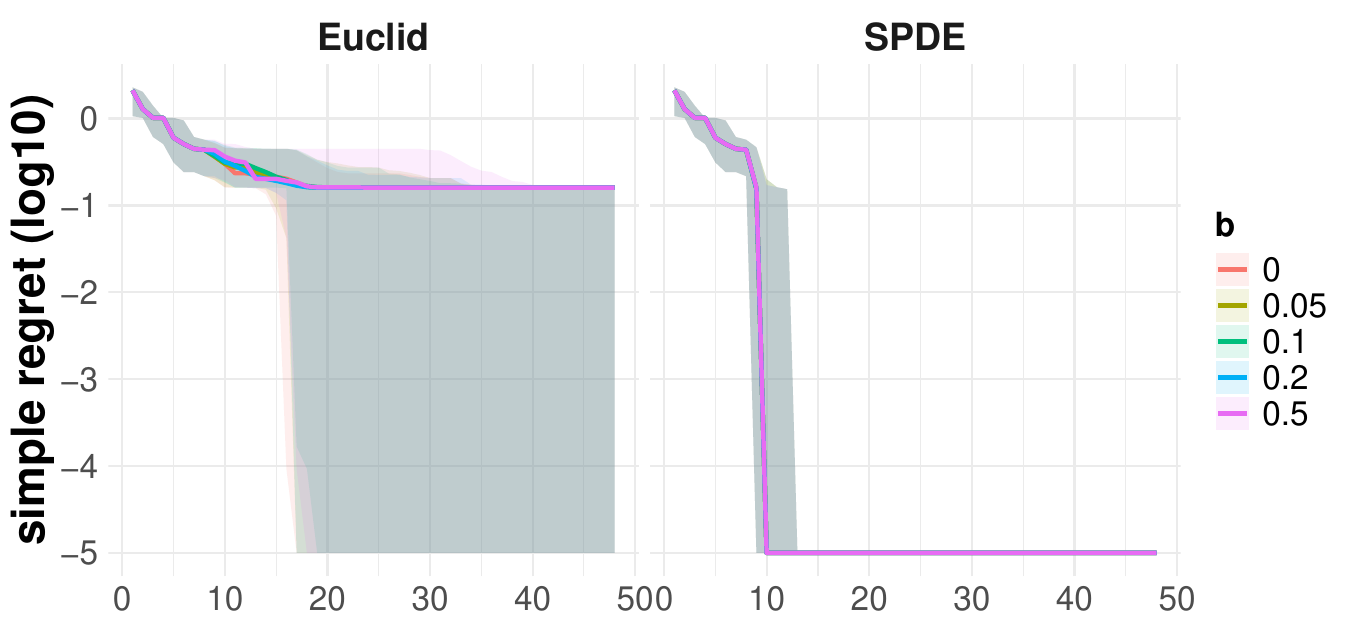}
\end{subfigure}\hfill
\begin{subfigure}{0.32\textwidth}
  \centering
  \includegraphics[width=\linewidth]{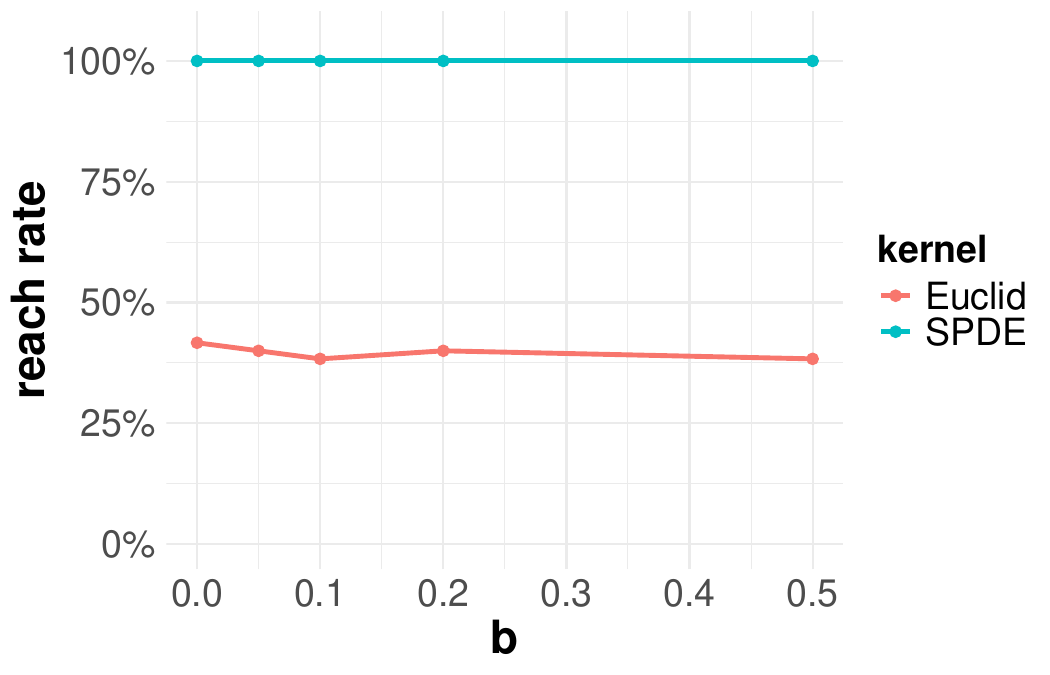}
\end{subfigure}
\begin{subfigure}{0.32\textwidth}
  \centering
  \includegraphics[width=\linewidth]{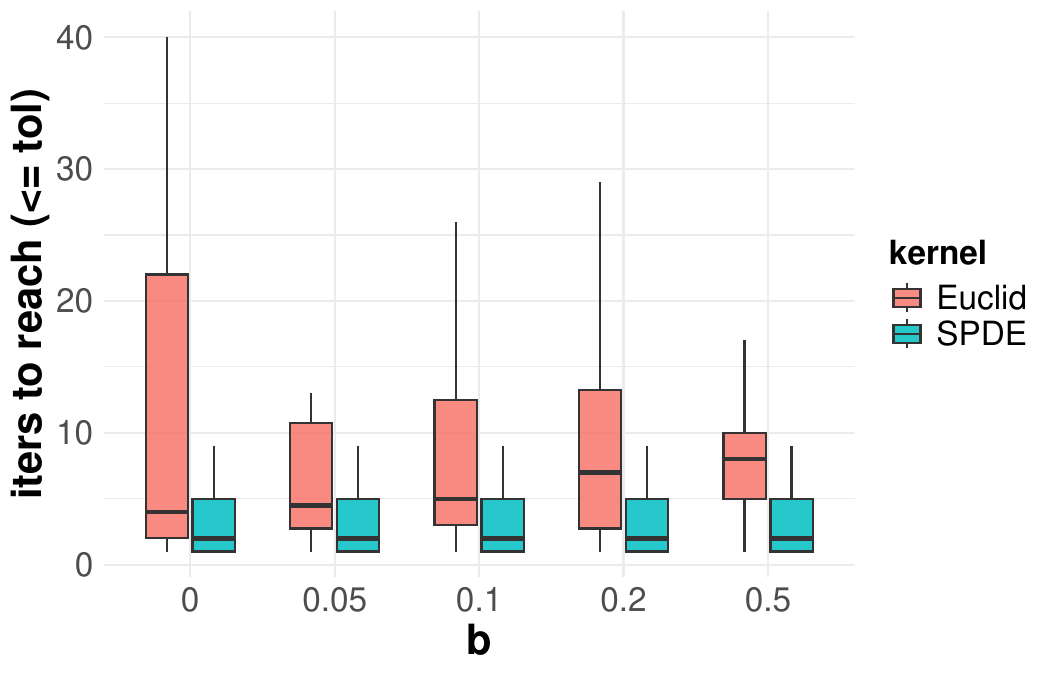}
\end{subfigure}\hfill

\begin{minipage}{\textwidth}\centering
  \subcaption*{Setting 2: Rastrigin}
\end{minipage}

\begin{subfigure}{0.32\textwidth}
  \centering
  \includegraphics[width=\linewidth]{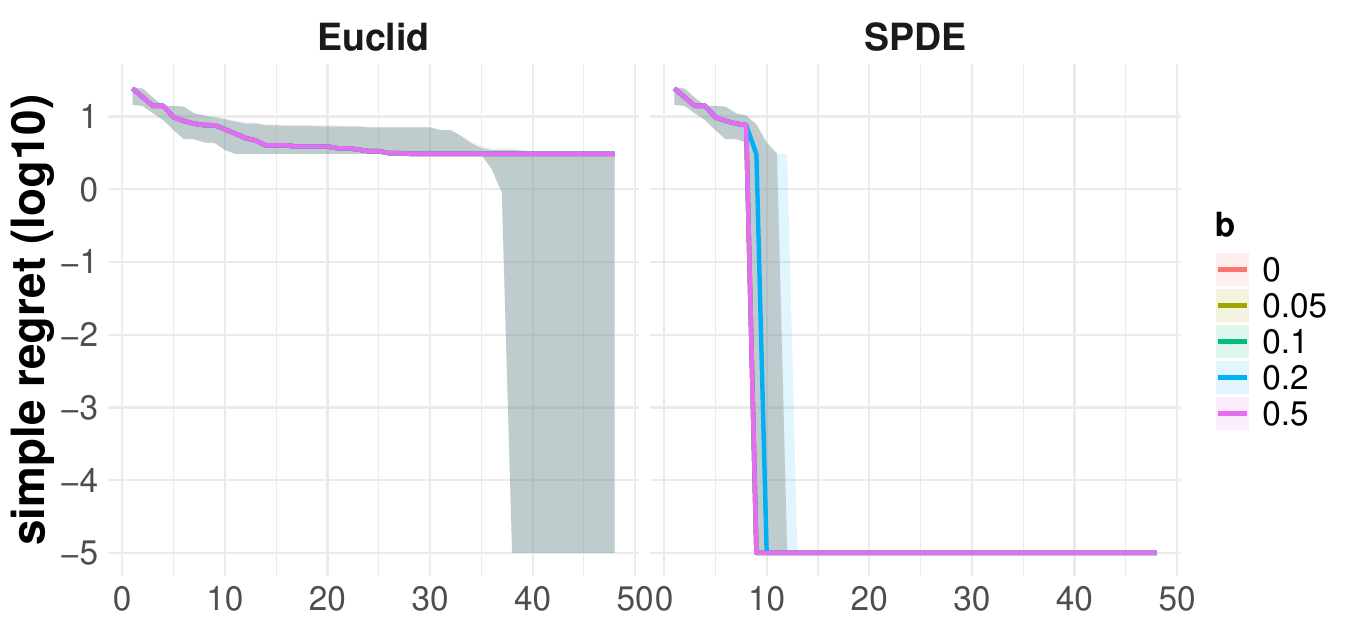}
\end{subfigure}\hfill
\begin{subfigure}{0.32\textwidth}
  \centering
  \includegraphics[width=\linewidth]{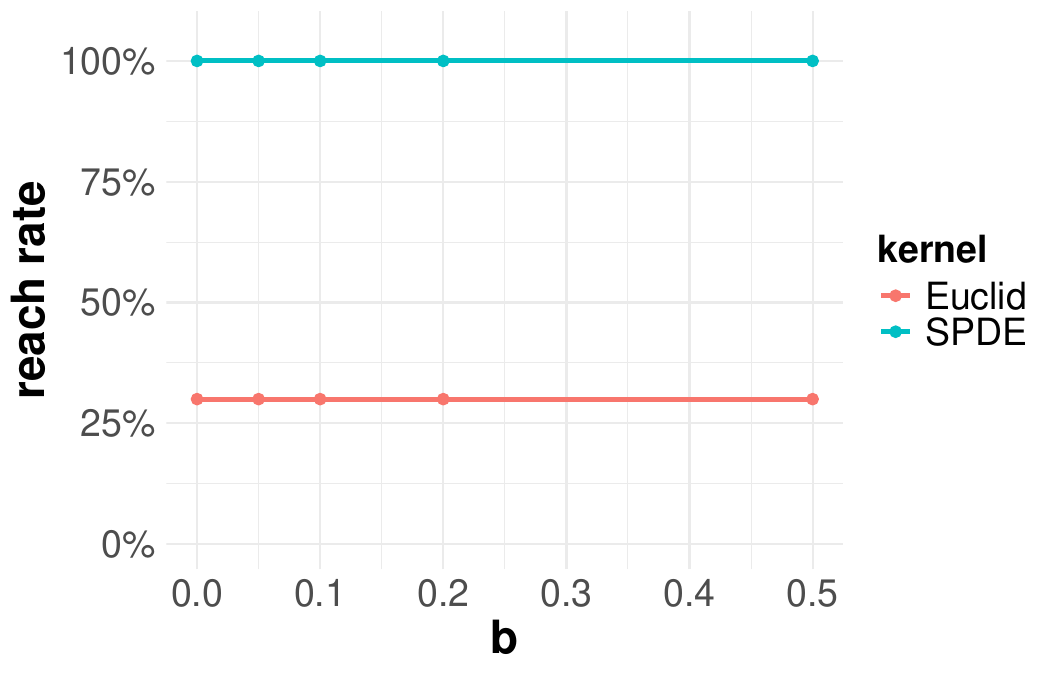}
\end{subfigure}
\begin{subfigure}{0.32\textwidth}
  \centering
  \includegraphics[width=\linewidth]{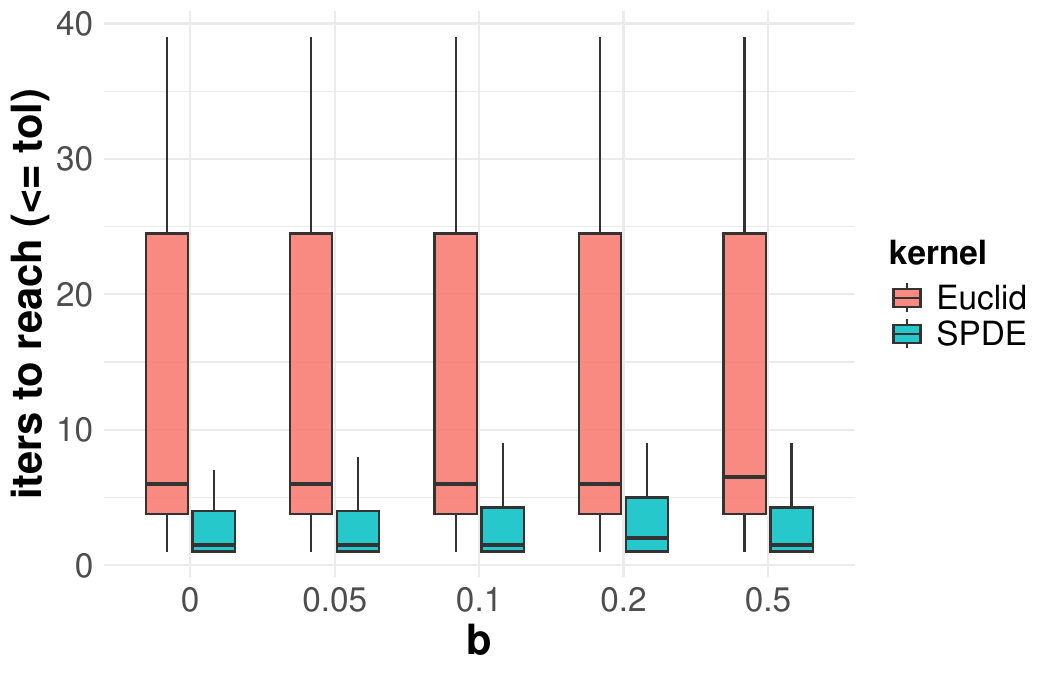}
\end{subfigure}\hfill

\vspace{0.6em}

\begin{minipage}{\textwidth}\centering
  \subcaption*{Setting 3: L\'evy}
\end{minipage}

\begin{subfigure}{0.32\textwidth}
  \centering
  \includegraphics[width=\linewidth]{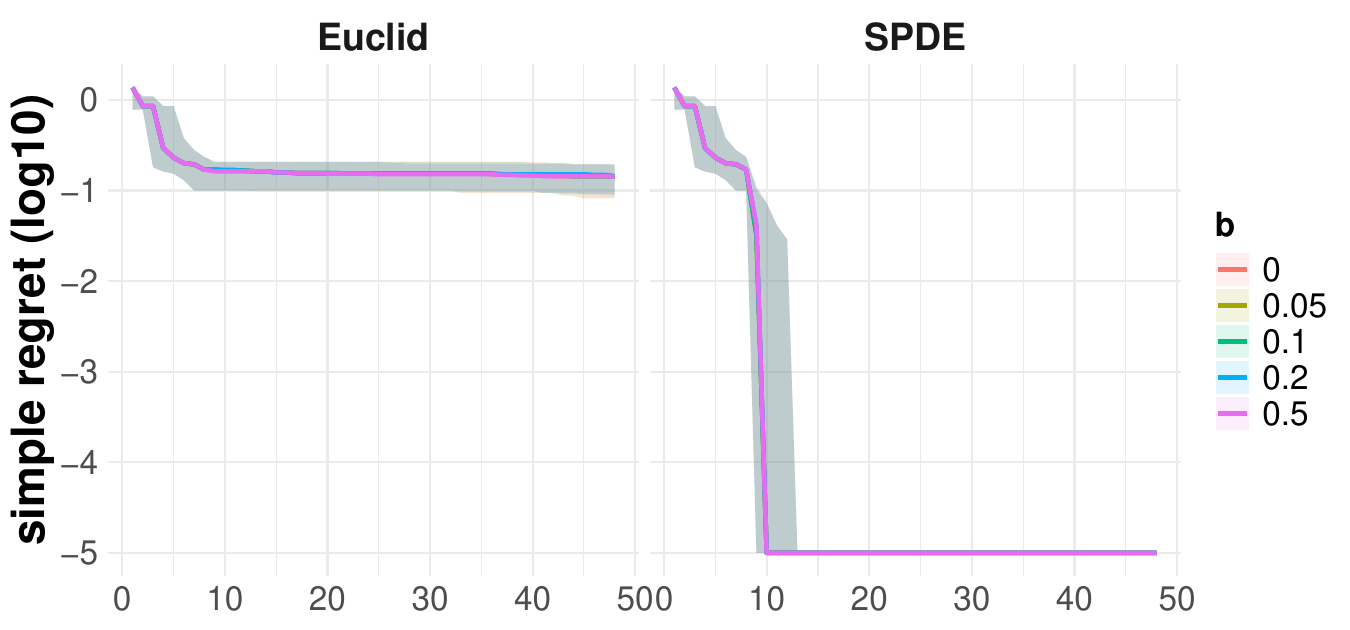}
  \caption{Simple regret}
\end{subfigure}\hfill
\begin{subfigure}{0.32\textwidth}
  \centering
  \includegraphics[width=\linewidth]{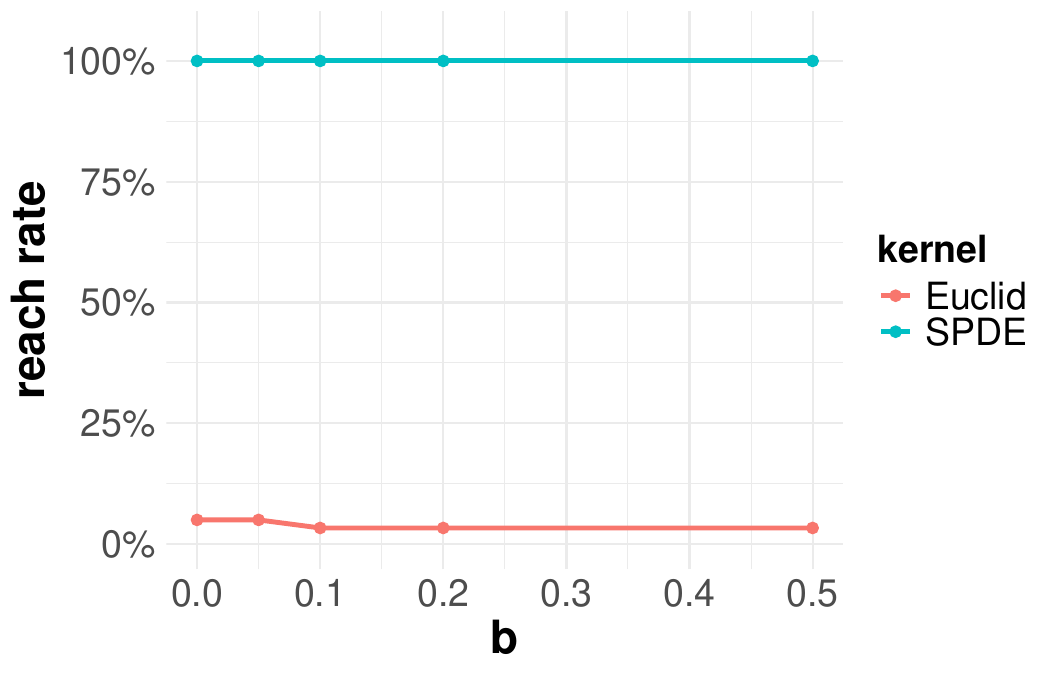}
  \caption{Reach rate}
\end{subfigure}
\begin{subfigure}{0.32\textwidth}
  \centering
  \includegraphics[width=\linewidth]{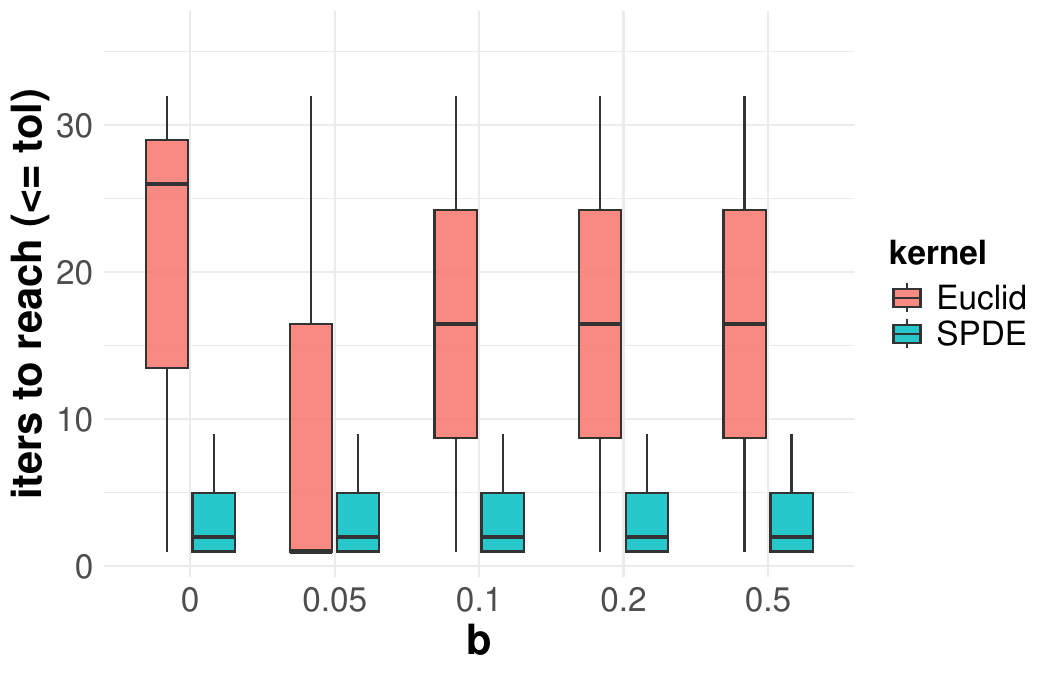}
  \caption{Iterations to $\mathsf{Tol}$}
\end{subfigure}\hfill

\caption{ Sensitivity to the misspecification-correction parameter $b$ for Algorithm~\ref{alg: benchmarks} (\textbf{IGP-UCB}).
Columns show: (a) simple regret across different initializations, with the median in a solid line and the shaded region representing the central $50\%$ band; (b) reach rate; and (c) iterations to reach $\mathsf{Tol}$.
Rows correspond to the Ackley, Rastrigin, and L\'evy benchmarks on the open-rectangle metric graph.}
\label{fig:benchmarks_IGP-UCB_b_sensitivity}
\end{figure}

\begin{figure}[H]
\centering

\begin{minipage}{\textwidth}\centering
  \subcaption*{Setting 1: Ackley}
\end{minipage}

\begin{subfigure}{0.32\textwidth}
  \centering
  \includegraphics[width=\linewidth]{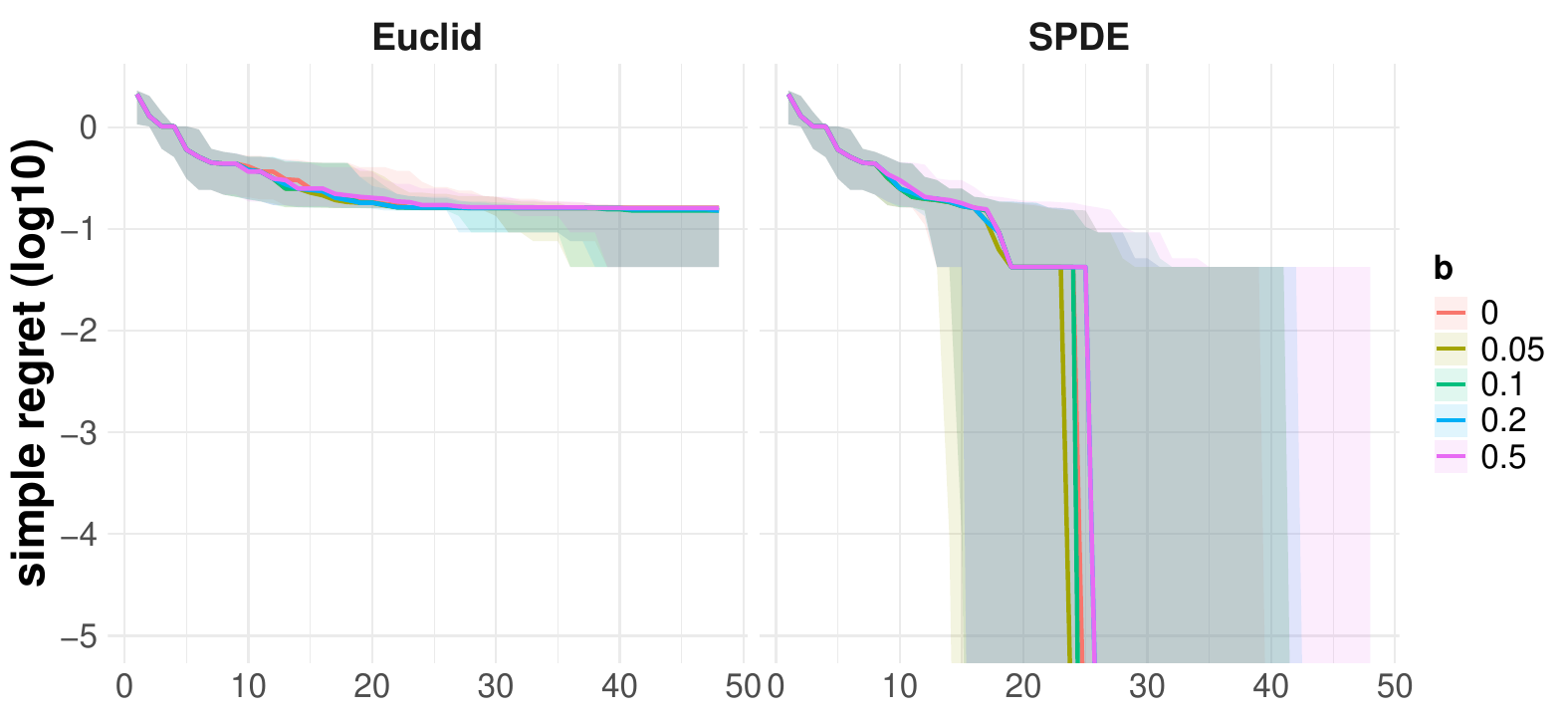}
\end{subfigure}\hfill
\begin{subfigure}{0.32\textwidth}
  \centering
  \includegraphics[width=\linewidth]{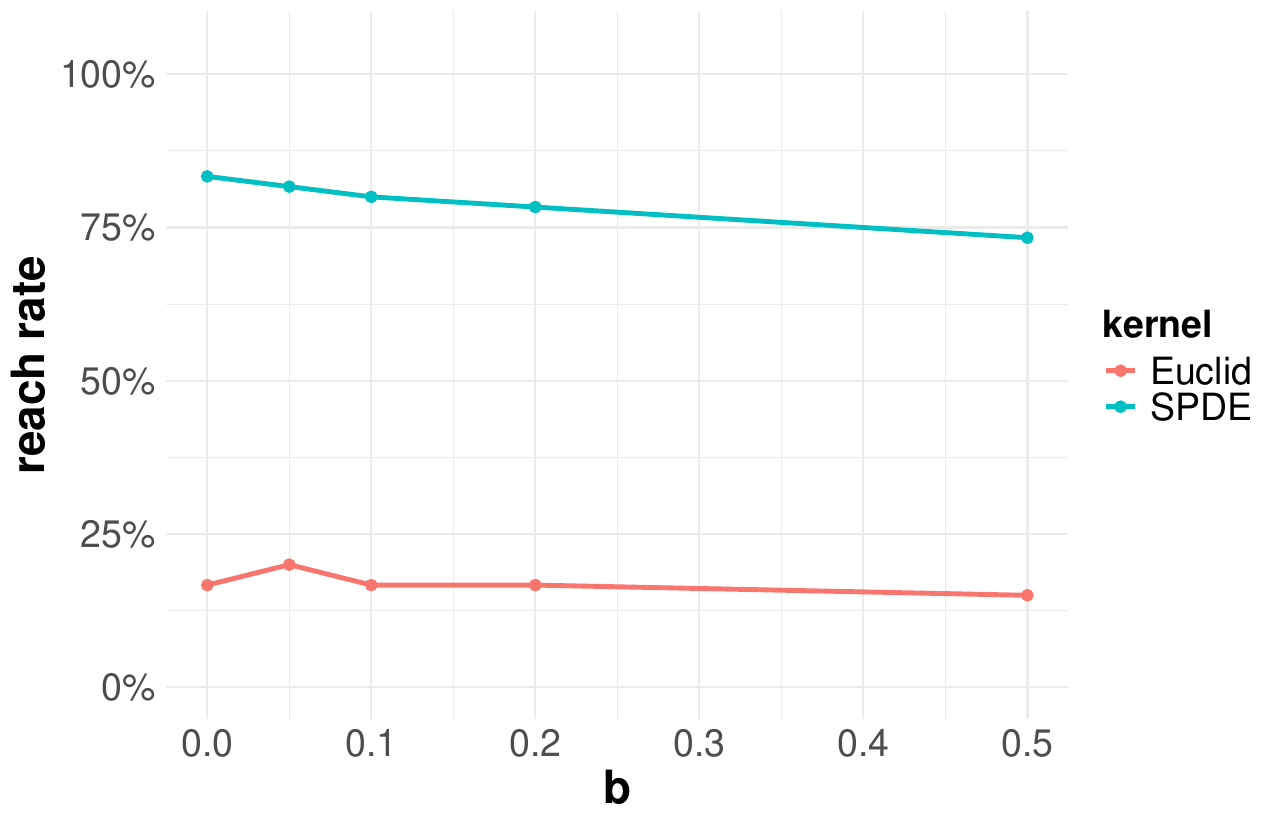}
\end{subfigure}\hfill
\begin{subfigure}{0.32\textwidth}
  \centering
  \includegraphics[width=\linewidth]{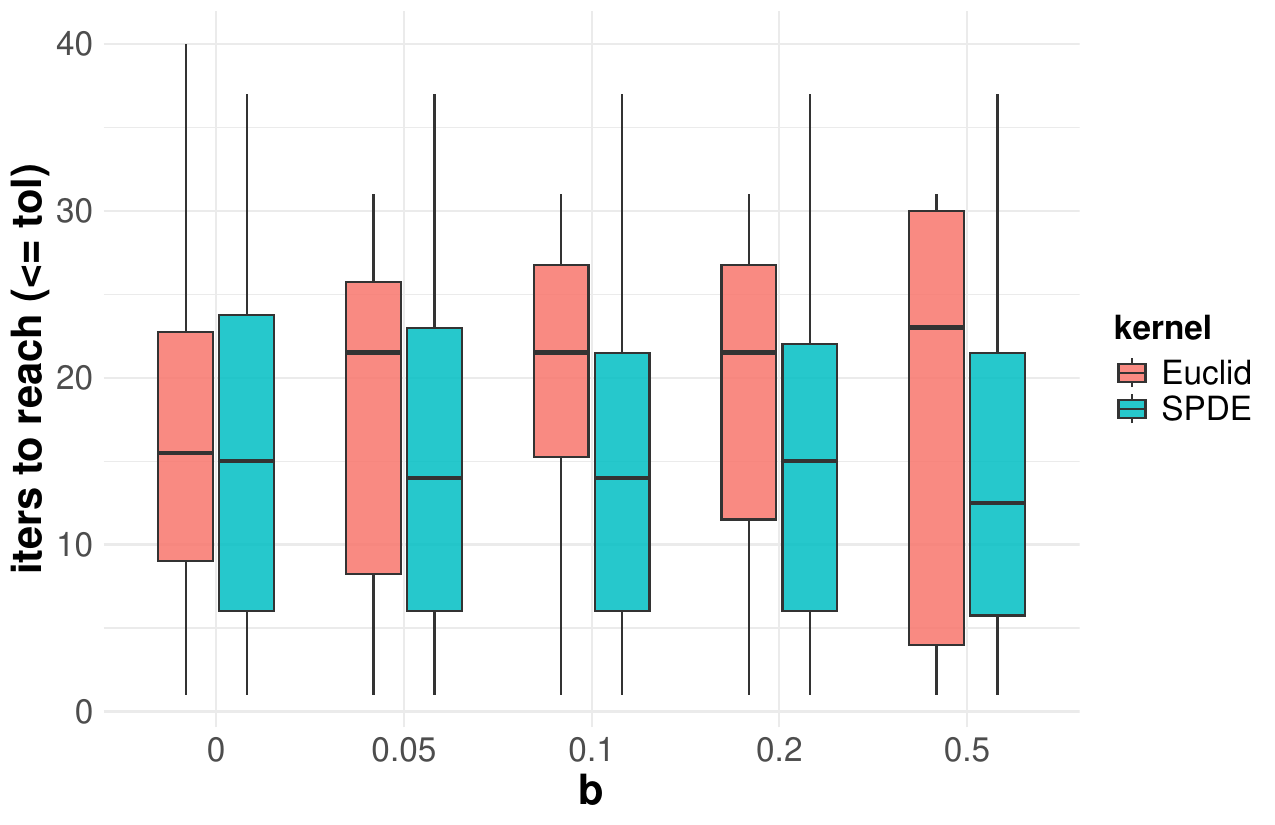}
\end{subfigure}

\begin{minipage}{\textwidth}\centering
  \subcaption*{Setting 2: Rastrigin}
\end{minipage}

\begin{subfigure}{0.32\textwidth}
  \centering
  \includegraphics[width=\linewidth]{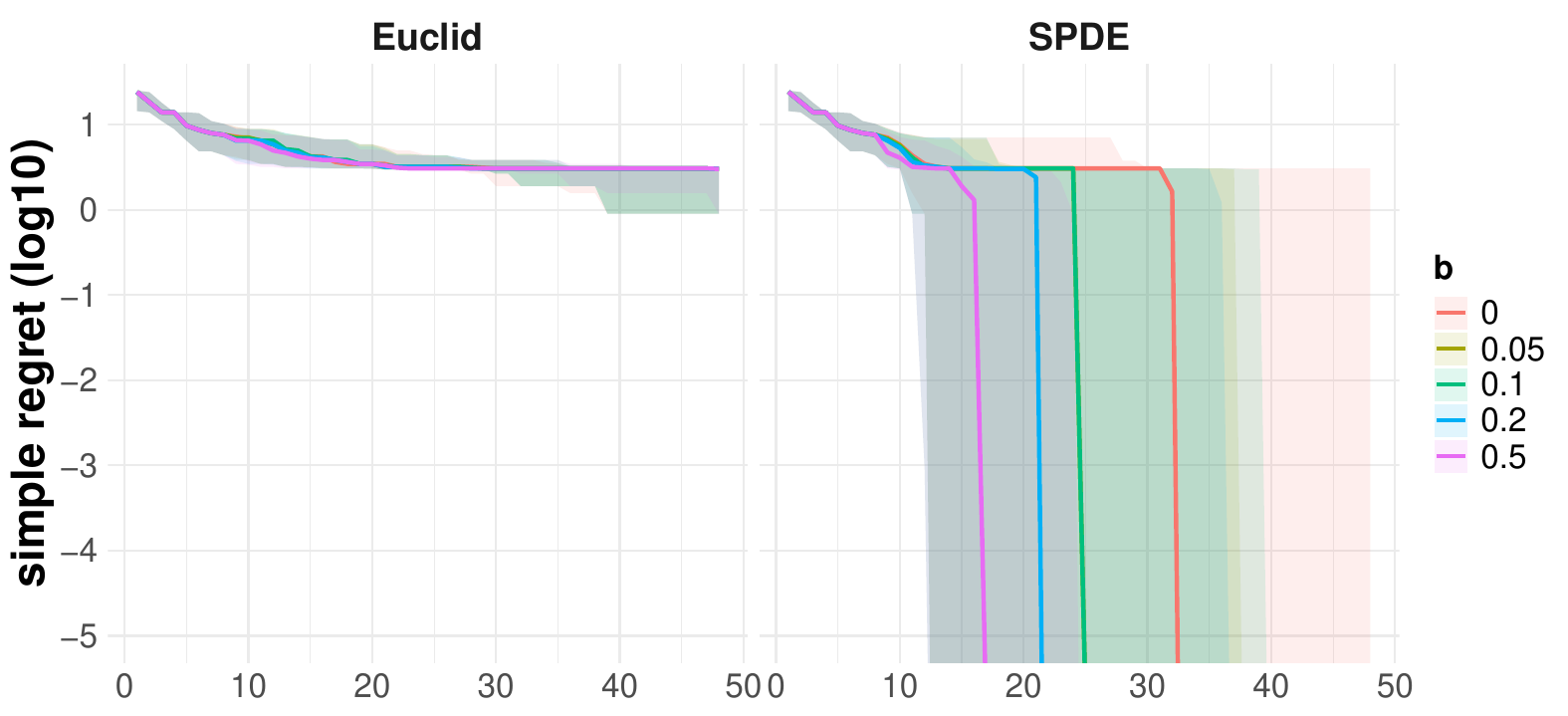}
\end{subfigure}\hfill
\begin{subfigure}{0.32\textwidth}
  \centering
  \includegraphics[width=\linewidth]{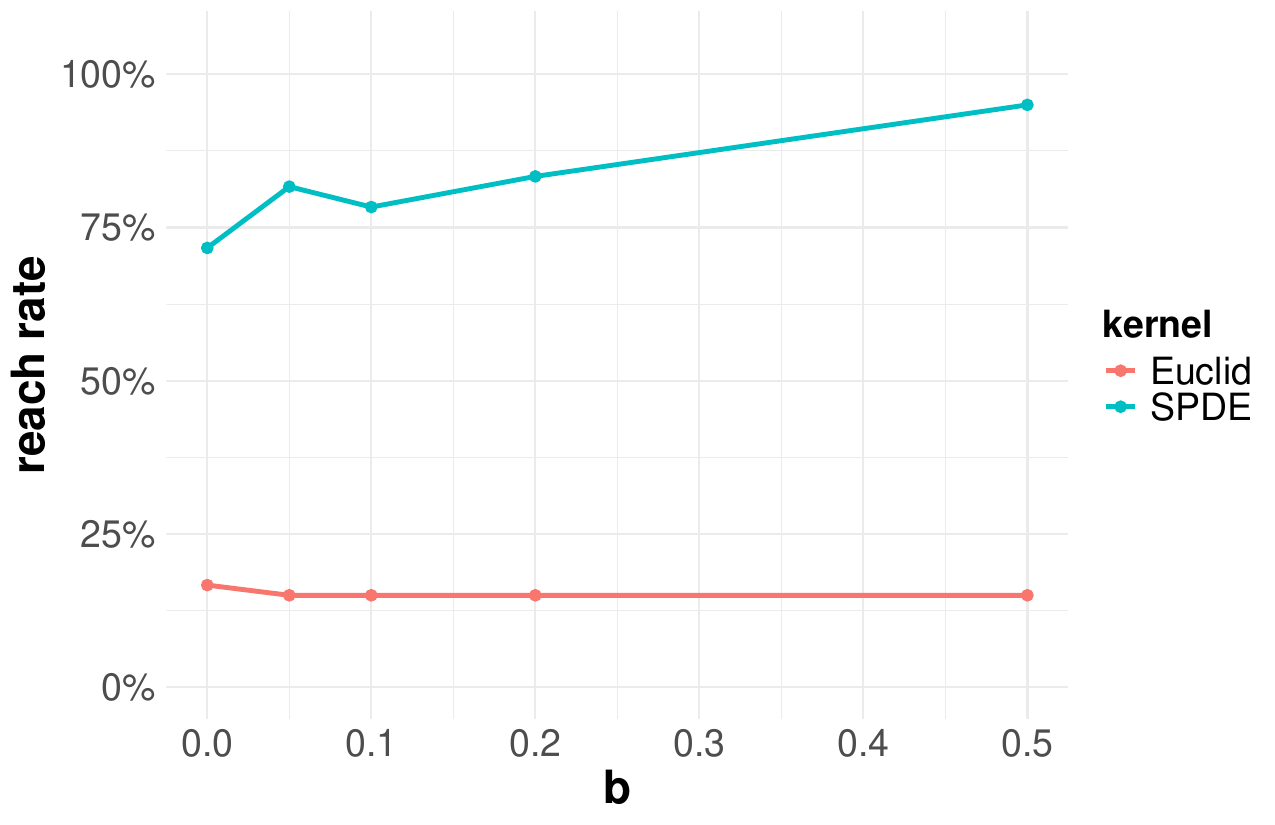}
\end{subfigure}\hfill
\begin{subfigure}{0.32\textwidth}
  \centering
  \includegraphics[width=\linewidth]{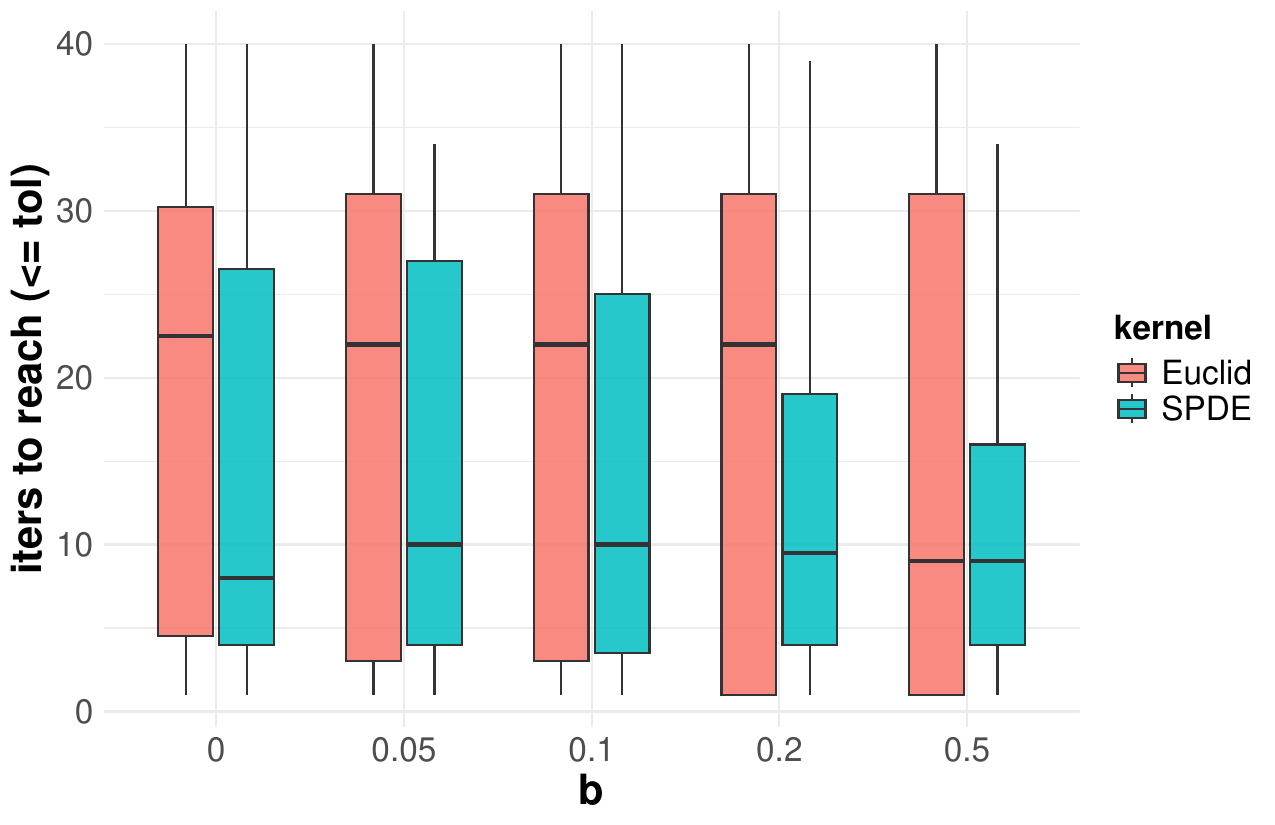}
\end{subfigure}

\vspace{0.6em}

\begin{minipage}{\textwidth}\centering
  \subcaption*{Setting 3: L\'evy}
\end{minipage}

\begin{subfigure}{0.32\textwidth}
  \centering
  \includegraphics[width=\linewidth]{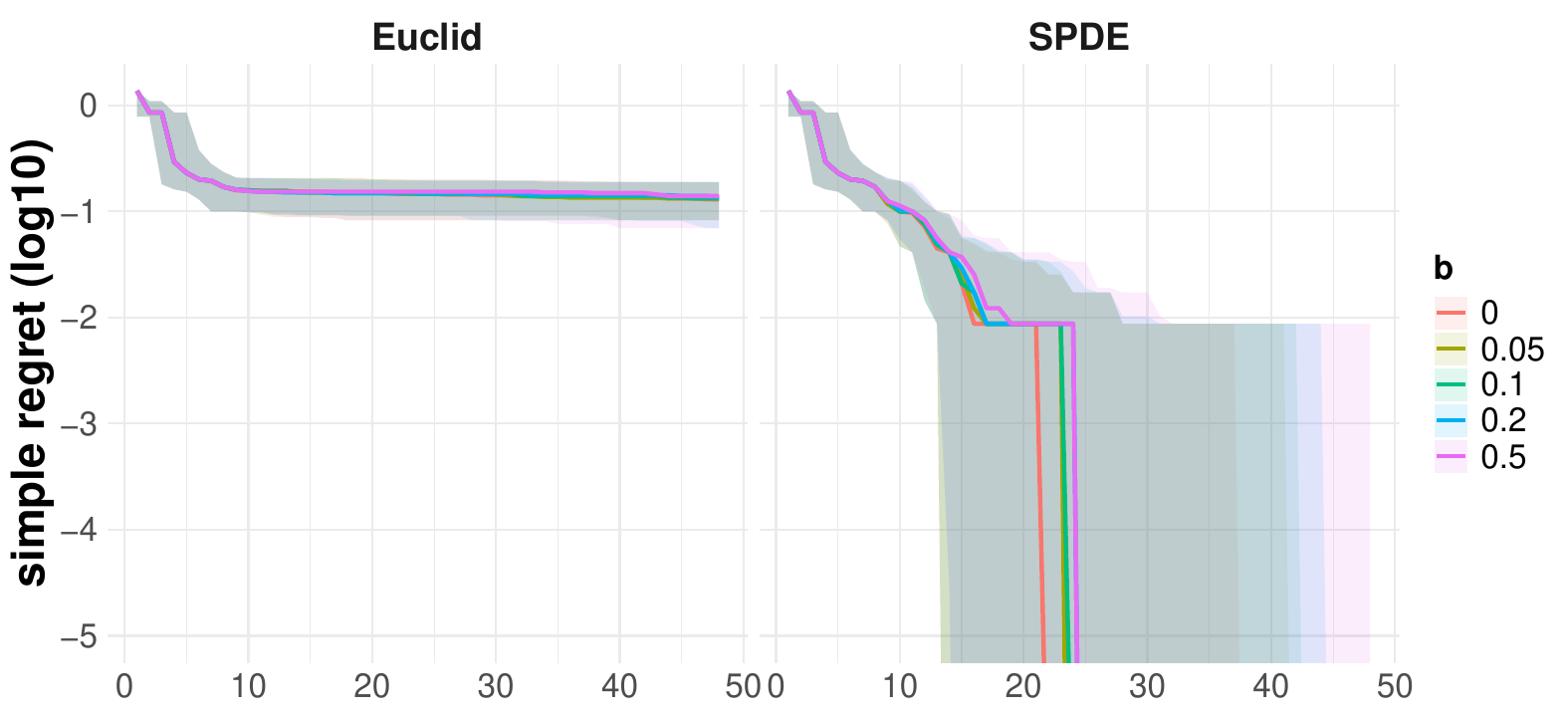}
  \caption{Simple regret}
\end{subfigure}\hfill
\begin{subfigure}{0.32\textwidth}
  \centering
  \includegraphics[width=\linewidth]{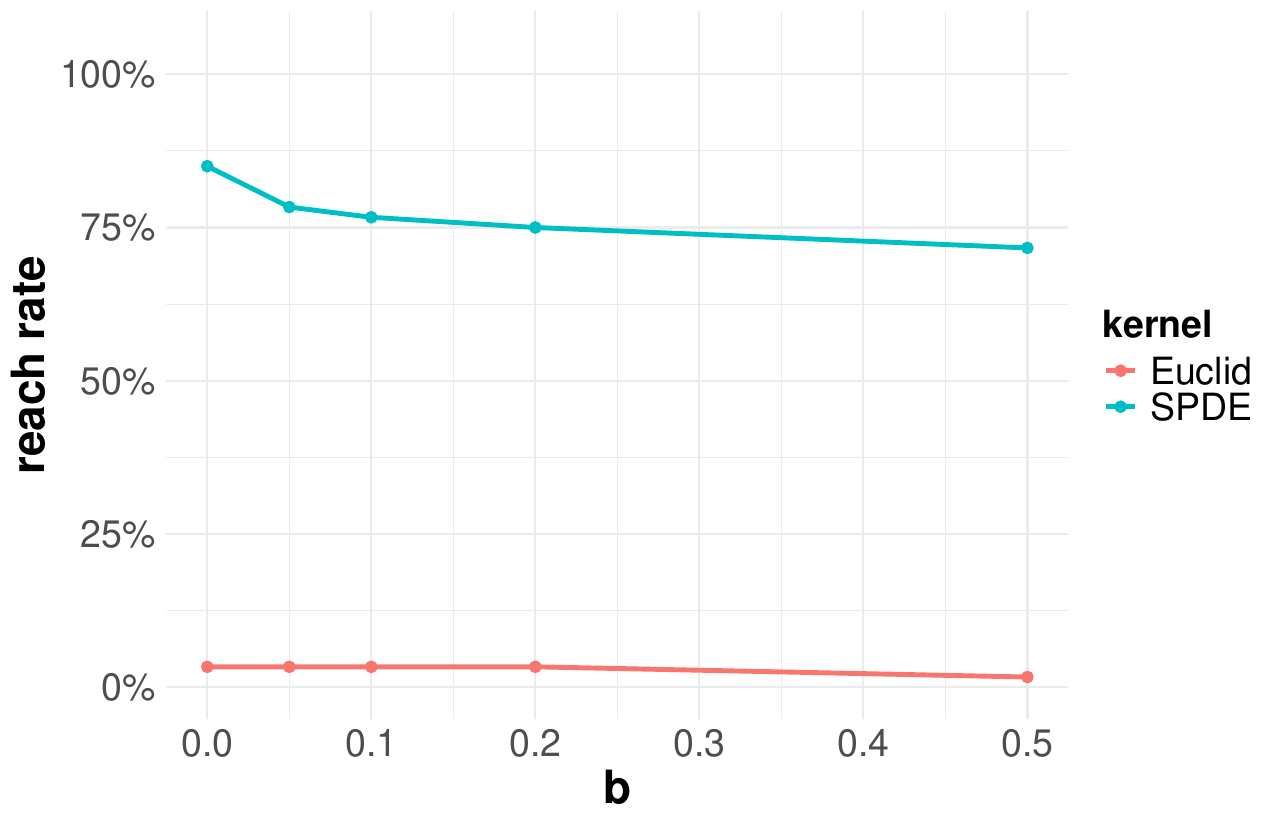}
  \caption{Reach rate}
\end{subfigure}\hfill
\begin{subfigure}{0.32\textwidth}
  \centering
  \includegraphics[width=\linewidth]{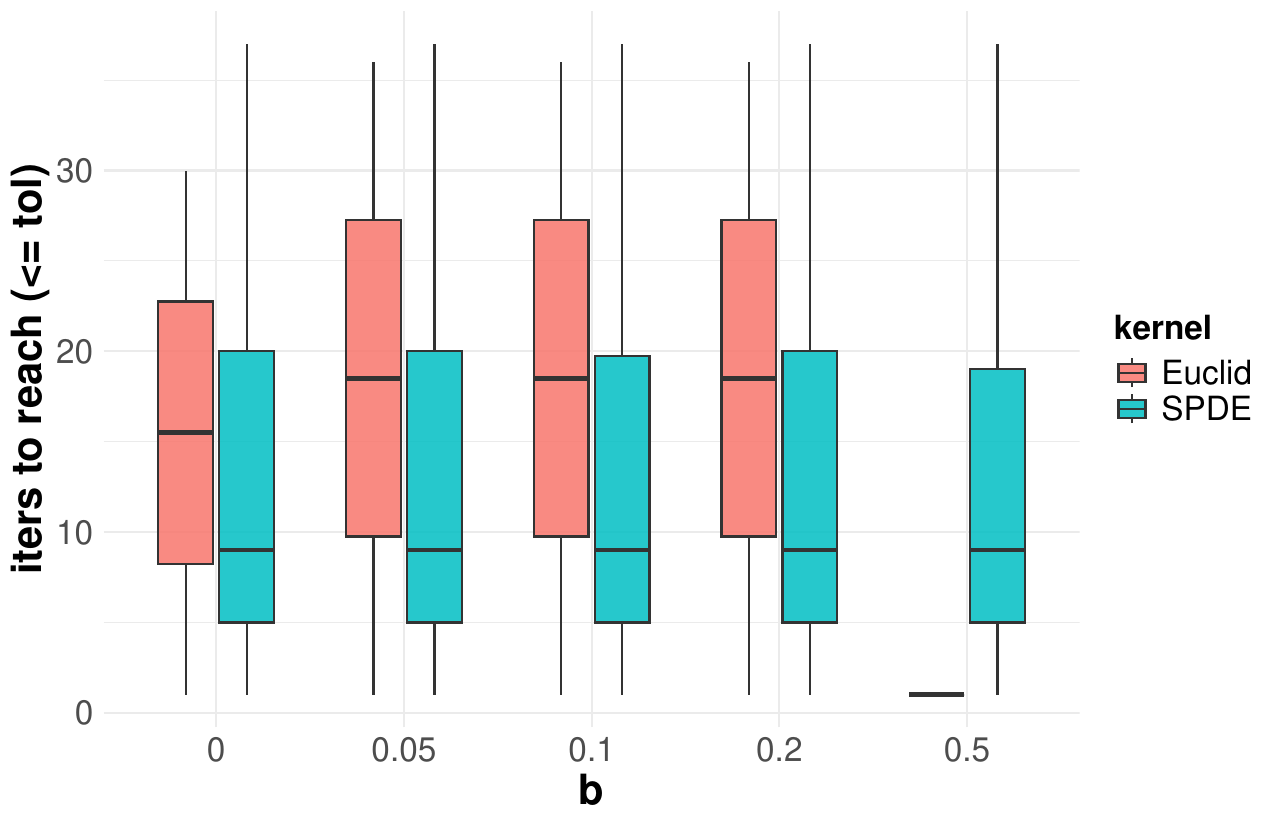}
  \caption{Iterations to $\mathsf{Tol}$}
\end{subfigure}

\caption{Sensitivity to the misspecification-correction parameter $b$ in Algorithm~\ref{alg: benchmarks} (\textbf{GP-TS}). The layout matches Figure~\ref{fig:benchmarks_IGP-UCB_b_sensitivity}.}
\label{fig:benchmarks_GP-TS_b_sensitivity}
\end{figure}

\paragraph{Empirical sensitivity.}
Figures~\ref{fig:benchmarks_IGP-UCB_b_sensitivity} and~\ref{fig:benchmarks_GP-TS_b_sensitivity} summarize the sensitivity of Algorithm~\ref{alg: benchmarks} to the  parameter $b$ over the grid
\(
b\in\{0,0.05,0.1,0.2,0.5\}.
\)
For IGP-UCB, the results are essentially insensitive to $b$ in this setting.
In particular, under the SPDE kernel, the reach rate is consistently $100\%$ across all three benchmarks, and the mean iterations-to-$\mathsf{Tol}$ vary only slightly as $b$ increases.
For the Euclidean kernel, the reach rates remain low and the iterations fluctuate mildly with no systematic dependence on $b$.

For GP-TS, the results are slightly more sensitive to $b$, with benchmark-dependent changes in reach rates and iterations for the SPDE kernel, while the Euclidean kernel remains broadly unchanged across the tested values.
A plausible explanation is that TS draws a single posterior sample path at each iteration, so its exploration is driven more directly by posterior uncertainty and can therefore respond more noticeably to the additional inflation induced by the $b$-term.
Overall, however, the qualitative conclusions are stable across the tested grid: the SPDE kernel consistently achieves substantially higher reach rates and typically fewer iterations than the baseline Euclidean kernel for both IGP-UCB and GP-TS, demonstrating the effectiveness of our approach throughout the considered range of $b$.

\subsection{Sensitivity to the Initialization Size across Discretizations}\label{app:init-sensitivity}

To provide practical guidance for choosing the initialization size $N_{\mathrm{init}}$, we perform a sensitivity study on the \emph{normalized L\'evy} benchmark defined in Subsection~\ref{ssec:benchmarks} on the open-rectangle graph (cf.\ Figure~\ref{fig:true-benchmarks-openrectangle}).
We consider three discretization levels of the same graph: a \emph{coarse} mesh with $h=0.5$ ($N_h\approx 150$), the \emph{baseline} mesh with $h=0.25$ ($N_h\approx 300$), and a \emph{fine} mesh with $h=0.15$ ($N_h\approx 500$).
For each discretization, we vary the maximin initialization size over
$N_{\mathrm{init}}\in\{1,2,4,8,10,12,16\}$ and run $60$ independent Monte Carlo replicates (different random seeds inducing different maximin initialization designs) for both GP-TS and IGP-UCB, using the same post-initialization horizon as in the main experiments (cf. Subsection~\ref{ssec:numerical results}).

We report (i) the \emph{reach rate} and (ii) the \emph{iterations-to-$\mathsf{Tol}$}, as defined in Section~\ref{sec:Numerics}.
Recall that \emph{iterations-to-$\mathsf{Tol}$} counts only acquisition steps \emph{after} initialization and is summarized over successful runs.
Together, these metrics isolate the impact of $N_{\mathrm{init}}$ and illustrate how the amount of initialization should be adjusted as the discretization is refined.

\begin{figure}[t]
\centering

\begin{minipage}{\textwidth}\centering
  \subcaption*{Reach rate}
\end{minipage}

\begin{subfigure}{0.32\textwidth}
  \centering
  \includegraphics[width=\linewidth]{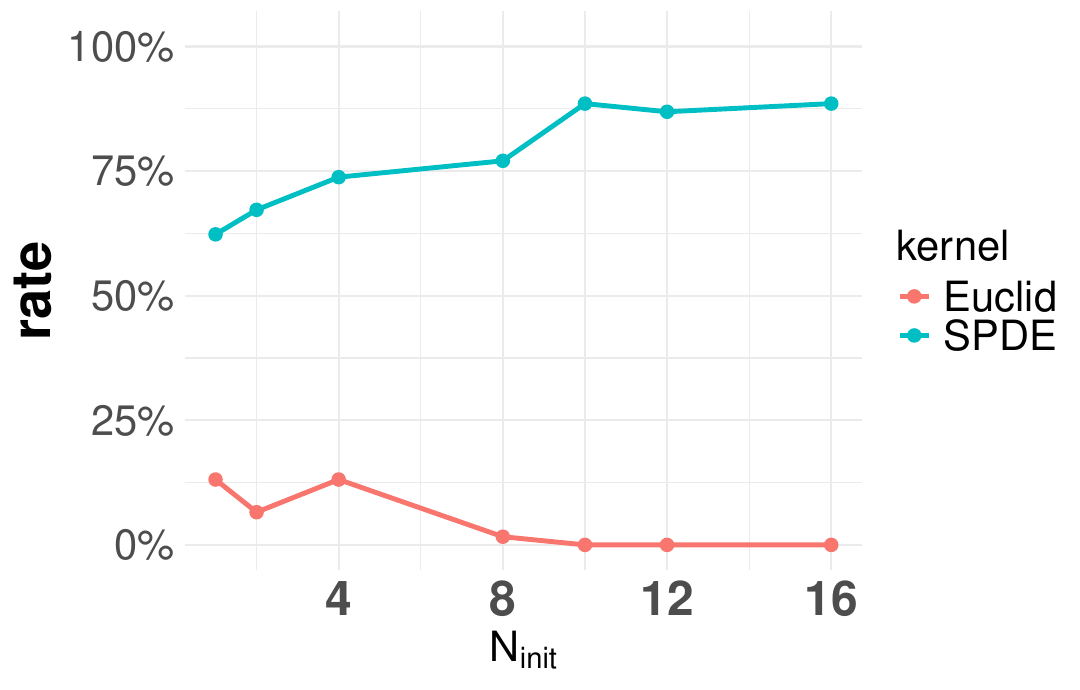}
\end{subfigure}\hfill
\begin{subfigure}{0.32\textwidth}
  \centering
  \includegraphics[width=\linewidth]{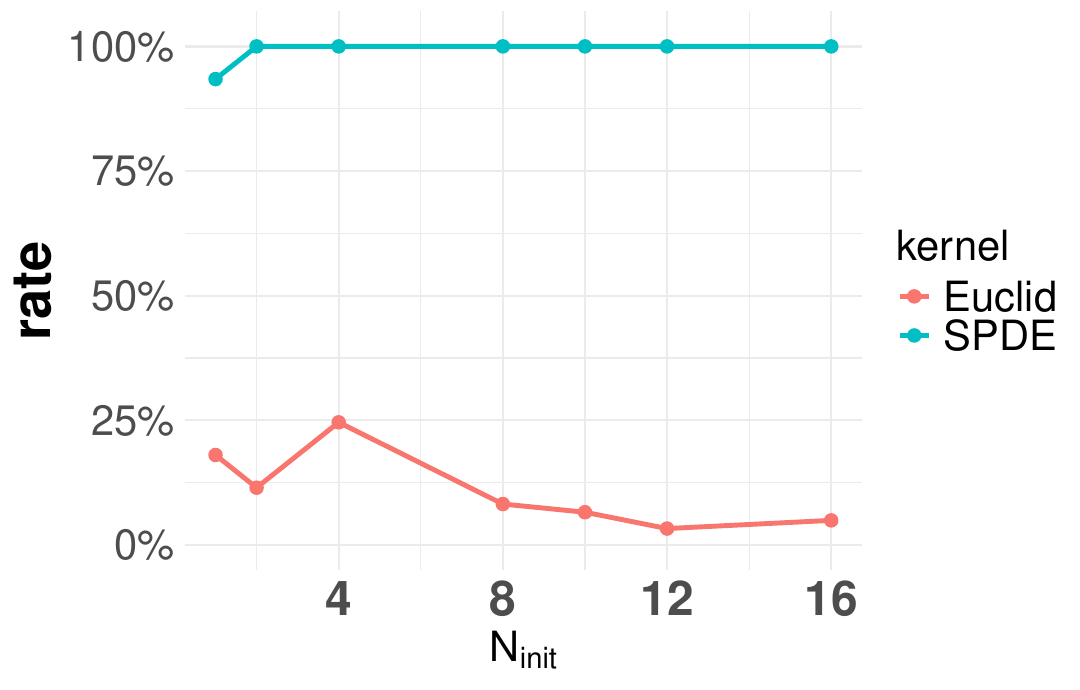}
\end{subfigure}\hfill
\begin{subfigure}{0.32\textwidth}
  \centering
  \includegraphics[width=\linewidth]{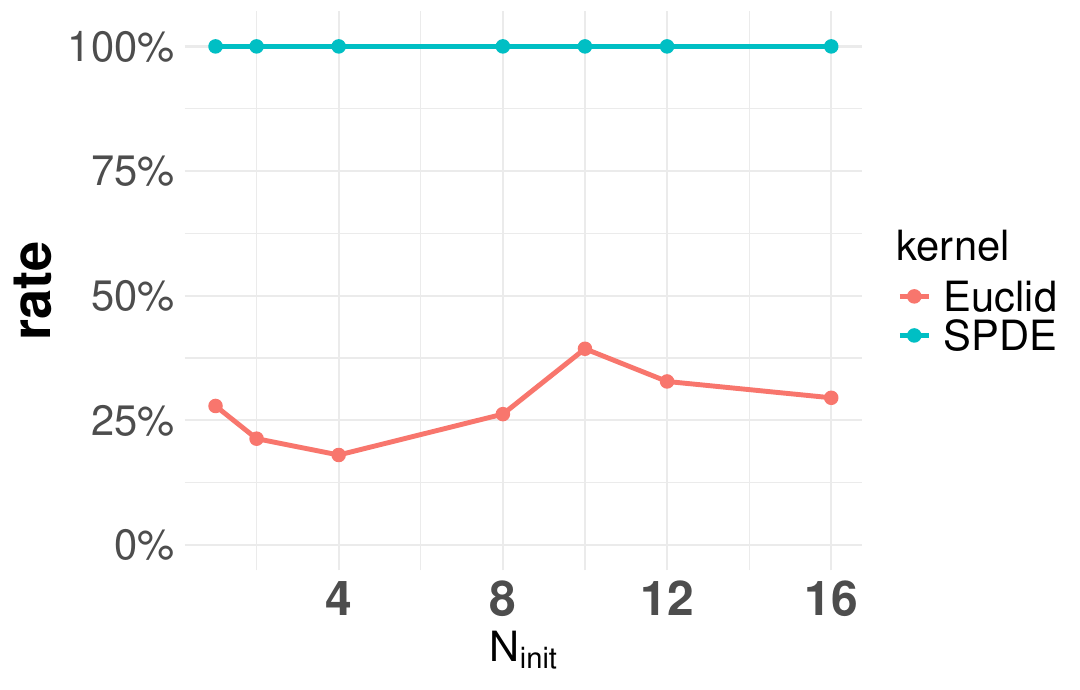}
\end{subfigure}

\begin{minipage}{\textwidth}\centering
  \subcaption*{Iterations to $\mathsf{Tol}$}
\end{minipage}

\begin{subfigure}{0.32\textwidth}
  \centering
  \includegraphics[width=\linewidth]{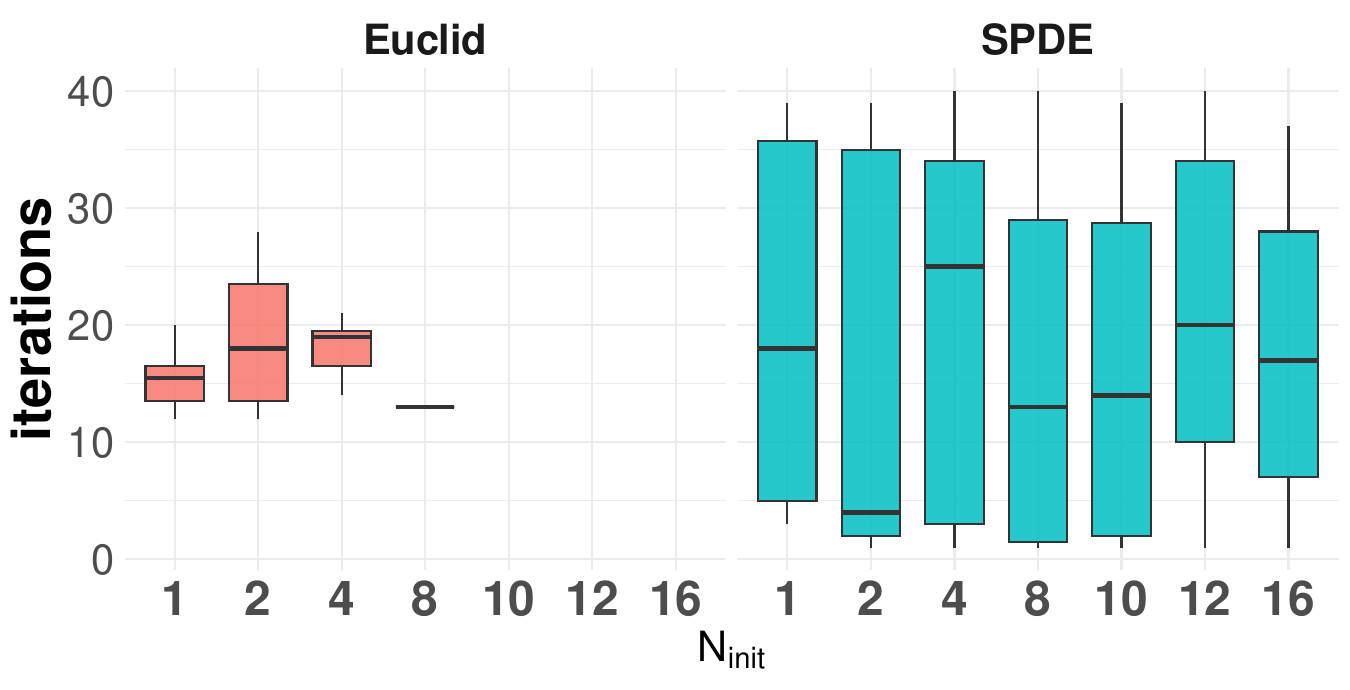}
\end{subfigure}\hfill
\begin{subfigure}{0.32\textwidth}
  \centering
  \includegraphics[width=\linewidth]{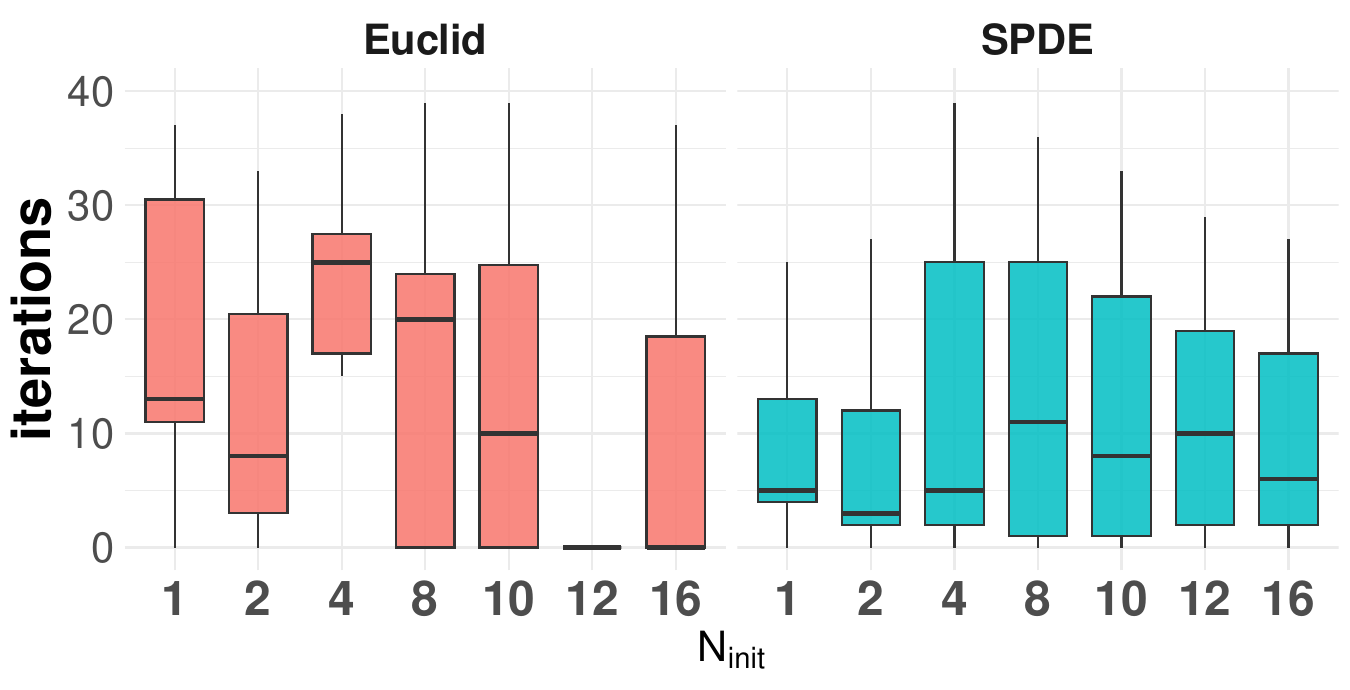}
\end{subfigure}\hfill
\begin{subfigure}{0.32\textwidth}
  \centering
  \includegraphics[width=\linewidth]{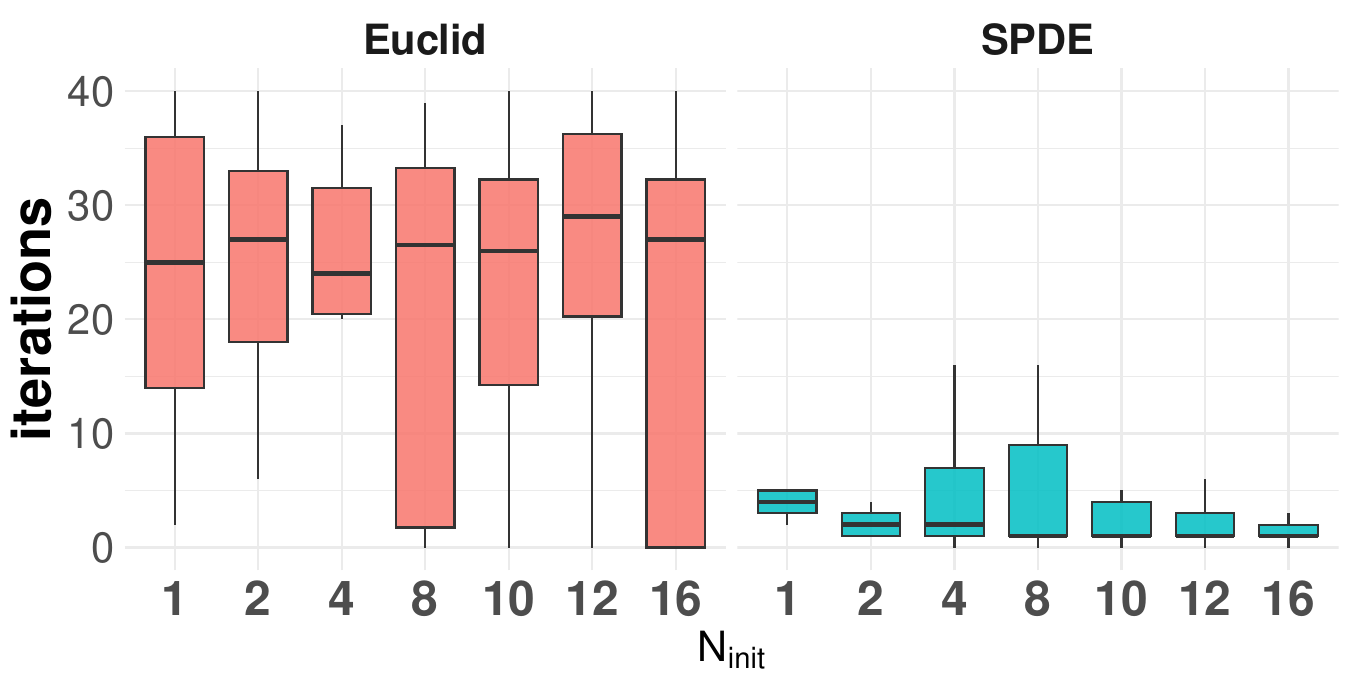}
\end{subfigure}

\vspace{0.6em}
\caption{Sensitivity to the initialization size $N_{\mathrm{init}}$ for Algorithm~\ref{alg: benchmarks} (\textbf{IGP-UCB}) on the normalized L\'evy benchmark.
Columns correspond to: (a) fine discretization ($h=0.15$, $N_h\approx 500$), (b) baseline discretization ($h=0.25$, $N_h\approx 300$), and (c) coarse discretization ($h=0.5$, $N_h\approx 150$).}
\label{fig:benchmarks_IGP-UCB_init_sensitivity}
\end{figure}

\begin{figure}[t]
\centering

\begin{minipage}{\textwidth}\centering
  \subcaption*{Reach rate}
\end{minipage}

\begin{subfigure}{0.32\textwidth}
  \centering
  \includegraphics[width=\linewidth]{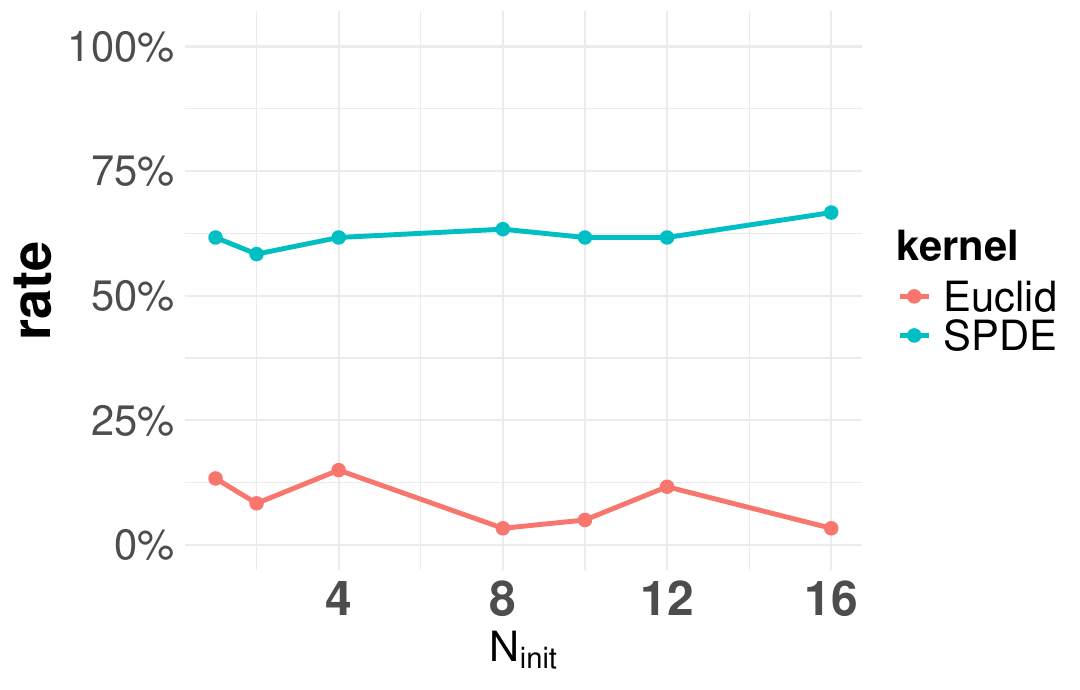}
\end{subfigure}\hfill
\begin{subfigure}{0.32\textwidth}
  \centering
  \includegraphics[width=\linewidth]{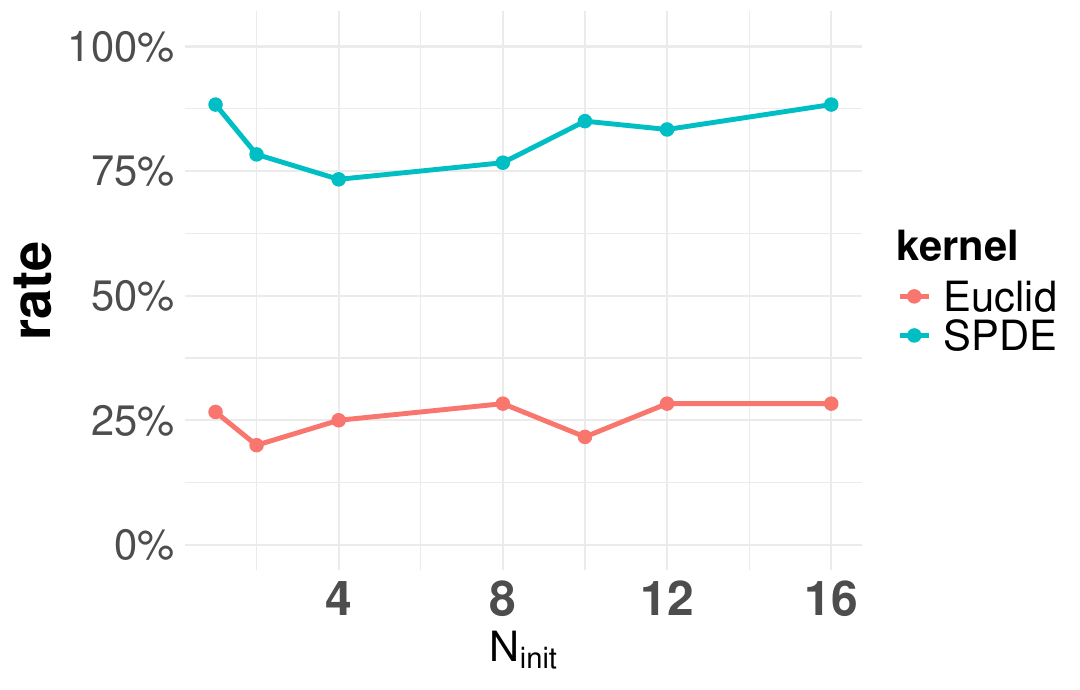}
\end{subfigure}\hfill
\begin{subfigure}{0.32\textwidth}
  \centering
  \includegraphics[width=\linewidth]{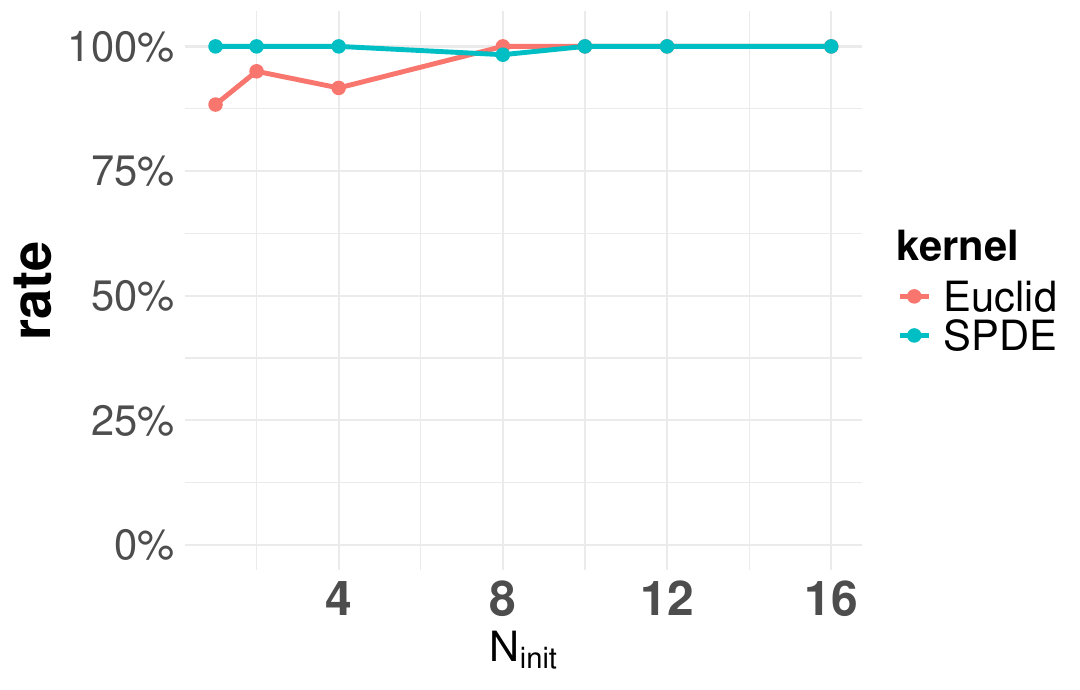}
\end{subfigure}

\begin{minipage}{\textwidth}\centering
  \subcaption*{Iterations to $\mathsf{Tol}$}
\end{minipage}

\begin{subfigure}{0.32\textwidth}
  \centering
  \includegraphics[width=\linewidth]{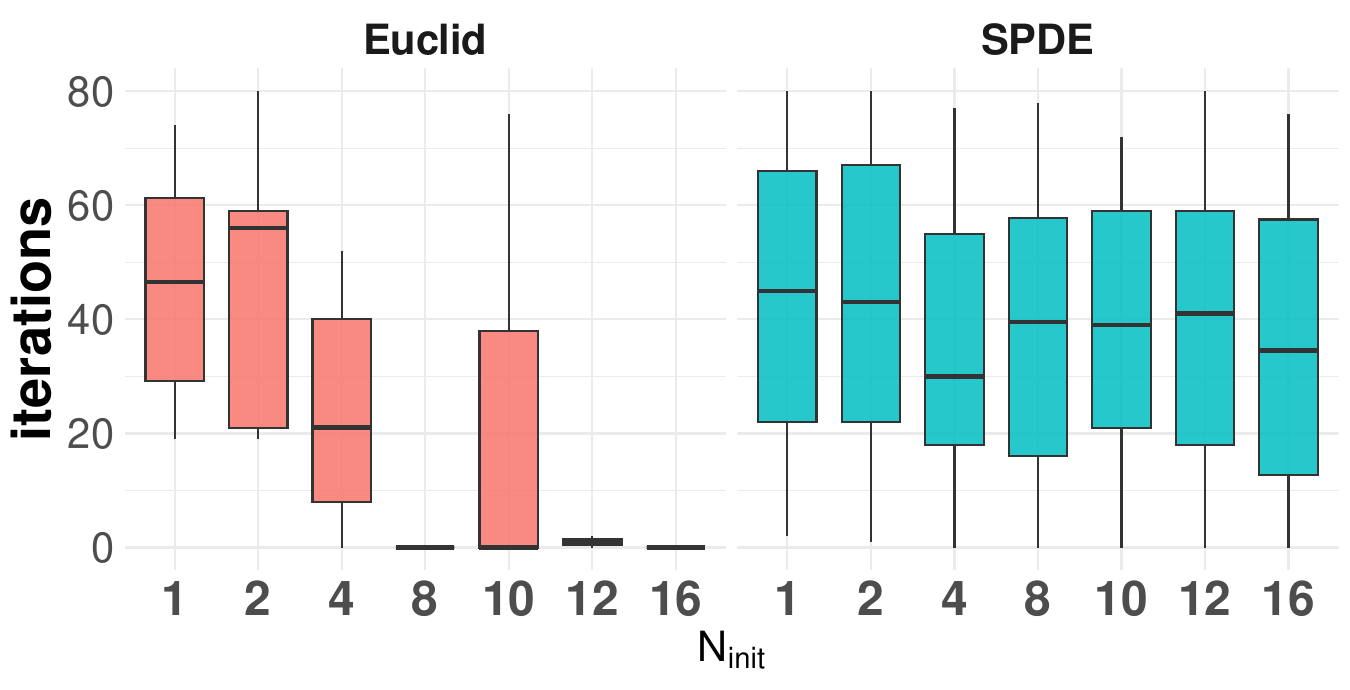}
\end{subfigure}\hfill
\begin{subfigure}{0.32\textwidth}
  \centering
  \includegraphics[width=\linewidth]{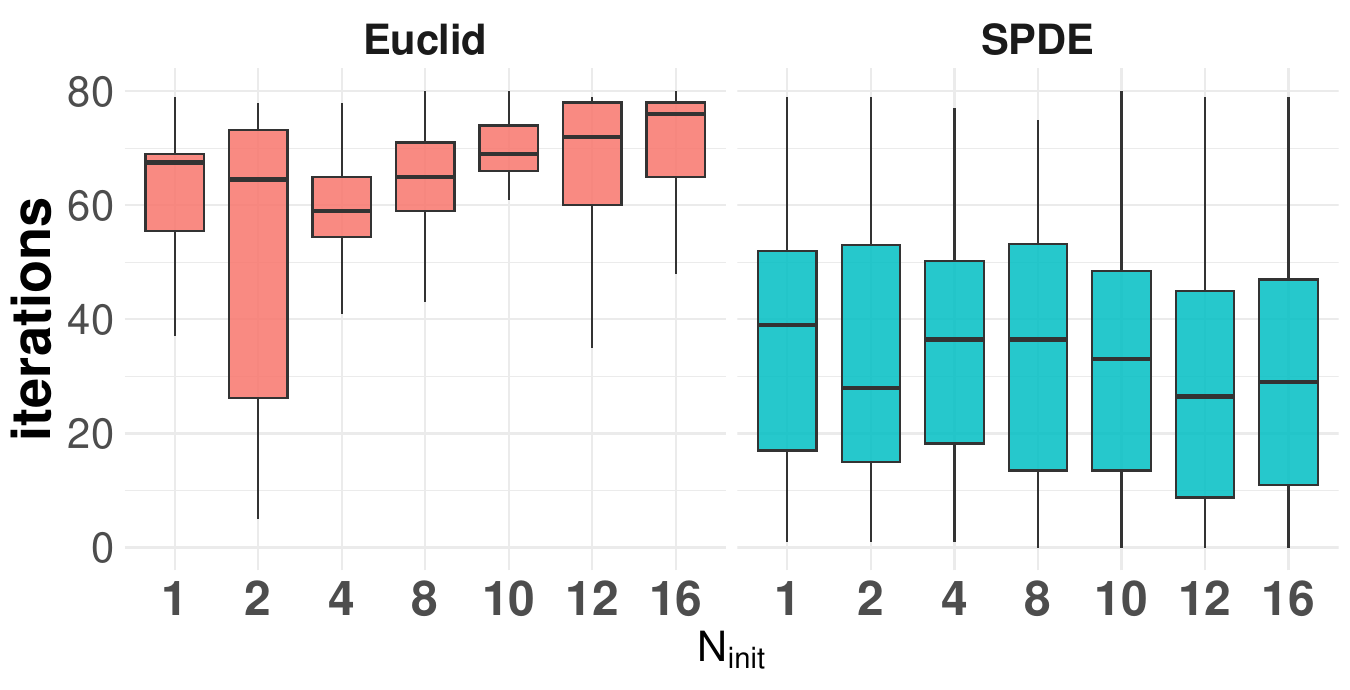}
\end{subfigure}\hfill
\begin{subfigure}{0.32\textwidth}
  \centering
  \includegraphics[width=\linewidth]{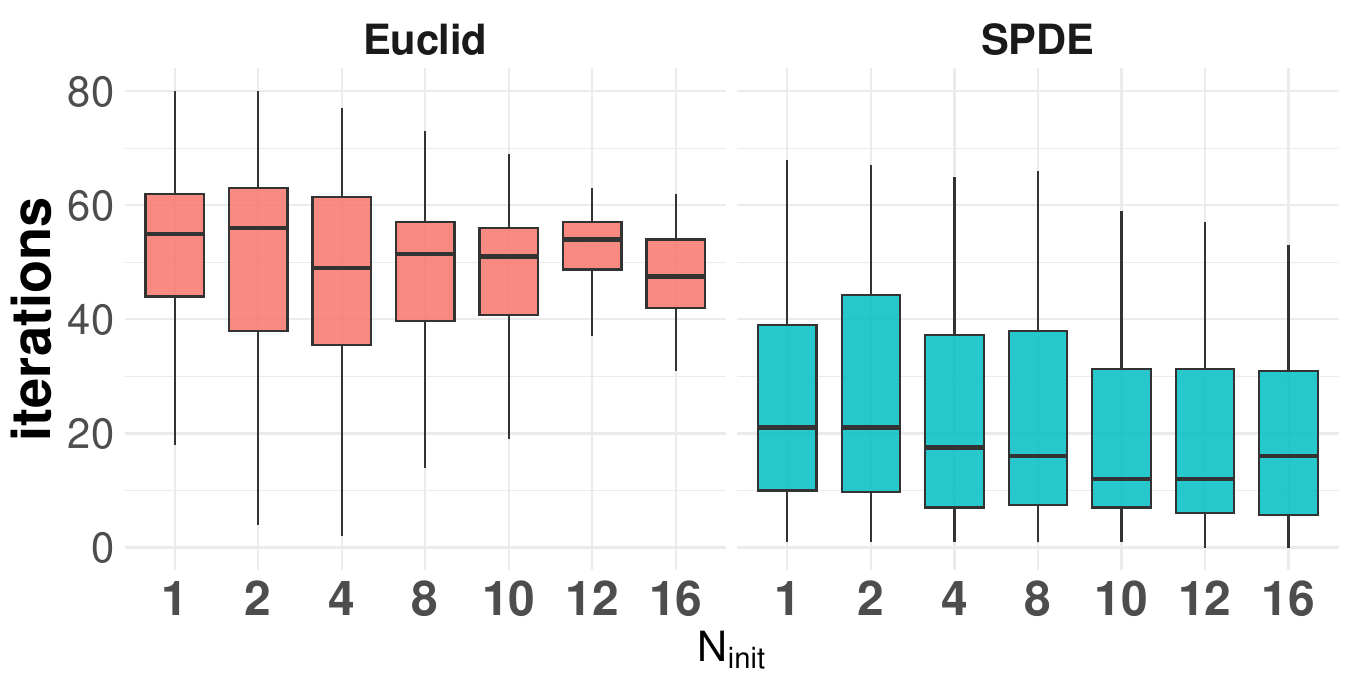}
\end{subfigure}

\vspace{0.6em}
\caption{Sensitivity to the initialization size $N_{\mathrm{init}}$ for Algorithm~\ref{alg: benchmarks} (\textbf{GP-TS}) on the normalized L\'evy benchmark.
Same display as Figure~\ref{fig:benchmarks_IGP-UCB_init_sensitivity}.}
\label{fig:benchmarks_GP-TS_init_sensitivity}
\end{figure}

Figures~\ref{fig:benchmarks_IGP-UCB_init_sensitivity}--\ref{fig:benchmarks_GP-TS_init_sensitivity} exhibit consistent qualitative behavior for both IGP-UCB and GP-TS across all three discretizations.
As $N_{\mathrm{init}}$ increases from very small values, the reach rate improves for both methods, indicating that the maximin initialization is providing broader spatial coverage and avoiding noticeably worse early-stage behavior.
Once $N_{\mathrm{init}}$ enters a moderately space-filling regime, the reach-rate curves flatten and the iterations-to-$\mathsf{Tol}$ distributions become smaller and more concentrated.

The dependence on discretization follows the same pattern, and the plots also suggest a natural ``sweet spot'' for $N_{\mathrm{init}}$ in each case.
For the coarse mesh ($h=0.5$, $N_h\approx 150$), the reach-rate curves, for both IGP-UCB and GP-TS, are already close to saturation once $N_{\mathrm{init}}$ is in the moderate range, and the post-initialization iterations-to-$\mathsf{Tol}$ are correspondingly small and stable.
In this regime, $N_{\mathrm{init}}\approx 4$--$8$ is typically sufficient: it provides basic space-filling coverage and stable early hyperparameter updates, while larger initializations offer little additional gain beyond marginal reductions in variability.

For the baseline mesh ($h=0.25$, $N_h\approx 300$), both algorithms exhibit a clear transition from the small-$N_{\mathrm{init}}$ regime---where reach rates are noticeably lower and the post-initialization iterations-to-$\mathsf{Tol}$ are more dispersed---to a stable regime in which reach rates are high and the iterations-to-$\mathsf{Tol}$ distributions are more concentrated.
The transition occurs around the moderate initializations, and the plots indicate that $N_{\mathrm{init}}=8$ lies within the stable regime for both methods, achieving performance comparable to larger choices while avoiding the poor conditioning and unstable early MLE behavior observed at extremely small initializations.
As a result, $N_{\mathrm{init}}=8$ is a robust default at the baseline resolution.

For the fine mesh ($h=0.15$, $N_h\approx 500$), increasing $N_{\mathrm{init}}$ continues to improve reliability slightly longer than in the coarser settings, reflecting the increased difficulty of the refined discretization.
Nevertheless, the reach-rate curves still level off once $N_{\mathrm{init}}$ reaches a moderate range, and further increases yield diminishing returns.
In this regime, the plots suggest that $N_{\mathrm{init}}\approx 8$--$10$ strikes a good balance: it improves coverage and stabilizes early hyperparameter updates relative to smaller initializations, while pushing beyond this range produces only marginal gains and can be counterproductive under online MLE when noise perturbs early length-scale estimates (an effect that is especially visible for the Euclidean kernel).

The observed plateau also explains why continuing to increase $N_{\mathrm{init}}$ does not markedly improve performance.
Under online hyperparameter learning, the early MLE updates incorporate noisy observations. Once the initialization is already sufficiently space-filling to stabilize the length-scale estimates, additional initial points contribute diminishing geometric information while injecting more noise into the likelihood, which can transiently perturb hyperparameter estimates.
Consequently, improvements in reach rate become marginal, and the distributions of post-initialization iterations-to-$\mathsf{Tol}$ do not continue to shrink in a systematic way.

This effect is most visible for the Euclidean kernel, where performance is substantially more variable and can even be non-monotone in $N_{\mathrm{init}}$, especially at finer discretizations.
A natural explanation is model misspecification: the Euclidean kernel does not fully reflect the intrinsic graph geometry, so the online MLE can be more sensitive to noise and may push the fitted length scale toward extreme values.
In this regime, increasing $N_{\mathrm{init}}$ can lead to overconfident posteriors early on (overly small posterior variances or overly aggressive length-scale fits), which in turn distorts acquisition decisions and produces the observed instability in reach rate and iterations.
By contrast, the SPDE kernel is better aligned with the geometry, yielding more stable hyperparameter updates and more consistent plateau behavior.

These results support a simple, actionable guideline for selecting $N_{\mathrm{init}}$ as a function of the discretization size $N_h$, with Figures~\ref{fig:benchmarks_IGP-UCB_init_sensitivity}--\ref{fig:benchmarks_GP-TS_init_sensitivity} serving as empirical support.
For coarse meshes with $N_h\approx 150$, values around $N_{\mathrm{init}}\approx 4$ are typically sufficient.
For baseline resolutions with $N_h\approx 200$--$400$, $N_{\mathrm{init}}=8$ is a robust default that balances coverage and hyperparameter-update stability while performing comparably to larger choices.
For finer discretizations with $N_h\gtrsim 500$, a modest increase beyond the baseline (e.g., $N_{\mathrm{init}}\approx 8$--$10$) can improve reliability, after which further increases yield only marginal gains.
\endgroup

\end{document}